\newtheorem{theorem}{Theorem}[section]
\newtheorem{lemma}[theorem]{Lemma}
\newtheorem{corollary}[theorem]{Corollary}
\theoremstyle{definition}
\newtheorem{definition}[theorem]{Definition}
\theoremstyle{remark}
\newtheorem{remark}[theorem]{Remark}
\newtheorem{observation}[theorem]{Observation}
\newtheorem{claim}[theorem]{Claim}
\numberwithin{figure}{section}
\numberwithin{table}{section}
\numberwithin{equation}{section}%
\newcommand{\RNum}[1]{\uppercase\expandafter{\romannumeral #1\relax}}
\theoremstyle{plain}%
\def\compactify{\itemsep=0pt \topsep=0pt \partopsep=0pt \parsep=0pt}
\let\latexusecounter=\usecounter
\newcommand{\eps}{\epsilon}%
\def\bar{\overline}
\def\script#1{\mathcal{#1}}
\DeclareMathOperator*{\E}{{\mathbf{E}}}
\newcommand{\polylog}{\mathop{\mathrm{polylog}}}%
\newcommand{\poly}{\mathop{\mathrm{poly}}}%
\def\A{\mathbf{A}}
\def\sA{\script{A}}
\def\sC{\script{C}}
\def\O{\script{O}}
\newcommand{\xmark}{\ding{55}}
\newcommand{\dist}{\mathrm{dist}}
\newcommand{\DISCRETE}{\mathsf{DISCRETE}}
\newcommand{\Tr}{\mathsf{Tr}}
\newcommand{\SKALG}{\mathsf{ALG}}
\newcommand{\CS}{\mathsf{CS}}
\newcommand{\MS}{S}
\newcommand{\row}{\mathsf{row}}
\newcommand{\col}{\mathsf{col}}
\newcommand\norm[1]{\left\lVert#1\right\rVert}
\newcommand{\abs}[1]{\left|#1\right|}
\newcommand{\R}{\mathbb{R}}
\definecolor{mygray}{gray}{0.5}
\DeclareMathOperator{\rankk}{rank-\mathnormal{k}}
\def\rowsp{\mathrm{rowsp}}
\def\nnz{\mathrm{nnz}}
\def\train{\mathrm{train}}
\def\test{\mathrm{test}}
\def\rank{\mathrm{rank}}
\DeclareMathOperator{\Span}{\mathsf{span}}
\DeclareMathOperator{\proj}{proj}
\newcommand{\sS}{\script{S}}
\DeclareMathOperator*{\argmax}{arg\,max}
\DeclareMathOperator*{\argmin}{arg\,min}
\def\LRA{\mathrm{LRA}}
\DeclareMathOperator{\colspace}{colsp}
\newtheorem*{rep@theorem}{\rep@title}
\newcommand{\newreptheorem}[2]{%
\newenvironment{rep#1}[1]{%
 \def\rep@title{#2 \ref{##1}}%
 \begin{rep@theorem}}%
 {\end{rep@theorem}}}
\title{Learning the Positions in CountSketch}
\begin{document}

\author{Yi Li\footnote{Nanyang Technological University.  \texttt{yili@ntu.edu.sg}}\\
	   \and
	   Honghao Lin \footnote{Carnegie Mellon University.  \texttt{honghaol@andrew.cmu.edu}}
        \and
        Simin Liu \footnote{Carnegie Mellon University.  \texttt{siminliu@andrew.cmu.edu}}
	   %
	   \and
     Ali Vakilian\footnote{Toyota Technological Institute at Chicago. \texttt{vakilian@ttic.edu.}}
        \and
           David P. Woodruff\footnote{Carnegie Mellon University. \texttt{dwoodruf@andrew.cmu.edu}}
	   }

\date{\vspace{-5ex}}
\maketitle

\begin{abstract}

We consider sketching algorithms which first compress data by multiplication with a random sketch matrix, and then apply the sketch to quickly solve an optimization problem, e.g., low-rank approximation and regression. In the learning-based sketching paradigm proposed by~\cite{indyk2019learning}, the sketch matrix is found by choosing a random sparse matrix, e.g., CountSketch, and then the values of its non-zero entries are updated by running gradient descent on a training data set. Despite the growing body of work on this paradigm, a noticeable omission is that the locations of the non-zero entries of previous algorithms were fixed, and only their values were learned.
In this work, we propose the first learning-based algorithms that also optimize the locations of the non-zero entries. Our first proposed algorithm is based on a greedy algorithm. However, one drawback of the greedy algorithm is its slower training time. We fix this issue and propose approaches for learning a sketching matrix for both low-rank approximation and Hessian approximation for second order optimization. The latter is helpful for a range of constrained optimization problems, such as LASSO and matrix estimation with a nuclear norm constraint. Both approaches achieve good accuracy with a fast running time.  Moreover, our experiments suggest that our algorithm can still reduce the error significantly even if we only have a very limited number of training matrices.

\end{abstract}




\section{Introduction}

The work of~\cite{indyk2019learning} investigated learning-based sketching algorithms for low-rank approximation. A sketching algorithm is a method of constructing approximate solutions for optimization problems via summarizing the data. In particular, \textit{linear} sketching algorithms compress data by multiplication with a sparse ``sketch matrix'' and then use just the compressed data to find an approximate solution. Generally, this technique results in much faster or more space-efficient algorithms for a fixed approximation error. 
The pioneering work of \cite{indyk2019learning} shows it is possible to learn sketch matrices for low-rank approximation (LRA) with better average performance than classical sketches.  

In this model, we assume inputs come from an {\em unknown} distribution and learn a sketch matrix with strong expected performance over the distribution. This distributional assumption is often realistic -- there are many situations where a sketching algorithm is applied to a large batch of related data. For example, genomics researchers might sketch DNA from different individuals, which is known to exhibit strong commonalities.
The high-performance computing industry also uses sketching, e.g.,  researchers at NVIDIA have created standard implementations of sketching algorithms for CUDA, a widely used GPU library. They investigated the (classical) sketched singular value decomposition (SVD), but found that the solutions were not accurate enough across a spectrum of inputs \cite{NVIDIA}. This is precisely the issue addressed by the learned sketch paradigm where we optimize for ``good'' average performance across a range of inputs.

While promising results have been shown using previous learned sketching techniques, notable gaps remain. In particular, all previous methods work by initializing the sketching matrix with a random sparse matrix, e.g., each column of the sketching matrix has a single non-zero value chosen at a uniformly random position. Then, the {\it values} of the non-zero entries are updated by running gradient descent on a training data set, or via other methods. However, the {\it locations} of the non-zero entries are held fixed throughout the entire training process. 

\looseness=-1 Clearly this is sub-optimal. Indeed, suppose the input matrix $A$ is an $n \times d$ matrix with first $d$ rows equal to the $d \times d$ identity matrix, and remaining rows equal to $0$. A random sketching matrix $S$ with a single non-zero per column is known to require $m = \Omega(d^2)$ rows in order for $S \cdot A$ to preserve the rank of $A$~\cite{nn14}; this follows by a birthday paradox argument. On the other hand, it is clear that if $S$ is a $d \times n$ matrix with first $d$ rows equal to the identity matrix, then $\|S \cdot Ax\|_2 = \|Ax\|_2$ for all vectors $x$, and so $S$  preserves not only the rank of $A$ but all important spectral properties. A random matrix would be very unlikely to choose the non-zero entries in the first $d$ columns of $S$ so perfectly, whereas an algorithm trained to optimize the locations of the  non-zero entries would notice and correct for this. This is precisely the gap in our understanding that we seek to fill. 

\paragraph{Learned CountSketch Paradigm of~\cite{indyk2019learning}.}

Throughout the paper, we assume our data $A \in \mathbbm{R}^{n \times d}$ is sampled from an unknown distribution $\mathcal{D}$. Specifically, we have a training set $\Tr = \left\{A_1, \ldots, A_N\right\} \in \mathcal{D}$.
The generic form of our optimization problems is $\min_X f(A, X)$, where $A\in \mathbbm{R}^{n\times d}$ is the input matrix. 
For a given optimization problem and a set $\sS$ of sketching matrices, define $\SKALG(\sS, A)$ to be the output of the classical sketching algorithm resulting from using $\sS$; this uses the sketching matrices in $\sS$ to map the given input $A$ and construct an approximate solution $\hat{X}$. We remark that the number of sketches used by an algorithm can vary and in its simplest case, $\sS$ is a single sketch,  but in more complicated sketching approaches we may need to apply sketching more than once---hence $\sS$ may also denote a set of more than one sketching matrix.

The learned sketch framework has two parts: (1) offline sketch learning and (2) ``online'' sketching (i.e., applying the learned sketch and some sketching algorithm to possibly unseen data). 
In offline sketch learning, the goal is to construct a CountSketch matrix (abbreviated as $\CS$ matrix) with the minimum expected error for the problem of interest. Formally, that is,
\[
 \argmin_{\CS~S} \E_{A\in \Tr} f(A, \SKALG(S, A)) - f(A, X^{*}) = \argmin_{\CS~S} \E_{A\in \Tr} f(A, \SKALG(S, A)), \;
\]
where $X^{*}$ denotes the optimal solution. Moreover, the minimum is taken over all possible constructions of $\CS$. We remark that when $\SKALG$ needs more than one $\CS$ to be learned (e.g., in the sketching algorithm we consider for LRA), we optimize each $\CS$ independently using a surrogate loss function. 

In the second part of the learned sketch paradigm, we take the sketch from part one and use it within a sketching algorithm. This learned sketch and sketching algorithm can be applied, again and again, to different inputs. Finally, we augment the sketching algorithm to provide worst-case guarantees when used with learned sketches. The goal is to have good performance on $A \in \mathcal{D}$ while the worst-case performance on $A \not\in \mathcal{D}$ remains comparable to the guarantees of classical sketches. We remark that the learned matrix $S$ is trained offline only once using the training data. Hence, no additional computational cost is incurred when solving the optimization problem on the test data.

\paragraph{Our Results.}
In this work, in addition to learning the values of the non-zero entries, we learn the locations of the non-zero entries. Namely, we propose three algorithms that learn the locations of the non-zero entries in CountSketch. Our first algorithm (Section~\ref{sec:greedy}) is based on a greedy search. 
The empirical result shows that this approach can achieve a good performance. Further, we show that the greedy algorithm is provably beneficial for LRA when inputs follow a certain input distribution (Section~\ref{sec:greedy-init}). 
However, one drawback of the greedy algorithm is its much slower training time. We then fix this issue and propose two specific approaches for optimizing the positions for the sketches for low-rank approximation and second-order optimization, which run much faster than all previous algorithms while achieving better performance.

For low-rank approximation, our approach is based on first sampling a small set of rows based on their ridge leverage scores, assigning each of these sampled rows to a unique hash bucket, and then placing each non-sampled remaining row in the hash bucket containing the sampled row for which it is most similar to, i.e., for which it has the largest dot product with. 
We also show that the worst-case guarantee of this approach is strictly better than that of the classical Count-Sketch (see Section~\ref{sec:new_method_lra}).

For sketch-based second-order optimization where we focus on the case that $n\gg d$, we observe that the actual property of the sketch matrix we need is the {\em subspace embedding property}. We next optimize this property of the sketch matrix. We provably show that the sketch matrix $S$ needs fewer rows, with optimized positions of the non-zero entries, when the input matrix $A$ has a small number of rows with a heavy leverage score. More precisely, while CountSketch takes $O(d^2/(\delta\eps^2))$ rows with failure probability $\delta$, in our construction, $S$ requires only $O((d\polylog(1/\eps)+\log(1/\delta))/\eps^2)$ rows if $A$ has at most $d\polylog(1/\eps)/\eps^2$ rows with leverage score at least $\eps/d$. This is a quadratic improvement in $d$ and an exponential improvement in $\delta$. 
In practice, it is not necessary to calculate the leverage scores. Instead, we show in our experiments that the indices of the rows of heavy leverage score can be learned and the induced $S$ is still accurate. We also consider a new learning objective, that is, we directly optimize the subspace embedding property of the sketching matrix instead of optimizing the error in {\em the objective function of the optimization problem} in hand. This demonstrates a significant advantage over non-learned sketches, and has a fast training time (Section~\ref{sec:sketch_learning_sec}).

We show strong empirical results for real-world datasets. For low-rank approximation, our methods reduce the errors by $70\%$ than classical sketches under the same sketch size, while we reduce the errors by $30\%$ than previous learning-based sketches. 
For second-order optimization, we show that the convergence rate can be reduced by 87\% over the
non-learned CountSketch for the LASSO problem on a real-world dataset. 
We also evaluate our approaches in the {\em few-shot learning} setting where we only have a limited amount of training data~\cite{iww21}. We show our approach reduces the error significantly even if we only have {\em one} training matrix (Sections~\ref{sec:exp} and~\ref{sec:exp_ihs}). This approach clearly runs faster than all previous methods.

\paragraph{Additional Related Work.}%
 \looseness=-1 In the last few years, there has been much work on leveraging machine learning techniques to improve classical algorithms. We only mention a few examples here which are based on learned sketches. One related body of work is data-dependent dimensionality reduction, such as an approach for pair-wise/multi-wise similarity preservation for indexing big data \cite{learning_hash}, learned sketching for streaming problems
\cite{indyk2019learning,aamand2019learned,jiang2020learningaugmented,cohen2020composable,eden2021learning,iww21}, learned algorithms for nearest neighbor search \cite{dong2019learning}, 
and a method for learning linear projections for general applications~\cite{numax}.  
While we also learn linear embeddings, our embeddings are optimized for the specific application of low rank approximation. 
In fact, one of our central challenges is that the theory and practice of learned sketches generally needs to be tailored to each application.
Our work builds off of~\cite{indyk2019learning}, which introduced gradient descent optimization for LRA, but a major difference is that we also optimize the locations of the non-zero entries. 
\section{Preliminaries}
\label{sec:prelim}

\noindent\textbf{Notation.} Denote the canonical basis vectors of $\R^n$ by $e_1,\dots,e_n$.  Suppose that $A$ has singular value decomposition (SVD) $A = U \Sigma V^{\top}$. Define $\left[A\right]_k = U_k \Sigma_k V^{\top}_k$ to be the optimal $\rankk$ approximation to $A$, computed by the truncated SVD. 
Also, define the Moore-Penrose pseudo-inverse of $A$ to be $A^{\dagger} = V \Sigma^{-1} U^{\top}$, where $\Sigma^{-1}$ is constructed by inverting the non-zero diagonal entries. 
Let $\row(A)$ and $\col(A)$ be the row space and the column space of $A$, respectively.

\noindent\textbf{CountSketch.} We define $S_C \in \mathbbm{R}^{m \times n}$ as a classical CountSketch (abbreviated as $\CS$). It is a sparse matrix with one nonzero entry from $\{\pm 1\}$ per column. The position and value of this nonzero entry are chosen uniformly at random. CountSketch matrices can be succinctly represented by two vectors. We define $p\in [m]^n, v \in \mathbbm{R}^{n}$ as the positions and values of the nonzero entries, respectively. Further, we let $\CS(p, v)$ be the CountSketch constructed from vectors $p$ and $v$. 

Below we define the objective function $f(\cdot,\cdot)$ and a classical sketching algorithm $\SKALG(\sS, A)$ for each individual problem.

\noindent\textbf{Low-rank approximation (LRA).} In LRA, we find a $\rankk$ approximation of our data that minimizes the Frobenius norm of the approximation error. For $A\in \mathbb{R}^{n\times d}$, 
    $    \min_{\rankk B} f_{\LRA}(A,B) = \min_{\rankk X} \norm{A - B}_F^2$.
Usually, instead of outputting the a whole $B\in\mathbb{R}^{n\times d}$, the algorithm outputs two factors $Y\in\mathbb{R}^{n\times k}$ and $X\in\mathbb{R}^{k\times d}$ such that $B=YX$ for efficiency.

The authors of \cite{indyk2019learning} considered Algorithm~\ref{alg:one_side}, which only compresses one side of the input matrix $A$. However, in practice often both dimensions of the matrix $A$ are large. Hence, in this work we consider Algorithm~\ref{alg:lowrank-sketch} that compresses both sides of $A$. 

\noindent\textbf{Constrained regression.} Given a vector $b \in \R^{n}$, a matrix $A \in \R^{n \times d}$ ($n\gg d$) and a convex set $\mathcal{C}$, we want to find $x$ to minimize the squared error
\begin{equation}\label{eqn:constrained_minimization}
 \min_{x \in \mathcal{C}} f_{\mathrm{REG}}([A\ \ b],X) = \min_{x \in \mathcal{C}} \norm{Ax - b}_2^2.
\end{equation}

\noindent\textbf{Iterative Hessian Sketch.} The Iterative Hessian Sketching (IHS) method~\cite{pw16} solves the constrained least-squares problem by iteratively performing the update
\begin{equation}\label{eqn:IHS update}
x_{t+1} = \argmin_{x\in\mathcal{C}} \left\{ \frac{1}{2}\norm{S_{t+1} A(x-x_t)}_2^2 - \langle A^\top(b-Ax_t), x-x_t\rangle \right\},
\end{equation} 
where $S_{t+1}$ is a sketching matrix. It is not difficult to see that for the unsketched version ($S_{t+1}$ is the identity matrix) of~\eqref{eqn:IHS update}, the optimal solution $x^{t+1}$ coincides with the optimal solution to the original constrained regression problem \eqref{eqn:constrained_minimization}. The IHS approximates the Hessian $A^\top A$ by a sketched version $(S_{t+1}A)^\top (S_{t+1}A)$ to improve runtime, as $S_{t+1}A$ typically has very few rows.

\begin{algorithm}[H]
	\begin{algorithmic}[1]
		\REQUIRE{$\A\in \mathbb{R}^{n \times d}$, $\MS\in \mathbb{R}^{{m}\times n}$}
		\STATE $U, \Sigma, V^\top \leftarrow \text{\sc CompactSVD}(SA)$
		 \label{lst:line:compact_svd} 
		  $\quad\rhd$ \COMMENT{$r = \rank(SA), U\in \mathbb{R}^{m\times r}, V\in \mathbb{R}^{d\times r}$} 
		\STATE {\bfseries Return:} $[AV]_k V^\top$
	\end{algorithmic}
	\caption{Rank-$k$ approximation of $A$ using a sketch $S$ (see  {\cite[Sec.~4.1.1]{clarkson2009numerical}})}
	\label{alg:one_side}
\end{algorithm}
\begin{algorithm}[H]
	\begin{algorithmic}[1]
		\REQUIRE{$A \in \mathbb{R}^{n \times d}$, $S\in \mathbb{R}^{m_S \times n}$, $R\in \mathbb{R}^{m_R \times d}$, $V \in \mathbb{R}^{m_V \times n}$, $W \in \mathbb{R}^{m_W \times d}$} 	
		\STATE $U_C \begin{bmatrix} T_C & T_C' \end{bmatrix}\leftarrow V A R^{\top}$,\  $\begin{bmatrix} T_D^{\top} \\ T_D'^{\top} \end{bmatrix} U^\top_D \leftarrow S A W^\top$ with $U_C, U_D$ orthogonal 
		\STATE $G\leftarrow V A W^\top$, \ $Z_L' Z_R' \leftarrow [U^\top_C G U_D]_k$
		\STATE $Z_L \gets \begin{bmatrix} Z_L' (T_D^{-1})^{\top} & 0 \end{bmatrix}, Z_R \gets \begin{bmatrix} T_C^{-1} Z_R' \\ 0 \end{bmatrix}$
		\STATE $Z \gets Z_L Z_R$
		\STATE {\bfseries return:} $AR^\top Z SA$ in form $P_{n\times k}, Q_{k\times d}$ 
	\end{algorithmic}
	\caption{$\SKALG_{\LRA}$(\textsc{Sketch-LowRank}) \cite{sarlos2006improved,clarksonwoodruff,avron2016sharper}.}
	\label{alg:lowrank-sketch}
\end{algorithm}

\noindent\textbf{Learning-Based Algorithms in the Few-Shot Setting.} Recently,~\cite{iww21} studied learning-based algorithms for LRA in the setting where we have access to limited data or computing resources. 
Below we provide a brief explanation of learning-based algorithms in the Few-Shot setting.

\paragraph{\bf One-shot closed-form algorithm.} Given a sparsity pattern of a Count-Sketch matrix $S \in \mathbb{R}^{m \times n}$, it partitions the rows of $A$ into $m$ blocks $A^{(1)},...,A^{(m)}$ as follows: let $I_i = \{j:S_{ij} = 1\}$. The block $A^{(i)} \in \mathbb{R}^{|I_i| \times d}$ is the sub-matrix of $A$ that contains the rows whose indices are in $I_i$. The goal here is for each block $A^{(i)}$, to choose a (non-sparse) one-dimensional sketching vector $s_i \in \mathbb{R}^{|I_i|}$. The first approach is to set $s_i$ to be the top left singular vector of $A^{(i)}$, which is the algorithm \textbf{1Shot2Vec}. Another approach is to set $s_i$ to be a left singular vector of $A^{(i)}$ chosen randomly and proportional to its squared singular value. The main advantage
of the latter approach over the previous one is that it endows the algorithm with provable guarantees on the LRA error. The \textbf{1Shot2Vec} algorithm combines both ways, obtaining the benefits of both approaches. The advantage of these two algorithms is that they extract a sketching matrix by an analytic computation, requiring neither GPU access nor auto-gradient functionality. 

\paragraph{\bf Few-shot SGD algorithm.} In this algorithm, the authors propose a new loss function for LRA, namely, 
\[
\min_{\CS~S} \E_{A\in \Tr} \norm{U_k^\top S^\top S U - I_0}_F^2 \; ,
\]
where $A = U \Sigma V^\top$ is the SVD-decomposition of $A$ and $U_k \in \mathbb{R}^{n \times k}$ denotes the submatrix of $U$ that contains its first $k$ columns. $I_0 \in \mathbb{R}
^{k \times d}$ denotes the result of augmenting the identity matrix of order $k$ with
$d - k$ additional zero columns on the right. This loss function is motivated by the analysis of prior LRA algorithms that use random sketching matrices. It is faster to compute and differentiate than the previous empirical loss in~\cite{indyk2019learning}. In the experiments the authors also show that this loss function can achieve a smaller error in a shorter amount of time, using a small number of randomly sampled training matrices, though the final error will be larger than that of the previous algorithm in~\cite{indyk2019learning} if we allow a longer training time and access to the whole training set $\mathsf{Tr}$.

\noindent\textbf{Leverage Scores and Ridge Leverage Scores.} Given a matrix $A$, the leverage score of the $i$-th row $a_i$ of $A$ is defined to be $\tau_i := 
a_i(A^\top A)^{\dagger} a_i^\top$, which is the squared $\ell_2$-norm of the $i$-th row of $U$, where $A = U\Sigma V^T$ is the singular value decomposition of $A$. Given a regularization parameter $\lambda$, the ridge leverage score of the $i$-th row $a_i$ of $A$ is defined to be $\tau_i := 
a_i(A^\top A + \lambda I)^{\dagger} a_i^\top$.
Our learning-based algorithms employs the ridge leverage score sampling technique proposed in~\cite{cmm17}, which shows that sampling proportional to ridge leverage scores gives a good solution to LRA. 

\vspace{27pt}
\section{Description of Our Approach}
\label{sec:description}
We describe our contributions to the learning-based sketching paradigm which, as mentioned, is to learn \textit{the locations of the non-zero values} in the sketch matrix. 
To learn a CountSketch for the given training data set, we locally optimize the following in two stages:
\begin{equation}\label{eq:obj_fn}
 \min_{S} \E_{A \in \mathcal{D}} \left[ f(A, \SKALG(S, A)) \right].
\end{equation}
(1) compute the positions of the non-zero entries, then (2) fix the positions and optimize their values.  

\noindent\textbf{Stage 1: Optimizing Positions.}  
In Section~\ref{sec:greedy}, we provide a greedy search algorithm for this stage, as our starting point. In Section~\ref{sec:new_method_lra} and~\ref{sec:sketch_learning_sec}, we provide our specific approaches for optimizing the positions for the sketches for low-rank approximation and second-order optimization. %

\noindent\textbf{Stage 2: Optimizing Values.} This stage is similar to the approach of~\cite{indyk2019learning}. However, instead of the power method, we use an automatic differentiation package, PyTorch~\cite{pytorch}, and we pass it our objective
$\textstyle \min_{v \in \mathbbm{R}^{n}} \E_{A \in \mathcal{D}} \left[ f(A, \SKALG(\CS(p, v), A)) \right]$, 
implemented as a chain of differentiable operations. It will automatically compute the gradient using the chain rule.
%
We also consider new approaches to optimize the values for LRA (proposed in~\cite{iww21}, see Appendix~\ref{sec:exp_fewshot} for details) and second-order optimization (proposed in  Section~\ref{sec:sketch_learning_sec}).

\noindent\textbf{Worst-Cases Guarantees.} In Appendix~\ref{sec:appendix_worst_cases}, we show that both of our approaches for the above two problems can perform no worse than a classical sketching  matrix when $A$ does not follow the distribution $\mathcal{D}$. In particular, for LRA, we show that the sketch monotonicity property holds for the time-optimal sketching algorithm for low rank approximation. For second-order optimization, we propose an algorithm which runs in input-sparsity time and can test for and use the better of a random sketch and a learned sketch.


\begin{algorithm}[H]
	\begin{algorithmic}[1]
		\REQUIRE{$f, \SKALG, \Tr = \{A_1,...,A_{N} \in \mathbb{R}^{n \times d}\}$; sketch dimension $m$} 	
		\STATE {\bf initialize} $S_L = \mathbbm{O}^{m \times n}$
		\FOR{$i = 1$ to $n$}
			\STATE $\displaystyle \bar{j}\!\gets\!\argmin_{j \in [m]} \!\! \sum_{A \in \Tr} \! f(A, \SKALG(S_L \pm e_j e_i^{\top}\!, A))$ 
			\STATE $S_L \gets S_L \pm (e_{\bar{j}} e_i^{\top})$
		\ENDFOR
		\STATE \textbf{return} $p$ for $S_L = \CS(p, v)$
	\end{algorithmic}
	\caption{\textsc{Position optimization: Greedy Search}}
	\label{alg:greedy_alg}
\end{algorithm}

\section{Sketch Learning: Greedy Search}
\label{sec:greedy}
When $S$ is a CountSketch, computing $SA$ amounts to hashing the $n$ rows of $A$ into the $m \ll n$ rows of $SA$. 
The optimization is a combinatorial optimization problem with an empirical risk minimization (ERM) objective. The na\"ive solution is to compute the objective value of the exponentially many ($m^n$) possible placements, but this is clearly intractable. Instead, we iteratively construct a full placement in a greedy fashion. We start with $S$ as a zero matrix. Then, we iterate through the columns of $S$ in an order determined by the algorithm, adding a nonzero entry to each. The best position in each column is the one that minimizes Eq.~\eqref{eq:obj_fn} if an entry were to be added there. 
For each column, we evaluate Eq.~\eqref{eq:obj_fn} $\mathcal{O}(m)$ times, once for each prospective half-built sketch. 

While this greedy strategy is simple to state, additional tactics are required for each problem to make it more tractable. Usually the objective evaluation (Algorithm~\ref{alg:greedy_alg}, line 3) is too slow, so we must leverage our insight into their sketching algorithms to pick a proxy objective. 
Note that we can reuse these proxies for value optimization, since they may make gradient computation faster too. 
    
\textbf{Proxy objective for LRA.} \looseness=-1 For the two-sided sketching algorithm, we can assume that the two factors $X,Y$ has the form $Y = AR^\top \tilde{Y}$ and $X = \tilde{X}SA$, where $S$ and $R$ are both $\CS$ matrices, so we optimize the positions in both $S$ and $R$. We cannot use $f(A, \SKALG(S, R, A))$ as our objective because then we would have to consider \textit{combinations} of placements between $S$ and $R$. To find a proxy, we note that a prerequisite for good performance is for $\row(SA)$ and $\col(AR^\top)$ to both contain a good $\rankk$ approximation to $A$ (see proof of Lemma~\ref{lem:two-sketch}). Thus, we can decouple the optimization of $S$ and $R$. The proxy objective for $S$ is $\norm{[AV]_k V^{\top} - A}_F^2$ where $SA = U\Sigma V^{\top}$. In this expression, $\hat{X} = [AV]_k V^{\top}$ is the best $\rankk$  approximation to $A$ in $\row(SA)$. The proxy objective for $R$ is defined analogously.

In Appendix~\ref{sec:greedy-init}, we show the greedy algorithm is provably beneficial for LRA when inputs follow the spiked covariance or the Zipfian distribution. Despite the good empirical performance we present in Section~\ref{sec:exp}, one drawback is its much slower training time. Also, for the iterative sketching method for second-order optimization, it is non-trivial to find a proxy objective because the input of the $i$-th iteration depends on the solution to the $(i - 1)$-th iteration, for which the greedy approach sometimes does not give a good solution. In the next section, we will propose our specific approach for optimizing the positions of the sketches for low-rank approximation and second-order optimization, both of which achieve a very high accuracy and can finish in a very short amount of time.%

\vspace{-0.2cm}
\section{Sketch Learning: Low-Rank Approximation}
\label{sec:new_method_lra}
\begin{algorithm}[H]
	\begin{algorithmic}[1]
		\REQUIRE{$A  \in \mathbb{R}^{n \times d}$: average of $\Tr$; sketch dim. $m$} 	
		\STATE {\bf initialize} $S_1, S_2 = \mathbbm{O}^{m \times n}$
		\STATE Sample a set $C=\{C_1\cdots C_m\}$ of rows using ridge leverage score sampling (see~Section~\ref{sec:prelim}).
		\FOR{$i = 1$ to $n$}
			\STATE $p_i, v_i \gets \argmax\limits_{p \in [m], v \in \{\pm 1\}} \langle \frac{C_p}{\|C_p\|_2}, v\frac{A_i}{\|A_i\|_2} \rangle$ 
			\STATE $S_{1}[p_i, i] \leftarrow v_i$ 
		\ENDFOR
		\FOR{$i = 1$ to $m$}
			\STATE  $I_i \gets \{j \ | \ p_j = i\}$
			\STATE $A^{(i)} \leftarrow$ restriction of $A$ to rows in $I_i$
			\STATE $u_i \leftarrow$ the top left singular vector of $A^{(i)}$
			\STATE $S_{1}[i, I_i] \leftarrow u_i^\top$ 
		\ENDFOR
		\FOR{$i = 1$ to $m$}
			\STATE $q_i \gets$ index such that $C_i$ is the $q_i$-th row of $A$
			\STATE $S_2[i, q_i] \leftarrow 1$ 
		\ENDFOR
		\STATE \textbf{return} $S_1$ or $[\begin{smallmatrix} S_1\\ S_2\end{smallmatrix}]$ 
	\end{algorithmic}
	\caption{Position optimization: Inner Product}
	\label{alg:new_alg}
\end{algorithm}

Now we present a conceptually new algorithm which runs much faster and empirically achieves similar error bounds as the greedy search approach. 
Moreover, we show that this algorithm has strictly better guarantees than the classical Count-Sketch. 

To achieve this, we need a more careful analysis. To provide some intuition, if $\rank(SA) = k$ and $SA = U\Sigma V^{\top}$, then the rank-$k$ approximation cost is exactly $\norm{AVV^{\top} - A}_F^2$, the projection cost onto $\col(V)$. 
Minimizing it is equivalent to maximizing the sum of squared projection coefficients: 
\[
\argmin_{S} \norm{A - AVV^{\top}}_F^2 
= \argmin_{S} \sum_{i \in [n]} (\norm{A_i}_2^2 -  \sum_{j \in [k]} \langle A_i, v_j \rangle^2) 
= \argmax_{S}\sum_{i \in [n]} \sum_{j \in [k]} \langle A_i, v_j \rangle^2.
\]
As mentioned, computing $SA$ actually amounts to hashing the $n$ rows of $A$ to the $m$ rows of $SA$. Hence, intuitively, if we can put similar rows into the same bucket, we may get a smaller error.

\looseness=-1 Our algorithm is given in Algorithm~\ref{alg:new_alg}. Suppose that we want to form matrix $S$ with $m$ rows. At the beginning of the algorithm, we sample $m$ rows according to the ridge leverage scores of $A$. By the property of the ridge leverage score, the subspace spanned by this set of sampled rows contains an approximately optimal solution to the low rank approximation problem. Hence, we map these rows to separate ``buckets'' of $SA$. Then, we need to decide the locations of the remaining rows (i.e., the  non-sampled rows). Ideally, we want similar rows to be mapped into the same bucket. To achieve this, we use the $m$ sampled rows as reference points and assign each (non-sampled) row $A_i$ to the $p$-th bucket in $SA$ if the normalized row $A_i$ and $C_p$ have the largest inner product (among all possible buckets). 

Once the locations of the non-zero entries are fixed, the next step is to determine the values of these entries. We follow the same idea proposed in~\cite{iww21}: for each block $A^{(i)}$, one natural approach is to choose the unit vector $s_i \in \mathbb{R}^{|I_i|}$ that preserves as much of the Frobenius norm of $A^{(i)}$ as possible, i.e., to maximize $\norm{s_i^\top A^{(i)}}_2^2$. Hence, we set $s_i$ to be the top left singular vector of $A^{(i)}$. In our experiments, we observe that this step 
reduces the error of downstream value optimizations performed by SGD. 

To obtain a worst-case guarantee, we show that w.h.p.,~the row span of the sampled rows $C_i$ is a good subspace. We set the matrix $S_2$ to be the sampling matrix that samples $C_i$. The final output of our algorithm is the vertical concatenation of $S_1$ and $S_2$. Here $S_1$ performs well empirically, while $S_2$ has a worst-case guarantee for any input. 

Combining Lemma~\ref{lem:ridge} and the sketch monotonicity for low rank approximation in Section~\ref{sec:appendix_worst_cases}, we get that $O(k\log k + k/\eps)$ rows is enough for a $(1 \pm \eps)$-approximation for the input matrix $A$ induced by $\Tr$, which is better than the $\Omega(k^2)$ rows required of a non-learned Count-Sketch, even if its non-zero values have been further improved by the previous learning-based algorithms in~\cite{indyk2019learning, iww21}. 
As a result, under the assumption of the input data, we may expect that $S$ will still be good for the test data. 
We defer the proof to  Appendix~\ref{sec:proof_lra_gua}.

\looseness=-1In Appendix~\ref{sec:exp_appendix_lra}, we shall show that the assumptions we make in Theorem~\ref{thm:lra_gua} are reasonable. 
We also provide an empirical comparison between Algorithm~\ref{alg:new_alg} and some of its variants, as well as some adaptive sketching methods on the training sample. The evaluation result shows that only our algorithm has a significant improvement for the test data, which suggests that both ridge leverage score sampling and row bucketing are essential.

\begin{theorem}
\label{thm:lra_gua}
Let $S \in \mathbb{R}^{2m \times n}$ be given by concatenating the sketching matrices $S_1, S_2$ computed by Algorithm~\ref{alg:new_alg} with input $A$ induced by $\Tr$ and let $B \in \mathbb{R}^{n \times d}$. Then with probability at least $1 - \delta$, we have
$\min_{\rankk X: \row(X) \subseteq \row(SB)} \norm{B - X}_F^2 \le (1 + \eps) \norm{B - B_k}_F^2$ if one of the following holds:.
\begin{enumerate}[itemsep=0pt,topsep=0pt,parsep=0pt,leftmargin=0.5cm]
    \item $m = O(\beta \cdot (k \log k + k/\eps))$, $\delta = 0.1$, and $\tau_i(B) \ge \frac{1}{\beta} \tau_i(A)$ for all $i \in [n]$. 
    \item $m = O(k \log k + k/\eps)$, $\delta = 0.1 + 1.1\beta $, and the total variation distance $d_{\mathrm{tv}}(p, q) \le \beta$, where $p, q$ are sampling probabilities defined as $p_i = \frac{\tau_i(A)}{\sum_i \tau_i(A)}$ and $q_i = \frac{\tau_i(B)}{\sum_i \tau_i(B)}$.
\end{enumerate}
\end{theorem}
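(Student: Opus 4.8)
The plan is to prove the guarantee using only the sampling matrix $S_2$, and to treat the rows contributed by $S_1$ as harmless extra rows. Observe first that $\row(S_2 B) \subseteq \row(SB)$, so enlarging the row space we may project onto can only decrease the residual: $\min_{\rankk X:\,\row(X)\subseteq\row(SB)}\norm{B-X}_F^2 \le \min_{\rankk X:\,\row(X)\subseteq\row(S_2 B)}\norm{B-X}_F^2$. Hence it suffices to establish the $(1+\eps)$ bound with $S_2 B$ in place of $SB$. This is exactly the monotonicity observation used for the time-optimal LRA algorithm in Section~\ref{sec:appendix_worst_cases}, applied here to discard the (empirically-motivated) contribution of $S_1$.

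By construction $S_2$ is the row-sampling matrix selecting the indices $q_1,\dots,q_m$ drawn by ridge leverage score sampling on $A$; thus $S_2 B$ is precisely the sub-matrix of $B$ on those rows, and $\row(S_2 B)$ is their span. The target statement therefore reduces to the claim that the span of $m$ rows of $B$, sampled with probabilities proportional to the ridge leverage scores of $A$, contains a $(1+\eps)$-approximation to $B_k$. The engine for this is Lemma~\ref{lem:ridge}: had we instead sampled $m_0 = O(k\log k + k/\eps)$ rows of $B$ proportional to $B$'s own ridge leverage scores $q_i = \tau_i(B)/\sum_j\tau_j(B)$, the span would contain such an approximation with probability at least $0.9$. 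The entire proof is thus about transferring this guarantee from the ideal $B$-based sampling distribution $q$ to the distribution $p$ (with $p_i = \tau_i(A)/\sum_j\tau_j(A)$) that the algorithm actually uses.

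For part (1) I would argue that the entrywise comparison of leverage scores makes the $A$-based sample a valid (over)sample for $B$. The per-row comparison between $\tau_i(A)$ and $\tau_i(B)$, together with the extra factor $\beta$ built into $m = O(\beta(k\log k + k/\eps))$, guarantees that the expected number of times each row is drawn under $p$ dominates what is required of an ideal $\tau(B)$-proportional sample of size $m_0$. Since over-sampling never hurts the ridge-leverage-score guarantee, Lemma~\ref{lem:ridge} applies to $B$ with failure probability $0.1 = \delta$. For part (2) I would use a coupling argument: because $d_{\mathrm{tv}}(p,q)\le\beta$, the $A$-based draws can be coupled with ideal $q$-draws so that the two samples coincide except with probability $O(\beta)$; on the coincidence event the sample is an exact $\tau(B)$-proportional sample and Lemma~\ref{lem:ridge} yields the $(1+\eps)$ bound with its own failure probability $0.1$. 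A union bound over the coupling failure and the failure of Lemma~\ref{lem:ridge} gives total failure probability $0.1 + 1.1\beta = \delta$, the extra slack absorbing lower-order terms in the coupling estimate.

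The main obstacle is the bridge from $p$ to $q$, i.e.\ controlling exactly how the mismatch between $A$'s and $B$'s leverage profiles propagates through the matrix-concentration step inside Lemma~\ref{lem:ridge}. The delicate points are (i) getting the direction and amount of over-sampling right so that the factor $\beta$ lands in $m$ for part (1) but in $\delta$ for part (2), and (ii) ensuring the sampling probabilities derived from $A$ genuinely upper bound, up to the $\beta$ slack, the ridge leverage scores of $B$ that the concentration bound requires, rather than merely being comparable on average. Once the sample produced via $p$ is certified as a valid ridge-leverage sample for $B$, the $(1+\eps)$-approximation and the stated probabilities follow directly from Lemma~\ref{lem:ridge} together with the opening monotonicity reduction.
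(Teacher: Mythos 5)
Your opening monotonicity reduction and your treatment of condition (1) are essentially the paper's proof: the paper likewise discards $S_1$ (and any unwanted rows) via the sketch monotonicity property, and certifies the $A$-based draws as a valid oversample for $B$'s ridge leverage scores, with the factor $\beta$ in $m$ compensating for the per-row mismatch, so that Lemma~\ref{lem:ridge} applies to $B$ with failure probability $0.1$.

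The gap is in condition (2). You claim that because $d_{\mathrm{tv}}(p,q)\le\beta$, the $m$ $A$-based draws can be coupled to $m$ ideal $q$-draws so that ``the two samples coincide except with probability $O(\beta)$.'' That is false. An optimal coupling makes each \emph{individual} pair $(Z_i,W_i)$ agree with probability $1-d_{\mathrm{tv}}(p,q)\ge 1-\beta$, but the probability that \emph{all} $m$ independent pairs agree is only about $(1-\beta)^m$; equivalently, $d_{\mathrm{tv}}(p^{\otimes m},q^{\otimes m})$ can be as large as $\min\{1,\,m\beta\}$, and this is tight in general. With $m=O(k\log k+k/\eps)$, your union bound therefore yields $\delta = 0.1 + O(m\beta)$, not $0.1+1.1\beta$; the ``extra slack'' of $0.1\beta$ cannot absorb a term that grows linearly in $m$, and the resulting statement is vacuous unless $\beta\ll 1/m$.

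The paper's proof avoids this by never requiring the whole sample to coincide. It couples draw-by-draw, sets $T=\{i\in[m]: Z_i=W_i\}$, and applies Markov's inequality to the number of mismatches: since $\E[m-|T|]\le \beta m$, with probability at least $1-1.1\beta$ one has $|T|\ge m/11$, which is still $\Omega(k\log k+k/\eps)$ after adjusting the constant inside the $O(\cdot)$ defining $m$. On that event the coinciding draws $\{Z_i\}_{i\in T}$ constitute a ridge-leverage sample for $B$ of sufficient size, so Lemma~\ref{lem:ridge} applies to the row set $B_T$ with failure probability $0.1$; the rows coming from mismatched draws (and from $S_1$) are then harmless by exactly the monotonicity you invoked at the start, since $\row(B_T)\subseteq\row(S_2B)\subseteq\row(SB)$. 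This yields $\delta = 0.1+1.1\beta$. Your argument becomes correct once you replace the global coincidence event by this per-draw coupling plus the Markov step on the number of mismatches.
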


 \noindent \textbf{Time Complexity.} 
 \looseness=-1 As mentioned, an advantage of our second approach is that it significantly reduces the training time. We now discuss the training times of different algorithms. For the value-learning algorithms in~\cite{indyk2019learning}, each iteration requires computing a differentiable SVD to perform gradient descent, hence the runtime is at least $\Omega(n_{it} \cdot T)$, where $n_{it}$ is the number of iterations (usually set $>500$) and $T$ is the time to compute an SVD. For the greedy algorithm, there are $m$ choices for each column, hence the runtime is at least $\Omega(mn \cdot T)$. For our second approach, the most complicated step is to compute the ridge leverage scores of $A$ and then the SVD of each submatrix. Hence, the total runtime is at most $O(T)$. We note that the time complexities discussed here are all for training time. There is no additional runtime cost for the test data.

\section{Sketch Learning: Second-Order Optimization}
\label{sec:sketch_learning_sec}
In this section, we consider optimizing the sketch matrix in the context of  second-order methods. The key observation is that for many sketching-based second-order methods, the crucial property of the sketching matrix is the so-called subspace embedding property: for a matrix $A\in \R^{n\times d}$, we say a matrix $S\in \R^{m\times n}$ is a $(1\pm\eps)$-subspace embedding for the column space of $A$ if $(1-\eps)\norm{Ax}_2\leq \norm{SAx}_2 \leq (1+\eps)\norm{Ax}_2$ for all $x\in \R^d$. For example, consider the iterative Hessian sketch, which performs the update \eqref{eqn:IHS update} to compute $\{x_t\}_t$.
\cite{pw16} showed that if $S_{1}, \dots, S_{t + 1}$ are $(1 + O(\rho))$-subspace embeddings of $A$, then $\norm{A(x^{t} - x^*)}_2 \le \rho^t \norm{Ax^*}_2$. Thus, if $S_i$ is a good subspace embedding of $A$ and we will have a good convergence guarantee. Therefore, unlike~\cite{indyk2019learning}, which treats the training objective in a black-box manner, we shall optimize the subspace embedding property of the matrix $A$.

\noindent \textbf{Optimizing positions.}
We consider the case that $A$ has a few rows of large leverage score, as well as access to an oracle which reveals a \emph{superset} of the indices of such rows. Formally, let $\tau_i(A)$ be the leverage score of the $i$-th row of $A$ and 
$I^\ast = \left\{i: \tau_i(A) \geq \nu \right\}$
be the set of rows with large leverage score. Suppose that a superset $I\supseteq I^\ast$ is known to the algorithm. In the experiments we train an oracle to predict such rows. We can maintain all rows in $I$ explicitly and apply a {Count-Sketch} to the remaining rows, i.e., the rows in $[n]\setminus I$. Up to permutation of the rows, we can write
\begin{equation}\label{eqn:S}
A = \begin{pmatrix} A_I \\ A_{I^c}\end{pmatrix} \quad \text{and}\quad S = \begin{pmatrix} I & 0 \\ 0 & S' \end{pmatrix},
\end{equation}
where $S'$ is a random {Count-Sketch} matrix of $m$ rows. Clearly $S$ has a single non-zero entry per column. We have the following theorem, whose proof is postponed to Section~\ref{sec:embedding_oracle}. Intuitively, the proof for {Count-Sketch} in~\cite{clarksonwoodruff} handles rows of large leverage score and rows of small leverage score separately. The rows of large leverage score are to be perfectly hashed while the rows of small leverage score will concentrate in the sketch by the  Hanson-Wright inequality. 

\begin{theorem}\label{thm:oracle_subspace_embedding}
Let $\nu = \eps/d$. Suppose that
$m = O((d/\eps^2)(\polylog(1/\eps) + \log(1/\delta)))$, $\delta \in (0,1/m]$ and $d = \Omega((1/\eps)\polylog(1/\eps)\log^2(1/\delta)) $. Then, there exists a distribution on $S$ of the form in (\ref{eqn:S}) with $m + |I|$ rows such that 
$\Pr\big\{ \forall x\in\col(A), |\norm{Sx}_2^2 - \norm{x}_2^2| > \eps\norm{x}_2^2 \big\} \leq \delta$.
In particular, when $\delta=1/m$, the sketching matrix $S$ has $O((d/\eps^2)\polylog(d/\eps))$ rows.
\end{theorem}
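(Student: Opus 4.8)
The plan is to reduce the subspace-embedding claim for the structured $S$ to a statement purely about the light-leverage-score block, and then to run a moment/concentration argument for {Count-Sketch} on a matrix whose rows are all ``flat.'' First I would fix an orthonormal basis $U\in\R^{n\times d}$ for $\col(A)$ (so $U^\top U = I_d$ and $\tau_i(A)=\norm{U_i}_2^2$) and partition its rows into $U_I$ and $U_{I^c}$ as in (\ref{eqn:S}). Since $Sy$ preserves the $I$-block exactly and applies $S'$ to the $I^c$-block, a direct computation gives $(SU)^\top(SU) - I_d = U_{I^c}^\top(S'^\top S' - I)U_{I^c}$, so the claim $|\norm{Sx}_2^2-\norm{x}_2^2|\le\eps\norm{x}_2^2$ for all $x\in\col(A)$ is equivalent to $\norm{U_{I^c}^\top(S'^\top S' - I)U_{I^c}}_2 \le \eps$. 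The crucial structural gain is that, because $I\supseteq I^\ast$, every row of $U_{I^c}$ has $\norm{U_i}_2^2 = \tau_i(A) < \nu = \eps/d$, while $U_{I^c}^\top U_{I^c}\preceq I$; hence for any unit $\theta$ the vector $z := U_{I^c}\theta$ satisfies $\norm{z}_2\le 1$ and $\max_j z_j^2 < \eps/d$, i.e.\ it is spread out with no heavy coordinate. This is exactly the regime where a single-nonzero-per-column sketch is efficient, and it is what avoids the $\Omega(d^2)$ rows otherwise forced by colliding heavy rows.

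Next I would discretize: take a $\tfrac14$-net $\mathcal N$ of the unit sphere $S^{d-1}$ with $|\mathcal N|\le 9^d$ and use the standard fact $\norm{M}_2 \le 2\max_{\theta\in\mathcal N}|\theta^\top M\theta|$ for symmetric $M$, reducing the task to bounding $|\norm{S'z}_2^2 - \norm{z}_2^2|$ by $\eps/2$ for each net direction, where $z=U_{I^c}\theta$. Conditioned on the hash function $h$ underlying $S'$, this error is the Rademacher chaos $\sum_{j\ne k}\sigma_j\sigma_k\,\mathbbm{1}[h(j)=h(k)]\,z_jz_k = \sigma^\top M_h\sigma$, where $M_h$ is block-diagonal over the $m$ buckets with off-diagonal blocks $z^{(b)}(z^{(b)})^\top$. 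I would then apply the Hanson--Wright inequality conditionally on $h$, whose tail is governed by $\norm{M_h}_F$ and $\norm{M_h}_2$; here $\norm{M_h}_2\le\max_b\norm{z^{(b)}}_2^2$ (the heaviest bucket mass) and $\E_h\norm{M_h}_F^2 \le \norm{z}_2^4/m\le 1/m$. Finally I would union bound the Hanson--Wright failure over all $9^d$ net points and separately charge the randomness of $h$.

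The main obstacle is exactly this control of the hashing: I must show that with high probability over $h$, simultaneously across the net, no bucket concentrates too much of any $z=U_{I^c}\theta$, i.e.\ $\max_b\norm{z^{(b)}}_2^2$ is small, and that $\norm{M_h}_F$ concentrates near its mean. Because $z$ is flat ($\max_j z_j^2<\eps/d$), each bucket's mass is a sum of many tiny independent contributions and concentrates, but turning this into a uniform-over-directions statement and matching the Hanson--Wright subgaussian/subexponential tails against the $9^d$ net size and the target failure probability $\delta$ is where the real accounting lives. Obtaining the clean bound $m=O((d/\eps^2)(\polylog(1/\eps)+\log(1/\delta)))$---with only a \emph{logarithmic} dependence on $1/\delta$ rather than the polynomial dependence of a plain second-moment/Markov analysis---forces a high-moment rather than a variance argument, and the side condition $d=\Omega((1/\eps)\polylog(1/\eps)\log^2(1/\delta))$ is precisely what makes the net term (of order $d$), the bucket-concentration term, and the $\log(1/\delta)$ tail term balance. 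The ``perfect hashing'' of the heavy rows, already guaranteed by the identity block, is what removes the obstruction that defeats {Count-Sketch} in this parameter range, following the high-/low-leverage split of \cite{clarksonwoodruff}.
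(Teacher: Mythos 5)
Your opening reduction is exactly how the paper's proof of Lemma~\ref{lem:oracle_lemma} begins: split $\norm{Sx}_2^2 = \norm{x_I}_2^2 + \norm{S'x_{I^c}}_2^2$, so that the problem becomes a Count-Sketch embedding statement for an orthonormal basis $U$ of $\col(A)$ all of whose rows in $I^c$ have leverage score below $\nu=\eps/d$; the flatness bound $\max_{j\in I^c}\norm{P_E e_j}_2^2\le \nu$ is used by the paper in the same role you give it. After that point, however, the routes diverge: the paper does \emph{not} use a net. It adapts the chaining proof of Theorem~5 of \cite{BDN15}, introducing the bilinear form $A_{\delta,x}$, the seminorm $\norm{x}_\delta=\max_i(\sum_{j\in I^c}\delta_{ij}x_j^2)^{1/2}$, and re-verifying the decoupling/symmetrization and chaining steps with all sums restricted to $I^c$. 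Your replacement of this machinery by a $9^d$-point net plus Hanson--Wright conditioned on the hash $h$ is where the proposal breaks: the step you defer as ``real accounting'' is not bookkeeping, and I believe it cannot be closed.

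Concretely, your subexponential Hanson--Wright branch, at deviation $\eps$ and per-net-point failure probability $9^{-d}\delta$, needs $\norm{M_h}_{\mathrm{op}} \le c\,\eps/(d\log 9+\log(1/\delta))$ uniformly over net points, where $c$ is the Hanson--Wright constant (which cannot exceed $1/2$, as the rank-one Gaussian chaos shows). This premise is false for inputs the theorem must cover. The theorem permits up to $d^2/\eps$ rows with leverage score just below $\nu=\eps/d$, and under the hypothesis $d=\Omega(\eps^{-1}\polylog(1/\eps)\log^2(1/\delta))$ every bucket then receives many such rows. Taking $\theta$ proportional to the sum of the rows landing in one bucket $b$ (a small subset of rows, which for generic flat $U$ are nearly orthogonal), the vector $z=U_{I^c}\theta$ carries mass $\norm{z^{(b)}}_2^2\approx\eps/d$ spread over many coordinates of $b$, so the zero-diagonal block $z^{(b)}(z^{(b)})^\top-\diag((z_j^2)_{j\in b})$ has operator norm $\approx\eps/d$, and a net point within $1/4$ of $\theta$ still inherits block norm $\ge\eps/(2d)$. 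Hence for \emph{every} realization of $h$ there are net points where your conditional tail bound is only $\exp(-c\,\eps/\norm{M_h}_{\mathrm{op}})\ge \exp(-d)$, and $9^d e^{-d}\gg 1$: the union bound is vacuous, and no re-tuning of the net granularity helps, since a $\gamma$-net costs a factor $(1-2\gamma)^{-1}$ in the deviation while its cardinality $(1+2/\gamma)^d$ always demands exponent larger than $d$. The same defect recurs if you instead try to prove your uniform bucket-mass bound by Bernstein plus a union bound over the net: with individual terms as large as $\eps/d$, that route only yields $\max_b\norm{z^{(b)}}_2^2\lesssim\eps$, a factor $d$ too weak. The theorem is nevertheless true at these bad directions---the offending block's total contribution is capped at $|b|\,\eps/d\ll\eps$, so Hanson--Wright's two parameters are simply too coarse there---but exploiting this requires a multi-scale argument over both sources of randomness taken jointly, which is precisely the chaining analysis of \cite{BDN15} that the paper's Lemma~\ref{lem:oracle_lemma} adapts.
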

Hence, if there happen to be at most $d\polylog(1/\eps)/\eps^2$ rows of leverage score at least $\eps/d$, the overall sketch length for embedding $\colspace(A)$ can be reduced to $O((d\polylog(1/\eps)+\log(1/\delta))/\eps^2)$, a quadratic improvement in $d$ and an exponential improvement in $\delta$ over the original sketch length of $O(d^2/(\eps^2\delta))$ for {Count-Sketch}. In the worst case there could be $O(d^2/\epsilon)$ such rows, though empirically we do not observe this. %
In Section~\ref{sec:exp_ihs}, we shall show it is possible to learn the indices of the heavy rows for real-world data.

\noindent \textbf{Optimizing values.}
When we fix the positions of the non-zero entries, we aim to optimize the values by gradient descent. Rather than the previous black-box way in~\cite{indyk2019learning} that minimizes $\sum_i f(A, \mathsf{ALG}(S, A))$, we propose the following objective loss function for the learning algorithm
$\mathcal{L}(S, \mathcal{A}) = \sum_{A_i \in \mathcal{A}}\|(A_i R_i)^\top A_i R_i - I\|_F$,
over all the training data, where $R_i$ comes from the QR decomposition of $SA_i = Q_i R_i^{-1}$. 
The intuition for this loss function is given by the lemma below, whose proof is deferred to Section~\ref{sec:eps_subspace_proof}.
\begin{lemma}
\label{lem:eps_subspace}
Suppose that $\eps\in(0,\frac{1}{2})$, $S \in \mathbb{R}^{m \times n}$, $A \in \mathbb{R}^{n \times d}$ of full column rank, and $SA=QR$ is the QR-decomposition of $SA$. If
$
\|(AR^{-1})^\top AR^{-1} - I\|_{\mathrm{op}} \le \eps
$, 
then $S$ is a $(1 \pm \eps)$-subspace embedding of $\col(A)$. 
\end{lemma}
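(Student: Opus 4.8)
The plan is to reduce the subspace-embedding claim to a one-line statement about the singular values of $M := AR^{-1}$ and then translate the operator-norm hypothesis into a uniform two-sided norm bound. First I would observe that the hypothesis is only meaningful when $R$ is invertible (otherwise $AR^{-1}$ is undefined), so that implicitly $SA$ has full column rank $d$ and its thin QR factorization $SA = QR$ has $Q\in\R^{m\times d}$ with orthonormal columns, $Q^\top Q = I_d$, and $R\in\R^{d\times d}$ invertible upper triangular. From $SA = QR$ we get $Q = SAR^{-1}$, and since $Q$ has orthonormal columns, $\norm{Qy}_2 = \norm{y}_2$ for every $y\in\R^d$.

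Next I would make the change of variables $y = Rx$, which is a bijection of $\R^d$ because $R$ is invertible. On one side, $\norm{SAx}_2 = \norm{QRx}_2 = \norm{Qy}_2 = \norm{y}_2$; on the other, $\norm{Ax}_2 = \norm{AR^{-1}Rx}_2 = \norm{My}_2$. Thus it suffices to compare $\norm{My}_2$ with $\norm{y}_2$ uniformly in $y$. This is exactly what the hypothesis delivers: writing $\norm{My}_2^2 - \norm{y}_2^2 = y^\top(M^\top M - I)y$ and bounding the quadratic form of the symmetric matrix $M^\top M - I$ by its operator norm, $\abs{y^\top(M^\top M - I)y} \le \|M^\top M - I\|_{\mathrm{op}}\norm{y}_2^2 \le \eps\norm{y}_2^2$, I obtain $(1-\eps)\norm{y}_2^2 \le \norm{My}_2^2 \le (1+\eps)\norm{y}_2^2$, i.e. $\sqrt{1-\eps}\,\norm{SAx}_2 \le \norm{Ax}_2 \le \sqrt{1+\eps}\,\norm{SAx}_2$ for all $x$.

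The only genuinely nontrivial step—and the one I expect to be the main (if minor) obstacle—is converting the $\sqrt{1\pm\eps}$ factors into the clean $(1\pm\eps)$ factors of the subspace-embedding definition stated in this section. Rearranging the above gives $\tfrac{1}{\sqrt{1+\eps}}\norm{Ax}_2 \le \norm{SAx}_2 \le \tfrac{1}{\sqrt{1-\eps}}\norm{Ax}_2$, so it is enough to verify the elementary inequalities $\tfrac{1}{\sqrt{1+\eps}} \ge 1-\eps$ and $\tfrac{1}{\sqrt{1-\eps}} \le 1+\eps$. Squaring and clearing the (positive) denominators, these reduce respectively to $\eps^2 - \eps - 1 \le 0$ and $\eps^2 + \eps - 1 \le 0$, and both hold throughout $\eps\in(0,\tfrac12)$ since the relevant positive roots are $\tfrac{1+\sqrt5}{2}\approx 1.618$ and $\tfrac{-1+\sqrt5}{2}\approx 0.618$. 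This is precisely where the standing assumption $\eps<1/2$ is used; the care required is only to keep each factor on the correct side of its (tight) bound. Combining the two verified inequalities yields $(1-\eps)\norm{Ax}_2 \le \norm{SAx}_2 \le (1+\eps)\norm{Ax}_2$ for all $x\in\R^d$, which is the definition of a $(1\pm\eps)$-subspace embedding of $\col(A)$.
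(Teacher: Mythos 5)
Your proof is correct and follows essentially the same route as the paper's: use the orthonormality of $Q = SAR^{-1}$ to get $\|SAR^{-1}x\|_2 = \|x\|_2$, read the operator-norm hypothesis as the two-sided bound $(1-\eps)\|x\|_2^2 \le \|AR^{-1}x\|_2^2 \le (1+\eps)\|x\|_2^2$, change variables, and finish with the elementary inequalities $1/\sqrt{1+\eps} \ge 1-\eps$ and $1/\sqrt{1-\eps} \le 1+\eps$ (which the paper notes hold for all $\eps \le \frac{\sqrt{5}-1}{2}$, so $\eps < \frac12$ suffices). Your write-up is in fact slightly cleaner on two points the paper glosses over: you state explicitly that the hypothesis forces $R$ to be invertible, and you land exactly on the definition $(1-\eps)\|Ax\|_2 \le \|SAx\|_2 \le (1+\eps)\|Ax\|_2$ as given in Section 6 rather than its reversed form.
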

Lemma~\ref{lem:eps_subspace} implies that if the loss function over $\mathcal{A}_{\mathrm{train}}$ is small and the distribution of $\mathcal{A}_{\mathrm{test}}$ is similar to $\mathcal{A}_{\mathrm{train}}$, it is reasonable to expect that $S$ is a good subspace embedding of $\mathcal{A}_{\mathrm{test}}$. Here we use the Frobenius norm rather than operator norm in the loss function because it will make the optimization problem easier to solve, and our empirical results also show that the performance of the Frobenius norm is better than that of the operator norm.
\section{Experiments: Low-Rank Approximation}
\label{sec:exp}

In this section, we evaluate the empirical performance of our learning-based approach for LRA on three datasets. For each, we fix the sketch size and compare the approximation error $\norm{A - X}_F - \norm{A - A_k}_F$ averaged over $10$ trials. 
In order to make position optimization more efficient, in line 3 of Algorithm~\ref{alg:greedy_alg}), instead of computing many rank-$1$ SVD updates, we use formulas for \textit{fast} rank-$1$ SVD updates~\cite{fastsvd}. For the greedy method, we used several Nvidia GeForce GTX 1080 Ti machines. For the maximum inner product method, the experiments are conducted on a laptop with a 1.90GHz CPU and 16GB RAM. 

\noindent\textbf{Datasets.}
 We use the three datasets from~\cite{indyk2019learning}: (1, 2) \textbf{Friends, Logo} (image): frames from a short video of the TV show \textit{Friends} and of a logo being painted; (3) \textbf{Hyper} (image): hyperspectral images from natural scenes. Additional details are in~Table~\ref{tab:dataset_descriptions}.



\noindent\textbf{Baselines.}
We compare our approach to the following baselines. 
\textbf{Classical CS}: a random Count-Sketch. \textbf{IVY19}: a sparse sketch with learned values, and random positions for the non-zero entries.
\textbf{Ours (greedy)}: a sparse sketch where both the values and positions of the non-zero entries are learned. The positions are learned by Algorithm~\ref{alg:greedy_alg}. 
The values are learned similarly to~\cite{indyk2019learning}.
\textbf{Ours (inner product)}: a sparse sketch where both the values and the positions of the non-zero entries are learned. The positions are learned by $S_1$ in Algorithm~\ref{alg:new_alg}. IVY19 and greedy algorithm use the full training set and our Algorithm~\ref{alg:new_alg} takes the input as the average over the entire training matrix. 


We also give a sensitivity analysis for our algorithm, where we compare our algorithm with the following variants: \textbf{Only row sampling} (perform projection by ridge leverage score sampling), \textbf{$\ell_2$ sampling} (Replace leverage score sampling with $\ell_2$-norm row sampling and maintain the same downstream step), and \textbf{Randomly Grouping} (Use ridge leverage score sampling but randomly distribute the remaining rows). The result shows none of these variants outperforms non-learned sketching. We defer the results of this part to  Appendix~\ref{sec:new_method_sensitive}. 


\noindent{\bf Result Summary.} Our empirical results are provided in Table~\ref{tab:LRA_2} for both Algorithm~\ref{alg:lowrank-sketch} and Algorithm~\ref{alg:one_side}, where the errors take an average over $10$ trials. We use the average of all training matrices from $\Tr$, as the input to the algorithm~\ref{alg:new_alg}.
 We note that all the steps of our training algorithms are done on the training data. Hence, no additional computational cost is incurred for the sketching algorithm on the test data. Experimental parameters (i.e., learning rate for gradient descent) can be found in Appendix~\ref{sec:exp_appendix_detail}.
For both sketching algorithms, \textbf{Ours} are always the best of the four sketches. It is significantly
better than \textbf{Classical CS}, obtaining improvements of around $70\%$. It also obtains a roughly $30\%$ improvement over  $\textbf{IVY19}$. 

\begin{table}[!b]
	


{\centering
			\begin{minipage}{0.4\hsize}\centering
			\resizebox{1\textwidth}{!}{	
			\begin{tabular}{|l|l|l|l|} 
				\hline
				\;\small{\bf $\boldsymbol{k,m,}$ Sketch}&\small{\bf Logo}&\small{\bf Friends}&\small{\bf Hyper}\\
				\hline
				
				$20,40,\text{Classical CS}$&2.371&4.073&6.344\\
				
			    $20,40,\text{IVY19}$&0.687&1.048&3.764\\
				
                $20, 40, \text{Ours (greedy)}$ & {\bf 0.500}  & 0.899  & {\bf 2.497} \\ 
                
                $20, 40, \text{Ours (inner product)}$ & 0.532  & {\bf 0.733}  &  2.975 \\ 
				\hline
				
				$30,60,\text{Class CS}$&1.642&2.683&5.390\\
				
				$30,60,\text{IVY19}$&0.734&1.077&3.748\\
				
                $30, 60, \text{Ours (greedy)}$ & 0.492   & 0.794  & 2.492 \\ 
                $30, 60, \text{Ours (inner product)}$ & {\bf0.436}   &{\bf 0.733}  & {\bf 2.409} \\ 
				\hline
				
		\end{tabular}
		}
		
		\end{minipage}
			\begin{minipage}{0.4\hsize}\centering
			\resizebox{1\textwidth}{!}{	
			\begin{tabular}{|l|l|l|l|} 
				\hline
				\;\small{\bf $\boldsymbol{k,m,}$ Sketch}&\small{\bf Logo}&\small{\bf Friends}&\small{\bf Hyper}\\
				\hline
				
				$20,40,\text{Classical CS}$&0.930&1.542&2.971\\
				
			    $20,40,\text{IVY19}$&0.255&0.723&1.273\\
				
                $20, 40, \text{Ours (greedy)}$ & {\bf 0.196}  & {\bf 0.407}  & {\bf 0.784} \\ 
                
                $20, 40, \text{Ours (inner product)}$ & 0.205  & {\bf 0.407 } & 1.223 \\ 
				\hline
				
				$30,60,\text{Classical CS}$&0.650&1.0575&2.315\\
				$30,60,\text{IVY19}$&0.290&0.713&1.274\\
				
                $30, 60, \text{Ours(greedy)}$ & {\bf 0.197}   &  0.406  & {\bf 0.717} \\ 
                $30, 60, \text{Ours(inner product)}$ & 0.201   & {\bf 0.340}  & 0.943 \\ 
				\hline
				
		\end{tabular}
		}
		
		\end{minipage}
    \caption{Test errors for LRA. (Left: two-side sketch. Right: one-side sketch)}
	\label{tab:LRA_2}
}

\end{table}



\begin{table}
\centering
\resizebox{0.4\textwidth}{!}{	
	\renewcommand{\arraystretch}{1}
\begin{tabular}{|l|c|c|}
\hline
		\; & \small{\bf Offline learning} & \small{\bf Online solving} \\
        \hline
        \small{Ours (inner product)} & $5$  & $0.166$ \\
        \hline
        \small{Ours (greedy)} & $6300$ (1.75h) & $0.172$ \\
        \hline 
        \small{IVY19} & $193$ ($3$min) & $0.168$ \\
        \hline
        \small{Classical CS} & \xmark & $0.166 $\\
        \hline
		\end{tabular}
	}
	\caption{Runtime (in seconds) of LRA on Logo with $k = 30, m = 60$}		
	\label{tab:LRA_timing}
\end{table}

\noindent\textbf{Wall-Clock Times.} The offline learning runtime is in Table~\ref{tab:LRA_timing}, which is the time to train a sketch on $\sA_{\train}$. We can see that although the greedy method will take much longer (1h 45min), our second approach is much faster (5 seconds) than the previous algorithm in~\cite{indyk2019learning} (3 min) and can still achieve a similar error as the greedy algorithm. The reason is that  Algorithm~\ref{alg:new_alg} only needs to compute the ridge leverage scores on the training matrix once, which is actually much cheaper than 
\textbf{IVY19} which needs to compute a differentiable SVD many times during gradient descent.
%



 In Section~\ref{sec:exp_fewshot}, we also study the performance of our approach in the few-shot learning setting, which has been studied in~\cite{iww21}.

\section{Experiments: Second-Order Optimization}
\label{sec:exp_ihs}
In this section, we consider the IHS 
    on the following instance of LASSO regression:
\begin{equation}\label{eqn:LASSO}
\textstyle x^* = \argmin_{\norm{x}_1 \le \lambda} f(x)= \argmin_{\norm{x}_1 \le \lambda} \frac{1}{2} \norm{Ax - b}_2^2, 
\end{equation}
where $\lambda$ is a parameter. We also study the performance of the sketches on the matrix estimation with a nuclear norm constraint problem, the fast regression solver (\cite{BPSW21}), as well as the use of  sketches for first-order methods. The results can be found in Appendix~\ref{sec:exp_appendix_sec}. 
All of our experiments are conducted on a laptop with a 1.90GHz CPU and 16GB RAM. The offline training is done separately 
using a single GPU.  The details of the implementation are deferred to Appendix~\ref{sec:exp_appendix_detail}.

\noindent \textbf{Dataset.} We use the Electric\footnote{\scriptsize\url{https://archive.ics.uci.edu/ml/datasets/ElectricityLoadDiagrams20112014}} dataset of residential electric load measurements. Each row of the matrix corresponds to a different residence. Matrix columns are consecutive measurements at different times. Here $A^i \in \R^{370 \times 9}$, $b^i \in \R^{370 \times 1}$, and $|(A, b)_{\train}| =
320$, $|(A, b)_{\test}| = 80$. We set $\lambda = 15$.

\noindent \textbf{Experiment Setting.} We compare the learned sketch against the classical Count-Sketch\footnote{The framework of~\cite{indyk2019learning} does not apply to the iterative sketching methods in a straightforward manner, so here we only compare with the classical CountSketch. For more details, please refer to Section~\ref{sec:exp_appendix_sec}.}. We choose $m=6d, 8d, 10d$ and consider the error $f(x) - f(x^*)$. For the heavy-row Count-Sketch, we allocate 30\% of the sketch space to the rows of the heavy row candidates. For this dataset, each row represents a specific residence and hence there is a strong pattern of the distribution of the heavy rows. We select the heavy rows according to the number of times each row is heavy in the training data. We give a detailed discussion about this in Appendix~\ref{sec:heavy_distribution}. We highlight that it is still possible to recognize the pattern of the rows even if the row orders of the test data are permuted. We also consider optimizing the non-zero values after identifying the heavy rows, using our new approach in Section~\ref{sec:sketch_learning_sec}. 

\noindent \textbf{Results.} We plot in Figures~\ref{fig:ele} the mean errors on a logarithmic scale. The average offline training time is $3.67$s to find a superset of the heavy rows over the training data and $66$s to optimize the values when $m = 10d$, which are both faster than the runtime of~\cite{indyk2019learning} with the same parameters. Note that the learned matrix $S$ is trained offline only once using the training data. \textit{Hence, no additional computational cost is incurred when solving the optimization problem on the test data.} 

We see all methods display linear convergence, that is, letting $e_k$ denote the error in the $k$-th iteration, we have $e_k \approx \rho^k e_1$ for some convergence rate $\rho$. A smaller convergence rate implies a faster convergence. We calculate an estimated rate of convergence $\rho = (e_k/e_1)^{1/k}$ with $k=7$. We can see both sketches, especially the sketch that optimizes both the positions and values, show significant improvements. When the sketch size is small ($6d$), this sketch has a convergence rate that is just 13.2\% of that of the classical {Count-Sketch}, and when the sketch size is large ($10d$), this sketch has a smaller convergence rate that is just 12.1\%.

\begin{figure}[tb]
\centering
    \includegraphics[clip,trim={15px 0 25px 30px},width=0.3\textwidth]{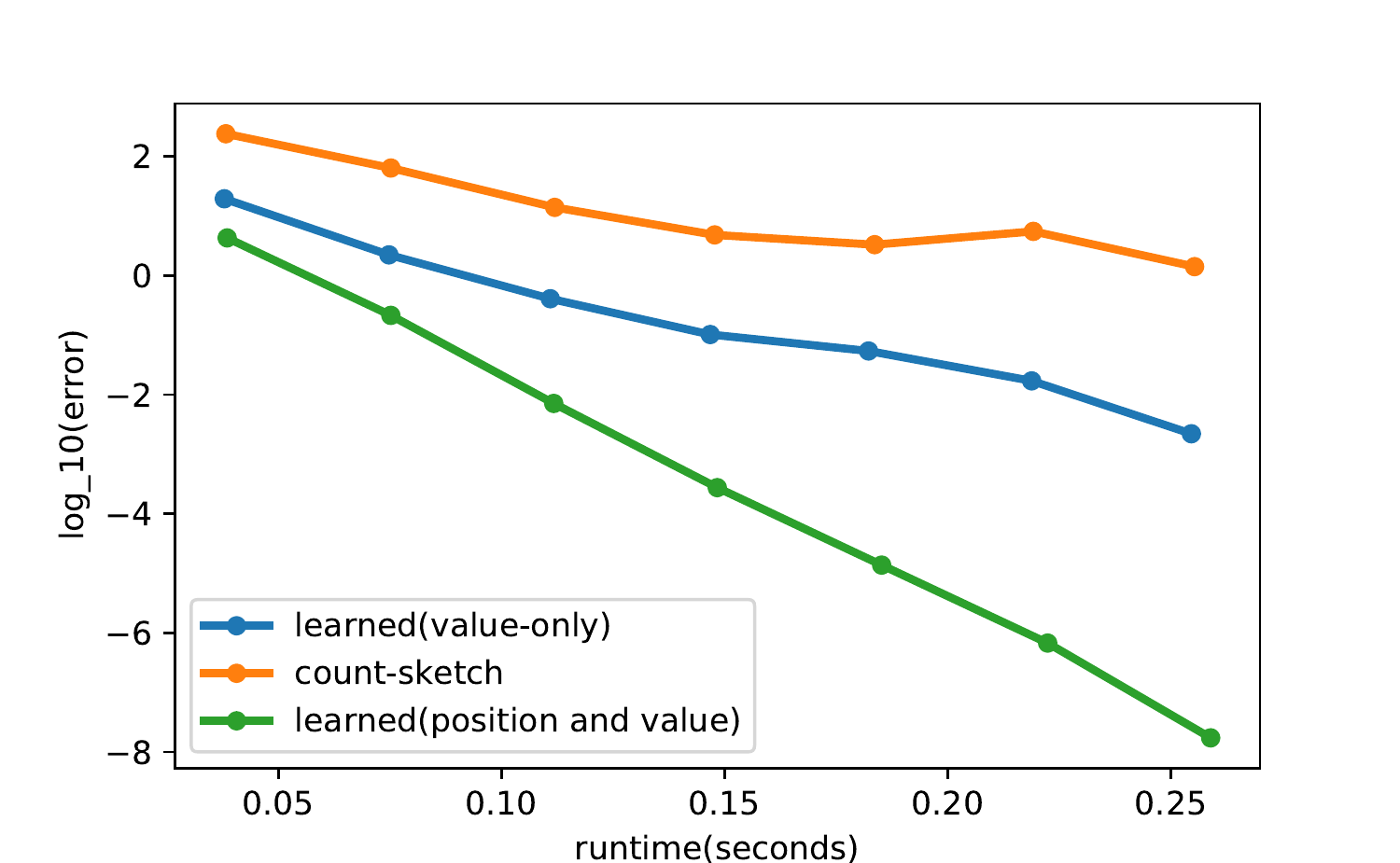}
    \includegraphics[clip,trim={10px 0 25px 30px},width=0.3\textwidth]{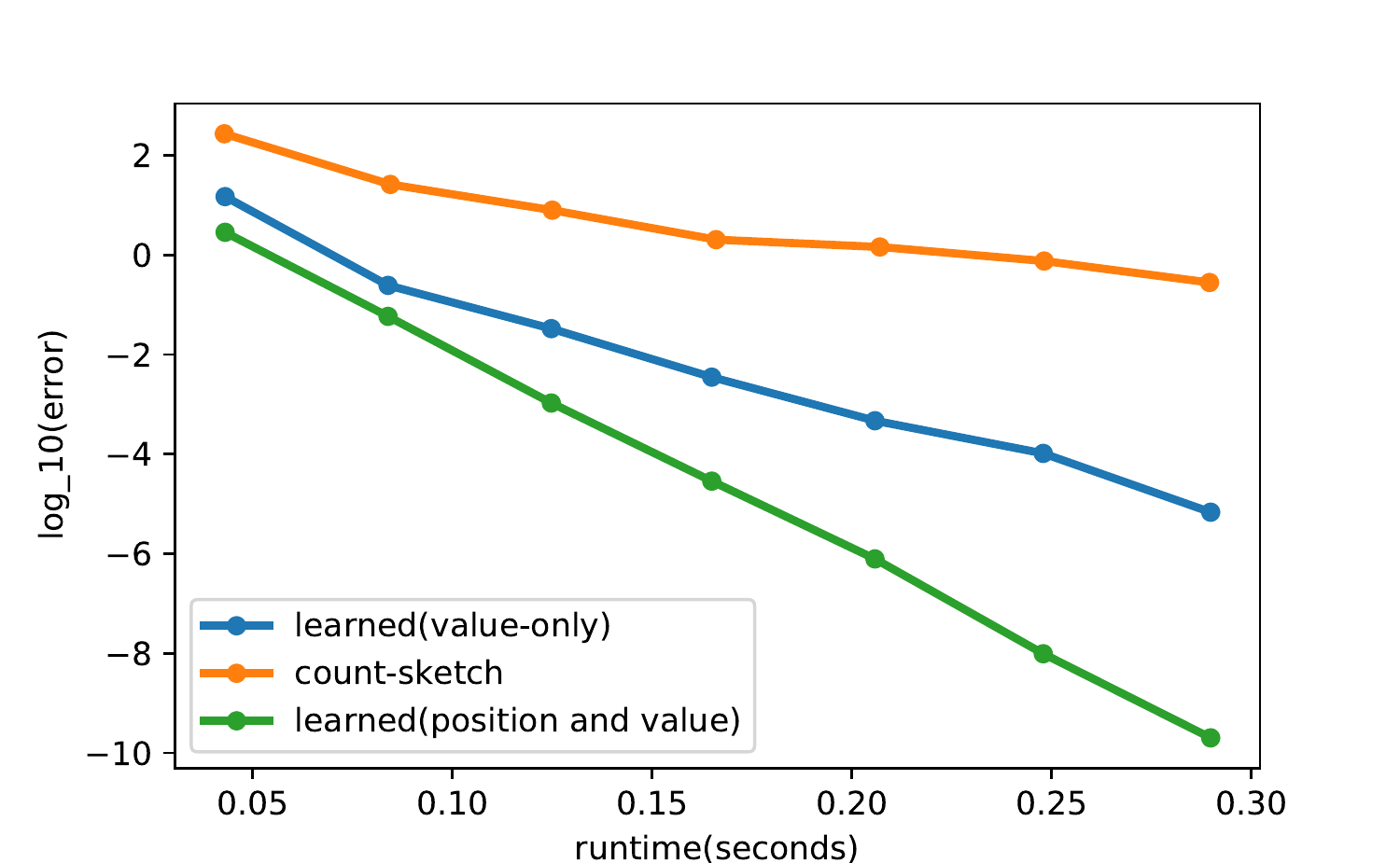}
    \includegraphics[clip,trim={10px 0 25px 30px},width=0.3\textwidth]{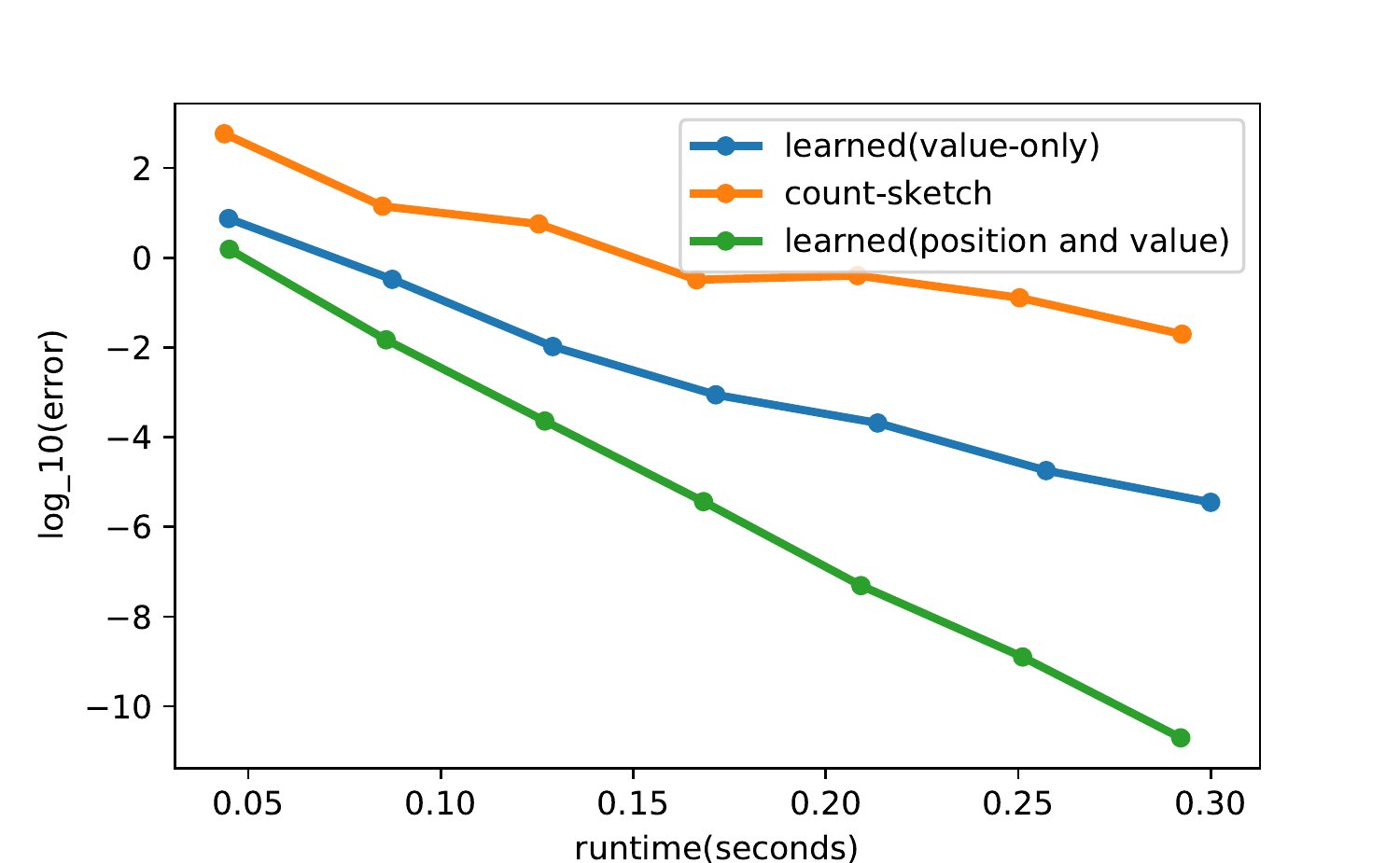}
\caption{Test error of LASSO in Electric dataset.}
\label{fig:ele}
\end{figure}

\section*{Acknowledgements} 
Yi Li would like to thank for the partial support from the Singapore Ministry of Education under Tier 1 grant RG75/21. Honghao Lin and David Woodruff  were supported in part by an Office of Naval Research (ONR) grant N00014-18-1-2562. Ali Vakilian was supported by NSF award CCF-1934843.%

\bibliography{reference}

\newcommand{\etalchar}[1]{$^{#1}$}
\begin{thebibliography}{vdBPSW21}

\bibitem[AAB{\etalchar{+}}19]{A+19}
Akshay Agrawal, Brandon Amos, Shane~T. Barratt, Stephen~P. Boyd, Steven
  Diamond, and J.~Zico Kolter.
\newblock Differentiable convex optimization layers.
\newblock In {\em Advances in Neural Information Processing Systems}, pages
  9558--9570, 2019.

\bibitem[ACW17]{avron2016sharper}
Haim Avron, Kenneth~L. Clarkson, and David~P. Woodruff.
\newblock Sharper bounds for regularized data fitting.
\newblock In {\em Approximation, Randomization, and Combinatorial Optimization.
  Algorithms and Techniques, {\em(APPROX/RANDOM)}}, pages 27:1--27:22, 2017.

\bibitem[AIV19]{aamand2019learned}
Anders Aamand, Piotr Indyk, and Ali Vakilian.
\newblock (learned) frequency estimation algorithms under zipfian distribution.
\newblock {\em arXiv preprint arXiv:1908.05198}, 2019.

\bibitem[BDN15]{BDN15}
Jean Bourgain, Sjoerd Dirksen, and Jelani Nelson.
\newblock Toward a unified theory of sparse dimensionality reduction in
  {E}uclidean space.
\newblock {\em Geometric and Functional Analysis}, pages 1009--1088, 2015.

\bibitem[Bra06]{fastsvd}
Matthew Brand.
\newblock Fast low-rank modifications of the thin singular value decomposition.
\newblock {\em Linear Algebra and its Applications}, 415.1, 2006.

\bibitem[CB19]{NVIDIA}
Lung-Sheng Chien and Samuel~Rodriguez Bernabeu.
\newblock Fast singular value decomposition on gpu.
\newblock NVIDIA presentation at GPU Technology Conference, 2019.

\bibitem[CD19]{CD19}
Graham Cormode and Charlie Dickens.
\newblock Iterative hessian sketch in input sparsity time.
\newblock In {\em Proceedings of 33rd Conference on Neural Information
  Processing Systems ({NeurIPS})}, Vancouver, Canada, 2019.

\bibitem[CGP20]{cohen2020composable}
Edith Cohen, Ofir Geri, and Rasmus Pagh.
\newblock Composable sketches for functions of frequencies: Beyond the worst
  case.
\newblock In {\em International Conference on Machine Learning}, pages
  2057--2067. PMLR, 2020.

\bibitem[CMM17]{cmm17}
Michael~B. Cohen, Cameron Musco, and Christopher Musco.
\newblock Input sparsity time low-rank approximation via ridge leverage score
  sampling.
\newblock In {\em Proceedings of the Twenty-Eighth Annual {ACM-SIAM} Symposium
  on Discrete Algorithms, {\em(SODA)}}, pages 1758--1777, 2017.

\bibitem[CW09]{clarkson2009numerical}
Kenneth~L Clarkson and David~P Woodruff.
\newblock Numerical linear algebra in the streaming model.
\newblock In {\em Proceedings of the forty-first annual symposium on Theory of
  computing (STOC)}, pages 205--214, 2009.

\bibitem[CW17]{clarksonwoodruff}
Kenneth~L Clarkson and David~P Woodruff.
\newblock Low-rank approximation and regression in input sparsity time.
\newblock {\em Journal of the ACM (JACM)}, 63(6):54, 2017.

\bibitem[DIRW20]{dong2019learning}
Yihe Dong, Piotr Indyk, Ilya Razenshteyn, and Tal Wagner.
\newblock Learning sublinear-time indexing for nearest neighbor search.
\newblock In {\em International Conference on Learning Representations}, 2020.

\bibitem[EIN{\etalchar{+}}21]{eden2021learning}
Talya Eden, Piotr Indyk, Shyam Narayanan, Ronitt Rubinfeld, Sandeep Silwal, and
  Tal Wagner.
\newblock Learning-based support estimation in sublinear time.
\newblock In {\em International Conference on Learning Representations}, 2021.

\bibitem[HSYB15]{numax}
Chinmay Hegde, Aswin~C. Sankaranarayanan, Wotao Yin, and Richard~G. Baraniuk.
\newblock Numax: A convex approach for learning near-isometric linear
  embeddings.
\newblock In {\em IEEE Transactions on Signal Processing}, pages 6109--6121,
  2015.

\bibitem[IVY19]{indyk2019learning}
Piotr Indyk, Ali Vakilian, and Yang Yuan.
\newblock Learning-based low-rank approximations.
\newblock In {\em Advances in Neural Information Processing Systems}, pages
  7400--7410, 2019.

\bibitem[IWW21]{iww21}
Piotr Indyk, Tal Wagner, and David Woodruff.
\newblock Few-shot data-driven algorithms for low rank approximation.
\newblock {\em Advances in Neural Information Processing Systems}, 34, 2021.

\bibitem[JLL{\etalchar{+}}20]{jiang2020learningaugmented}
Tanqiu Jiang, Yi~Li, Honghao Lin, Yisong Ruan, and David~P. Woodruff.
\newblock Learning-augmented data stream algorithms.
\newblock In {\em International Conference on Learning Representations}, 2020.

\bibitem[MM13]{meng2013low}
Xiangrui Meng and Michael~W Mahoney.
\newblock Low-distortion subspace embeddings in input-sparsity time and
  applications to robust linear regression.
\newblock In {\em Proceedings of the forty-fifth annual ACM symposium on Theory
  of computing}, pages 91--100, 2013.

\bibitem[NN13]{nelson2013osnap}
Jelani Nelson and Huy~L Nguy{\^e}n.
\newblock Osnap: Faster numerical linear algebra algorithms via sparser
  subspace embeddings.
\newblock In {\em Foundations of Computer Science (FOCS), 2013 IEEE 54th Annual
  Symposium on}, pages 117--126, 2013.

\bibitem[NN14]{nn14}
Jelani Nelson and Huy~L. Nguy{\^{e}}n.
\newblock Lower bounds for oblivious subspace embeddings.
\newblock In {\em Automata, Languages, and Programming - 41st International
  Colloquium {\em(ICALP)}}, pages 883--894, 2014.

\bibitem[PGM{\etalchar{+}}19]{pytorch}
Adam Paszke, Sam Gross, Francisco Massa, Adam Lerer, James Bradbury, Gregory
  Chanan, and Trevor Killeen.
\newblock Pytorch: An imperative style, high-performance deep learning library.
\newblock 2019.

\bibitem[PW16]{pw16}
Mert Pilanci and Martin~J. Wainwright.
\newblock Iterative {H}essian sketch: Fast and accurate solution approximation
  for constrained least-squares.
\newblock {\em J. Mach. Learn. Res.}, 17:53:1--53:38, 2016.

\bibitem[Sar06]{sarlos2006improved}
Tamas Sarlos.
\newblock Improved approximation algorithms for large matrices via random
  projections.
\newblock In {\em 47th Annual IEEE Symposium on Foundations of Computer Science
  (FOCS)}, pages 143--152, 2006.

\bibitem[vdBPSW21]{BPSW21}
Jan van~den Brand, Binghui Peng, Zhao Song, and Omri Weinstein.
\newblock Training (overparametrized) neural networks in near-linear time.
\newblock In James~R. Lee, editor, {\em 12th Innovations in Theoretical
  Computer Science Conference, {ITCS}}, volume 185, pages 63:1--63:15, 2021.

\bibitem[Ver12]{V12}
Roman Vershynin.
\newblock Introduction to the non-asymptotic analysis of random matrices.
\newblock In Yonina~C. Eldar and Gitta Kutyniok, editors, {\em Compressed
  Sensing: Theory and Applications}, page 210–268. Cambridge University
  Press, 2012.

\bibitem[WZSS17]{learning_hash}
Jingdong Wang, Ting Zhang, Nicu Sebe, and Heng~Tao ShenWang.
\newblock A survey on learning to hash.
\newblock In {\em IEEE Transactions on Pattern Analysis and Machine
  Intelligence}, pages 769--790, 2017.

\end{thebibliography}
\bibliographystyle{alpha}

\appendix


\newpage
\section{Additional Experiments: Low-Rank Approximation}
\label{sec:exp_appendix_lra}

The details (data dimension, $N_{\mathrm{train}}$, etc.) are presented  in Table~\ref{tab:dataset_descriptions}. 
\begin{table}[h]
\centering

\begin{tabular}{|l|l|l|l|l|}
\hline
\textbf{Name} & \textbf{Type}   & \textbf{Dimension} & \textbf{$N_{\mathrm{train}}$} & \textbf{$N_{\mathrm{test}}$} \\ \hline
Friends & Image  & $5760\times 1080$ &  $400$ & $100$ \\ \hline
Logo & Image  & $5760\times 1080$ & $400$ & $100$ \\ \hline

Hyper & Image  & $1024\times 768$ & $400$ & $100$ \\ \hline
\end{tabular}
\caption{Data set descriptions}		\label{tab:dataset_descriptions}

\end{table}

\subsection{Sensitivity Analysis of Algorithm~\ref{alg:new_alg}}
\label{sec:new_method_sensitive}
\begin{table}[H]
		\centering
			\begin{tabular}{|l|l|l|l|} 
				\hline
				\;\small{\bf  Sketch}&\small{\bf Logo}&\small{\bf Friends}&\small{\bf Hyper}\\
				\hline
				
				$\text{Ours(inner product)}$&{\bf0.311}&{\bf0.470}&{\bf 1.232}\\
				$ 
				\text{$\ell_2$ Sampling}$ &  0.698  & 0.935 & 1.293 \\
				
			    $ \text{Only Ridge}$ &  0.994 & 1.493 & 4.155 \\
			    $\text{Randomly Grouping }$&0.659&1.069&2.070\\
				\hline
		\end{tabular}
		\caption{Sensitivity analysis for our approach (using Algorithm 1 from~\cite{indyk2019learning} with one sketch)}
		\label{tab:LRA_sensitive}		
\end{table}

In this section we explore how sensitive the performance of our Algorithm~\ref{alg:new_alg} is to the ridge leverage score sampling and maximum inner product grouping process. We consider the following baselines:
\begin{itemize}


\item $\ell_2$ norm sampling: we sample the rows according to their squared length instead of doing ridge leverage score sampling.

\item Only ridge leverage score sampling: the subspace spanned by only the sampled rows from ridge leverage score sampling. 

\item Randomly grouping: we put the sampled rows into different buckets as before, but randomly divide the non-sampled rows into buckets. 
\end{itemize}

The results are shown in Table~\ref{tab:LRA_sensitive}. Here we set $k = 30, m = 60$ as an example. To show the difference of the initialization method more clearly, we compare the error using the one-sided sketching Algorithm~\ref{alg:one_side} and do not further optimize the non-zeros values. From the table we can see both that ridge leverage score sampling and the downstream grouping process are necessary, otherwise the error will be similar or even worse than that of the classical Count-Sketch.

\subsection{Total Variation Distance}
As we have shown in Theorem~\ref{thm:lra_gua}, if the total variation distance between the row sampling probability distributions $p$ and $q$ is $O(1)$, we have a worst-case guarantee of $O(k \log k + k/\eps)$, which is strictly better than the $\Omega(k^2)$ lower bound for the random CountSketch, even when its non-zero values have been optimized. We now study the total variation distance between the train and test matrix in our dataset. The result is shown in Figure~\ref{fig:distance}. From the figure we can see that for all the three dataset, the total variation distance is bounded by a constant, which suggests that the assumptions are reasonable for real-world data.

\begin{figure}[b]
\centering
    \includegraphics[clip,trim={15px 0 25px 30px},width=0.3\textwidth]{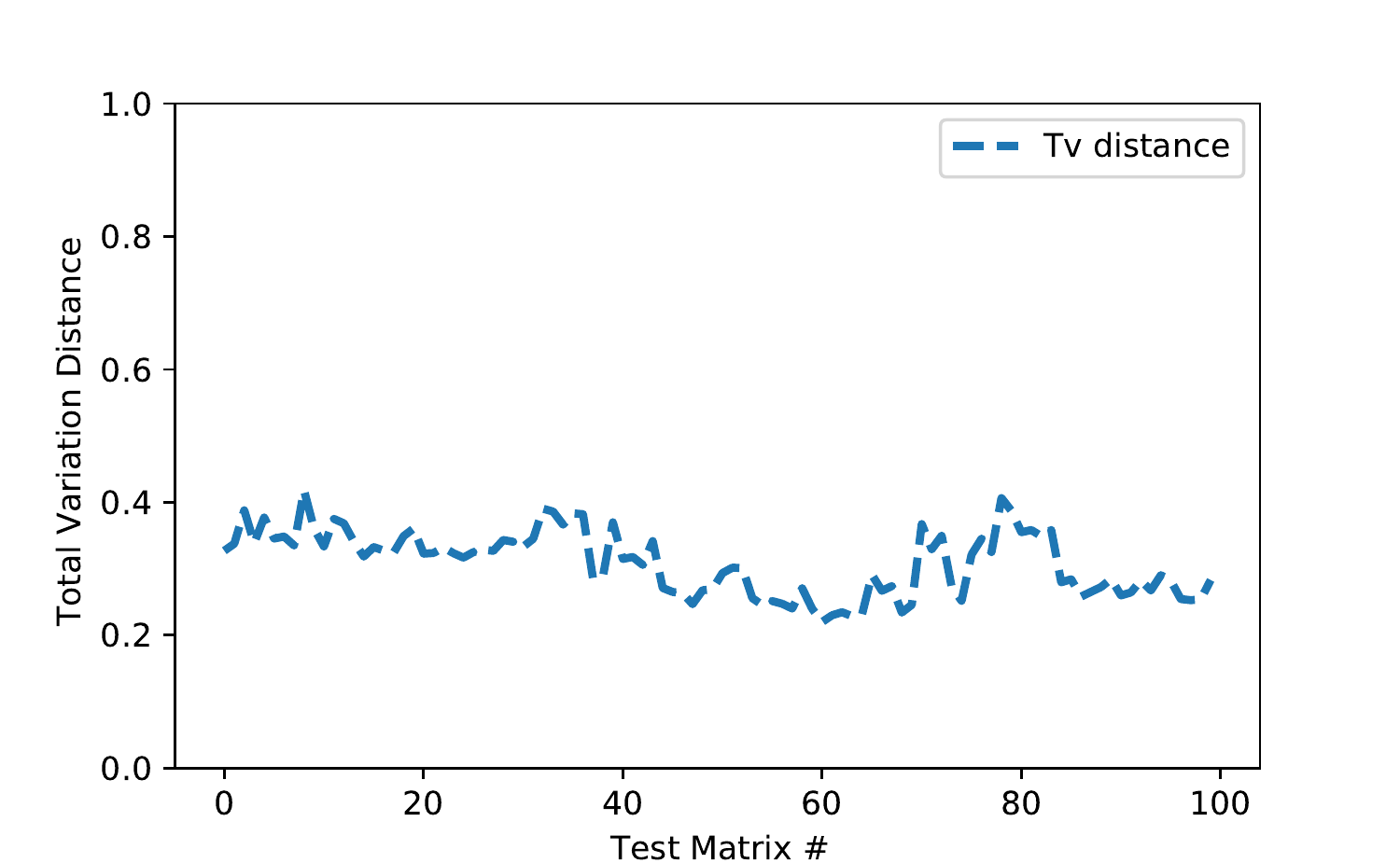}
    \includegraphics[clip,trim={10px 0 25px 30px},width=0.3\textwidth]{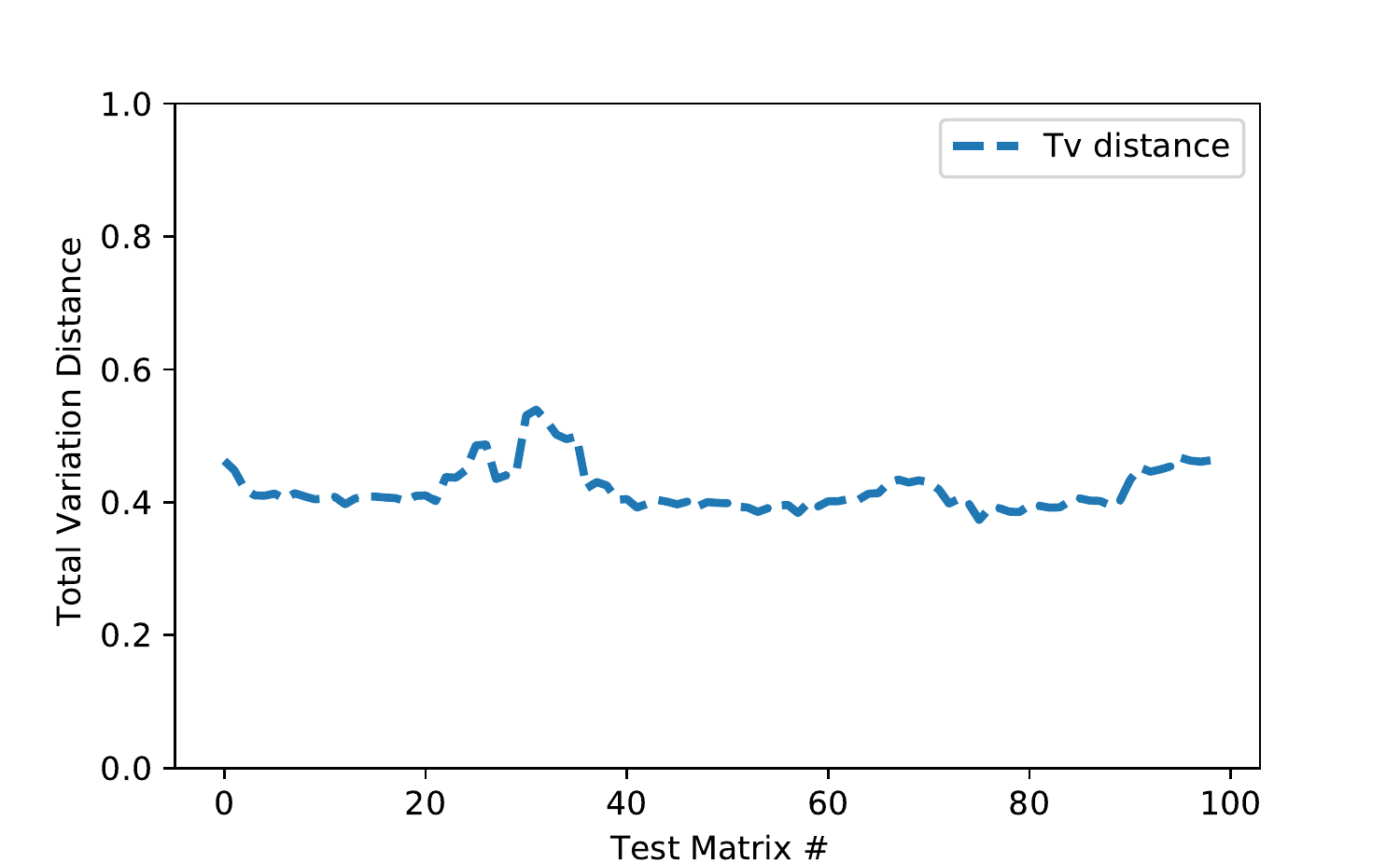}
    \includegraphics[clip,trim={10px 0 25px 30px},width=0.3\textwidth]{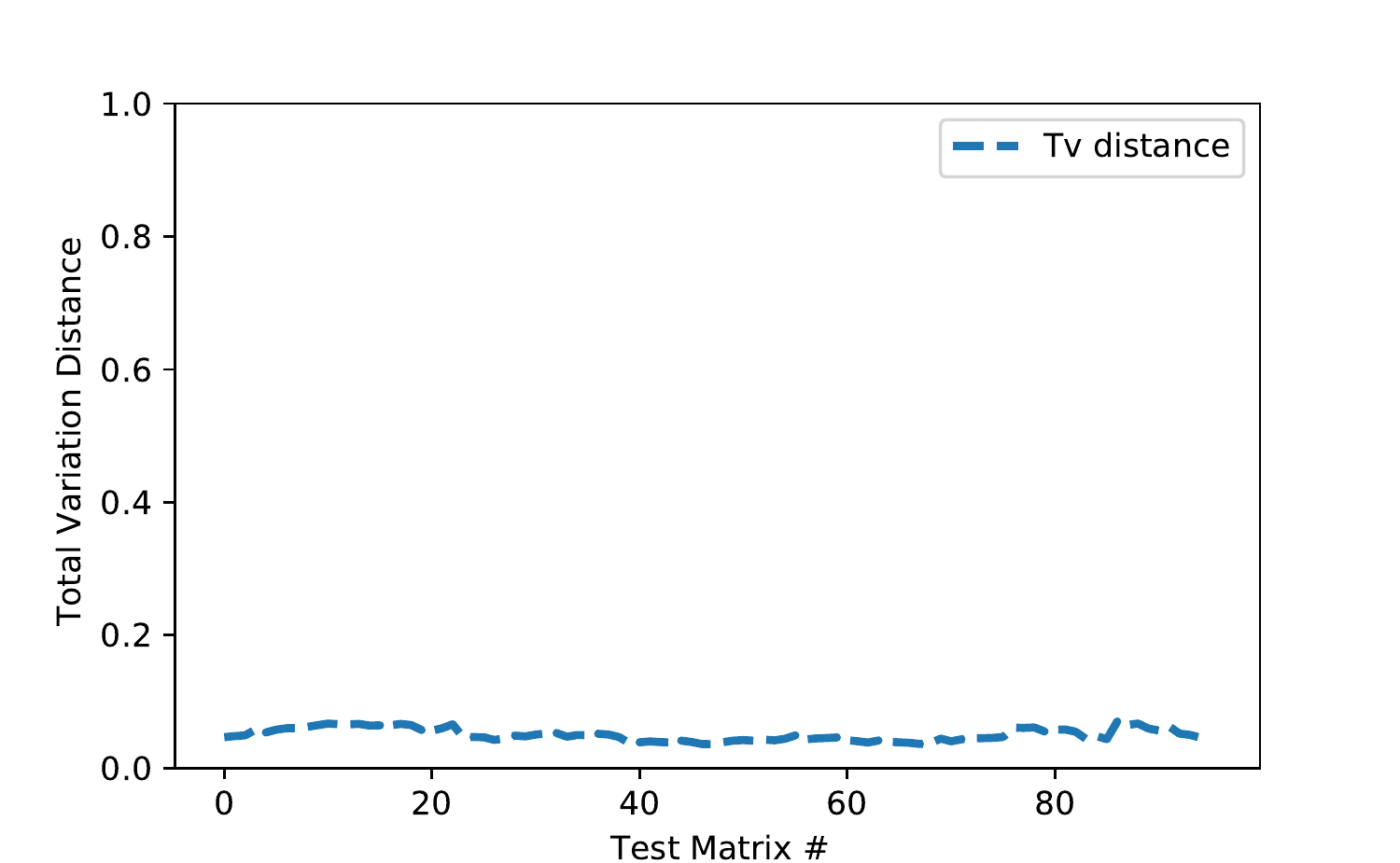}
\caption{Total variation distance between train and test matrices. left: Logo, middle: friend, right: Hyper.}
\label{fig:distance}
\end{figure}

\subsection{Experiments: LRA in the few-shot setting}
\label{sec:exp_fewshot}
In the rest of this section, we study the performance of our second approach in the few-shot learning setting. 
We first consider the case where we only have one training matrix randomly sampled from $\Tr$. Here, we compare our method with the \textbf{1Shot2Vec} method proposed in~\cite{iww21} in the same setting ($k = 10, m = 40$) as in their empirical evaluation. The result is shown in Table~\ref{tab:one_shot}. Compared to \textbf{1Shot2Vec}, our method reduces the error by around $50\%$, and has an even slightly faster runtime.

\cite{iww21} also proposed a \textbf{FewShotSGD} algorithm which further improves the non-zero values of the sketches after different initialization methods. We compare the performance of this approach for different initialization methods: in all initialization methods, we only use one training matrix and we use three training matrices for the \textbf{FewShotSGD} step. The results are shown in Table~\ref{tab:few_shot}. We report the minimum error of $50$ iterations of the \textbf{FewShotSGD} because we aim to compare the computational efficiency for different methods. From the table we see that our approach plus the \textbf{FewShotSGD} method can achieve a much smaller error, with around a $50\%$ improvement upon~\cite{iww21}. Moreover, even without further optimization by \textbf{FewShotSGD}, our initialization method for learning the non-zero locations in CountSketch obtains a smaller error than other methods (even when they are optimized with {\bf 1ShotSGD} or {\bf FewShotSGD} learning).

\begin{table}[!t]
\begin{minipage}{0.5\linewidth}
		\centering
		\resizebox{\textwidth}{!}{	
		\renewcommand{\arraystretch}{1}
			\begin{tabular}{|c|c|c|c|} 
				\hline
				\;\small{\bf Algorithm}&\small{\bf Dataset}&\small{\bf Few-shot Error}&\small{\bf Training Time}\\
				\hline
				\multirow{3}{*}{{ Classical CS}}&Logo&0.331&\multirow{3}{*}{\xmark}\\
				 & Friends  & 0.524 &  \\
				
			     & Hyper & 1.082 &  \\
			     \hline
				\multirow{3}{*}{{ 1shot2Vec}}&Logo&0.171&5.682\\
				 & Friends  & 0.306 & 5.680 \\
				
			     & Hyper & 0.795 & 1.054 \\
			     \hline
			    \multirow{3}{*}{{ Ours (inner product)}}&Logo&0.065&4.515\\
				
                 & Friends  & 0.139  & 4.773 \\ 
                
                 & Hyper  & 0.535  &  0.623 \\ 
				\hline
				
				\hline
				
		\end{tabular}
    		}
      \vspace{-0.3cm}
		\caption{Test errors and training times for LRA in the one-shot setting (using Alg.~1 with one sketch)}
		\label{tab:one_shot}
\end{minipage}
\hfill
\begin{minipage}{0.45\linewidth}
    	\centering
		\resizebox{\textwidth}{!}{	
		\renewcommand{\arraystretch}{1}
			\begin{tabular}{|l|l|l|l|} 
				\hline
				\;\small{\bf  Sketch}&\small{\bf Logo}&\small{\bf Friends}&\small{\bf Hyper}\\
				\hline
				
				$\text{Ours (Initialization only)}$&0.065&0.139&0.535\\
				$\text{Ours + FewShotSGD}$&{\bf 0.048}&{\bf 0.125}&{\bf 0.443}\\

				1Shot1Vec only &  0.171  & 0.306 & 0.795 \\
				
			   1Shot1Vec + FewShot SGD &  0.104 & 0.229 & 0.636 \\
			   Classical CS &  0.331  & 0.524 & 1.082 \\
			   Classical CS + FewShot SGD&  0.173  & 0.279 & 0.771 \\
			    
				\hline
		\end{tabular}
	}
       \vspace{-0.3cm}
	\caption{Test errors for LRA in the few-shot setting (using Alg.~1 from~\cite{indyk2019learning} with one sketch)}
	\label{tab:few_shot}
\end{minipage}
\end{table}

\section{Additional Experiments: Second-Order Optimization}
\label{sec:exp_appendix_sec}

As we mentioned in Section~\ref{sec:exp_ihs}, despite the number of problems that learned sketches have been applied to, they have not been applied to convex optimization, or say, iterative sketching algorithms in general. To demonstrate the difficulty, we consider the Iterative Hessian Sketch (IHS) as an example. In that scheme, suppose that we have $k$ iterations of the algorithm.  Then we need $k$ independent sketching matrices (otherwise the solution may diverge). A natural way is to follow the method in~\cite{indyk2019learning}, which is to minimize the following quantity
\[
\min_{S_1,...S_k} \E_{A \in \mathcal{D}} f(A, \mathsf{ALG}(S_1,...,S_k, A)) \;,
\]
where the minimization is taken over $k$ Count-Sketch matrices $S_1,\dots,S_k$.
In this case, however, calculating the gradient with respect to $S_1$ would involve all iterations and in each iteration we need to solve a constrained optimization problem. Hence, it would be difficult and intractable to compute the gradients. 
An alternative way is to train $k$ sketching matrices sequentially, that is, learn the sketching matrix for the $i$-th iteration using a local loss function for the $i$-th iteration, and then using the learned matrix in the $i$-th iteration to generate the training data for the $(i+1)$-st iteration. However, the empirical results suggest that it works for the first iteration only, because in this case the training data for the $(i+1)$-st iteration depends on the solution to the $i$-th iteration and may become farther away from the test data in later iterations. The core problem here is that the method proposed in~\cite{indyk2019learning} treats the training process in a \textit{black-box} way, which is difficult to extend to iterative methods.

\subsection{The Distribution of the Heavy Rows}
\label{sec:heavy_distribution}
In our experiments, we hypothesize that in real-world data there may be an underlying pattern which can help us identify the heavy rows. In the Electric dataset, each row of the matrix corresponds to a specific residence and the heavy rows are concentrated on some specific rows.

    To exemplify this, we study the heavy leverage score rows distribution over the Electric dataset. For a row $i\in [370]$, let $f_i$ denote the number of times that row $i$ is heavy out of $320$ training data points from the Electric dataset, where we say row $i$ is heavy if $\ell_i \ge 5d/n$. Below we list all $74$ pairs $(i,f_i)$ with $f_i > 0$.

        (195,320), (278,320), (361,320), (207,317), (227,285), (240,284), (219,270), (275,232), (156,214), (322,213), (193,196), (190,192), (160,191), (350,181), (63,176), (42,168), (162,148), (356,129), (363,110), (362,105), (338,95), (215,94), (234,93), (289,81), (97,80), (146,70), (102,67), (98,58), (48,57), (349,53), (165,46), (101,41), (352,40), (293,34), (344,29), (268,21), (206,20), (217,20), (327,20), (340,19), (230,18), (359,18), (297,14), (357,14), (161,13), (245,10), (100,8), (85,6), (212,6), (313,6), (129,5), (130,5), (366,5), (103,4), (204,4), (246,4), (306,4), (138,3), (199,3), (222,3), (360,3), (87,2), (154,2), (209,2), (123,1), (189,1), (208,1), (214,1), (221,1), (224,1), (228,1), (309,1), (337,1), (343,1)
        
    Observe that the heavy rows are concentrated on a set of specific row indices. There are only $30$ rows $i$ with $f_i \geq 50$. We view this as strong evidence for our hypothesis. 
\paragraph{Heavy Rows Distribution Under Permutation.}  We note that even though the order of the rows has been changed, we can still recognize the patterns of the rows. We continue to use the Electric dataset as an example.
To address the concern that a permutation may break the sketch, we can measure the similarity between vectors, that is, after processing the training data, we can instead test similarity on the rows of the test matrix and use this to select the heavy rows, rather than an index which may simply be permuted. To illustrate this method, we use the following example on the Electric dataset, using locality sensitive hashing.

After processing the training data, we obtain a set $I$ of indices of heavy rows. For each $i\in I$, we pick $q = 3$ independent standard Gaussian vectors $g_1, g_2, g_3\in \mathbb{R}^d$, and compute $f(r_i) = (g_1^T r_i, g_2^T r_i, g_3^T r_i) \in \mathbb{R}^3$, where $r_i$ takes an average of the $i$-th rows over all training sets. Let $A$ be the test matrix. For each $i\in I$, let $j_i = \operatorname{argmin}_{j} \Vert f(A_j) - f(r_i)\Vert_2$. We take the $j_i$-th row to be a heavy row in our learned sketch. This method only needs an additional $O(1)$ passes over the entries of $A$ and hence, the extra time cost is negligible. To test the performance of the method, we randomly pick a matrix from the test set and permute its rows. The result shows that when $k$ is small, we can roughly recover $70\%$ of the top-$k$ heavy rows, and we plot below the regression error using the learned Count-Sketch matrix generated this way, where we set $m = 90$ and $k = 0.3m = 27$. We can see that the learned method still obtains a significant improvement.

\begin{figure}[h]
\centering
    \includegraphics[width=0.5\textwidth]{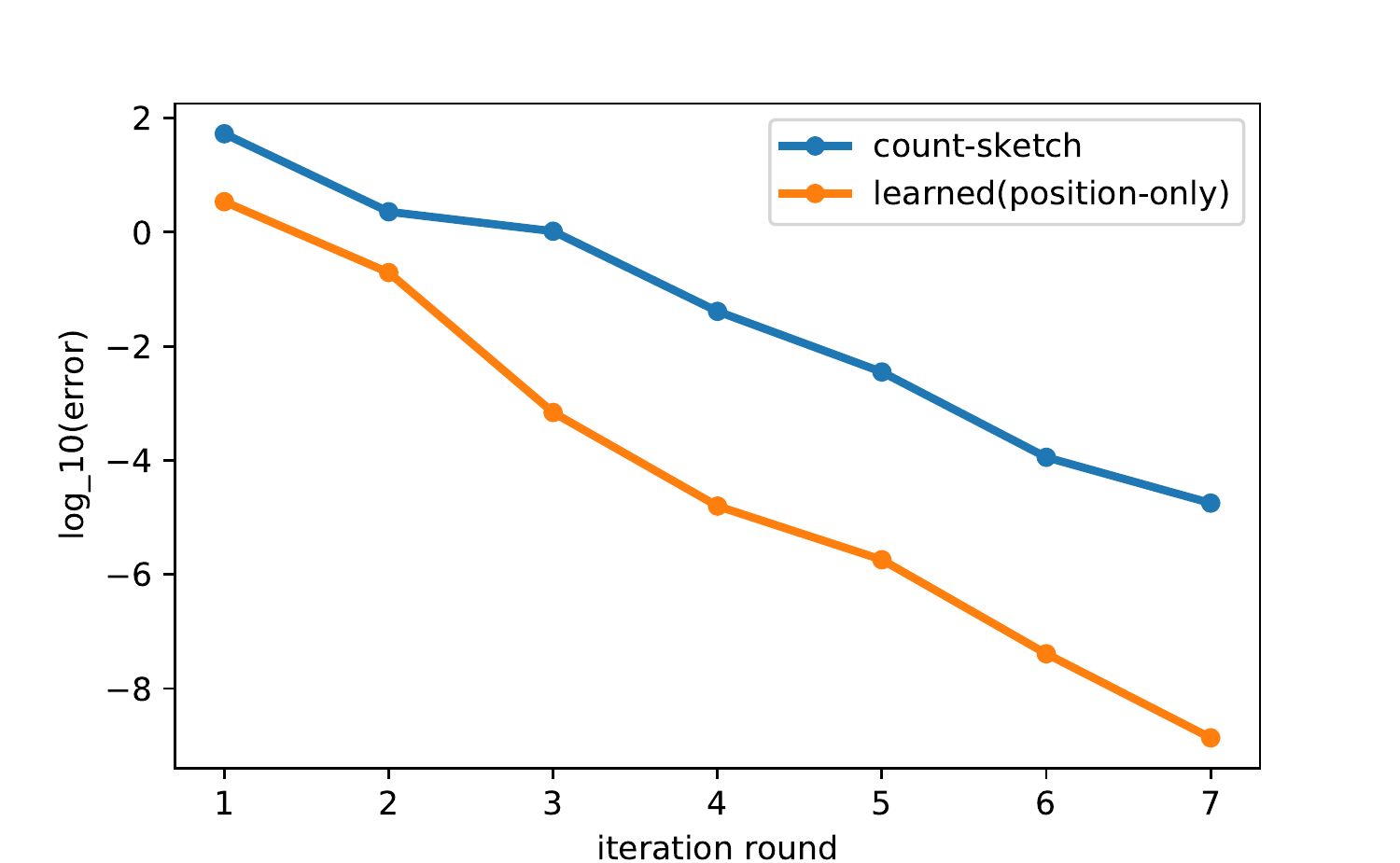} 
    \caption{Test error of LASSO on Electric dataset}
    \label{fig:lsh_ele}
\end{figure}

\subsection{Matrix Norm Estimation with a Nuclear Norm Constraint}

In many applications, for the problem
\[
X^*:= \argmin_{X \in \R^{d_1 \times d_2}} \norm{AX - B}_F^2 \;,  
\]
it is reasonable to model the matrix $X^*$ as having low rank. Similar to  $\ell_1$-minimization for compressive sensing, a standard relaxation of the rank constraint is to minimize the nuclear norm of $X$, defined as $\norm{X}_{\ast} := \sum_{j = 1}^{\min\{d_1, d_2\}} \sigma_j(X)$, where $\sigma_j(X)$ is the $j$-th largest singular value of $X$.

Hence, the matrix estimation problem we consider here is
 \[
 X^*:= \argmin_{X \in \R^{d_1 \times d_2}} \norm{AX - B}_F^2  \quad \text{such that}\quad \norm{X}_{\ast} \le \rho,
 \]
where $\rho > 0$ is a user-defined radius as a regularization parameter.

We conduct Iterative Hessian Sketch (IHS) experiments on the following dataset:

\begin{itemize}[leftmargin=20pt,topsep=0pt,itemsep=0pt]
    \item \textbf{Tunnel}\footnote{{\url{https://archive.ics.uci.edu/ml/datasets/Gas+sensor+array+exposed+to+turbulent+gas+mixtures}}}: The data set is a time series of gas concentrations measured by eight sensors in a wind tunnel. Each $(A, B)$ corresponds to a different data collection trial. $A^i \in \R^{13530 \times 5}, B^i \in \R^{13530 \times 6}$,
    $|(A,B)|_{\train} = 144$, $|(A, B)|_{\test} = 36$. In our nuclear norm constraint, we set $\rho = 10$. 
\end{itemize}

\textbf{Experiment Setting}: We choose $m=7d, 10d$ for the Tunnel dataset. We consider the error $\frac{1}{2}\norm{AX-B}_2^2-\frac12\norm{AX^\ast-B}_2^2$. The leverage scores of this dataset are very uniform. Hence, for this experiment we only consider optimizing the values of the non-zero entries.

\textbf{Results of Our Experiments}: We plot on a logarithmic scale the mean errors of the dataset in Figures~\ref{fig:tunnel}. We can see that when $m = 7d$, the gradient-based sketch, based on the first $6$ iterations, has a rate of convergence that is 48\% of the random sketch, and when $m = 10d$, the gradient-based sketch has a rate of convergence that is 29\% of the random sketch.

\begin{figure}[tb]
\begin{minipage}{0.47\textwidth}
    \centering
    \includegraphics[width=0.9\textwidth]{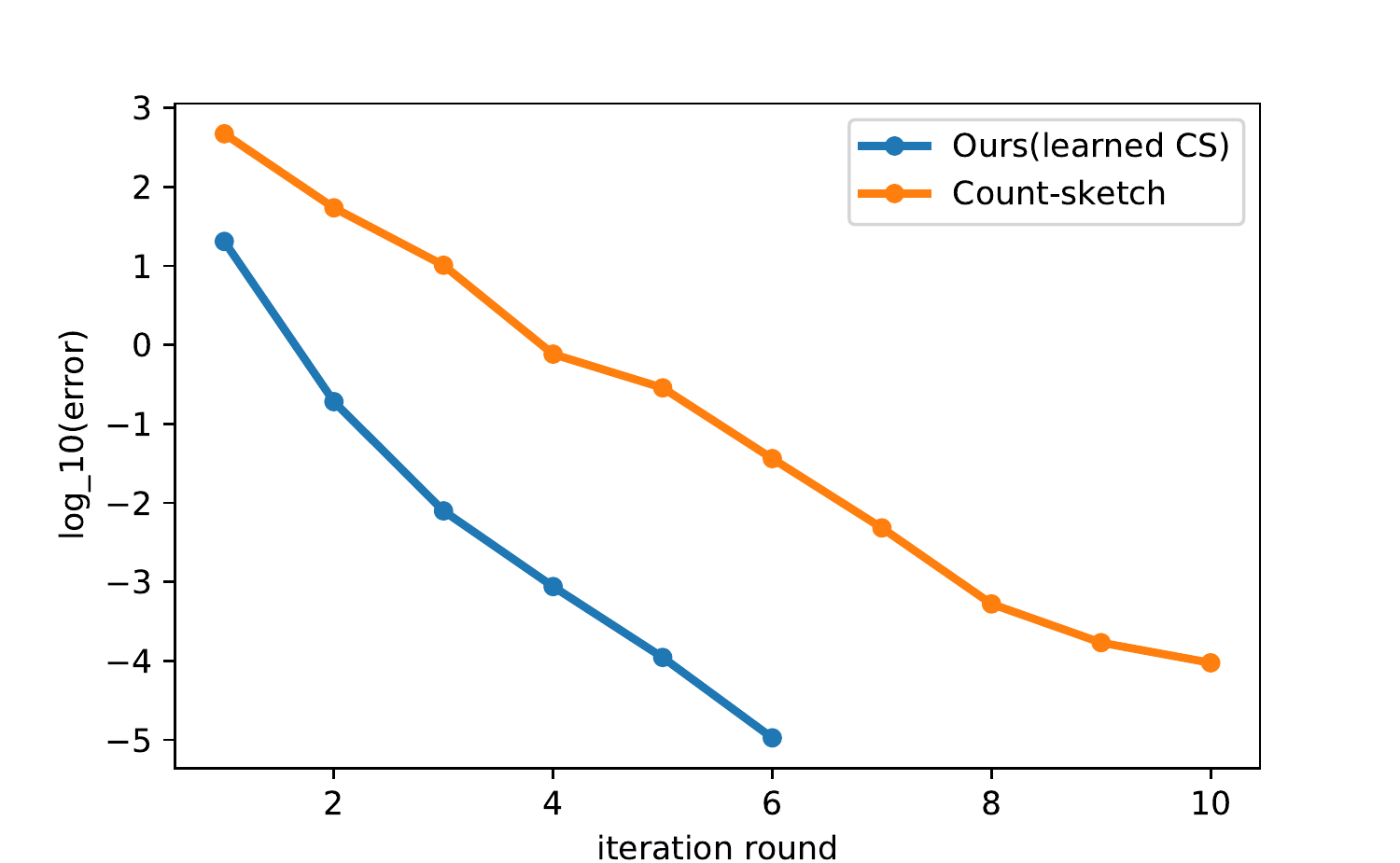} 
    
\end{minipage}
\hfill
\begin{minipage}{0.47\textwidth}
    \centering
    \includegraphics[width=0.9\textwidth]{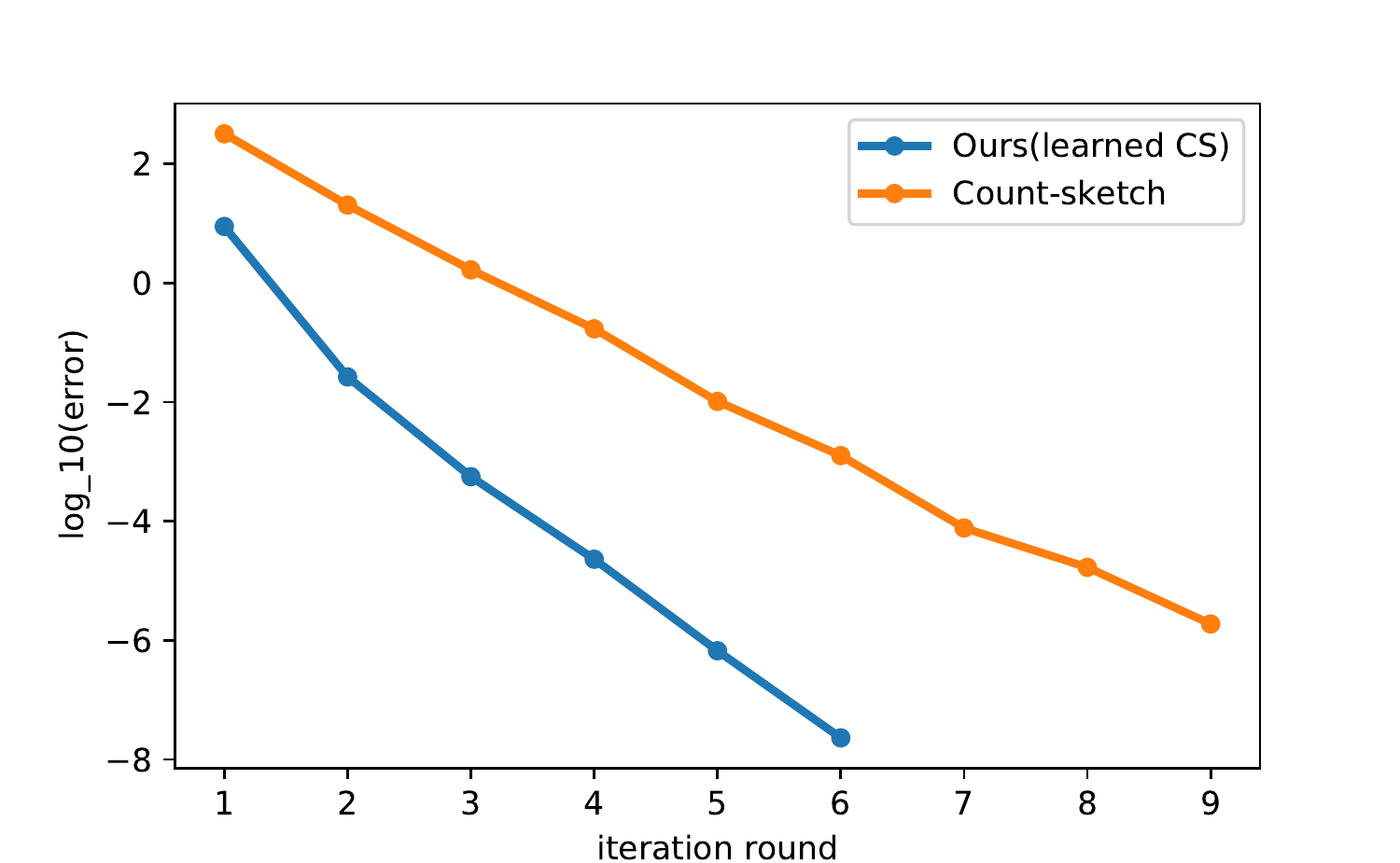}
    
\end{minipage}
\caption{Test error of matrix estimation with a nuclear norm constraint on the Tunnel dataset}\label{fig:tunnel}
\end{figure}


\subsection{Fast Regression Solver}

\begin{figure*}[h]
\centering
    \includegraphics[clip,trim={20px 0 25px 30px},width=0.325\textwidth]{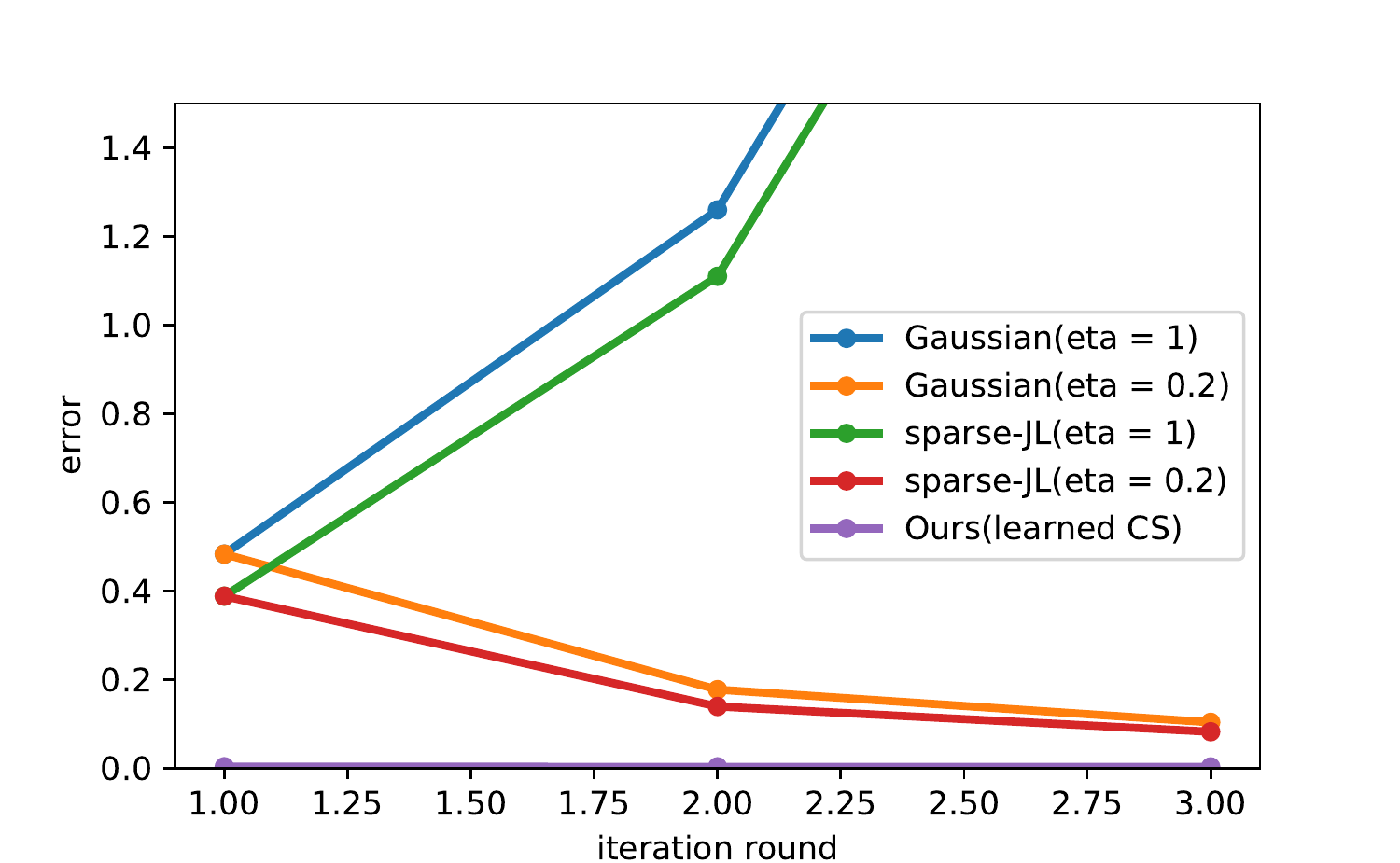}
    \includegraphics[clip,trim={20px 0 25px 30px},width=0.32\textwidth]{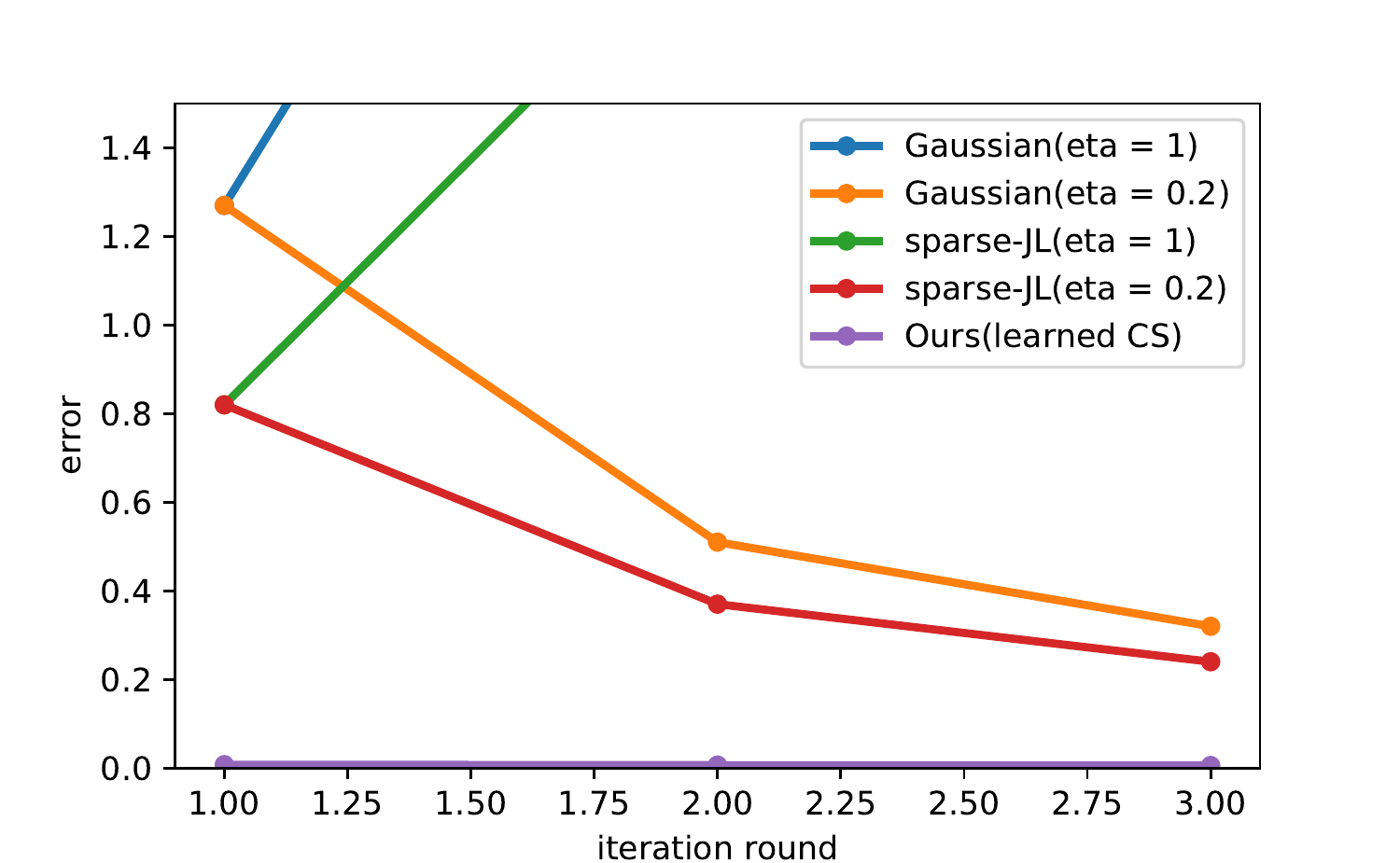}
    \includegraphics[clip,trim={20px 0 25px 30px},width=0.32\textwidth]{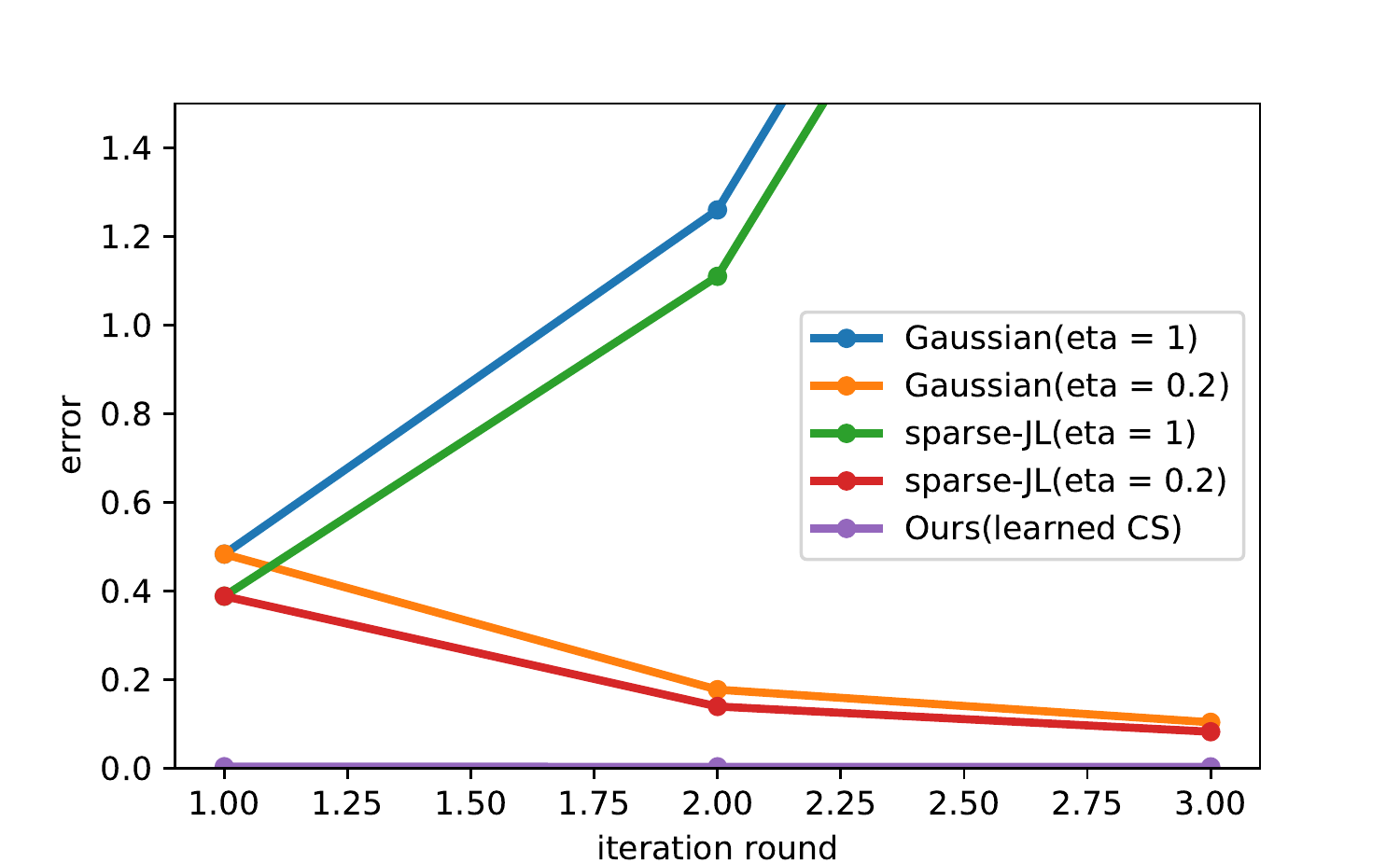}
    \caption{Test error of the subroutine in fast regression on Electric dataset.}
    \label{fig:fr_ele}
\end{figure*}

\begin{figure}[!htp]
\centering
    \includegraphics[clip,trim={20px 0 25px 30px},width=0.33\textwidth]{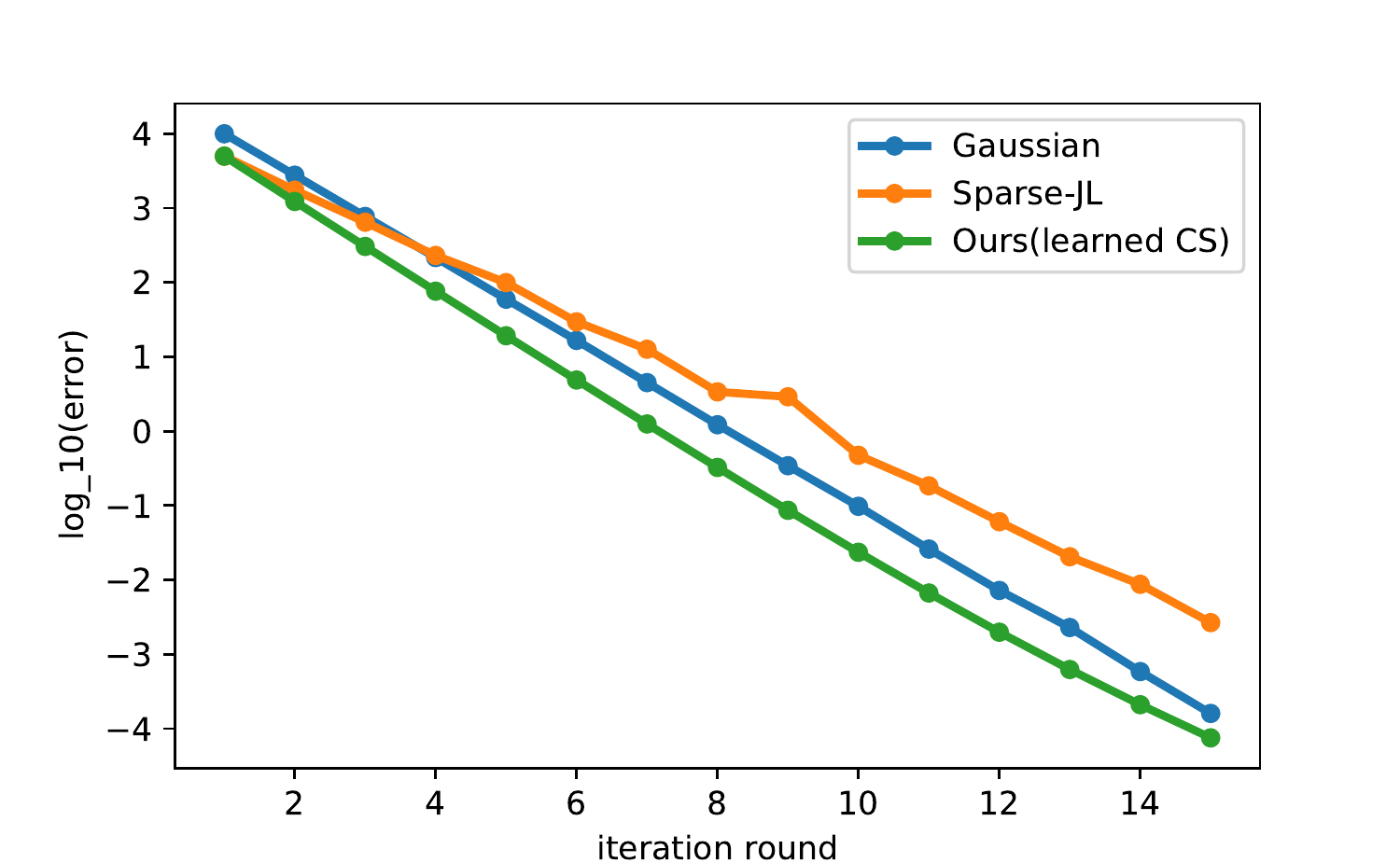}
    \caption{Test error of fast regression on Electric dataset}
    \label{fig:fr_ls}
\end{figure}

Consider an unconstrained convex optimization problem $\min_x f(x)$, where $f$ is smooth and strongly convex, and its Hessian $\nabla^2 f$ is Lipschitz continuous. This problem can be solved by Newton's method, which iteratively performs the update
\begin{equation}\label{eqn:newton_update}
x_{t+1} = x_t - \argmin_z \left\| (\nabla^2 f(x_t)^{1/2})^\top(\nabla^2 f(x_t)^{1/2})z - \nabla f(x_t) \right\|_2,
\end{equation}
provided it is given a good initial point $x_0$. In each step, it requires solving a regression problem of the form $\min_z \norm{A^\top Az-y}_2$, which, with access to $A$, can be solved with a fast regression solver in~\cite{BPSW21}. The regression solver first computes a preconditioner $R$ via a QR decomposition such that $SAR$ has orthonormal columns, where $S$ is a sketching matrix, then solves $\hat z =\argmin_{z'} \norm{(AR)^\top(AR)z'-y}_2$ by gradient descent and returns $R\hat z$ in the end. Here, the point of sketching is that the QR decomposition of $SA$ can be computed much more efficiently than the QR decomposition of $A$, since $S$ has only a small number of rows.

In this section, We consider the unconstrained least squares problem $\min_x f(x)$ with $f(x) = \frac{1}{2}\norm{Ax-b}_2^2$
using the Electric dataset, using the above fast regression solver. 

\textbf{Training}: Note that $\nabla^2 f(x) = A^\top A$, independent of $x$. In the $t$-th round of Newton's method, by \eqref{eqn:newton_update}, we need to solve a regression problem $\min_z \norm{A^\top A z - y}_2^2$ with $y = \nabla f(x_t)$. Hence, we can use the same methods in the preceding subsection to optimize the learned sketch $S_i$. For a general problem where $\nabla^2 f(x)$ depends on $x$, one can take $x_t$ to be the solution obtained from the learned sketch $S_t$ to generate $A$ and $y$ for the $(t+1)$-st round, train a learned sketch $S_{t+1}$, and repeat this process.

\textbf{Experiment Setting}: For the Electric dataset, we set $m = 10d = 90$. 
We observe that the classical Count-Sketch matrix makes the solution diverge terribly in this setting. To make a clearer comparison, we consider the following sketch matrix:
\begin{itemize}
    \item Gaussian sketch: $S = \frac{1}{\sqrt m}G$, where $G\in \R^{m\times n}$ with i.i.d.\ $N(0,1)$ entries.
	\item Sparse Johnson-Lindenstrauss Transform (SJLT): $S$ is the vertical concatenation of $s$ independent {Count-Sketch} matrices, each of dimension $m/s\times n$. 
\end{itemize}
We note that the above sketching matrices require more time to compute $SA$ but need fewer rows to be a subspace embedding than the classical Count-Sketch matrix.

For the step length $\eta$ in gradient descent,  
we set $\eta = 1$ in all iterations of the learned sketches. For classical random sketches, we set $\eta$ in the following two ways: (a) $\eta = 1$ in all iterations and (b) $\eta = 1$ in the first iteration and $\eta = 0.2$ in all subsequent iterations.

\textbf{Experimental Results}: We examine the accuracy of the subproblem $\min_z \norm{A^\top A z - y}_2^2$ and define the error to be $\norm{A^\top A Rz_t - y}_2 / \norm{y}_2$. We consider the subproblems in the first three iterations of the global Newton method. The results are plotted in Figure~\ref{fig:fr_ele}. Note that {Count-Sketch} causes a terrible divergence of the subroutine and is thus omitted in the plots. Still, we observe that in setting (a) of $\eta$, the other two classical sketches cause the subroutine to diverge. In setting (b) of $\eta$, the other two classical sketches lead to convergence but their error is significantly larger than that of the learned sketches, in each of the first three calls to the subroutine. The error of the learned sketch is less than $0.01$ in all iterations of all three subroutine calls, in both settings (a) and (b) of $\eta$.

We also plot a figure on the convergence of the global Newton method. Here, for each subroutine, we only run one iteration, and plot the error of the original least squares problem. The result is shown in Figure~\ref{fig:fr_ls}, which clearly displays a significantly faster decay with learned sketches. The rate of convergence using heavy-rows sketches is $80.6\%$ of that using Gaussian or sparse JL sketches.

\subsection{First-Order Optimization}

In this section, we study the use of the sketch in first-order methods. Particularly, let $QR^{-1} = SA$ be the QR-decomposition for $SA$, where $S$ is a sketch matrix. We use $R$ as an (approximate) pre-conditioner and use gradient descent to solve the problem $\min \norm{ARx - b}_2$. Here we use the Electric dataset where $A$ is $370 \times 9$ and we set $S$ to have $90$ rows. The result is shown in the following table, where the time includes the time to compute $R$. We can see that if we use a learned sketch matrix, the error converges very fast when we set the learning rate to be $1$ and $0.1$, while the classical Count-Sketch will lead to divergence.

\begin{table}[h]
    \centering
    \begin{tabular}{|l|c|c|c|c|}
    \hline
        Iteration & 1 & 10 & 100 & 500\\
        \hline
         Error (learned, $lr = 1$)& 2.73 & 1.5e-7& &\\
         \hline
         Error (learned, $lr = 0.1$) &4056 &605&4.04e-6 &\\
         \hline
         Error (learned, $lr = 0.01$) & 4897 & 4085 & 667 &0.217 \\
         \hline
         Error (random, $lr = 1$)& \multirow{2}{*}{N.A}& \multirow{2}{*}{N.A}&\multirow{2}{*}{N.A}&\multirow{2}{*}{N.A}\\
         \hhline{-~~~~}
         Error (random, $lr = 0.1$)& & & & \\
         \hline
         Error (random, $lr = 0.01$) & 4881 & 3790 & 685 &1.52\\
         \hline
         Time&0.00048 & 0.00068 &0.0029 &0.0013\\
         \hline
    \end{tabular}
    \caption{Test Error for Gradient Descent}
    \label{tab:first_order}
\end{table}
\section{Preliminaries: Theorems and Additional Algorithms}

\label{sec:appendix_lra_prelim}

In this section, we provide the full description of the time-optimal sketching algorithm for LRA in Algorithm~\ref{alg:lowrank-sketch}. We also provide several definitions and lemmas that are used in the proofs of our results for LRA. 

\begin{definition}[Affine Embedding]\label{def:affine-embedding}
Given a pair of matrices $A$ and $B$, a matrix $S$ is an {\em affine $\epsilon$-embedding} if for all $X$ of the appropriate shape, $\left\| S(AX- B) \right\|^2_F = (1\pm\eps) \left\| AX - B \right\|^2_F$.
\end{definition}

\begin{lemma}[\cite{clarksonwoodruff}; Lemma 40]\label{lem:fast-estimate}
Let $A$ be an $n\times d$ matrix and let $S\in \mathbb{R}^{\O({1/\eps^2}) \times n}$ be a CountSketch matrix. Then with constant probability, $\left\| SA \right\|^2_F = (1\pm \eps) \left\| A \right\|^2_F$.
\end{lemma}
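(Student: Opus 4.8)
The plan is to verify the claim via a standard second-moment argument, exploiting the structure of CountSketch as a hashing-with-signs operation. I would write $S$ in terms of a random hash function $h:[n]\to[m]$ and independent random signs $\sigma_1,\dots,\sigma_n\in\{\pm1\}$, so that $S_{h(i),i}=\sigma_i$ and all other entries vanish, where $m=O(1/\eps^2)$. Denoting by $A_i$ the $i$-th row of $A$, expanding $\norm{SA}_F^2=\sum_{r,\ell}(SA)_{r\ell}^2$ and collecting collisions gives
\[
\norm{SA}_F^2 = \sum_{i,i'} \sigma_i\sigma_{i'}\,\mathbf{1}[h(i)=h(i')]\,\langle A_i,A_{i'}\rangle = \norm{A}_F^2 + Z,
\]
where the diagonal terms $i=i'$ contribute $\sum_i\norm{A_i}_2^2=\norm{A}_F^2$ and $Z := \sum_{i\neq i'} \sigma_i\sigma_{i'}\,\mathbf{1}[h(i)=h(i')]\,\langle A_i,A_{i'}\rangle$ collects the off-diagonal collision terms. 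The task then reduces to showing that the noise $Z$ concentrates around $0$.

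First I would compute $\E[Z]=0$ by taking expectation over the signs, since $\E[\sigma_i\sigma_{i'}]=0$ for $i\neq i'$; this confirms $\norm{SA}_F^2$ is an unbiased estimator of $\norm{A}_F^2$. Next I would bound the variance $\E[Z^2]$. Expanding $Z^2$ and taking expectation over the signs, the only surviving terms are those in which the unordered index pairs coincide, which collapses the fourth-order sign expectation to $\E_\sigma[Z^2] = 2\sum_{i\neq i'}\mathbf{1}[h(i)=h(i')]\langle A_i,A_{i'}\rangle^2$. Taking expectation over $h$ and using $\Pr[h(i)=h(i')]=1/m$ for $i \neq i'$, I get
\[
\E[Z^2] = \frac{2}{m}\sum_{i\neq i'}\langle A_i,A_{i'}\rangle^2 \le \frac{2}{m}\norm{AA^\top}_F^2 = \frac{2}{m}\,\mathrm{tr}\!\big((AA^\top)^2\big) \le \frac{2}{m}\big(\mathrm{tr}(AA^\top)\big)^2 = \frac{2}{m}\norm{A}_F^4,
\]
where the last inequality uses that $AA^\top$ is positive semidefinite (so $\mathrm{tr}(M^2)\le(\mathrm{tr}\,M)^2$). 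Finally, Chebyshev's inequality yields $\Pr[|Z|\ge \eps\norm{A}_F^2]\le \E[Z^2]/(\eps^2\norm{A}_F^4)\le 2/(m\eps^2)$, which is a small constant once $m=O(1/\eps^2)$ is taken with a sufficiently large hidden constant; on the complementary event $\norm{SA}_F^2=\norm{A}_F^2+Z=(1\pm\eps)\norm{A}_F^2$, as desired.

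The bulk of the work — and the one step worth writing out carefully — is the variance computation: the sign-cancellation that reduces the fourth-order expectation to diagonal index pairs, and the clean bound $\sum_{i\neq i'}\langle A_i,A_{i'}\rangle^2\le\norm{A}_F^4$ that converts the collision sum into the target quantity $\norm{A}_F^4$. By contrast, the unbiasedness check and the concluding Chebyshev step are routine, so I do not anticipate any difficulty there.
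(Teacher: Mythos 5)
Your proof is correct: the decomposition of $\left\| SA \right\|^2_F$ into $\left\| A \right\|^2_F$ plus a zero-mean collision term $Z$, the variance bound $\E[Z^2]\le \frac{2}{m}\left\| A \right\|_F^4$ via sign-cancellation and the PSD trace inequality, and the concluding Chebyshev step are all sound. Note that the paper does not prove this lemma at all---it imports it by citation from \cite{clarksonwoodruff} (Lemma 40)---and your argument is essentially the standard second-moment proof given in that cited source, so there is nothing of substance to contrast.
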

The following result is shown in~\cite{clarksonwoodruff} and sharpened with~\cite{nelson2013osnap,meng2013low}.
\begin{lemma}\label{lem:CS-affine}
Given matrices $A, B$ with $n$ rows, a CountSketch with $\O(\rank(A)^2/\eps^2)$ rows is an affine $\eps$-embedding matrix with constant probability. Moreover, the matrix product $S A$ can be computed in $\O(\nnz(A))$ time, where $\nnz(A)$ denotes the number of non-zero entries of matrix $A$.  
\end{lemma}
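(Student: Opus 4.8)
The plan is to reduce the affine-embedding guarantee to three standard properties of a CountSketch $S$ and then check that the required number of rows is dominated by the subspace-embedding term $\O(\rank(A)^2/\eps^2)$. First I would let $U \in \R^{n \times r}$ be an orthonormal basis for $\col(A)$, where $r = \rank(A)$, and write $A = UT$ with $T = U^\top A$ of full row rank $r$. Decomposing $B = U U^\top B + B^\perp$ with $B^\perp = (I - UU^\top)B$, I can reparametrize: for any $X$, $AX - B = U Z - B^\perp$ where $Z = TX - U^\top B$ ranges over all matrices as $X$ does, since $T$ has full row rank. Because $\col(U) \perp B^\perp$, Pythagoras gives $\norm{AX - B}_F^2 = \norm{Z}_F^2 + \norm{B^\perp}_F^2$, so it suffices to show $\norm{SUZ - SB^\perp}_F^2 = (1\pm\eps)(\norm{Z}_F^2 + \norm{B^\perp}_F^2)$ for all $Z$ simultaneously.

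Expanding the left-hand side as $\norm{SUZ}_F^2 - 2\langle SUZ, SB^\perp\rangle + \norm{SB^\perp}_F^2$, I would control each term by a separate event. (i) The subspace-embedding property of $S$ for $\col(U)$, applied column-by-column and using that $U$ has orthonormal columns, yields $\norm{SUZ}_F^2 = (1\pm\eps)\norm{Z}_F^2$ for every $Z$; this is the only ingredient that forces $\O(r^2/\eps^2)$ rows for a CountSketch. (ii) Lemma~\ref{lem:fast-estimate} gives Frobenius-norm preservation $\norm{SB^\perp}_F^2 = (1\pm\eps)\norm{B^\perp}_F^2$ with $\O(1/\eps^2)$ rows. (iii) For the cross term I would invoke the approximate matrix product property of CountSketch: writing $\langle SUZ, SB^\perp\rangle = \langle Z, U^\top S^\top S B^\perp\rangle$ and noting $U^\top B^\perp = 0$, the bound $\norm{U^\top S^\top S B^\perp - U^\top B^\perp}_F \le \eps_0 \norm{U}_F \norm{B^\perp}_F = \eps_0 \sqrt{r}\,\norm{B^\perp}_F$ holds with $\O(1/\eps_0^2)$ rows, so choosing $\eps_0 = \eps/\sqrt r$ makes $|\langle SUZ, SB^\perp\rangle| \le \eps \norm{Z}_F \norm{B^\perp}_F \le \tfrac{\eps}{2}(\norm{Z}_F^2 + \norm{B^\perp}_F^2)$ using only $\O(r/\eps^2)$ rows.

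Combining the three bounds gives $\norm{S(AX-B)}_F^2 = (1 \pm c\eps)\norm{AX-B}_F^2$ for an absolute constant $c$, and rescaling $\eps$ completes the embedding. A union bound over the three events, each arranged to fail with small constant probability, keeps the overall success probability at a constant, and the number of rows is the maximum of the three requirements, namely $\O(r^2/\eps^2)$, as claimed. The running-time claim is then immediate: a CountSketch has exactly one nonzero per column, so forming $SA$ touches each nonzero entry of $A$ a constant number of times, giving $\O(\nnz(A))$ time.

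I expect the main obstacle to be step (i): establishing that a plain CountSketch with a single nonzero per column is a $(1\pm\eps)$-subspace embedding for an $r$-dimensional subspace requires $\O(r^2/\eps^2)$ rows, and this is exactly where the quadratic dependence on $r$ enters, via a second-moment and birthday-paradox collision analysis, as opposed to the linear dependence achievable by denser OSNAP-type sketches. This is precisely the ingredient proved in~\cite{clarksonwoodruff} and sharpened in~\cite{nelson2013osnap,meng2013low}, so I would cite it as a black box rather than reprove it; terms (ii) and (iii) are comparatively routine and their row counts are subsumed by this bound.
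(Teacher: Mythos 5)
Your proposal is correct, and it is essentially the same argument the paper relies on: the paper does not prove this lemma itself but cites \cite{clarksonwoodruff} (sharpened by \cite{nelson2013osnap,meng2013low}), and your orthogonal decomposition $B = UU^\top B + B^\perp$ combined with the three ingredients---a CountSketch subspace embedding for $\col(A)$ with $\O(\rank(A)^2/\eps^2)$ rows, Frobenius-norm preservation of the fixed matrix $B^\perp$ (the paper's Lemma~\ref{lem:fast-estimate}), and approximate matrix product at accuracy $\eps/\sqrt{r}$ for the fixed pair $(U, B^\perp)$, followed by a union bound---is precisely how the affine-embedding property is derived in that reference. Your row-count bookkeeping (the $\O(r^2/\eps^2)$ subspace-embedding term dominates the $\O(r/\eps^2)$ and $\O(1/\eps^2)$ terms) and the $\O(\nnz(A))$ runtime from one nonzero per column are both correct.
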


\begin{lemma}[\cite{sarlos2006improved,clarksonwoodruff}]\label{lem:mult-reg}
Suppose that $A\in \mathbb{R}^{n\times d}$ and $B\in \mathbb{R}^{n\times d'}$. Let $S\in \mathbb{R}^{m\times n}$ be a CountSketch with $m = \frac{\rank(A)^2}{\eps^2}$. Let $\tilde{X} = \argmin_{\rankk X} \left\| SAX - SB \right\|^2_F$. Then,
\begin{enumerate}[leftmargin=*]
    \item\label{itm:reduction} With constant probability, $\left\| A\tilde{X} - B \right\|^2_F \leq (1+\eps) \min_{\rankk X} \left\| AX - B \right\|^2_F$. In other words, in $\O(\nnz(A) + \nnz(B) + m(d+d'))$ time, we can reduce the problem to a smaller (multi-response regression) problem with $m$ rows whose optimal solution is a $(1+\eps)$-approximate solution to the original instance.
    \item\label{itm:runtim} The $(1+\eps)$-approximate solution $\tilde{X}$ can be computed in time $\O(\nnz(A) + \nnz(B) + mdd' + \min(m^2 d, md^2))$. 
\end{enumerate}
\end{lemma}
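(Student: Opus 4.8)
The plan is to derive both claims from the affine embedding property furnished by Lemma~\ref{lem:CS-affine}. The choice $m = \rank(A)^2/\eps^2$ is exactly the threshold at which a CountSketch becomes an affine $\eps$-embedding for the pair $(A,B)$, so I would open by invoking that lemma: with constant probability, $\|S(AX-B)\|_F^2 = (1\pm\eps)\|AX-B\|_F^2$ holds \emph{simultaneously} for every $X$ of the appropriate shape (Definition~\ref{def:affine-embedding}). The decisive feature is that this estimate is uniform over all $X$, and hence valid over the (non-convex) set of rank-$k$ matrices as well.

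For Item~\ref{itm:reduction} I would run the standard sandwich. Let $X^\ast = \argmin_{\rankk X}\|AX-B\|_F^2$ and recall $\tilde X = \argmin_{\rankk X}\|S(AX-B)\|_F^2$. Applying the lower side of the embedding to $\tilde X$, then the optimality of $\tilde X$ for the sketched objective against the rank-$k$ competitor $X^\ast$, and finally the upper side to $X^\ast$, yields
\[
\|A\tilde X-B\|_F^2 \le \tfrac{1}{1-\eps}\|S(A\tilde X-B)\|_F^2 \le \tfrac{1}{1-\eps}\|S(AX^\ast-B)\|_F^2 \le \tfrac{1+\eps}{1-\eps}\|AX^\ast-B\|_F^2,
\]
and rescaling $\eps$ by a constant converts $\tfrac{1+\eps}{1-\eps}$ into $1+\eps$. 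The ``reduction'' reading then follows because forming $SA$ and $SB$ costs $\O(\nnz(A)+\nnz(B))$ by the input-sparsity guarantee of Lemma~\ref{lem:CS-affine}, while initializing and writing down the two sketched matrices of sizes $m\times d$ and $m\times d'$ accounts for the $m(d+d')$ term.

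For Item~\ref{itm:runtim} I would spell out how to solve the reduced instance $\min_{\rankk X}\|\tilde A X-\tilde B\|_F^2$ with $\tilde A=SA\in\mathbb{R}^{m\times d}$ and $\tilde B=SB\in\mathbb{R}^{m\times d'}$. The route is the closed form for generalized rank-constrained regression: compute an orthonormal basis $U$ of $\col(\tilde A)$ via a QR/SVD of the $m\times d$ matrix $\tilde A$, costing $\O(\min(m^2 d, md^2))$; form the projection $U^\top\tilde B$ in $\O(mdd')$ time (as $U$ has at most $d$ columns); take the best rank-$k$ approximation of this small projected matrix; and pull the result back through $\tilde A^\dagger$. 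I would then verify that this composite does minimize the rank-constrained objective by splitting $\tilde B$ into its component in $\col(\tilde A)$ and its orthogonal complement, observing the latter is unaffected by $X$, and citing the Friedland--Torokhti characterization of $\min_{\rankk X}\|\tilde A X-\tilde B\|_F$.

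The main obstacle, and the part I would dwell on, is not the arithmetic but the justification that a single embedding event controls the error at the data-dependent minimizer $\tilde X$: since $\tilde X$ depends on $S$, a pointwise argument is unavailable, and it is precisely the uniform ``for all $X$'' strength of the affine embedding (equivalently, a subspace embedding of $\col(A)$ together with an approximate matrix-product bound) that legitimizes the sandwich. A secondary point to handle carefully is that the rank constraint transfers faithfully under the reduction only because $\rank(SA)=\rank(A)$ with constant probability, so that the rank-$k$ feasible set of the sketched problem corresponds to that of the original.
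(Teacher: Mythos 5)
The paper does not prove this lemma at all: it is imported verbatim from \cite{sarlos2006improved,clarksonwoodruff} as a known result, so there is no in-paper proof to compare against. Your reconstruction is correct and is precisely the standard argument behind the cited works — the uniform affine-embedding guarantee of Lemma~\ref{lem:CS-affine} legitimizing the sandwich $\|A\tilde X-B\|_F^2 \le \frac{1}{1-\eps}\|S(A\tilde X-B)\|_F^2 \le \frac{1}{1-\eps}\|S(AX^\ast-B)\|_F^2 \le \frac{1+\eps}{1-\eps}\|AX^\ast-B\|_F^2$ despite $\tilde X$ depending on $S$, plus the closed-form rank-constrained regression solve for the runtime — with your only superfluous step being the worry about $\rank(SA)=\rank(A)$, which the sandwich never actually needs since $X^\ast$ is feasible for the sketched problem regardless.
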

Now we turn our attention to the time-optimal sketching algorithm for LRA. The next lemma is known,
though we include it for completeness \cite{avron2016sharper}: 
\begin{lemma}\label{lem:two-sketch}
Suppose that $S\in \mathbb{R}^{m_S \times n}$ and $R \in \mathbb{R}^{m_R \times d}$ are sparse affine $\eps$-embedding matrices for $(A_k, A)$ and $((SA)^\top, A^\top)$, respectively. Then, 
\begin{align*}
\min_{\rankk X} \left\| AR^\top X SA - A\right\|_F^2 \leq (1+ \eps) \left\| A_k - A\right\|_F^2
\end{align*}
\end{lemma}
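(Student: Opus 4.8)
The plan is to peel off the two sketches one at a time, using each affine-embedding hypothesis exactly once. First I would drop the left factor $AR^\top$ and reduce to a one-sided statement about the row space of $SA$: that there already exists a $\rankk$ matrix whose rows lie in $\row(SA)$ and that approximates $A$ to within $(1+\eps)$. Then I would re-introduce $R$ to force the column space of the optimizer into $\col(AR^\top)$ at the cost of a second $(1+\eps)$ factor, and finally chain the two losses. Writing $A_{-k} := A - A_k$, so that $\|A-A_k\|_F^2 = \|A_{-k}\|_F^2$ and $\row(A_k)\perp\row(A_{-k})$, both steps become standard regression manipulations built on the affine $\eps$-embedding property of Definition~\ref{def:affine-embedding}.

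For the first step I would exhibit the explicit candidate $Y_0 := A_k (SA_k)^\dagger (SA)$. By construction $Y_0$ has rank at most $k$ and $\row(Y_0)\subseteq\row(SA)$, so it is feasible for $\min_{\rankk Y:\, \row(Y)\subseteq\row(SA)}\|Y-A\|_F^2$. To bound its error I would split $A = A_k + A_{-k}$ and write $Y_0 - A = \bigl(A_k(SA_k)^\dagger SA_k - A_k\bigr) + \bigl(A_k(SA_k)^\dagger S A_{-k} - A_{-k}\bigr)$. The affine-embedding hypothesis on $S$ for $(A_k,A)$ supplies the two ingredients I need: that $S$ acts as a subspace embedding on the relevant subspace, so $(SA_k)^\dagger SA_k$ is the identity on $\row(A_k)$ and the first parenthesis vanishes; and that $S$ satisfies approximate matrix multiplication, so that the cross term obeys $\|A_k(SA_k)^\dagger S A_{-k}\|_F^2 \le \eps\|A_{-k}\|_F^2$ via the orthogonality of $A_k$ and $A_{-k}$. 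Since the columns of $A_k(SA_k)^\dagger S A_{-k}$ lie in $\col(A_k)$ while those of $A_{-k}$ lie in the orthogonal complement, a Pythagorean identity gives $\|Y_0 - A\|_F^2 = \|A_{-k}\|_F^2 + \|A_k(SA_k)^\dagger S A_{-k}\|_F^2 \le (1+\eps)\|A-A_k\|_F^2$.

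For the second step I would transpose the one-sided problem: the best $\rankk$ matrix with rows in $\row(SA)$ is the minimizer of $\min_{\rankk X}\|X(SA) - A\|_F^2 = \min_{\rankk X}\|(SA)^\top X^\top - A^\top\|_F^2$, a regression whose design matrix is $(SA)^\top$. Because $R$ is an affine $\eps$-embedding for $((SA)^\top, A^\top)$, passing to the $R$-sketched regression $\min_{\rankk X}\|R(SA)^\top X^\top - RA^\top\|_F^2$ changes the optimal value by at most a factor $(1+\eps)$; moreover, by the normal equations the sketched minimizer has, after transposing back, its column space inside $\col(AR^\top)$, which is exactly what lets me write it as $AR^\top X SA$ for some $\rankk$ $X$. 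Chaining the two $(1+\eps)$ losses yields $\min_{\rankk X}\|AR^\top X SA - A\|_F^2 \le (1+\eps)^2\|A-A_k\|_F^2$, and rescaling $\eps$ by a constant absorbs the square. I expect the main obstacle to be Step~1: certifying, from the single hypothesis that $S$ is an affine $\eps$-embedding for $(A_k,A)$, both the subspace-embedding property that annihilates the first term and the approximate-matrix-multiplication bound on the cross term (with the transposes lined up correctly), and verifying that the candidate $Y_0$ is genuinely feasible. Step~2 is then a comparatively mechanical application of the affine-embedding property of $R$ together with the normal equations characterizing where the sketched minimizer lives.
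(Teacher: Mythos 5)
Your proposal is correct, and its skeleton coincides with the paper's: both decouple $S$ and $R$, both certify that $\row(SA)$ contains a $(1+\eps)$-approximate rank-$k$ approximation using exactly the candidate $A_k(SA_k)^\dagger SA$, and both handle $R$ by transposing the one-sided problem, sketching it, and invoking the normal equations to place the sketched minimizer in $\col(AR^\top)$ so it can be written as $AR^\top Z SA$. The genuine difference is how Step 1 bounds the candidate's error. The paper (through Lemma~\ref{lem:mult-reg}) observes that $A_k(SA_k)^\dagger SA$ is, by the normal equations, the minimizer $\tilde X$ of the \emph{sketched} regression $\min_{X}\|S(A_kX-A)\|_F^2$, so the affine-embedding hypothesis transfers optimality in one line: $(1-\eps)\|A_k\tilde X-A\|_F^2\le\|S(A_k\tilde X-A)\|_F^2\le\|S(A_k-A)\|_F^2\le(1+\eps)\|A_k-A\|_F^2$. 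You instead bound the same candidate via the Sarl\'os-style decomposition into a vanishing term plus a cross term, which forces you to extract two further properties from the single affine-embedding hypothesis: a subspace-embedding property on $\col(A_k)$ (giving $\rank(SA_k)=\rank(A_k)$, hence $A_k(SA_k)^\dagger SA_k=A_k$) and an approximate-matrix-product bound on $A_k(SA_k)^\dagger SA_{-k}$. Both are in fact derivable from Definition~\ref{def:affine-embedding} alone---the first by scaling ($X=tX_0$ with $t\to\infty$), the second by a perturbation argument around the optimum $X^\ast=A_k^\dagger A$, which gives $|\langle SA_kY,\,SA_{-k}\rangle|\le 2\eps\|A_kY\|_F\|A_{-k}\|_F$ for all $Y$ and hence $\|A_k(SA_k)^\dagger SA_{-k}\|_F^2=O(\eps^2)\|A_{-k}\|_F^2$---so your plan does close, and even yields a sharper cross-term bound than the $\eps\|A_{-k}\|_F^2$ you asked for. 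What each route buys: yours makes the orthogonal geometry explicit and is self-contained at the level of first principles, but carries the burden of the two derivations you correctly flagged as the main obstacle; the paper's route makes that obstacle vanish by using the affine embedding as a black-box cost-preserver on the sketched regression, which is why its proof is only a few lines. Both arguments end with a $(1+O(\eps))^2$-type factor that is absorbed by rescaling $\eps$, exactly as you propose.
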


\begin{proof}
Consider the following multiple-response regression problem:
\begin{align}\label{eq:simple-regression}
\min_{\rankk X} \left\|A_k X - A\right\|_F^2.
\end{align}
Note that since $X = I_k$ is a feasible solution to  Eq.~\eqref{eq:simple-regression}, $\min_{\rankk X} \left\|A_k X - A\right\|_F^2 = \left\|A_k - A\right\|_F^2$. 
Let $S \in \mathbb{R}^{m_S\times n}$ be a sketching matrix that satisfies the condition of Lemma~\ref{lem:mult-reg} (Item~\ref{itm:reduction}) for $A:= A_k$ and $B := A$. 
By the normal equations, the rank-$k$ minimizer of $\left\|SA_k X - SA\right\|_F^2$ is $(SA_k)^+ SA$.
Hence,
\begin{align}\label{eq:rowsp-SA}
\left\|A_k (SA_k)^+SA - A\right\|_F^2
&\leq  (1+ \eps)\left\|A_k - A\right\|_F^2, 
\end{align}
which in particular shows that a $(1+\eps)$-approximate rank-$k$ approximation of $A$ exists in the row space of $SA$. In other words,
\begin{align}\label{eq:sketch-s}
\min_{\rankk X} \left\|XSA - A\right\|_F^2 \leq (1 + \eps) \left\|A_k - A\right\|_F^2.
\end{align}
Next, let $R\in \mathbb{R}^{m_R\times d}$ be a sketching matrix which satisfies the condition of Lemma~\ref{lem:mult-reg} (Item~\ref{itm:reduction}) for $A := (SA)^\top$ and $B := A^\top$. 
Let $Y$ denote the $\rankk$ minimizer of $\left\|R (SA)^\top X^\top - R A^\top\right\|_F^2$. Hence,
\begin{align}
\left\|(SA)^\top Y^\top - A^\top\right\|_F^2 &\leq (1+\eps) \min_{\rankk X} \left\|XSA - A\right\|_F^2 &&\rhd\text{Lemma~\ref{lem:mult-reg} (Item~\ref{itm:reduction})} \nonumber \\
&\leq (1 + \O(\eps)) \left\|A_k - A\right\|_F^2 &&\rhd\text{Eq.~\eqref{eq:sketch-s}}\label{eq:second-step}
\end{align}
Note that by the normal equations, again $\rowsp(Y^\top) \subseteq \rowsp(R A^\top)$ and we can write $Y = AR^\top Z$ where $\rank(Z)=k$. Thus,
\begin{align*}
\min_{\rankk X} \left\| A R^\top X S A - A\right\|_F^2
\leq \left\| A R^\top Z SA - A\right\|_F^2 &=\left\|(S A)^\top Y^\top - A^\top\right\|_F^2 &&\rhd Y = AR^\top Z \\
&\leq (1 + \O(\eps))\left\|A_k - A\right\|_F^2 &&\rhd \text{Eq.~\eqref{eq:second-step}} 
\end{align*}
\end{proof}

\begin{lemma}[\cite{avron2016sharper}; Lemma~27]\label{lem:main}
For $C \in \mathbb{R}^{p\times m'}, D\in \mathbb{R}^{m\times p'}, G\in \mathbb{R}^{p\times p'}$, the following problem 
\begin{align}
\min_{\rankk Z} \left\| C Z D - G \right\|_F^2 \label{eq:min-Z}
\end{align}
can be solved in $\O(pm'r_C + p'mr_D + pp'(r_D+r_C))$ time, where $r_C = \rank(C) \leq \min\{m', p\}$ and $r_D = \rank(D) \leq \min\{m,p'\}$.
\end{lemma}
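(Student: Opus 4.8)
\section*{Proof proposal for Lemma~\ref{lem:main}}

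The plan is to reduce the two-sidedly weighted rank-constrained approximation to an ordinary rank-constrained Frobenius approximation living in a small $r_C \times r_D$ space, where the Eckart--Young theorem applies directly. First I would compute the compact SVDs $C = U_C \Sigma_C V_C^\top$ and $D = U_D \Sigma_D V_D^\top$, where $U_C \in \R^{p \times r_C}$, $V_C \in \R^{m'\times r_C}$, $U_D \in \R^{m\times r_D}$, $V_D \in \R^{p'\times r_D}$ have orthonormal columns and $\Sigma_C, \Sigma_D$ are invertible diagonal. Writing $P_C = U_C U_C^\top$ and $P_D = V_D V_D^\top$ for the projections onto $\col(C)$ and onto $\col(D^\top)$, every feasible product satisfies $CZD = P_C(CZD)P_D$, so $CZD - P_C G P_D$ and $G - P_C G P_D$ lie in orthogonal subspaces. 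A short trace computation then yields the Pythagorean identity
\begin{equation*}
\norm{CZD - G}_F^2 = \norm{CZD - P_C G P_D}_F^2 + \norm{G - P_C G P_D}_F^2 ,
\end{equation*}
whose second term does not depend on $Z$; hence it suffices to minimize the first term.

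Next I would change variables. Setting $W = V_C^\top Z U_D \in \R^{r_C\times r_D}$ gives $CZD = U_C \Sigma_C W \Sigma_D V_D^\top$ and $P_C G P_D = U_C \hat G V_D^\top$ with $\hat G = U_C^\top G V_D$, so by orthonormality of the columns of $U_C$ and $V_D$ the first term equals $\norm{\Sigma_C W \Sigma_D - \hat G}_F^2$. The map $Z \mapsto W$ sends rank-$\le k$ matrices to rank-$\le k$ matrices, and conversely $Z = V_C W U_D^\top$ realizes any target $W$ with $\rank(Z)\le k$; since $\Sigma_C,\Sigma_D$ are invertible, $\tilde W := \Sigma_C W \Sigma_D$ ranges over all rank-$\le k$ matrices as well. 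The problem thus becomes $\min_{\rank(\tilde W)\le k}\norm{\tilde W - \hat G}_F^2$, solved by the truncated SVD $\tilde W^\ast = [\hat G]_k$, from which one recovers $W^\ast = \Sigma_C^{-1}[\hat G]_k \Sigma_D^{-1}$ and $Z^\ast = V_C W^\ast U_D^\top$, kept in factored form of inner dimension $\min(k,r_C,r_D)$.

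For the running time I would account for each step (assuming, WLOG, $k \le \min(r_C,r_D)$, since otherwise $[\hat G]_k = \hat G$ and the inner dimension only shrinks): the compact SVDs of $C$ and $D$ cost $\O(pm'r_C)$ and $\O(p'mr_D)$; forming $\hat G$ costs $\O(pp'r_C + p'r_Cr_D)$, absorbed into $\O(pp'(r_C+r_D))$ via $r_C\le p$; the truncated SVD of the $r_C\times r_D$ matrix $\hat G$ costs $\O(r_Cr_D\min(r_C,r_D)) \le \O(pp'(r_C+r_D))$ via $r_C r_D \le pp'$; and producing the two factors of $Z^\ast$ costs $\O(m'r_Ck + mr_Dk) \le \O(pm'r_C + p'mr_D)$ using $k \le \min(r_C,r_D) \le \min(p,p')$. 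Summing gives the claimed $\O(pm'r_C + p'mr_D + pp'(r_C+r_D))$. The main obstacle is not any single step but the bookkeeping: one must verify that the variable change is a genuine bijection on rank-$\le k$ matrices so that the constraint is exactly preserved, and, crucially, that the output is returned in factored form and never materialized as the full $m'\times m$ matrix $Z^\ast$ (which would introduce an $m'm$ term outside the budget); the inequality $k \le \min(r_C,r_D)$ is precisely what allows the final factor products to be absorbed into the first two terms.
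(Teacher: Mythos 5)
The paper itself never proves this lemma: it is imported verbatim from \cite{avron2016sharper} (their Lemma~27), and its algorithmic content appears inline in Algorithm~\ref{alg:lowrank-sketch}, which factors $VAR^\top$ and $SAW^\top$ into an orthogonal matrix times a triangular one, takes a truncated SVD of the core $[U_C^\top G U_D]_k$, and undoes the triangular factors. Your proof is correct and is essentially that same argument: replace $G$ by $P_C G P_D$, split off the $Z$-independent term via the Pythagorean identity, reduce to a rank-$k$ truncated SVD of the small $r_C \times r_D$ core, and pull back through the factorizations of $C$ and $D$ while keeping $Z^\ast$ in factored form; your absorption inequalities ($r_C \le p$, $r_C r_D \le pp'$, $k \le \min(r_C,r_D)$ WLOG) all check out. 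The one substantive difference is that you factor $C$ and $D$ by compact SVDs where the cited proof uses rank-revealing QR, and this is cosmetic except at a single point: your asserted $\O(pm'r_C)$ cost for the compact SVD of $C$ is not what a generic SVD routine gives (that costs $\O(pm'\min\{p,m'\})$, which can exceed the budget when $r_C \ll \min\{p,m'\}$); justifying it requires first running exactly the column-pivoted/rank-revealing QR that the original proof uses directly, followed by an SVD of the resulting $r_C \times m'$ factor. With that one-line patch your proof is complete, and the choice of SVD versus QR buys nothing either way—both yield $Z^\ast = C^\dagger [P_C G P_D]_k D^\dagger$ in factored form within the stated time.
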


\begin{lemma}\label{lem:learned-sketch-runtime}
Let $S\in \mathbb{R}^{m_S\times d}$, $R\in \mathbb{R}^{m_R\times d}$ be CountSketch (CS) matrices such that 
\begin{align}\label{eq:assumption}
    \min_{\rankk X} \left\| A R^\top X SA - A\right\|^2_F \leq (1+\gamma)\left\| A_k - A\right\|_F^2.
\end{align}
Let $V \in \mathbb{R}^{(m_R^2/\beta^2) \times n}$, and $W \in \mathbb{R}^{{m_S^2 \over \beta^2} \times d}$ be CS matrices. Then, Algorithm~\ref{alg:lowrank-sketch} gives a $(1 + \O(\beta + \gamma))$-approximation in time $\nnz(A) + \O(\frac{m_S^4}{\beta^2} + \frac{m_R^4}{\beta^2} + \frac{m_S^2 m_R^2 (m_S + m_R)}{\beta^4} + k(n m_S + d m_R))$ with constant probability. 
\end{lemma}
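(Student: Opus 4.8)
The plan is to recognize Algorithm~\ref{alg:lowrank-sketch} as an exact solver for a \emph{dimension-reduced} version of the generalized regression problem $\min_{\rankk Z}\|AR^\top Z SA - A\|_F^2$, and to show that the two extra sketches $V,W$ cost only a $(1+O(\beta))$ factor on top of the $(1+\gamma)$ factor guaranteed by the hypothesis~\eqref{eq:assumption}. Write $C := AR^\top \in \mathbb{R}^{n\times m_R}$ and $D := SA \in \mathbb{R}^{m_S\times d}$, so that \eqref{eq:assumption} reads $\min_{\rankk Z}\|CZD - A\|_F^2 \le (1+\gamma)\|A_k - A\|_F^2$. Solving this exactly is too expensive because $C,D$ are large; instead the algorithm forms the small matrices $\tilde C := VC = VAR^\top$, $\tilde D := DW^\top = SAW^\top$, and $G := VAW^\top$, and solves $\min_{\rankk Z}\|\tilde C Z \tilde D - G\|_F^2$ in reduced dimension.

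\textbf{Correctness.} First I would verify the two-sided affine embedding. Since $\rank(C)\le m_R$, Lemma~\ref{lem:CS-affine} says the CountSketch $V$ with $O(m_R^2/\beta^2)$ rows is, with constant probability, an affine $\beta$-embedding for $(C,A)$; by Definition~\ref{def:affine-embedding} (taking $X = ZD$) this gives $\|V(CZD - A)\|_F^2 = (1\pm\beta)\|CZD - A\|_F^2$ for every $Z$. Conditioning on $V$ and using $\rank(D)\le m_S$, the CountSketch $W$ with $O(m_S^2/\beta^2)$ rows is an affine $\beta$-embedding for $\big((SA)^\top,(VA)^\top\big)$, so (taking $Y = (VCZ)^\top$) $\|(VCZD - VA)W^\top\|_F^2 = (1\pm\beta)\|VCZD - VA\|_F^2$ for every $Z$. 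A union bound keeps both events with constant probability, and composing them gives, uniformly over rank-$k$ $Z$,
\[
\|\tilde C Z \tilde D - G\|_F^2 = \|VCZDW^\top - VAW^\top\|_F^2 = (1\pm O(\beta))\,\|CZD - A\|_F^2 .
\]
Let $\hat Z$ minimize the left-hand side (this is the matrix computed by the algorithm) and $Z^\ast$ minimize $\|CZD - A\|_F^2$. Comparing the two sides at $\hat Z$ and $Z^\ast$ and using optimality of $\hat Z$ for the sketched problem yields $\|C\hat Z D - A\|_F^2 \le (1+O(\beta))\|CZ^\ast D - A\|_F^2 \le (1+O(\beta))(1+\gamma)\|A_k - A\|_F^2 = (1+O(\beta+\gamma))\|A_k - A\|_F^2$, which is the claim since the output equals $AR^\top \hat Z SA = C\hat Z D$. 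Finally, lines~1--4 of Algorithm~\ref{alg:lowrank-sketch} are exactly the closed-form solver of Lemma~\ref{lem:main} for $\min_{\rankk Z}\|\tilde C Z \tilde D - G\|_F^2$: orthogonalize $\tilde C = U_C\begin{bmatrix}T_C & T_C'\end{bmatrix}$ and $\tilde D = \begin{bmatrix}T_D^\top \\ T_D'^\top\end{bmatrix}U_D^\top$, set $[U_C^\top G U_D]_k = Z_L' Z_R'$ by Eckart--Young, and reconstruct $\hat Z$ via the right/left inverses built from $T_C^{-1},T_D^{-1}$.

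\textbf{Runtime.} I would split the cost into three groups. (i) Forming $\tilde C,\tilde D,G$ uses the CountSketch products $VA,\,SA,\,AR^\top,\,AW^\top$, each computable in $O(\nnz(A))$ time, followed by products of small matrices, so the dominant contribution is $\nnz(A)$. (ii) Solving the reduced problem via Lemma~\ref{lem:main} with $p = m_V = m_R^2/\beta^2$, $m' = m_R$, $m = m_S$, $p' = m_W = m_S^2/\beta^2$, and $r_C\le m_R$, $r_D\le m_S$ gives $O\big(pm'r_C + p'mr_D + pp'(r_C+r_D)\big) = O\big(\tfrac{m_R^4}{\beta^2} + \tfrac{m_S^4}{\beta^2} + \tfrac{m_S^2 m_R^2 (m_S + m_R)}{\beta^4}\big)$. (iii) Forming the returned factors $P\in\mathbb{R}^{n\times k}$ and $Q\in\mathbb{R}^{k\times d}$ from $AR^\top$, $SA$, and the $k$-dimensional factors of $\hat Z$ costs $O\big(k(nm_S + dm_R)\big)$. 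Summing these three groups yields the stated bound, and the failure probability is constant by the union bound above.

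\textbf{Main obstacle.} The delicate step is the two-sided reduction: the affine-embedding property a priori controls $\|CX-A\|_F$ over \emph{all} $X$, so I must apply it with the correct free factor on each side ($X = ZD$ on the left, $Y = (VCZ)^\top$ on the right) to conclude uniformity over the rank-$k$ constraint, and handle the fact that the target $(VA)^\top$ of $W$'s embedding depends on the random $V$ by conditioning before the union bound. Once this reduction is established, matching Algorithm~\ref{alg:lowrank-sketch} to the solver of Lemma~\ref{lem:main} and the runtime bookkeeping are routine.
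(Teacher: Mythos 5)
Your proposal is correct and takes essentially the same approach as the paper: the paper's proof disposes of the approximation guarantee in one sentence—citing exactly the affine $\beta$-embedding property of $V$ and $W$ from Lemma~\ref{lem:CS-affine} applied to $AR^\top$ and $SA$—which you spell out in full (including the conditioning on $V$ and the union bound), and then devotes itself to the same runtime accounting you give. The runtime breakdown matches item for item: $O(\nnz(A))$ to form $VAR^\top$, $SAW^\top$, $VAW^\top$ using that CountSketch products preserve sparsity, Lemma~\ref{lem:main} with the same parameter substitution for the reduced rank-constrained problem, and $O(k(nm_S+dm_R))$ to assemble the output factors $P$ and $Q$.
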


\begin{proof}
The approximation guarantee follows from Eq.~\eqref{eq:assumption} and the fact that $V$ and $W$ are affine $\beta$-embedding matrices of $AR^\top$ and $SA$, respectively (see Lemma~\ref{lem:CS-affine}). 

The algorithm first computes $C = V AR^\top, D = S A W^\top, G = VAW^\top$ which can be done in time $\O(\nnz(A))$. 
As an example, we bound the time to compute $C = V A R$. Note that since $V$ is a CS, $V A$ can be computed in $\O(\nnz(A))$ time and the number of non-zero entries in the resulting matrix is at most $\nnz(A)$. 
Hence, since $R$ is a CS as well, $C$ can be computed in time $\O(\nnz(A) + \nnz(V A)) = \O(\nnz(A))$.
Then, it takes an extra $\O((m_S^3 + m_R^3 + m_S^2 m_R^2)/\beta^2)$ time to store $C, D$ and $G$ in matrix form. Next, as we showed in Lemma~\ref{lem:main}, the time to compute $Z$ in Algorithm~\ref{alg:lowrank-sketch} is $\O(\frac{m_S^4}{\beta^2} + \frac{m_R^4}{\beta^2} + \frac{m_S^2 m_R^2 (m_S + m_R)}{\beta^4})$. Finally, it takes $\O(\nnz(A) + k(n m_S + d m_R))$ time to compute $Q = AR^\top Z_L$  and $P = Z_R SA$ and to return the solution in the form of $P_{n\times k}Q_{k\times d}$. Hence, the total runtime is 
\begin{align*}
\O(\nnz(A) + \frac{m_S^4}{\beta^2} + \frac{m_R^4}{\beta^2} + \frac{m_S^2 m_R^2 (m_S + m_R)}{\beta^4} + k(n m_S + d m_R))
\end{align*}
\end{proof}

\section{Attaining Worst-Case Guarantees}
\label{sec:appendix_worst_cases}

\subsection{Low-Rank Approximation}
\label{sec:monotonicity}

We shall provide the following two methods to achieve worst case guarantees: MixedSketch---whose guarantee is via the sketch monotonicity property, and approximate comparison method (a.k.a. ApproxCheck), which just approximately evaluates the cost of two solutions and takes the better one. These methods asymptotically achieve the same worst-case guarantee. However, for any input matrix $A$ and any pair of sketches $S, T$, the performance of the MixedSketch method on $(A, S, T)$ is never worse than the performance of its corresponding ApproxCheck method on $(A, S, T)$, and can be much better. 
\begin{remark}
Let $A = \mathrm{diag}(2, 2, \sqrt{2}, \sqrt{2})$, and suppose the goal is to find a rank-$2$ approximation of $A$. Consider two sketches $S$ and $T$ such that $SA$ and $TA$ capture $\Span(e_1, e_3)$ and $\Span(e_2, e_4)$, respectively. 
Then for both $SA$ and $TA$, the best solution in the subspace of one of these two spaces is a $(\frac{3}{2})$-approximation: $\norm{A - A_2}_F^2 = 4$ and $\norm{A - P_{SA}}_F^2 = \norm{A - P_{TA}}_F^2 = 6$ where $P_{SA}$ and $P_{TA}$ respectively denote the best approximation of $A$ in the space spanned by $SA$ and $TA$. 

However, if we find the best rank-$2$ approximation of $A$, $Z$, inside the span of the union of $SA$ and $TA$, then $\norm{A - Z}_F^2 = 4$. Since ApproxCheck just chooses the better of $SA$ and $TA$ by evaluating their costs, it misses out on the opportunity to do as well as MixedSketch.
\end{remark}
Here, we show the sketch monotonicity property for LRA. 
\begin{theorem}\label{thm:mixedsketch-LRD}
Let $A \in \mathbbm{R}^{n \times d}$ be an input matrix, $V$ and $W$ be $\eta$-affine embeddings, and $S_1 \in \mathbbm{R}^{m_S \times n}, R_1 \in \mathbbm{R}^{m_R \times n}$ be arbitrary matrices. Consider arbitrary extensions to $S_1, R_1$: $\bar{S}, \bar{R}$ (e.g., $\bar{S}$ is a concatenation of $S_1$ with an arbitrary matrix with the same number of columns). Then,  
    $\norm{A - \SKALG_\LRA((\bar{S}, \bar{R}, V, W), A))}_F^2
    \leq (1 + \eta)^2 \norm{A - \SKALG_\LRA((S_1, R_1, V, W), A)}_F^2$
\end{theorem}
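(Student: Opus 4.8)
The plan is to sandwich the two algorithm costs between the optimal cost of the underlying \emph{unsketched} rank-$k$ problem. For a pair of sketches $(S,R)$ define
\[
\mathrm{OPT}(S,R) := \min_{\rankk Z}\norm{A - AR^\top Z SA}_F^2,
\]
the smallest error attainable by a product $AR^\top Z SA$; note that such a product has column space inside $\col(AR^\top)$ and row space inside $\row(SA)$. By the design of Algorithm~\ref{alg:lowrank-sketch} together with Lemma~\ref{lem:main}, $\SKALG_\LRA((S,R,V,W),A)$ returns exactly $AR^\top Z^\ast SA$, where $Z^\ast$ is the rank-$k$ minimizer of the doubly-sketched objective $\norm{VAR^\top Z SAW^\top - VAW^\top}_F^2$. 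In particular the algorithm always outputs a feasible rank-$k$ point of the $\mathrm{OPT}(S,R)$ problem.

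First I would prove the approximation guarantee
\[
\norm{A - \SKALG_\LRA((\bar S,\bar R,V,W),A)}_F^2 \;\le\; (1+\eta)^2\,\mathrm{OPT}(\bar S,\bar R).
\]
Applying the affine-embedding property of $V$ (on the left, for the column space spanned by $A\bar R^\top$ and $A$) and of $W$ (on the right, for the row space spanned by $\bar S A$ and $A$) shows that for \emph{every} feasible product $\bar P = A\bar R^\top Z \bar S A$ one has $\norm{V(\bar P - A)W^\top}_F^2 = (1\pm\eta)^2\norm{\bar P - A}_F^2$, each embedding contributing one factor of $(1\pm\eta)$. Since $Z^\ast$ minimizes the sketched objective exactly, the usual sketch-and-solve comparison (identical in spirit to the proof of Lemma~\ref{lem:two-sketch}) turns this uniform distortion bound into the stated $(1+\eta)^2$-approximation of $\mathrm{OPT}(\bar S,\bar R)$.

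Next I would establish the monotonicity of the unsketched problem, $\mathrm{OPT}(\bar S,\bar R)\le\mathrm{OPT}(S_1,R_1)$, by a feasible-set containment argument. Writing $\bar S = \left(\begin{smallmatrix} S_1\\ S'\end{smallmatrix}\right)$ and $\bar R = \left(\begin{smallmatrix} R_1\\ R'\end{smallmatrix}\right)$, a minimizer $Z_1$ of $\mathrm{OPT}(S_1,R_1)$ lifts to $\bar Z = \left(\begin{smallmatrix} Z_1 & 0\\ 0 & 0\end{smallmatrix}\right)$, which has the same rank and satisfies $A\bar R^\top \bar Z \bar S A = AR_1^\top Z_1 S_1 A$; hence the optimum cannot increase under the extension. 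Combining with the observation that the base-sketch output is itself feasible for $\mathrm{OPT}(S_1,R_1)$, i.e.\ $\mathrm{OPT}(S_1,R_1)\le\norm{A - \SKALG_\LRA((S_1,R_1,V,W),A)}_F^2$, I would chain
\begin{align*}
\norm{A - \SKALG_\LRA((\bar S,\bar R,V,W),A)}_F^2
&\le (1+\eta)^2\,\mathrm{OPT}(\bar S,\bar R)
\le (1+\eta)^2\,\mathrm{OPT}(S_1,R_1)\\
&\le (1+\eta)^2\norm{A - \SKALG_\LRA((S_1,R_1,V,W),A)}_F^2,
\end{align*}
which is exactly the claim.

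The main obstacle is the first step: verifying precisely that Algorithm~\ref{alg:lowrank-sketch} returns the exact rank-$k$ minimizer of the doubly-sketched objective, and that the two affine embeddings compose to a \emph{uniform} $(1\pm\eta)^2$ distortion over the entire family of feasible products $A\bar R^\top Z \bar S A$. This needs $V$ and $W$ to be affine $\eta$-embeddings for the \emph{extended} matrices $A\bar R^\top$ and $\bar S A$ (and their targets $A$, $A^\top$), not merely for $AR_1^\top$ and $S_1 A$; I would read this requirement into the hypothesis. The monotonicity and feasibility steps, by contrast, are elementary and robust to the extension.
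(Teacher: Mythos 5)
Your proposal is correct and follows essentially the same route as the paper's proof: both sandwich the two algorithm outputs through the unsketched optimum $\min_{\rankk Z}\norm{A - AR^\top Z SA}_F^2$, use the affine-embedding property of $V,W$ to relate the algorithm's output to that optimum, and prove monotonicity of the optimum under extension via feasible-set containment (your block-lifting $\bar Z = \bigl(\begin{smallmatrix} Z_1 & 0\\ 0 & 0\end{smallmatrix}\bigr)$ is just an explicit rendering of the paper's containment $\col(AR_1)\cap\row(S_1A)\subseteq\col(A\bar R)\cap\row(\bar S A)$). The only difference is bookkeeping: you place both $(1+\eta)$ factors in the sketch-and-solve step for $(\bar S,\bar R)$ and use exact feasibility for the base pair, while the paper spends one factor on each side; both match the claimed $(1+\eta)^2$ at the paper's level of rigor, and your closing caveat about which pairs $V,W$ must embed is a legitimate implicit hypothesis of the paper's argument as well.
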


\begin{proof}
We have
$\norm{A - \SKALG_\LRA((\bar{S}, \bar{R}, V, W), A)}_F^2 \leq (1 + \eta) \min_{\rankk X} \norm{A\bar{R} X \bar{S}A - A}_F^2 = (1 + \eta) \min_{\rankk X: X\in \row(\bar{S}A) \cap \col(A\bar{R})} \norm{X - A}_F^2$, which is in turn at most $(1 + \eta) \min_{\rankk X: X\in \row(S_1A) \cap \col(AR_1)}$ $\norm{X - A}_F^2 =(1 + \eta) \min_{\rankk X} \norm{AR_1 X S_1A - A}_F^2 \leq (1 + \eta)^2 \norm{A - \SKALG_\LRA((S_1, R_1, V, W), A)}_F^2,$ 
where we 
use the fact the $V, W$ are affine $\eta$-embeddings (Definition~\ref{def:affine-embedding}), 
as well as the fact that $\left( \col(AR_1) \cap \row(S_1A) \right) \subseteq \left( \col(A\bar{R}) \cap \row(\bar{S}A) \right)$. 
\end{proof}

\paragraph{ApproxCheck for LRA.} We give the pseudocode for the ApproxCheck method and prove that the runtime of this method for LRA is of the same order as the classical time-optimal sketching algorithm of LRA.
\begin{algorithm}[h!]
	\begin{algorithmic}[1]
		\REQUIRE learned sketches $S_L, R_L, V_L, W_L$; classical sketches $S_C, R_C, V_C, W_C$; $\beta$; $A \in \mathbbm{R}^{n \times d}$
        \STATE $P_L, Q_L \gets \SKALG_{\LRA}(S_L, R_L, V_L, W_L, A)$, $P_C Q_C \gets \SKALG_{\LRA}(S_C, R_C, V_C, W_C, A)$
		\STATE Let $S \in \mathbb{R}^{ \O({1/\beta^2}) \times n}, R \in \mathbb{R}^{\O({1/\beta^2}) \times d}$ be classical CountSketch matrices
		\STATE $\Delta_L \leftarrow \left\| S \left( P_L Q_L - A \right) R^{\top} \right\|^2_F$,  $\Delta_C \leftarrow \norm{ S \left( P_C Q_C - A \right)  R^{\top}}^2_F$
		\IF {$\Delta_L \leq \Delta_C$} 
			\STATE {\bf return} $P_L Q_L$
		\ENDIF 
		\STATE {\bf return} $P_C Q_C$
	\end{algorithmic}
	\caption{\textsc{LRA ApproxCheck}}
	\label{alg:learned_sketch_lrd}
\end{algorithm}

\begin{theorem}\label{thm:main-low-rank}
Assume we have data $A \in \mathbb{R}^{n\times d}$, learned sketches  $S_{L} \in \mathbb{R}^{\poly({k \over \eps}) \times n}, R_{L}\in \mathbb{R}^{\poly({k \over \eps}) \times d}, V_{L} \in \mathbb{R}^{\poly({k \over \eps}) \times n}, W_{L}\in \mathbb{R}^{\poly({k \over \eps}) \times d}$ which attain a $(1+ \O(\gamma))$-approximation, classical sketches of the same size, $S_C, R_C, V_C, W_C$, which attain a $(1 + \O(\eps))$-approximation, and a tradeoff parameter $\beta$. Then, Algorithm~\ref{alg:learned_sketch_lrd} attains a $(1 + \beta + \min(\gamma,\eps))$-approximation in $\mathcal{O}(\nnz(A) + (n+d)\poly({k \over \eps}) + {k^4\over \beta^4}\cdot \poly({k\over \eps}))$ time.
\end{theorem}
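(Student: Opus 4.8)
The plan is to read Algorithm~\ref{alg:learned_sketch_lrd} as an approximate comparison procedure: it forms both the learned candidate $P_L Q_L$ and the classical candidate $P_C Q_C$, cheaply estimates the true cost $\norm{P_\bullet Q_\bullet - A}_F^2$ of each using the small CountSketch matrices $S,R$, and returns whichever estimate is smaller. The one tool I need is Lemma~\ref{lem:fast-estimate}: a CountSketch with $O(1/\beta^2)$ rows preserves the Frobenius norm of \emph{any fixed} matrix to within a $(1\pm\beta)$ factor. Since each $P_\bullet Q_\bullet - A$ is a fixed matrix once the candidate is computed, a two-sided application of this lemma makes $\Delta_\bullet$ a faithful surrogate for the true cost, and the returned solution is then essentially the cheaper of the two.

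First I would establish estimation accuracy. Writing $c_L = \norm{P_L Q_L - A}_F^2$ and $c_C = \norm{P_C Q_C - A}_F^2$ and setting $M = P_\bullet Q_\bullet - A$, Lemma~\ref{lem:fast-estimate} applied to $S$ gives $\norm{SM}_F^2 = (1\pm\beta)\norm{M}_F^2$, and applied again to $R$ on $(SM)^\top$ gives $\Delta_\bullet = \norm{SMR^\top}_F^2 = (1\pm\beta)^2 c_\bullet = (1\pm O(\beta))c_\bullet$; a union bound over the two candidates (together with the constant success probability of the two underlying sketching solutions) keeps the overall failure probability a small constant.

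Next I would derive the approximation factor. Let $c_{\mathrm{ret}}$ and $c_{\mathrm{oth}}$ be the true costs of the returned and the other candidate. Because the algorithm selects the smaller $\Delta$, estimation accuracy yields $c_{\mathrm{ret}} \le (1+O(\beta))\,c_{\mathrm{oth}}$, and since trivially $c_{\mathrm{ret}} \le c_{\mathrm{ret}}$ we get $c_{\mathrm{ret}} \le (1+O(\beta))\min(c_L,c_C)$ in both symmetric cases. By hypothesis $c_L \le (1+O(\gamma))\opt$ and $c_C \le (1+O(\eps))\opt$ with $\opt = \norm{A-A_k}_F^2$, so $\min(c_L,c_C) \le (1+O(\min(\gamma,\eps)))\opt$. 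Combining, $c_{\mathrm{ret}} \le (1+O(\beta))(1+O(\min(\gamma,\eps)))\opt = (1+O(\beta+\min(\gamma,\eps)))\opt$, and rescaling the constant in the $O(1/\beta^2)$ row count gives exactly the claimed $(1+\beta+\min(\gamma,\eps))$ bound.

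For the runtime, computing $P_L Q_L$ and $P_C Q_C$ invokes $\SKALG_{\LRA}$ twice on sketches of size $\poly(k/\eps)$, which by Lemma~\ref{lem:learned-sketch-runtime} costs $O(\nnz(A) + (n+d)\poly(k/\eps) + \poly(k/\eps)/\beta^4)$; for the comparison, $S,R$ have only $O(1/\beta^2)$ rows, so exploiting the factored form $P_{n\times k}Q_{k\times d}$ I compute $S P_\bullet$ and $Q_\bullet R^\top$ and multiply them against $S A R^\top$, giving each $\Delta_\bullet$ in $O(\nnz(A) + \poly(k/\eps)/\beta^4)$ time, and the totals add to the stated bound. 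I expect the one genuine subtlety to be the third step: obtaining $\min(\gamma,\eps)$ rather than a sum or a product blow-up of the two errors hinges on the observation that the returned candidate is, up to $(1+O(\beta))$, the cheaper of the two and therefore inherits the \emph{better} of the two individual guarantees; everything else is the verbatim cost-estimation lemma plus routine bookkeeping on the low-rank factored solutions.
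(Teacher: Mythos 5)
Your proposal is correct and follows essentially the same route as the paper's proof: both use Lemma~\ref{lem:fast-estimate} in a two-sided fashion to make $\Delta_L,\Delta_C$ faithful $(1\pm O(\beta))$ surrogates for the true costs, conclude that the returned candidate's cost is at most $(1+O(\beta))\min\bigl(\norm{P_LQ_L-A}_F^2,\norm{P_CQ_C-A}_F^2\bigr)\leq (1+O(\beta+\min(\gamma,\eps)))\norm{A_k-A}_F^2$, and invoke Lemma~\ref{lem:learned-sketch-runtime} plus the cheapness of the factored-form comparison for the runtime. If anything, you are slightly more explicit than the paper in spelling out the two applications of the norm-preservation lemma (to $S$ and then to $R$) and the union bound over failure events, but the decomposition and key lemmas are identical.
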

\begin{proof}\label{proof:main-low-rank}

Let $(P_L, Q_L)$, $(P_C, Q_C)$ be the approximate rank-$k$ approximations of $A$ in factored form using $(S_L, R_L)$ and $(S_O, R_O)$. Then, clearly, 
\begin{align}\label{eq:low-rank-best-sol}
\min(\left\| P_L Q_L - A \right\|_F^2, \left\| P_C Q_C - A \right\|_F^2) 
= (1 + \mathcal{O}(\min(\eps, \gamma))) \left\| A_k - A\right\|_F^2
\end{align}

Let $\Gamma_L = P_L Q_L - A$, $\Gamma_C = P_C Q_C - A$ and $\Gamma_M = \argmin(\left\| S \Gamma_L R\right\|_F, \left\| S \Gamma_C R\right\|_F)$. Then, 
\begin{align*}
\left\| \Gamma_M \right\|_F^2
&\leq (1+\mathcal{O}(\beta)) \left\| S \Gamma_M R\right\|_F^2 &&\rhd\text{by Lemma~\ref{lem:fast-estimate}}\\
&\leq (1 + \mathcal{O}(\beta)) \cdot \min(\left\| \Gamma_L \right\|^2_F , \left\| \Gamma_C \right\|^2_F) \\
&\leq (1 + \mathcal{O}(\beta+ \min(\eps, \gamma))) \left\| A_k - A\right\|_F^2 &&\rhd\text{by Eq.~\eqref{eq:low-rank-best-sol}}
\end{align*}

\textit{Runtime analysis.} By Lemma~\ref{lem:learned-sketch-runtime}, Algorithm~\ref{alg:lowrank-sketch} computes $P_L, Q_L$ and $P_C, Q_C$ in time
$\mathcal{O}(\nnz(A) +\frac{k^{16}(\beta^2 +\eps^2)}{\eps^{24}\beta^4} + \frac{k^3}{\eps^2} (n + \frac{d k^2}{\eps^4}))$. Next, once we have $P_L, Q_L$ and $P_C, Q_C$, it takes $\mathcal{O}(\nnz(A) + {k\over \beta^4})$ time to compute $\Delta_L$ and $\Delta_C$. 

\begin{align*}
     \mathcal{O}(\nnz(A) +\frac{k^{16}(\beta^2 +\eps^2)}{\eps^{24}\beta^4} + \frac{k^3}{\eps^2} (n + \frac{d k^2}{\eps^4}) + \frac{k}{\beta^4}) 
     =\mathcal{O}(\nnz(A) + (n+d + {k^4 \over \beta^4}) \poly({k\over \eps})).
\end{align*}
\end{proof}

To interpret the above theorem, note that when $\eps \gg k (n+d)^{-4}$, we can set $\beta^{-4} = \mathcal{O}(k (n+d)^{-4})$ so that Algorithm~\ref{alg:learned_sketch_lrd} has the same asymptotic runtime as the best $(1+\eps)$-approximation algorithm for LRA with the classical CountSketch. Moreover, Algorithm~\ref{alg:learned_sketch_lrd} is a $(1+o(\eps))$-approximation when the learned sketch outperforms classical sketches, $\gamma = o(\eps)$. On the other hand, when the learned sketches perform poorly, $\gamma = \Omega(\eps)$, the worst-case guarantee of Algorithm~\ref{alg:learned_sketch_lrd} remains $(1+\mathcal{O}(\eps))$.

\subsection{Second-Order Optimization}

For the sketches for second-order optimization, the monotonicity property does not hold. Below we provide an input-sparsity algorithm which can test for and use the better of a random sketch and a learned sketch. Our theorem is as follows.

\begin{algorithm}[!tb]
\caption{Solver for \eqref{eqn:hessian_sketch}}
\label{alg:hessian_sketch}
\begin{algorithmic}[1]
	\STATE $S_1\gets $ learned sketch, $S_2\gets $ random sketch with $\Theta(d^2/\eps^2)$ rows
	\STATE $(\hat Z_{i,1},\hat Z_{i,2}) \gets \textsc{Estimate}(S_i,A)$, $i=1,2$
	\STATE $i^\ast \gets \arg\min_{i=1,2} (\hat Z_{i,2}/\hat Z_{i,1})$
	\STATE $\hat x\gets $ solution of~\eqref{eqn:hessian_sketch} with $S=S_{i^\ast}$
	\RETURN $\hat x$
	\vspace{2pt}
	\hrule
	\vspace{2pt}
	\STATE \textbf{function} \textsc{Estimate}($S,A$)
		\STATE $T \gets $ sparse $(1\pm\eta)$-subspace embedding matrix for $d$-dimensional subspaces
		\STATE $(Q,R)\gets \textsc{QR}(TA)$
		\STATE $\hat Z_1\gets \sigma_{\min}(SAR^{-1})$
		\STATE $\hat Z_2\gets (1\pm\eta)$-approximation to $\norm{(SAR^{-1})^\top (SAR^{-1})-I}_{\mathrm{op}}$ 
		\RETURN $(\hat Z_1,\hat Z_2)$
\end{algorithmic}
\end{algorithm}

\begin{theorem}\label{thm:ihs}
    Let $\eps\in (0,0.09)$ be a constant and $S_1$ a learned Count-Sketch matrix. 
    Suppose that $A$ is of full rank. There is an algorithm whose output is a solution $\hat x$ which, with probability at least $0.98$, satisfies that 
    $\norm{A(\hat{x}-x^\ast)}_2\leq O\big(\min\big\{\frac{Z_2(S_1)}{Z_1(S_1)},\eps\big\}\big)\norm{Ax^\ast}_2$, 
    where $x^\ast = \argmin_{x\in\mathcal{C}} \norm{Ax-b}_2$ is the least-squares solution. 
    Furthermore, the algorithm runs in $O(\nnz(A)\log(\frac{1}{\eps}) + \poly(\frac{d}{\eps}))$ time.
\end{theorem}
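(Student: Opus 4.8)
The plan is to read Algorithm~\ref{alg:hessian_sketch} as a hedging procedure: it never does much worse than a random Count-Sketch (which already guarantees an $O(\eps)$ rate), while automatically inheriting the rate $Z_2(S_1)/Z_1(S_1)$ of the learned sketch whenever the latter is the better of the two. The two scalar quantities driving everything are $Z_1(S) = \sigma_{\min}(SAR^{-1})$ and $Z_2(S) = \|(SAR^{-1})^\top(SAR^{-1}) - I\|_{\mathrm{op}}$, where $(Q,R) = \textsc{QR}(TA)$ and $T$ is the auxiliary sparse $(1\pm\eta)$-subspace embedding. Since $TAR^{-1}=Q$ has orthonormal columns and $T$ distorts norms on $\col(A)$ by at most $(1\pm\eta)$, all singular values of $AR^{-1}$ lie in $[1/(1+\eta), 1/(1-\eta)]$; thus $AR^{-1}$ is an approximately orthonormal basis of $\col(A)$, and $Z_2(S)$, $1-Z_1(S)$ differ from the true subspace-embedding distortion of $S$ (the quantity in Lemma~\ref{lem:eps_subspace}) only by $O(\eta)$ factors. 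First I would make this identification rigorous.

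The proof then proceeds in four steps. (1) \emph{Baseline from the random sketch.} By Lemma~\ref{lem:CS-affine} and standard Count-Sketch subspace-embedding bounds, with constant probability $S_2$ with $\Theta(d^2/\eps^2)$ rows is a $(1\pm\eps)$-subspace embedding of $\col(A)$, whence $Z_2(S_2)=O(\eps)$, $Z_1(S_2)=1-O(\eps)$, and $Z_2(S_2)/Z_1(S_2)=O(\eps)$. (2) \emph{Correctness of \textsc{Estimate}.} $SAR^{-1}$ is an $m\times d$ matrix whose $\sigma_{\min}$ is computed exactly, and $\hat Z_2$ is a $(1\pm\eta)$-approximation of the operator norm of $(SAR^{-1})^\top(SAR^{-1})-I$, obtainable by power iteration; hence $\hat Z_{i,1}=(1\pm\eta)Z_1(S_i)$ and $\hat Z_{i,2}=(1\pm\eta)Z_2(S_i)$. (3) \emph{Contraction from the ratio.} I would invoke the single-step Iterative Hessian Sketch analysis of~\cite{pw16}, reformulated in the preconditioned coordinates $AR^{-1}$: solving \eqref{eqn:hessian_sketch} with sketch $S$ yields $\|A(\hat x - x^*)\|_2 \le O(Z_2(S)/Z_1(S))\,\|Ax^*\|_2$, the initial error being $\|Ax^*\|_2$ when started from $x_0=0$. (4) \emph{Selection.} Because $\hat Z_{i,2}/\hat Z_{i,1}$ estimates $Z_2(S_i)/Z_1(S_i)$ up to a $(1\pm O(\eta))$ factor, choosing $i^\ast=\argmin_i \hat Z_{i,2}/\hat Z_{i,1}$ selects a sketch whose true ratio is within a constant of $\min\{Z_2(S_1)/Z_1(S_1),\, Z_2(S_2)/Z_1(S_2)\} \le \min\{Z_2(S_1)/Z_1(S_1),\, O(\eps)\}$, giving the stated bound after absorbing constants into the $O(\cdot)$.

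I expect the main obstacle to be Step~(3): establishing the single-step contraction $O(Z_2(S)/Z_1(S))$ for the \emph{constrained} problem through the \emph{approximate} orthonormal basis $AR^{-1}$ rather than an exact one, since the analysis of~\cite{pw16} is stated for subspace embeddings of $A$ itself and must be transported through the preconditioner $R^{-1}$ while carefully tracking the $\sigma_{\min}$ factor that appears when $S$ is not an isometry (the case that makes the denominator $Z_1(S)$ necessary). A secondary point is the robustness of the $\argmin$ in Step~(4): since the two ratios are compared only through their $(1\pm\eta)$ estimates, I would fix $\eta$ to a small constant so an incorrect choice can occur only when the two true ratios are within a constant factor, which leaves the final $O(\cdot)$ bound unaffected.

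For the runtime and probability accounting, computing $TA$ and each $S_iA$ takes $O(\nnz(A))$ — with an $O(\log(1/\eps))$ factor from the sparsity of the OSNAP-style embedding $T$ and/or the iterative solve of \eqref{eqn:hessian_sketch} — while the QR factorization of the $\poly(d/\eps)\times d$ matrix $TA$, the formation of $S_iAR^{-1}$, the exact $\sigma_{\min}$ computation, and the power-iteration operator-norm estimate are all $\poly(d/\eps)$, yielding $O(\nnz(A)\log(1/\eps)+\poly(d/\eps))$ total. The failure events — $S_2$ not being a subspace embedding, $T$ not being a subspace embedding, and the operator-norm estimate exceeding its tolerance — each occur with small constant probability, and a union bound keeps the total failure probability below $0.02$, giving success probability at least $0.98$.
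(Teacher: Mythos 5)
Your overall architecture --- hedging between the learned sketch $S_1$ and a random sketch $S_2$, certifying each via \textsc{Estimate}, invoking the one-step IHS contraction of \cite{pw16}, and the $O(\nnz(A)\log(1/\eps)+\poly(d/\eps))$ accounting --- matches the paper's proof. The genuine gap is in your steps (2) and (4). You assert that \textsc{Estimate} returns purely multiplicative approximations, $\hat Z_{i,2}=(1\pm\eta)Z_2(S_i)$. That is unachievable by this estimator: $\hat Z_2$ is computed in the coordinates $AR^{-1}$, which are only \emph{approximately} orthonormal, with $\norm{(AR^{-1})^\top (AR^{-1})-I}_{\mathrm{op}}\le 3\eta$, so even when $S$ acts as an exact isometry on $\colspace(A)$ (i.e., $Z_2(S)=0$) the computed quantity $\norm{(SAR^{-1})^\top SAR^{-1}-I}_{\mathrm{op}}$ can be as large as $\Theta(\eta)$. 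The paper's Lemma~\ref{lem:z_estimation} accordingly proves only a multiplicative-\emph{plus-additive} bound, $\frac{Z_2(S)}{(1+\eta)^2}-3\eta\le \hat Z_2\le \frac{Z_2(S)}{(1-\eta)^2}+3\eta$, and the additive term in the ratio $\hat Z_2/\hat Z_1$ moreover scales like $\eta/Z_1(S)$, which is unbounded when $Z_1(S_1)$ is small --- a regime your uniform claim silently excludes. With the additive error present, your selection argument collapses: a wrong choice is no longer confined to the regime where the two true ratios are within a constant factor of each other, but can occur whenever both ratios are $O(\eta)$, however far apart they are. Your proposed remedy of fixing $\eta$ to a small \emph{constant} makes things strictly worse: the additive error is then constant, and even the $O(\eps)$ half of the guarantee fails, since a learned sketch with constant ratio can win the noisy comparison against the random sketch.

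What the paper actually does, and what your proof would need, is: (i) take $\eta=\Theta(\eps)$ rather than a constant, so the additive error is $O(\eps)$; (ii) treat the degenerate case $Z_1(S_1)<1/2$ separately, using the inequality $Z_2(S)\ge 1-Z_1(S)$ to show the learned sketch's \emph{estimated} ratio then exceeds an absolute constant while the random sketch's is $O(\eps)$, so $S_2$ is chosen; and (iii) for $Z_1(S_1)\ge 1/2$, run a three-way case analysis in which either the comparison is resolved in the right direction, or else both true ratios are $O(\min\{Z_2(S_1)/Z_1(S_1),\eps\})$, so that whichever sketch is selected meets the stated bound. Separately, the obstacle you flag in step (3) is a non-issue: the preconditioner $R^{-1}$ appears only inside \textsc{Estimate}; the actual solve of \eqref{eqn:hessian_sketch} applies $S_{i^\ast}$ to $A$ itself, and $Z_1,Z_2$ are defined over $\colspace(A)$ exactly as in \cite{pw16}, so Proposition~1 of \cite{pw16} is invoked verbatim --- no contraction argument ever needs to be ``transported through'' $AR^{-1}$.
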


Consider the minimization problem
\begin{equation}\label{eqn:hessian_sketch}
\min_{x\in \mathcal C}\left\{ \frac{1}{2}\norm{SAx}_2^2 - \langle A^\top y,x\rangle\right\},
\end{equation}
which is used as a subroutine for the IHS (cf.~\eqref{eqn:IHS update}). We note that in this subroutine if we let $x \leftarrow x - x^{i - 1}, b \leftarrow b - Ax^{i - 1}, \mathcal{C} \leftarrow \mathcal{C} - x^{i - 1}$, we would get the guarantee of the $i$-th iteration of the original IHS. 
To analyze the performance of the learned sketch, we define the following quantities (corresponding exactly to the unconstrained case in~\cite{pw16})
\begin{gather*}
Z_1(S) = \inf_{v\in\colspace(A)\cap \mathbb{S}^{n-1}} \norm{Sv}_2^2,\\
Z_2(S) =  \sup_{u,v\in \colspace(A)\cap \mathbb{S}^{n-1}}  \left\langle u, (S^\top S-I_n)v \right\rangle.
\end{gather*}
When $S$ is a $(1+\eps)$-subspace embedding of $\colspace(A)$, we have $Z_1(S) \geq 1-\eps$ and $Z_2(S) \leq 2\eps$.

For a general sketching matrix $S$, the following is the approximation guarantee of $\hat Z_1$ and $\hat Z_2$, which are estimates of $Z_1(S)$ and $Z_2(S)$, respectively. The main idea is that $AR^{-1}$ is well-conditioned, where $R$ is as calculated in Algorithm~\ref{alg:hessian_sketch}.

\begin{lemma} \label{lem:z_estimation}
Suppose that $\eta\in (0,\frac{1}{3})$ is a small constant, $A$ is of full rank and $S$ has $\poly(d/\eta)$ rows.
The function $\textsc{Estimate}(S,A)$ returns in $O((\nnz(A)\log\frac{1}{\eta}+\poly(\frac{d}{\eta}))$ time $\hat Z_1,\hat Z_2$ which with probability at least $0.99$ satisfy that $\frac{Z_1(S)}{1+\eta}\leq \hat Z_1\leq \frac{Z_1(S)}{1-\eta}$ and $\frac{Z_2(S)}{(1+\eta)^2}-3\eta\leq \hat Z_2\leq \frac{Z_2(S)}{(1-\eta)^2}+3\eta$.
\end{lemma}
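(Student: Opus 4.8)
The plan is to exploit the preconditioner $R$ produced in the QR step: I will first show that $AR^{-1}$ is well-conditioned, and then rewrite both $Z_1(S)$ and $Z_2(S)$ as spectral quantities of $SU$ (where $U$ is an orthonormal basis of $\col(A)$) and transfer them to the computable quantities $\sigma_{\min}(SAR^{-1})$ and $\norm{(SAR^{-1})^\top SAR^{-1} - I}_{\mathrm{op}}$ through the conditioning of the $d\times d$ change-of-basis factor. Concretely, since $A$ has full rank and $T$ is a $(1\pm\eta)$-subspace embedding of $\col(A)$ (which succeeds with probability at least $0.995$), $TA$ has full column rank, so $R$ is invertible and $TAR^{-1} = Q$ has orthonormal columns. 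Hence $\norm{TAR^{-1}w}_2 = \norm{w}_2$ for every $w$, and combining this with $(1-\eta)\norm{AR^{-1}w}_2 \le \norm{TAR^{-1}w}_2 \le (1+\eta)\norm{AR^{-1}w}_2$ shows that every singular value of $AR^{-1}$ lies in the interval $[\frac{1}{1+\eta}, \frac{1}{1-\eta}]$.

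Writing $U$ for an orthonormal basis of $\col(A)$, the substitution $v = Uz$ parametrizes $\col(A)\cap\mathbb{S}^{n-1}$ by $z\in\mathbb{S}^{d-1}$, and a direct computation (using $U^\top U = I_d$) gives $Z_1(S) = \sigma_{\min}(SU)^2$ and $Z_2(S) = \norm{(SU)^\top SU - I_d}_{\mathrm{op}}$. Next I factor $AR^{-1} = UC$ for an invertible $d\times d$ matrix $C$; since $U$ has orthonormal columns, $C$ inherits the conditioning of $AR^{-1}$, so all singular values of $C$ lie in $[\frac{1}{1+\eta}, \frac{1}{1-\eta}]$. Then $M := SAR^{-1} = SUC$, and for $\hat Z_1 = \sigma_{\min}(M)$ the standard bounds $\sigma_{\min}(X)\sigma_{\min}(C) \le \sigma_{\min}(XC) \le \sigma_{\min}(X)\sigma_{\max}(C)$ with $X = SU$ give $\frac{\sigma_{\min}(SU)}{1+\eta}\le \hat Z_1\le \frac{\sigma_{\min}(SU)}{1-\eta}$, i.e.\ $\hat Z_1$ tracks $\sqrt{Z_1(S)}$ within factors $(1\pm\eta)^{\pm1}$, matching the claimed form for $\hat Z_1$.

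For $\hat Z_2$ I would expand $M^\top M - I = (C^\top C - I) + C^\top\big((SU)^\top SU - I_d\big)C$. The second term is a congruence of $B := (SU)^\top SU - I_d$, for which $\norm{B}_{\mathrm{op}} = Z_2(S)$, so $\sigma_{\min}(C)^2\norm{B}_{\mathrm{op}} \le \norm{C^\top B C}_{\mathrm{op}} \le \sigma_{\max}(C)^2\norm{B}_{\mathrm{op}}$, which contributes the multiplicative factors $(1\pm\eta)^{\mp2}$; the first term is a pure conditioning error, bounded by $\norm{C^\top C - I}_{\mathrm{op}}\le 3\eta$ from the singular-value interval, contributing the additive $\pm 3\eta$. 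A triangle inequality then sandwiches $\norm{M^\top M - I}_{\mathrm{op}}$ between $\frac{Z_2(S)}{(1+\eta)^2}-3\eta$ and $\frac{Z_2(S)}{(1-\eta)^2}+3\eta$, and folding in the $(1\pm\eta)$-approximation that the algorithm actually computes for this operator norm (absorbed into the constants) yields the stated bound on $\hat Z_2$.

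Finally, for the runtime and probability: $T$ is a sparse subspace embedding, so $TA$ is formed in $O(\nnz(A)\log(1/\eta))$ time and its QR factorization costs $\poly(d/\eta)$; since $S$ is a Count-Sketch, $SA$ costs $O(\nnz(A))$, and then $M = (SA)R^{-1}$, the value $\sigma_{\min}(M)$, and a $(1\pm\eta)$-approximation of $\norm{M^\top M - I}_{\mathrm{op}}$ (via $O(\log(1/\eta))$ power-iteration steps on the $d\times d$ matrix $M^\top M$) all cost $\poly(d/\eta)$, giving the claimed total; a union bound over the failure of $T$ and of the norm estimate yields success probability at least $0.99$. I expect the main obstacle to be the $Z_2$ analysis: unlike the $Z_1$ case, the factor $C$ cannot be pulled out of $M^\top M - I$, so the argument must separate the genuine perturbation $C^\top B C$ from the conditioning error $C^\top C - I$ and track the multiplicative $(1\pm\eta)^{\mp 2}$ and the additive $3\eta$ simultaneously, which is precisely what forces the mixed multiplicative-plus-additive form of the stated bound.
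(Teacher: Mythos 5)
Your proof is correct and takes essentially the same route as the paper's: both use the embedding $T$ to show $AR^{-1}$ (your $UC$, the paper's $UW$) has singular values in $[\frac{1}{1+\eta},\frac{1}{1-\eta}]$, handle $\hat Z_1$ by singular-value products, and handle $\hat Z_2$ by separating a congruence of $(SU)^\top SU - I$ (multiplicative $(1\pm\eta)^{\mp 2}$) from the conditioning error $\norm{(AR^{-1})^\top AR^{-1}-I}_{\mathrm{op}}\le 3\eta$ (additive), your identity $M^\top M-I=(C^\top C-I)+C^\top BC$ being just an algebraic rearrangement of the paper's triangle-inequality step. The only cosmetic difference is that you explicitly flag two points the paper also glosses over: the squared versus unsquared convention for $Z_1$, and the extra $(1\pm\eta)$ factor incurred by the power-method estimate of the operator norm in $\hat Z_2$.
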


\begin{proof}

Suppose that $AR^{-1} = UW$, where $U\in\R^{n\times d}$ has orthonormal columns, which form an orthonormal basis of the column space of $A$. Since $T$ is a subspace embedding of the column space of $A$ with probability $0.99$, it holds for all $x\in\R^d$ that 
\[
\frac{1}{1+\eta}\norm{TAR^{-1}x}_2 \leq \norm{AR^{-1}x}_2 \leq \frac{1}{1-\eta}\norm{TAR^{-1}x}_2.
\]
Since \[
\norm{TAR^{-1}x}_2 = \norm{Qx}_2 = \norm{x}_2
\]
 and 
 \begin{equation}\label{eqn:ARinverse}
 \norm{Wx}_2 = \norm{UWx}_2 = \norm{AR^{-1}x}_2
 \end{equation}
we have that
\begin{equation}\label{eqn:W}
\frac{1}{1+\eta}\norm{x}_2 \leq \norm{Wx}_2\leq \frac{1}{1-\eta}\norm{x}_2,\quad x\in\R^d.
\end{equation}
It is easy to see that 
\[
Z_1(S) = \min_{x\in \mathbb{S}^{d-1}} \norm{SUx}_2 = \min_{y\neq 0} \frac{\norm{SUWy}_2}{\norm{Wy}_2},
\]
and thus,
\[
\min_{y\neq 0} (1-\eta)\frac{\norm{SUWy}_2}{\norm{y}_2} \leq Z_1(S) \leq \min_{y\neq 0} (1+\eta)\frac{\norm{SUWy}_2}{\norm{y}_2}.
\]
Recall that $SUW=SAR^{-1}$. We see that
\[
(1-\eta)\sigma_{\min}(SAR^{-1})\leq Z_1(S)\leq (1+\eta)\sigma_{\min}(SAR^{-1}).
\]

By definition,
\[
Z_2(S) = \norm{U^T(S^\top S-I_n)U}_{\mathrm{op}}.
\]
It follows from~\eqref{eqn:W} that
\[
    (1-\eta)^2\norm{W^T U^T (S^T S-I_n)UW}_{\mathrm{op}} \leq Z_2(S)  
    \leq (1+\eta)^2\norm{W^T U^T (S^T S-I_n)UW}_{\mathrm{op}}.
\]
and from~\eqref{eqn:W}, \eqref{eqn:ARinverse} and Lemma 5.36 of \cite{V12} that
\[
\norm{(AR^{-1})^\top(AR^{-1})-I}_{\mathrm{op}}\leq 3\eta.
\]
Since 
\[
\norm{W^T U^T (S^T S-I_n)UW}_{\mathrm{op}} = \norm{(AR^{-1})^\top(S^TS-I_n)AR^{-1}}_{\mathrm{op}}
\]
and
\begin{align*}
&\quad\ \norm{(AR^{-1})^\top S^T SAR^{-1}-I}_{\mathrm{op}} - \norm{(AR^{-1})^\top(AR^{-1})-I}_{\mathrm{op}} \\
& \leq \norm{(AR^{-1})^\top(S^TS-I_n)AR^{-1}}_{\mathrm{op}} \\
& \leq \norm{(AR^{-1})^\top S^T SAR^{-1}-I}_{\mathrm{op}} + \norm{(AR^{-1})^\top(AR^{-1})-I}_{\mathrm{op}},
\end{align*}
it follows that
\begin{align*}
&\quad\ (1-\eta)^2\norm{(SAR^{-1})^\top SAR^{-1}-I}_{\mathrm{op}}-3(1-\eta)^2\eta \\
&\leq Z_2(S) \\
&\leq (1+\eta)^2\norm{(SAR^{-1})^\top SAR^{-1}-I}_{\mathrm{op}}+3(1+\eta)^2\eta.
\end{align*}

We have so far proved the correctness of the approximation and we next analyze the runtime below.

Since $S$ and $T$ are sparse, computing $SA$ and $TA$ takes $O(\nnz(A))$ time. The QR decomposition of $TA$, which is a matrix of size $\poly(d/\eta)\times d$, can be computed in $\poly(d/\eta)$ time. The matrix $SAR^{-1}$ can be computed in $\poly(d)$ time. Since it has size $\poly(d/\eta)\times d$, its smallest singular value can be computed in $\poly(d/\eta)$ time. To approximate $Z_2(S)$, we can use the power method to estimate $\norm{(SAR^{-1})^T SAR^{-1}-I}_{op}$ up to a $(1\pm\eta)$-factor in $O((\nnz(A)+\poly(d/\eta))\log(1/\eta))$ time.
\end{proof}

Now we are ready to prove Theorem~\ref{thm:ihs}.
\begin{proof}[Proof of Theorem~\ref{thm:ihs}]

In Lemma~\ref{lem:z_estimation}, we have with probability at least $0.99$ that
\[
\frac{\hat{Z}_2}{\hat{Z}_1}\geq \frac{\frac{1}{(1+\eps)^2}Z_2(S)-3\eps}{\frac{1}{1-\eps}Z_1(S)} \geq \frac{1-\eps}{(1+\eps)^2}\frac{Z_2(S)}{Z_1(S)} - \frac{3\eps(1-\eps)}{Z_1(S)}.
\]
and similarly,
\[
\frac{\hat{Z}_2}{\hat{Z}_1}\leq \frac{\frac{1}{(1-\eps)^2}Z_2(S)+3\eps}{\frac{1}{1+\eps}Z_1(S)} \leq \frac{1+\eps}{(1-\eps)^2}\frac{Z_2(S)}{Z_1(S)} + \frac{3\eps(1+\eps)}{Z_1(S)}.
\]
Note that since $S_2$ is an $\eps$-subspace embedding with probability at least $0.99$, we have that $Z_1(S_2)\geq 1-\eps$ and $Z_2(S_2)/Z_1(S_2) \leq 2.2\eps$. Consider $Z_1(S_1)$. 

First, we consider the case where $Z_1(S_1) < 1/2$. Observe that $Z_2(S) \geq  1-Z_1(S)$. We have in this case $\widehat{Z}_{1,2}/\widehat{Z}_{1,1} > 1/5\geq 2.2\eps \geq Z_2(S_2)/Z_1(S_2)$. In this case our algorithm will choose $S_2$ correctly.

Next, assume that $Z_1(S_1) \geq 1/2$. Now we have with probability at least $0.98$ that
\[
(1-3\eps)\frac{Z_2(S_i)}{Z_1(S_i)} - 3\eps\leq \frac{\widehat{Z}_{i,2}}{\widehat{Z}_{i,1}} \leq (1+4\eps)\frac{Z_2(S_i)}{Z_1(S_i)} + 4\eps, \quad i = 1, 2.
\]
Therefore, when $Z_2(S_1)/Z_1(S_1)\leq c_1 Z_2(S_2)/Z_1(S_2)$ for some small absolute constant $c_1 > 0$, we will have $\widehat{Z}_{1,2}/\widehat{Z}_{1,1} < \widehat{Z}_{2,2}/\widehat{Z}_{2,1}$, and our algorithm will choose $S_1$ correctly. If $Z_2(S_1)/Z_1(S_1) \geq C_1\eps$ for some absolute constant $C_1 > 0$, our algorithm will choose $S_2$ correctly. In the remaining case, both ratios $Z_2(S_2)/Z_1(S_2)$ and $Z_2(S_1)/Z_1(S_1)$ are at most $\max\{C_2,3\}\eps$, and the guarantee of the theorem holds automatically.

The correctness of our claim then follows from Proposition 1 of \cite{pw16}, together with the fact that $S_2$ is a random subspace embedding. The runtime follows from Lemma~\ref{lem:z_estimation} and Theorem 2.2 of \cite{CD19}.
\end{proof}
\section{Sketch Learning: Omitted Proofs}
\subsection{Proof of Theorem~\ref{thm:lra_gua}}
\label{sec:proof_lra_gua}

We need the following lemmas for the ridge leverage score sampling in~\cite{cmm17}.

\begin{lemma}[{\cite[Lemma 4]{cmm17}}]
Let $\lambda = \norm{A - A_k}_{F}^2 /k $. Then we have $\sum_i \tau_i(A) \le 2k$.
\end{lemma}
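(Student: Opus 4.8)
The plan is to reduce the sum of ridge leverage scores to a single trace and then diagonalize. First I would use the cyclic property of the trace together with the identity $\sum_i a_i^\top a_i = A^\top A$ (where $a_i$ denotes the $i$-th row of $A$) to write
\[
\sum_i \tau_i(A) = \sum_i a_i (A^\top A + \lambda I)^{\dagger} a_i^\top = \Tr\!\left( (A^\top A + \lambda I)^{\dagger} A^\top A \right).
\]
Since $\lambda > 0$, the matrix $A^\top A + \lambda I$ is positive definite and hence invertible, so the pseudoinverse coincides with the ordinary inverse; this removes any subtlety from the dagger notation. This trace identity is the one genuinely structural step in the argument; everything afterward is a scalar computation on the spectrum.

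Next I would diagonalize using the SVD $A = U\Sigma V^\top$ with singular values $\sigma_1 \ge \sigma_2 \ge \cdots$. Both $A^\top A$ and $(A^\top A + \lambda I)^{-1}$ are diagonalized in the basis of right singular vectors $V$, so the trace becomes a sum over the spectrum:
\[
\sum_i \tau_i(A) = \sum_j \frac{\sigma_j^2}{\sigma_j^2 + \lambda},
\]
where zero singular values contribute nothing. I would then split this sum at the index $k$ into the top-$k$ part and the tail.

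For the top-$k$ part, each summand satisfies $\sigma_j^2/(\sigma_j^2 + \lambda) \le 1$, so $\sum_{j \le k} \sigma_j^2/(\sigma_j^2+\lambda) \le k$. For the tail, I would use the crude bound $\sigma_j^2/(\sigma_j^2 + \lambda) \le \sigma_j^2/\lambda$ together with the Eckart--Young identity $\sum_{j > k} \sigma_j^2 = \norm{A - A_k}_F^2$, giving
\[
\sum_{j > k} \frac{\sigma_j^2}{\sigma_j^2 + \lambda} \le \frac{1}{\lambda}\sum_{j>k}\sigma_j^2 = \frac{\norm{A - A_k}_F^2}{\lambda} = k,
\]
where the final equality is exactly where the prescribed choice $\lambda = \norm{A - A_k}_F^2/k$ is used. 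Adding the two bounds yields $\sum_i \tau_i(A) \le 2k$, as claimed. There is no real obstacle here: the only point requiring a moment of care is matching the regularization parameter to the tail energy so that the second sum collapses to exactly $k$, and noting that the positive-definiteness of $A^\top A + \lambda I$ justifies treating the pseudoinverse as an inverse throughout.
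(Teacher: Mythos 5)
Your proof is correct and is essentially the argument behind the cited result: the paper itself does not prove this lemma (it imports it as Lemma~4 of \cite{cmm17}), and your chain --- the trace identity $\sum_i \tau_i(A) = \Tr\bigl((A^\top A+\lambda I)^{\dagger}A^\top A\bigr)$, diagonalization to $\sum_j \sigma_j^2/(\sigma_j^2+\lambda)$, and the split into the top-$k$ terms (each at most $1$) and the tail (bounded by $\norm{A-A_k}_F^2/\lambda = k$) --- is exactly the standard proof given in that reference. The only point worth adding is the degenerate case $\lambda=0$ (i.e., $A=A_k$), where $A^\top A+\lambda I$ need not be invertible and your positive-definiteness remark fails; there one keeps the pseudoinverse, the spectral sum collapses to $\rank(A)\le k$, and the bound holds trivially.
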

\begin{lemma}[{\cite[Theorem 7]{cmm17}}]
\label{lem:ridge}
Let $\lambda = \norm{A - A_k}_{F}^2 /k $ and $\tilde{\tau}_i \ge \tau_i(A)$ be an overestimate to the $i$-th ridge leverage score of $A$. Let $p_i =  \tilde{\tau}_i / \sum_i \tilde{\tau}_i$. If $C$ is a matrix that is constructed by sampling $t = O((\log k +  \frac{\log(1/\delta) }{\eps})\cdot\sum_i \Tilde{\tau}_i)$ rows of $A$, each set to $a_i$ with probability $p_i$,  then with probability at least $1 - \delta$ we have
\[
\min_{\rankk X: \row(X) \subseteq \row(C)} \norm{A - X}_F^2 \le (1 + \eps) \norm{A - A_k}_F^2.
\]
\end{lemma}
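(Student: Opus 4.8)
The plan is to reduce the combinatorial ``the row space of $C$ contains a good rank-$k$ subspace'' statement to a spectral comparison between $A^\top A + \lambda I$ and a sampled counterpart, and then convert that comparison into the multiplicative guarantee using the specific choice $\lambda = \norm{A - A_k}_F^2/k$. First I would pass from $C$ to its reweighted version $\tilde C$, whose $j$-th row is $a_{i_j}/\sqrt{t\,p_{i_j}}$. Reweighting does not change the row space, so it suffices to prove the conclusion with $\row(C)$ replaced by $\row(\tilde C)$; the advantage is that $\tilde C^\top \tilde C = \sum_{j=1}^t a_{i_j}^\top a_{i_j}/(t\,p_{i_j})$ is an unbiased estimator of $A^\top A$, so the randomness is carried by a sum of i.i.d.\ rank-one PSD matrices to which matrix concentration applies.

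Next I would establish a ridge spectral approximation. Writing $M = (A^\top A + \lambda I)^{-1/2}$, each summand $Y_j = M a_{i_j}^\top a_{i_j} M/(t\,p_{i_j})$ is PSD with $\norm{Y_j}_{\mathrm{op}} = \tau_{i_j}(A)/(t\,p_{i_j}) \le (\sum_i \tilde\tau_i)/t$, using the identity $\norm{M a_i^\top a_i M}_{\mathrm{op}} = a_i (A^\top A + \lambda I)^{\dagger} a_i^\top = \tau_i(A)$, the overestimate $\tau_i(A) \le \tilde\tau_i$, and $p_i = \tilde\tau_i/\sum_i\tilde\tau_i$. Moreover $\E \sum_j Y_j = M A^\top A M = I - \lambda M^2 \preceq I$ has trace exactly $\sum_i \tau_i(A) \le 2k$ by the preceding lemma, so the intrinsic (effective) dimension is $O(k)$. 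An intrinsic-dimension matrix Chernoff/Bernstein bound then yields a constant-factor spectral approximation $\tilde C^\top \tilde C + \lambda I \asymp A^\top A + \lambda I$ once $t = O((\sum_i \tilde\tau_i)\log(k/\delta))$, which is the source of the $\log k$ term. Equivalently $\tilde C^\top \tilde C \succeq \tfrac12 A^\top A - \tfrac12\lambda I$, which forces every direction $x \perp \row(\tilde C)$ to carry little energy, $\norm{Ax}_2^2 = O(\lambda\norm{x}_2^2)$.

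To upgrade from a constant factor to $(1+\eps)$ I would \emph{not} strengthen the spectral bound (that would cost $1/\eps^2$); instead I would split the rank-$k$ error into a head and a tail part. Let $P$ denote the orthogonal projector onto $\row(\tilde C)$ and let $Z = \Span(P v_1,\dots,P v_k) \subseteq \row(\tilde C)$, where $v_1,\dots,v_k$ are the top-$k$ right singular vectors of $A$; the candidate is $[A\Pi_Z]_k$. Since $\norm{A - A\Pi_Z}_F^2 = \norm{A}_F^2 - \norm{A\Pi_Z}_F^2$, it suffices to show $Z$ captures almost all of the head energy. The coarse spectral bound shows $v_1,\dots,v_k$ are only mildly rotated by $P$, while the fine $\eps$-accuracy is extracted from an approximate-matrix-multiplication estimate on the tail $A_{-k} = A - A_k$: ridge leverage score sampling controls $\norm{A_{-k}^\top A_{-k} - (\tilde C_{-k})^\top \tilde C_{-k}}$ and the cross term with variance scaling as $\norm{A-A_k}_F^2 = \lambda k$, and a Bernstein bound gives this with the cheaper $\log(1/\delta)/\eps$ dependence, producing the $k/\eps$ term in $t$. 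Finally, the choice $\lambda = \norm{A - A_k}_F^2/k$ makes the additive slack $O(\eps)\,\lambda k$ equal $O(\eps)\norm{A - A_k}_F^2$, turning it into the desired relative $(1+\eps)$ error for $\norm{A - [A P]_k}_F^2$.

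The hard part will be this conversion step: arranging the head/tail decomposition so that only a constant-factor spectral approximation of the regularized Gram matrix is needed for the head, while the fine $\eps$-accuracy comes from a Frobenius-norm approximate-matrix-multiplication argument on the tail, so that the $\eps$-dependence is $1/\eps$ rather than $1/\eps^2$. Getting the interaction right---so that the rank-$k$ projection built inside $\row(\tilde C)$ simultaneously preserves the head and incurs only $O(\eps)\norm{A - A_k}_F^2$ of tail error---is the crux, and is exactly what the regularizer $\lambda = \norm{A-A_k}_F^2/k$ is engineered to make work.
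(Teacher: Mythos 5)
The paper does not prove this lemma itself; it imports it verbatim from \cite[Theorem 7]{cmm17}, and your outline is essentially a faithful reconstruction of the argument given there: a constant-factor ridge spectral approximation via an intrinsic-dimension matrix Chernoff bound (the source of the $\log k$ term, using $\sum_i \tau_i(A) \le 2k$), combined with a head/tail split in which a Bernstein-type approximate-matrix-multiplication bound with variance scaling as $\lambda \norm{A - A_k}_F^2$ supplies the $\log(1/\delta)/\eps$ term, with the choice $\lambda = \norm{A - A_k}_F^2/k$ converting the additive $O(\eps)\lambda k$ slack into the relative $(1+\eps)$ guarantee. Your identification of the reweighting step (which leaves $\row(C)$ unchanged), the kernel bound $\norm{Ax}_2^2 \le \lambda \norm{x}_2^2$ for $x$ orthogonal to the sampled row space, and the provenance of each term in $t$ all match the original proof, so the plan is sound as stated.
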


Recall that the sketch monotonicity for low-rank approximation says that concatenating two sketching matrices $S_1$ and $S_2$ will not increase the error compared to the single sketch matrix $S_1$ or $S_2$, Now matter how $S_1$ and $S_2$ are constructed. (see Section~\ref{sec:monotonicity} and Section~4 in~\cite{indyk2019learning})

\begin{proof}
We first consider the first condition. From the condition that $\tau_i(B) \ge \frac{1}{\beta}\tau_i(A)$ we know that 
if we sample $m = O(\beta\cdot(k\log k + k/\eps))$ rows according to $\tau_i(A)$. The actual probability that the $i$-th row of $B$ gets sampled is 
\[
1 - (1 - \tau_i(A))^m = O(m \cdot \tau_i(A)) = O\left((k\log k + k/\eps)\cdot \tau_i(B)\right) \;.
\]

From $\sum_i \tau_i(B) \le 2k$ and Lemma~\ref{lem:ridge} (recall the sketch monotonicity property for LRA), we have that with probability at least $9/10$, $S_2$ is a  matrix such that 
\[
\min_{\rankk X: \row(X) \subseteq \row(S_2 B)} \norm{B - X}_F^2 \le (1 + \eps) \norm{B - B_k}_F^2.
\]
Hence, since $S = [\begin{smallmatrix} S_1\\ S_2\end{smallmatrix}]$, from the the sketch monotonicity property for LRA we have that 
\[
\min_{\rankk X: \row(X) \subseteq \row(SB)} \norm{B - X}_F^2 \le (1 + \eps) \norm{B - B_k}_F^2 \;. 
\]
Now we consider the second condition. From the coupling lemma, there exists a joint distribution $\mu = (Z, W)$ such that (i) the marginal distributions of $Z$ and $W$ are the sampling probability distributions $p$ and $q$, respectively, where $p_i = \frac{\tau_i(A)}{\sum_i \tau_i(A)}$ and $q_i = \frac{\tau_i(B)}{\sum_i \tau_i(B)}$, and (ii) $\Pr_{(Z, W) \sim \mu}\left[Z \ne W\right] = d_{\mathrm{TV  }}(p, q)\;,$

Suppose that $\{Z_i\}_{i \le m}$ and $\{W_i\}_{i \le m}$ are a sequence of $m = O(k\log k + k/\eps)$ samples from $[n]$ according to joint distribution $\mu$.
Let $S$ be the set of index $i$ such that $Z_i \ne W_i$. From the assumption that $d_{\mathrm{tv}}(p, q) \le \beta$, we get that 
\[
\Pr\left[Z_i \ne W_i\right] = d_{\mathrm{tv}}(p, q) \le \beta \;,
\]
and 
\[
\mathbb{E}[|S|] = \sum_i \Pr\left[Z_i \ne W_i\right] \le \beta m.
\]
From Markov's inequality we get that with probability at least $1 - 1.1\beta$, $|S| \le 1/(1.1\beta)\cdot \beta m = \frac{10}{11}m$. Let $T$ be the set of index $i$ such that $Z_i = W_i$. We have that with probability at least $1 - 1.1\beta$, $|T| \ge m - \frac{10}{11}m = \Omega(k \log k + k/\eps)$. Because that $\{W_i\}_{i \in T}$ is i.i.d samples according to $q$ and the actual sample we take is $\{Z_i\}_{i \in T}$ from $p$. From Lemma~\ref{lem:ridge} we get that with probability at least $9/10$, the row space of $B_T$ satisfies 
\[
\min_{\rankk X: \row(X) \subseteq \row(B_T)} \norm{B - X}_F^2 \le (1 + \eps) \norm{B - B_k}_F^2 \;. 
\]
Similarly, from the the sketch monotonicity property we have that with probability at least $0.9 - 1.1\beta$ 
\[
\min_{\rankk X: \row(X) \subseteq \row(SB)} \norm{B - X}_F^2 \le (1 + \eps) \norm{B - B_k}_F^2 \;. \qedhere
\]
\end{proof}

\subsection{Proof of Theorem~\ref{thm:oracle_subspace_embedding}}\label{sec:embedding_oracle}
First we prove the following lemma.

\begin{lemma}\label{lem:oracle_lemma}
Let $\delta\in (0,1/m]$. It holds with probability at least $1-\delta$ that
\[
\sup_{x\in\colspace(A)} \abs{\norm{Sx}_2^2 - \norm{x}_2^2} \leq \eps\norm{x}_2^2,
\]
provided that
\begin{gather*}
 m \gtrsim \eps^{-2}((d+\log m)\min\{\log^2(d/\eps),\log^2 m\} + d\log(1/\delta)), \\
 1 \gtrsim \eps^{-2}\nu((\log m)\min\{\log^2(d/\eps),\log^2 m\}+\log(1/\delta))\log(1/\delta).
\end{gather*}
\end{lemma}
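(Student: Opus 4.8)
The plan is to read the statement as a $(1\pm\eps)$-subspace embedding for $\colspace(A)$ and, using the block form of $S$ in~\eqref{eqn:S}, to reduce it to a Count-Sketch acting only on the low-leverage rows. I would fix an orthonormal basis $U\in\R^{n\times d}$ of $\colspace(A)$, so that $\tau_i(A)=\norm{U_i}_2^2$, and partition its rows as $U=\left(\begin{smallmatrix}U_I\\ U_{I^c}\end{smallmatrix}\right)$ according to~\eqref{eqn:S}. For $x=Uy\in\colspace(A)$ the identity block preserves the $I$-coordinates exactly (this is the ``perfect hashing'' of the heavy rows from the paper's hint), so
\[
\norm{Sx}_2^2-\norm{x}_2^2=\norm{S'U_{I^c}y}_2^2-\norm{U_{I^c}y}_2^2=y^\top E\, y,\qquad E:=(S'U_{I^c})^\top(S'U_{I^c})-U_{I^c}^\top U_{I^c}.
\]
Since $\norm{x}_2=\norm{y}_2$, the claim is exactly $\norm{E}_{\mathrm{op}}\le\eps$. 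The structural facts I will exploit are that every row $m_i$ of $U_{I^c}$ satisfies $\norm{m_i}_2^2=\tau_i(A)<\nu$ (as $i\notin I\supseteq I^\ast$), and that $\sum_i (m_i^\top y)^2\le 1$ for every unit $y$.

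Next I would expand $E$ over hash collisions. Writing $S'$ through its hash $h$ and Rademacher signs $\sigma$, we have $E=\sum_{i\ne i'}\mathbbm{1}[h(i)=h(i')]\sigma_i\sigma_{i'}m_im_{i'}^\top$, and for a fixed unit $y$, with $z_i:=m_i^\top y$,
\[
y^\top E y=\sigma^\top B\sigma,\qquad B_{ii'}=\mathbbm{1}[h(i)=h(i'),\ i\ne i']\,z_iz_{i'},
\]
a mean-zero quadratic form in the signs (zero diagonal). Conditioning on $h$, the Hanson-Wright inequality gives
\[
\Pr_\sigma\!\big[\abs{y^\top E y}>t\big]\le 2\exp\!\Big(-c\min\big\{t^2/\norm{B}_F^2,\ t/\norm{B}_{\mathrm{op}}\big\}\Big).
\]
I would then take a $\tfrac14$-net $\mathcal N$ of the unit sphere of $\R^d$ with $\abs{\mathcal N}\le 9^d$, use $\norm{E}_{\mathrm{op}}\le 2\max_{y\in\mathcal N}\abs{y^\top E y}$, and union-bound this tail at $t=\eps$ over $\mathcal N$. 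It then suffices to show that, on a high-probability event for $h$ and uniformly over $\mathcal N$, both $\norm{B}_F$ and $\norm{B}_{\mathrm{op}}$ are small enough that the two exponents beat $\log\abs{\mathcal N}+\log(1/\delta)\asymp d+\log(1/\delta)$.

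The two hypotheses match these two exponents. For the sub-Gaussian (Frobenius) term, $\E_h\norm{B}_F^2=\tfrac1m\sum_{i\ne i'}z_i^2z_{i'}^2\le\tfrac1m(\sum_i z_i^2)^2\le\tfrac1m$, and a concentration bound on the collision count upgrades this to $\norm{B}_F^2\lesssim \tfrac1m\,\mathrm{polylog}$ on a good event; imposing $\eps^2/\norm{B}_F^2\gtrsim d+\log(1/\delta)$ yields the first condition $m\gtrsim\eps^{-2}((d+\log m)\,\mathrm{polylog}+d\log(1/\delta))$. For the sub-exponential (operator) term, note that $B$ is block-diagonal across the $m$ buckets, whence $\norm{B}_{\mathrm{op}}\le\max_j\sum_{i:\,h(i)=j}z_i^2$, the mass of the heaviest bucket; since each $z_i^2\le\nu$ with total mass $\le 1$, a Bernstein estimate over buckets bounds this maximum by $\tfrac1m+\nu\cdot\mathrm{polylog}$, and imposing $\eps/\norm{B}_{\mathrm{op}}\gtrsim d+\log(1/\delta)$, together with the hash failure budget, leads to a constraint forcing $\nu$ small, namely the second hypothesis.

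I expect the main obstacle to be the uniform control of $\norm{B}_{\mathrm{op}}$ rather than the Frobenius term: it must hold simultaneously over every $y\in\mathcal N$ and over the random bucketing, and it is what generates both the $\min\{\log^2(d/\eps),\log^2 m\}$ factors and the extra $\log(1/\delta)$ in the second hypothesis. The cleanest route is to bound the heavy-bucket mass through a moment/chaining argument in the spirit of the sparse-embedding analyses of~\cite{clarksonwoodruff,nelson2013osnap}, with $\nu$ playing the role of the flatness parameter; the squared-log factor then emerges from optimizing the union bound over buckets against the net precision, which is precisely where the minimum of $\log^2(d/\eps)$ and $\log^2 m$ originates. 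The remaining ingredients---the block reduction, the Hanson-Wright step, and the net---are routine.
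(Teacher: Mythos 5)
Your opening moves---splitting off the identity block, writing the deviation as a mean-zero Rademacher chaos $\sigma^\top B(y)\sigma$ with $B_{ii'}=\mathbbm{1}[h(i)=h(i'),\,i\neq i']\,z_iz_{i'}$, and applying Hanson--Wright conditionally on $h$---are sound, and they parallel the opening of the paper's proof (which likewise reduces to $S'$ acting on $I^c$ and studies the chaos $\norm{A_{\delta,x}\sigma}_2^2-\E\norm{A_{\delta,x}\sigma}_2^2$). The genuine gap is the step you defer as routine: paying for the net. A $\frac14$-net has $9^d$ points, so each point must satisfy the conditional tail bound with probability at least $1-9^{-d}\delta$, forcing the sub-exponential branch of Hanson--Wright to satisfy $c\,\eps/\norm{B(y)}_{\mathrm{op}} \geq d\log 9+\log(1/\delta)$, i.e.\ $\norm{B(y)}_{\mathrm{op}}\lesssim \eps/(d+\log(1/\delta))$. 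This is unattainable: $\norm{B(y)}_{\mathrm{op}}\geq \abs{z_iz_{i'}}$ for every colliding pair, and collisions between rows of near-maximal leverage cannot be excluded on any affordable event of $h$. Concretely, $U_{I^c}$ may contain $\approx 1/\nu$ rows equal to (just under) $\sqrt{\nu}\,e_1$, which is consistent with $U_{I^c}^\top U_{I^c}\preceq I$ and all leverage scores below $\nu$; in the regime of Theorem~\ref{thm:oracle_subspace_embedding} ($\nu=\eps/d$, $m\approx (d/\eps^2)\polylog$, $\delta=1/m$) the expected number of collisions among these $\approx d/\eps$ rows is $\approx d/\polylog\gg 1$, so a collision occurs with overwhelming probability, and the net point near $e_1$ then witnesses $\norm{B(y)}_{\mathrm{op}}\gtrsim\nu$. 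The available exponent is thus at most $c\,\eps/\nu=cd$ (and only $cd/\polylog$ under your own bound $\norm{B}_{\mathrm{op}}\lesssim 1/m+\nu\,\polylog$), which loses to the $d\log 9$ entropy of the net since $c$ is a small universal constant; the union bound cannot be closed. The failure is structural rather than a matter of care: the lemma's second hypothesis couples $\nu$ only with $\eps$, $\log m$, and $\log(1/\delta)$---it contains no $d$ at all---whereas any pointwise-tail-plus-net argument necessarily produces a requirement of the form $\nu\lesssim\eps/(d+\log(1/\delta))$. The Frobenius branch suffers the same defect: uniform-over-net control of $\norm{B(y)}_F^2$ costs an additive term of order $\nu^2(d+\log(1/\delta))$, again smuggling $d$ into the hypothesis on $\nu$.

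You correctly identified uniform control of $\norm{B}_{\mathrm{op}}$ as the crux, but no refinement of the bucket-mass bound can repair it, because the obstruction already appears at the level of a single colliding pair; the fix is to abandon the net, not to sharpen Bernstein. That is what the paper does: it adapts the generic-chaining proof of Theorem 5 of \cite{BDN15}, bounding the supremum of the chaos process over the whole subspace at once via the semi-norm $\norm{x}_\delta=\max_i\bigl(\sum_{j\in I^c}\delta_{ij}x_j^2\bigr)^{1/2}$, with the leverage bound entering only through $\max_{j\in I^c}\norm{P_E e_j}_2^2\leq\nu$. In that machinery the sub-exponential scale $\nu$ is multiplied only by entropy integrals (the true source of the $\min\{\log^2(d/\eps),\log^2 m\}$ factors, not an optimization of buckets against net precision) and by $\log(1/\delta)$, never by $d$; the dimension enters only through the sub-Gaussian component, which is what yields the first hypothesis. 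Conditional on $h$ the true tails are indeed often far thinner than Hanson--Wright predicts, but exploiting that uniformly over directions amounts to redoing the chaining analysis---it is the content of the paper's proof, not a routine supplement to yours.
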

\begin{proof}
We shall adapt the proof of Theorem 5 in~\cite{BDN15} to our setting. Let $T$ denote the unit sphere in $\colspace(A)$ and set the sparsity parameter $s=1$. Observe that $\norm{Sx}_2^2 = \norm{x_I}_2^2 + \norm{Sx_{I^c}}_2^2$, and so it suffices to show that
\[
\Pr\left\{ \abs{\norm{S'x_{I^c}}_2^2 - \norm{x_{I^c}}_2^2} > \eps \right\} \leq \delta
\]
for $x\in T$. We make the following definition, as in (2.6) of~\cite{BDN15}: 
\[
A_{\delta,x} := \sum_{i=1}^m \sum_{j\in I^c} \delta_{ij}x_j e_i\otimes e_{j},
\]
and thus, $S'x_{I^c} = A_{\delta,x}\sigma$. Also by $\E\norm{S'x_{I^c}}_2^2 = \norm{x_{I^c}}_2^2$, one has
\begin{equation}\label{eqn:oracle_basic}
\sup_{x\in T}\abs{\norm{S'x_{I^c}}_2^2 - \norm{x_{I^c}}_2^2} = \sup_{x\in T} \abs{ \norm{A_{\delta,x}\sigma}_2^2 - \E \norm{A_{\delta,x}\sigma}_2^2  }.
\end{equation}
Now, in (2.7) of~\cite{BDN15} we instead define a semi-norm
\[
\norm{x}_\delta = \max_{1\leq i\leq m}\left(\sum_{j\in I^c} \delta_{ij} x_j^2\right)^{1/2}.
\]
Then (2.8) continues to hold, and (2.9) as well as (2.10) continue to hold if the supremum in the left-hand side is replaced with the left-hand side of~\eqref{eqn:oracle_basic}. At the beginning of Theorem 5, we define $U^{(i)}$ to be $U$, but each row $j\in I^c$ is multiplied by $\delta_{ij}$ and each row $j\in I$ is zeroed out. Then we have in the first step of (4.5) that 
\[
\sum_{j\in I^c} \delta_{ij} \abs{\sum_{k=1}^d g_k\langle f_k, e_j\rangle}^2 \leq \norm{U^{(i)}g}_2^2, 
\]
instead of equality. One can verify that the rest of (4.5) goes through. It remains true that $\norm{\cdot}_\delta \leq (1/\sqrt{s})\norm{\cdot}_2$, and thus (4.6) holds. One can verify that the rest of the proof of Theorem 5 in~\cite{BDN15} continues to hold if we replace $\sum_{j=1}^n$ with $\sum_{j\in I^c}$ and $\max_{1\leq j\leq n}$ with $\max_{j\in I^c}$, noting that
\[
\E\sum_{j\in I^c}\delta_{ij}\norm{P_E e_j}_2^2 = \frac{s}{m}\sum_{j\in I^c}\langle P_E e_j,e_j\rangle \leq \frac{s}{m}d
\]
and
\[
\E (U^{(i)})^\ast U^{(i)} = \sum_{j\in I^c}(\E\delta_{ij}) u_j u_j^\ast \preceq \frac{1}{m}.
\]
Thus, the symmetrization inequalities on 
\[
\norm{\sum_{j\in I^c} \delta_{ij}\norm{P_E e_j}_2^2}_{L_\delta^p} \quad\text{and}\quad \norm{\sum_{j\in I^c}\delta_{ij}u_j u_j^\ast}_{L_\delta^p}
\]
continue to hold. The result then follows, observing that $\max_{j\in I^c} \norm{P_E e_j}^2\leq \nu$.
\end{proof}

The subspace embedding guarantee now follows as a corollary. 
\begin{reptheorem}{thm:oracle_subspace_embedding}
Let $\nu = \eps/d$. Suppose that
$m = \Omega((d/\eps^2)(\polylog(1/\eps) + \log(1/\delta)))$, $\delta \in (0,1/m)$ and $d = \Omega((1/\eps)\polylog(1/\eps)\log^2(1/\delta)) $. Then, there exists a distribution on $S$ with $m + |I|$ rows such that 
\[
\Pr\left\{ \forall x\in\colspace(A), \abs{\norm{Sx}_2^2 - \norm{x}_2^2} > \eps\norm{x}_2^2 \right\} \leq \delta.
\]
\end{reptheorem}

\begin{proof}
One can verify that the two conditions in  Lemma~\ref{lem:oracle_lemma} are satisfied if
\begin{gather*}
m \gtrsim \frac{d}{\eps^2}\left(\polylog(\frac{d}{\eps})+\log\frac{1}{\delta}\right), \\
d \gtrsim \frac{1}{\eps}\left(\log\frac{1}{\delta}\right)\left(\polylog(\frac{d}{\eps})+\log\frac{1}{\delta}\right).
\end{gather*}
The last condition is satisfied if
\[
d \gtrsim \frac{1}{\eps}\left(\log^2 \frac{1}{\delta}\right)\polylog\left(\frac{1}{\eps}\right). \qedhere
\]
\end{proof}

\subsection{Proof of Lemma~\ref{lem:eps_subspace}}\label{sec:eps_subspace_proof}
\begin{proof}
On the one hand, since $Q = SAR$ is an orthogonal matrix, we have
\begin{equation}\label{eqn:eps_subspace_aux_1}
\norm{x}_2 = \norm{Qx}_2 = \norm{SARx}_2.
\end{equation}
On the other hand, the assumption implies that
\[
\norm{(ARx)^T(ARx) - x^T x}_2 \leq \eps\norm{x}_2^2,
\]
that is,
\begin{equation}\label{eqn:eps_subspace_aux_2}
(1-\eps)\norm{x}_2^2 \leq \norm{ARx}_2^2 \leq (1+\eps)\norm{x}_2^2.
\end{equation}
Combining both \eqref{eqn:eps_subspace_aux_1} and \eqref{eqn:eps_subspace_aux_2} leads to 
\begin{multline*}
\sqrt{1-\eps}\norm{SARx}_2 \leq \norm{ARx}_2 \leq \sqrt{1+\eps}\norm{SARx}_2,\\ \forall x\in\R^d.
\end{multline*}
Equivalently, it can be written as
\[
\frac{1}{\sqrt{1+\eps}}\norm{SAy}_2 \leq \norm{Ay}_2 \leq \frac{1}{\sqrt{1-\eps}}\norm{SAy}_2,\quad\forall y\in \R^d.
\]
The claimed result follows from the fact that $1/\sqrt{1+\eps}\geq 1-\eps$ and $1/\sqrt{1-\eps}\leq 1+\eps$ whenever $\eps\in(0,\frac{\sqrt{5}-1}{2}]$.
\end{proof}

\section{Location Optimization in CountSketch: Greedy Search}\label{sec:greedy-init}

While the position optimization idea is simple, one particularly interesting aspect is that it is provably better than a random placement in some scenarios (Theorem.~\ref{thm:greedy_general}). Specifically, it is provably beneficial for LRA when inputs follow the spiked covariance model or Zipfian distributions, which are common for real data.

{\bf Spiked covariance model.} Every matrix $A \in \mathbb{R}^{n \times d}$ from the distribution $\sA_{sp}(s,\ell)$ has $s < k$ ``heavy'' rows $A_{r_1}, \cdots, A_{r_s}$ of norm $\ell >1$. The indices of the heavy rows can be arbitrary, but must be the same for all members of $\sA_{sp}(s,\ell)$ and are unknown to the algorithm. The remaining (``light'') rows have unit norm. In other words, let $\mathcal{R} = \{r_1, \ldots, r_s\}$. For all rows $A_i, i \in [n]$, $A_i = \ell \cdot v_i$ if $i\in \mathcal{R}$ and $A_i = v_i$ otherwise, where $v_i$ is a uniformly random unit vector.

{\bf Zipfian on squared row norms.} Every $A \in \mathbb{R}^{n \times d} \sim \sA_{zipf}$ has rows which are uniformly random and orthogonal. Each $A$ has $2^{i+1}$ rows of squared norm $n^2/2^{2i}$ for $i \in [1, \ldots, \O(\log(n))]$. We also assume that each row has the same squared norm for all members of $\sA_{zipf}$.

\begin{theorem}\label{thm:greedy_general}
Consider a matrix $A$ from either the spiked covariance model or a Zipfian distribution. Let $S_L$ denote a CountSketch constructed by Algorithm~\ref{alg:greedy_alg} that optimizes the positions of the non-zero values with respect to $A$. Let $S_C$ denote a CountSketch matrix. Then there is a fixed $\eta > 0$ such that, $\min_{\rankk X \in \rowsp(S_L A)}\left\| X - A \right\|_F^2 \leq
(1 - \eta) \min_{\rankk X \in \rowsp(S_C A)}\left\| X - A \right\|_F^2$

\end{theorem}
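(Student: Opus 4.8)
The plan is to reduce both input models to a common structure in which $A$ has a small set $\mathcal{H}$ of ``heavy'' rows whose directions are (nearly) orthogonal and carry a dominant share of the squared mass, together with many ``light'' rows: in the spiked model $\mathcal{H}=\mathcal{R}$ is the set of $s$ rows of norm $\ell$, and in the Zipfian model $\mathcal{H}$ is a suitable level $i$ consisting of $2^{i+1}$ orthogonal rows of squared norm $n^2/2^{2i}$. Throughout I use that for a CountSketch $S$ the quantity $\min_{\rankk X \in \rowsp(SA)}\norm{X-A}_F^2$ is exactly the proxy objective optimized by Algorithm~\ref{alg:greedy_alg}, and that forming $SA$ amounts to hashing the rows of $A$ into $m$ signed buckets, so each row of $SA$ is a signed sum of the rows assigned to its bucket.

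The first ingredient is a collision lemma: if two heavy rows with nearly-orthogonal directions $u,u'$ and norms $a\le a'$ are assigned to the same bucket, then $\rowsp(SA)$ contains only a single direction inside $\Span(u,u')$, so the best $\rankk$ approximation leaves an extra residual of $\Omega(a^2)$ for that pair relative to a placement that puts $u$ and $u'$ in distinct buckets; conversely, if every heavy row sits alone in its bucket, then $\rowsp(SA)$ contains all heavy directions and the heavy rows contribute essentially no residual. The light rows perturb each bucket's covariance, so this step requires a concentration argument showing the $\Omega(a^2)$ gap dominates the light-row fluctuations in the relevant regime.

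For the upper bound I would track the greedy trajectory and argue that Algorithm~\ref{alg:greedy_alg} separates the heavy rows: when a heavy row is processed, assigning it to a bucket that already holds a heavy row incurs the collision penalty of the lemma, which strictly exceeds the marginal proxy-cost of assigning it to an empty or light-only bucket, so the locally optimal greedy choice places it alone (feasible since $s\le m$). This yields $\min_{\rankk X\in\rowsp(S_L A)}\norm{X-A}_F^2 \le \Lambda$, where $\Lambda$ is the cost of the separated placement. For the lower bound I would apply a birthday-paradox argument: a uniformly random CountSketch $S_C$ collides at least one pair of heavy rows with probability bounded below by a constant $p_0>0$ (this needs $|\mathcal{H}|=\Omega(\sqrt m)$, which holds in the regime $s=\Theta(k)$, $m=\Theta(k)$, and for the chosen Zipf level), and on that event the collision lemma forces cost at least $\Lambda+\Omega(a^2)$; hence the expected (equivalently, constant-probability) cost of $S_C$ is at least $\Lambda+\Omega(a^2)$.

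Combining the two bounds, since $\Lambda=\Theta(\norm{A-A_k}_F^2)$ and the collision penalty $\Omega(a^2)$ is a positive fraction of the cost incurred by $S_C$, there is a fixed $\eta>0$ with $\min_{\rankk X\in\rowsp(S_L A)}\norm{X-A}_F^2 \le (1-\eta)\min_{\rankk X\in\rowsp(S_C A)}\norm{X-A}_F^2$. The main obstacle is the greedy upper bound: because greedy is only locally optimal, one must show its trajectory actually separates the heavy rows, which means controlling the marginal proxy-cost of each placement in the presence of the random light rows and arguing that the heavy-heavy penalty dominates. Matching this penalty to a constant fraction of the total cost — choosing the right Zipf level and pinning down the regime of $s,\ell,m$ — is the quantitative heart of the argument.
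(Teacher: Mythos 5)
Your outline has the same architecture as the paper's proof---a heavy/light decomposition, a collision-penalty lemma (Observation~\ref{lem:lower-bound-loss}), an argument that greedy isolates the heavy rows (Lemmas~\ref{lem:heavy-rows} and~\ref{lem:light-rows}), and a balls-and-bins lower bound for the random sketch (Lemmas~\ref{lem:random-pattern} and~\ref{lem:zipf-random-pattern})---but two of your steps contain genuine gaps. The first is the reduction of the Zipfian model to a single heavy level. Measure costs in units of $n^2/2^{h_k-2}$, which is the paper's greedy upper bound (Corollary~\ref{cor:zipf-greedy-cost}). The expected collision penalty contributed by level $j$ under a random CountSketch is $(1-e^{-2^{-j-2}})\,2^{j-1} \le 2^{-j-2}\cdot 2^{j-1} = 1/8$ of a unit, and the product of within-level collision probability and per-collision penalty is likewise at most a quarter of a unit at every level: the levels with enough rows to collide with constant probability (your $|\mathcal{H}|=\Omega(\sqrt{m})$ condition) sit low enough in the hierarchy that each collision is correspondingly cheap, while the levels whose collisions are expensive have too few rows to collide with constant probability. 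So no choice of a single level can lift the random sketch's cost above the greedy bound; the paper's Lemma~\ref{lem:zipf-random-pattern} must sum the penalties over all $\Theta(\log n)$ levels to reach the factor $1.095 > 1$, and that aggregation across scales is unavoidable. A milder instance of the same issue appears in your spiked-model bound: ``one collision with constant probability'' yields a penalty of only $\Omega(\ell)$, which produces a constant-factor gap only when $\ell = \Omega(n)$, whereas the paper counts the expected \emph{number} of collided heavy rows---$\Theta(k)$ of them when $s=\alpha k$ and $m=k$---obtaining a penalty of $\Theta(k\ell)$ and hence covering the much weaker requirement $\ell \ge (4e+1)n/(\alpha k)$ of Corollary~\ref{cor:comparison}.

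The second gap is the greedy upper bound, which you flag as ``the main obstacle'' but do not resolve, and the missing ingredient is concrete: the order in which greedy processes the rows. The paper fixes this order to be non-increasing in norm; then Lemma~\ref{lem:heavy-rows} shows that the first $k$ rows (hence all heavy rows, since $s<k$) are placed in singleton bins, and the bulk of the technical work (Lemma~\ref{lem:light-rows}, built on the almost-orthogonality machinery of Lemma~\ref{lem:almost-orth} and Claims~\ref{clm:not-proc}--\ref{clm:proc-cont}) shows that no light row processed afterwards ever joins a bin containing a heavy row. Your local-optimality argument covers only the easy half (a heavy row avoiding other heavy rows); without fixing the processing order you cannot guarantee that empty bins remain when a heavy row arrives, and you must still rule out light rows contaminating heavy bins after all bins are occupied---which is precisely where the paper spends nearly all of its effort.
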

\begin{remark}
Note that the above theorem implicitly provides an upper bound on the generalization error of the greedy placement method on the two distributions that we considered in this paper. More precisely, for each of these two distributions, if $\Pi$ is learned via our greedy approach over a set of sampled training matrices, the solution returned by the sketching algorithm using $\Pi$ over any (test) matrix $A$ sampled from the distribution has error at most $(1 - \eta) \min_{\rankk X \in \rowsp(S_C A)}\left\| X - A \right\|_F^2$.  
\end{remark}

A key structural property of the matrices from these two distributions that is crucial in our analysis is the {\em $\eps$-almost orthogonality} of their rows (i.e., (normalized) pairwise  inner products are at most $\eps$). Hence, we can find a QR-factorization of the matrix of such vectors where the upper diagonal matrix $R$ has diagonal entries close to $1$ and entries above the diagonal are close to $0$.  

To state our result, we first provide an interpretation of the location optimization task as a selection of hash function for the rows of $A$. Note that left-multiplying $A$ by CountSketch $S \in \mathbb{R}^{m \times n}$ is equivalent to hashing the rows of $A$ to $m$ bins with coefficients in $\{\pm 1\}$. The greedy algorithm proceeds through the rows of $A$ (in some order) and decides which bin to hash to, denoting this by adding an entry to $S$. 
The intuition is that our greedy approach separates heavy-norm rows (which are important ``directions'' in the row space) into different bins. 

{\em Proof Sketch of Theorem~\ref{thm:greedy_general}} The first step is to observe that in the greedy algorithm, when rows are examined according to a non-decreasing order of squared norms, the algorithm will isolate rows into their singleton bins until all bins are filled. In particular, this means that the heavy norm rows will all be isolated---e.g., for the spiked covariance model, Lemma~\ref{lem:heavy-rows} presents the formal statement. 

Next, we show that none of the rows left to be processed (all light rows) will be assigned to the same bin as a heavy row. The main proof idea is to compare the cost of ``colliding'' with a heavy row to the cost of ``avoiding'' the heavy rows. 
This is the main place we use the properties of the aforementioned distributions and the fact that each heavy row is already mapped to a singleton bin.
Overall, we show that at the end of the algorithm no light row will be assigned to the bins that contain heavy rows---the formal statement and proof for the spiked covariance model is in Lemma~\ref{lem:light-rows}. 

Finally, we can interpret the randomized construction of CountSketch as a ``balls and bins'' experiment. In particular, considering the heavy rows, we compute the expected number of bins (i.e., rows in $S_C A$) that contain a heavy row. Note that the expected number of rows in $S_C A$ that do not contain any heavy row is $k\cdot (1 - {1\over k})^s \geq k\cdot e^{-{s \over k-1}}$. Hence, the number of rows in $S_C A$ that contain a heavy row of $A$ is at most $k(1 - e^{-{s \over k-1}})$. Thus, at least $s - k(1 - e^{-{s \over k-1}})$ heavy rows are not mapped to an isolated bin (i.e., they collide with some other heavy rows). Then, it is straightforward to show that the squared loss of the solution corresponding to $S_C$ is larger than the squared loss of the solution corresponding to $S_L$, the CountSketch constructed by Algorithm~\ref{alg:greedy_alg}---please see Lemma~\ref{lem:random-pattern} for the formal statement of its proof.

\paragraph{Preliminaries and notation.} Left-multiplying $A$ by a CountSketch $S \in \mathbb{R}^{m \times n}$ is equivalent to hashing the rows of $A$ to $m$ bins with coefficients in $\{\pm 1\}$. The greedy algorithm proceeds through the rows of $A$ (in some order) and decides which bin to hash to, which we can think of as adding an entry to $S$. We will denote the bins by $b_i$ and their summed contents by $w_i$. 

\subsection{Spiked covariance model with sparse left singular vectors.} \label{sec:greedy_spiked_cov}
To recap, every matrix $A \in \mathbb{R}^{n \times d}$ from the distribution $\sA_{sp}(s,\ell)$ has $s < k$ ``heavy'' rows ($A_{r_1}, \cdots, A_{r_s}$) of norm $\ell >1$. The indices of the heavy rows can be arbitrary, but must be the same for all members of the distribution and are unknown to the algorithm.  The remaining rows (called ``light'' rows) have unit norm. 

In other words: let $\mathcal{R} = \{r_1, \ldots, r_s\}$. For all rows $A_i, i \in [n]$:
\begin{align*}
A_i = 
\left\{
	\begin{array}{ll}
		\ell \cdot v_i & \mbox{if } i \in \mathcal{R}\\
		v_i & \mbox{o.w.}
	\end{array}
\right.
\end{align*}
where $v_i$ is a uniformly random unit vector.

We also assume that $S_r, S_g \in \mathbb{R}^{k \times n}$ and that the greedy algorithm proceeds in a non-increasing row norm order. 

\paragraph{Proof sketch.} \label{sec:spiked-cov-pf-sketch}
First, we show that the greedy algorithm using a non-increasing row norm ordering will isolate heavy rows (i.e., each is alone in a bin). Then, we conclude by showing that this yields a better $k$-rank approximation error when $d$ is sufficiently large compared to $n$. 

We begin with some preliminary observations that will be of use later. 

It is well-known that a set of uniformly random vectors is {\em $\eps$-almost orthogonal} (i.e., the magnitudes of their pairwise inner products are at most $\eps$).
\begin{observation}\label{obsr:dot-product}
Let $v_1, \cdots, v_n \in \mathbb{R}^d$ be a set of random unit vectors. Then with probability $1-1/\poly(n)$, we have $|\langle v_i, v_j \rangle| \leq 2\sqrt{\log n \over d}, \forall \; i < j\leq n$. 
\end{observation}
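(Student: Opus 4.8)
The plan is to reduce the problem to controlling a single coordinate of a uniform random unit vector, and then to apply the standard Gaussian representation together with a union bound. Fix a pair $i<j$ and condition on the vector $v_i$. Since $v_j$ is an independent uniform random unit vector and the uniform distribution on the sphere is rotationally invariant, the conditional distribution of $\langle v_i, v_j\rangle$ equals that of $\langle e_1, v_j\rangle = (v_j)_1$, the first coordinate of a uniform random unit vector. Thus it suffices to bound, for a single uniform unit vector $v$, the probability that $\abs{(v)_1}$ exceeds $2\sqrt{\log n / d}$, and then union bound over the at most $\binom{n}{2}\le n^2$ pairs.

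To analyze a single coordinate, I would write $v = g/\norm{g}_2$ where $g\sim N(0,I_d)$ is a standard Gaussian, so that $(v)_1 = g_1/\norm{g}_2$. The numerator is a standard Gaussian and obeys the tail bound $\Pr\{\abs{g_1} > s\}\le 2e^{-s^2/2}$. The denominator satisfies $\norm{g}_2^2 \sim \chi^2_d$, which concentrates sharply about its mean $d$; by a standard chi-squared lower-tail bound one has $\Pr\{\norm{g}_2^2 < d/2\}\le e^{-cd}$ for an absolute constant $c>0$, and in fact $\norm{g}_2 = (1+o(1))\sqrt{d}$ with probability $1-e^{-\Omega(d)}$ for large $d$. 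On the intersection of these two good events,
\[
\abs{(v)_1} = \frac{\abs{g_1}}{\norm{g}_2} \le \frac{s}{(1-o(1))\sqrt{d}},
\]
so choosing $s = \Theta(\sqrt{\log n})$ yields $\abs{(v)_1} = O(\sqrt{\log n/d})$, and tuning the constant in $s$ gives the stated bound $2\sqrt{\log n/d}$.

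Finally I would assemble the pieces. Taking $s = \sqrt{2\alpha\log n}$ makes each coordinate exceed its threshold with probability at most $2n^{-\alpha} + e^{-cd}$; choosing $\alpha$ slightly larger than $2$ and union-bounding over the $n^2$ pairs drives the overall failure probability to $1/\poly(n)$, as claimed. The fiddly part, rather than any genuine difficulty, is reconciling the precise constant $2$ with the union bound: pushing the constant all the way down to $2$ requires the tight concentration $\norm{g}_2 \approx \sqrt{d}$ (so that essentially $\abs{(v)_1}\approx \abs{g_1}/\sqrt{d}$), which in turn needs $d=\Omega(\log n)$ — a mild regime assumption that also ensures the bound is nontrivial and that the $e^{-cd}$ term is dominated by $n^{-\alpha}$. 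I expect balancing these constants to be the only delicate step; the concentration inputs themselves are entirely standard.
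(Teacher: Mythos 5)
There is a wrinkle you could not have known: the paper never proves this observation at all — it is asserted as ``well-known'' and used directly — so your write-up must stand on its own. It does not, and the failure is exactly at the step you set aside as merely fiddly. Your concluding parameter choice is inconsistent with the claimed constant: taking $s=\sqrt{2\alpha\log n}$ with $\alpha$ slightly above $2$ gives, even on the event $\norm{g}_2=(1-o(1))\sqrt{d}$, only $\abs{\langle v_i,v_j\rangle}\le\sqrt{2\alpha\log n/d}\,(1+o(1))$, and $\sqrt{2\alpha}>2$, so you have proved the bound with a strictly worse constant. If instead you force the threshold $2\sqrt{\log n/d}$, you must take $s\le 2(1-o(1))\sqrt{\log n}$, the per-pair tail bound becomes essentially $e^{-2\log n}$, and the union bound over $\binom{n}{2}\approx n^2/2$ pairs returns $O(1)$, not $1/\poly(n)$. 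Moreover, if $\log$ denotes the natural logarithm, no argument can close this gap, because the statement itself is then false: take $d=n$, say, so that each $\langle v_i,v_j\rangle$ is essentially $N(0,1/d)$ in the relevant range; by the Gaussian lower tail (Mills ratio) each pair violates the bound with probability $\Theta\bigl(n^{-2}/\sqrt{\log n}\bigr)$, the violation events for distinct pairs are pairwise independent (condition on the shared vector), and the Bonferroni inequality gives failure probability at least $\mu-\mu^2/2$ with $\mu=\Theta(1/\sqrt{\log n})$. In other words, $2\sqrt{\log n/d}$ with natural logs is exactly the extreme-value threshold of $\max_{i<j}\abs{\langle v_i,v_j\rangle}$; the failure probability there is $\Theta(1/\sqrt{\log n})$, which is not $1/\poly(n)$, so there are no constants left to balance.

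The statement is correct under the convention $\log=\log_2$ (standard in this literature), because then $2\sqrt{\log_2 n/d}=\sqrt{(4/\ln 2)\ln n/d}$ and $4/\ln 2\approx 5.77>4$ leaves genuine slack — but your proof must actually exploit this, and your two-step split (Gaussian numerator plus crude $\chi^2$ lower tail for the denominator) still struggles when $d$ is a moderate constant times $\log n$, since the per-vector error $e^{-cd}$ with the small explicit constants of standard $\chi^2$ bounds need not be $1/\poly(n)$ there. Both issues disappear if you integrate the denominator exactly instead of using concentration: for a uniform unit vector $v=g/\norm{g}_2$ and $0<\eps<1$, the event $\{(v)_1\ge\eps\}$ equals $\bigl\{g_1\ge\tfrac{\eps}{\sqrt{1-\eps^2}}\norm{g_{-1}}_2\bigr\}$ where $g_{-1}=(g_2,\dots,g_d)$, so conditioning on $g_{-1}$ and using $\Pr\{g_1\ge x\}\le e^{-x^2/2}$ together with the $\chi^2_{d-1}$ moment generating function,
\[
\Pr\{(v)_1\ge\eps\} \;\le\; \E\Bigl[e^{-\frac{\eps^2\norm{g_{-1}}_2^2}{2(1-\eps^2)}}\Bigr] \;=\; \bigl(1-\eps^2\bigr)^{\frac{d-1}{2}} \;\le\; e^{-\frac{(d-1)\eps^2}{2}}.
\]
With $\eps=2\sqrt{\log_2 n/d}$ this is $n^{-\frac{2}{\ln 2}(1-\frac 1d)}\le n^{-2.5}$ once $d\ge 8$, so the two-sided bound and the union bound over all $\binom{n}{2}$ pairs give failure probability $O(n^{-1/2})=1/\poly(n)$; and for $d\le 4\log_2 n$ (which covers $d<8$ for $n\ge 4$) the claim is vacuous because the right-hand side is at least $1$, so no regime assumption on $d$ is needed at all.
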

We define $\bar{\eps}=2\sqrt{\log n \over d}$.
\begin{observation}\label{obser:norm}
Let $u_{1}, \cdots, u_{t}$ be a set of vectors such that for each pair of $i < j \leq t$, $|\langle \frac{u_{i}}{\norm{u_i}}, \frac{u_{j}}{\norm{u_j}} \rangle| \leq \eps$, and $g_i, \cdots, g_j \in \{-1, 1\}$. Then,  
\begin{align}\label{eq:norm}
\sum_{i=1}^t\left\| u_i \right\|^2_2 - 2\eps\sum_{i<j \leq t} \left\| u_i \right\|_2 \left\| u_j \right\|_2 \leq \left\| \sum_{i=1}^t g_i u_i \right\|_2^2 \leq \sum_{i=1}^t\left\| u_i \right\|^2_2 + 2\eps\sum_{i<j \leq t} \left\| u_i \right\|_2 \left\| u_j \right\|_2
\end{align}
\end{observation}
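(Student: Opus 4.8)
The plan is to prove Observation~\ref{obser:norm} by a direct expansion of the squared norm via bilinearity of the inner product, isolating the diagonal terms from the off-diagonal cross terms and bounding the latter with the almost-orthogonality hypothesis. First I would write
\[
\norm{\sum_{i=1}^t g_i u_i}_2^2 = \sum_{i=1}^t\sum_{j=1}^t g_i g_j \langle u_i, u_j\rangle = \sum_{i=1}^t g_i^2 \norm{u_i}_2^2 + \sum_{i\neq j} g_i g_j \langle u_i, u_j\rangle.
\]
Since $g_i \in \{-1,1\}$, each $g_i^2 = 1$, so the diagonal contributes exactly $\sum_{i=1}^t \norm{u_i}_2^2$, which is precisely the common center of the two claimed bounds. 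Because the summand $g_i g_j \langle u_i, u_j\rangle$ is symmetric in the index pair, the off-diagonal part equals $2\sum_{i<j} g_i g_j \langle u_i, u_j\rangle$, matching the $i<j$ indexing in the statement.

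The remaining step is to bound the magnitude of this cross term. Applying the triangle inequality together with $|g_i g_j| = 1$ gives
\[
\left| 2\sum_{i<j} g_i g_j \langle u_i, u_j\rangle \right| \le 2\sum_{i<j} |\langle u_i, u_j\rangle|.
\]
The one point requiring care is that the hypothesis bounds the inner products of the \emph{normalized} vectors, so I would homogenize by writing $|\langle u_i, u_j\rangle| = \norm{u_i}_2 \norm{u_j}_2 \cdot |\langle u_i/\norm{u_i}_2,\, u_j/\norm{u_j}_2\rangle| \le \eps\, \norm{u_i}_2 \norm{u_j}_2$ (any zero vector simply contributes nothing and may be discarded). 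This yields the bound $2\eps \sum_{i<j}\norm{u_i}_2\norm{u_j}_2$ on the cross term, and substituting this two-sided estimate back into the expansion delivers the upper and lower inequalities simultaneously.

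This argument presents no genuine obstacle: it is a single expansion followed by a triangle-inequality estimate, and both displayed bounds emerge from the same calculation with opposite signs. The mildest subtlety is purely bookkeeping---ensuring each off-diagonal pair is counted once through the $i<j$ convention, and correctly rescaling the raw inner product to invoke the $\eps$-almost-orthogonality assumption stated for unit vectors.
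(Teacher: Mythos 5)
Your proof is correct: the bilinear expansion, the observation that $g_i^2=1$ kills the signs on the diagonal, and the rescaling $|\langle u_i,u_j\rangle|\le \eps\norm{u_i}_2\norm{u_j}_2$ for the cross terms give exactly the two-sided bound. The paper states this as an observation with no proof at all, and your argument is precisely the immediate expansion it implicitly relies on, so there is nothing to compare beyond noting you have filled in the omitted (routine) details correctly.
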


Next, a straightforward consequence of $\eps$-almost orthogonality is that we can find a QR-factorization of the matrix of such vectors where $R$ (an upper diagonal matrix) has diagonal entries close to $1$ and entries above the diagonal are close to $0$.  

\begin{lemma}\label{lem:almost-orth}
Let $u_1, \cdots, u_t \in \mathbb{R}^d$ be a set of unit vectors such that for any pair of $i<j\leq t$, $|\langle u_i, u_j\rangle| \leq \eps$ where $\eps = O(t^{-2})$. There exists an orthonormal basis $e_1, \cdots, e_t$ for the subspace spanned by $u_1, \cdots, u_t$ such that for each $i\leq t$, $u_i = \sum_{j=1}^i a_{i,j} e_{j}$ where $a_{i,i}^2 \geq 1- \sum_{j=1}^{i-1}j^2 \cdot \eps^2$ and for each $j < i$, $a_{i,j}^2 \leq j^2 \eps^2$.
\end{lemma}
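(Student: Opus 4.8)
The plan is to produce the basis $e_1,\dots,e_t$ by Gram--Schmidt orthogonalization of $u_1,\dots,u_t$ and to read off the coefficients $a_{i,j}=\langle u_i,e_j\rangle$ directly from that process. First I would observe that the Gram matrix $G=(\langle u_i,u_j\rangle)$ can be written as $G=I+N$, where $N$ has zero diagonal and $|N_{ij}|\le\eps$, so that $\norm{N}_{\mathrm{op}}\le\norm{N}_F\le \eps t=O(t^{-1})<1$ under the hypothesis $\eps=O(t^{-2})$. Hence $G$ is positive definite, the $u_i$ are linearly independent, and Gram--Schmidt never divides by zero. Because $\mathrm{span}(u_1,\dots,u_i)=\mathrm{span}(e_1,\dots,e_i)$, each $u_i$ lies in $\mathrm{span}(e_1,\dots,e_i)$, which gives the required triangular decomposition $u_i=\sum_{j\le i}a_{i,j}e_j$ with $a_{i,j}=0$ for $j>i$.

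The engine of the proof is the Gram--Schmidt identity $a_{j,j}e_j=u_j-\sum_{\ell<j}a_{j,\ell}e_\ell$, where $a_{j,j}=\norm{u_j-\sum_{\ell<j}a_{j,\ell}e_\ell}_2>0$. Taking the inner product with $u_i$ for $i>j$ yields the recursion
\[
a_{j,j}\,a_{i,j}=\langle u_i,u_j\rangle-\sum_{\ell<j}a_{j,\ell}a_{i,\ell}.
\]
I would then run a strong induction on $j$ to bound the off-diagonal coefficients. The crucial point is that I would \emph{not} aim directly for $|a_{i,j}|\le j\eps$; instead I would prove the uniform bound $|a_{i,j}|\le 2\eps$ for all $i>j\ge 2$, together with the exact bound $|a_{i,1}|=|\langle u_i,u_1\rangle|\le\eps$ at the base. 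Assuming $|a_{p,\ell}|\le 2\eps$ for all $\ell<j$, the sum in the recursion is at most $4\eps^2(j-1)=O(\eps^2 t)=o(\eps)$ when $\eps=O(t^{-2})$; moreover $a_{j,j}^2=1-\sum_{\ell<j}a_{j,\ell}^2\ge 1-4\eps^2(j-1)\ge 1-o(1)$, so $a_{j,j}\ge 1-o(1)$. Plugging these into the recursion gives $|a_{i,j}|\le(\eps+o(\eps))/(1-o(1))\le 2\eps$ for $t$ large enough, which closes the induction.

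With the uniform bound in hand the stated estimates follow immediately: for $j=1$ we have $a_{i,1}^2\le\eps^2=1^2\eps^2$, and for $j\ge 2$ we have $a_{i,j}^2\le 4\eps^2\le j^2\eps^2$, so in all cases $a_{i,j}^2\le j^2\eps^2$; the diagonal bound is then just $a_{i,i}^2=1-\sum_{j<i}a_{i,j}^2\ge 1-\sum_{j<i}j^2\eps^2$. I expect the main obstacle to be precisely the temptation to prove $|a_{i,j}|\le j\eps$ by a direct induction: there the factor-$j$ slack is exactly consumed by the accumulated cross terms $\sum_{\ell<j}|a_{j,\ell}||a_{i,\ell}|$, and the induction fails to close for small $j$. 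The fix, and the conceptual heart of the argument, is to recognize from the first-order Cholesky expansion of $I+N$ (whose strictly-lower-triangular factor agrees with $N$ to leading order) that the off-diagonal entries are in truth only $\Theta(\eps)$, so a \emph{constant} bound $|a_{i,j}|\le 2\eps$ both holds and self-propagates, after which the (looser) target estimate is an afterthought.
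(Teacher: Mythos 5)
Your proof is correct, and its skeleton is the same as the paper's: Gram--Schmidt applied to $u_1,\dots,u_t$ in order, the triangular decomposition $u_i=\sum_{j\le i}a_{i,j}e_j$, and an induction driven by the recursion $a_{j,j}\,a_{i,j}=\langle u_i,u_j\rangle-\sum_{\ell<j}a_{j,\ell}a_{i,\ell}$. The only real difference is the inductive invariant: you propagate the uniform bound $|a_{i,j}|\le 2\eps$ (with $|a_{i,1}|\le\eps$ at the base), whereas the paper propagates the growing bound $|a_{i,j}|\le j\eps$ directly. Your invariant is in fact stronger --- it shows every off-diagonal coefficient is $O(\eps)$, not merely $O(j\eps)$ --- and it closes with room to spare, since under your hypothesis the cross terms total $\sum_{\ell<j}|a_{j,\ell}||a_{i,\ell}|\le 4\eps^2 t = O(\eps/t)$. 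The lemma's stated bounds then follow trivially, as you say.

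However, your diagnosis of the ``main obstacle'' is wrong, and it is worth correcting because the argument you claim fails is precisely the paper's proof. With the inductive bounds $|a_{j,\ell}|\le\ell\eps$ and $|a_{i,\ell}|\le\ell\eps$ for $\ell<j$, the cross terms contribute $\sum_{\ell<j}\ell^2\eps^2\le \eps^2 j^3/3$, and since $j\le t$ and $\eps\le ct^{-2}$ this is at most $(c/3)\,j\eps$ --- it consumes only a $c/3$ fraction of the factor-$j$ slack, not all of it. Combined with $a_{j,j}^2\ge 1-\sum_{\ell<j}\ell^2\eps^2\ge 1-c^2/(3t)$, one gets $|a_{i,j}|\le \eps\,(1+cj/3)(1+O(c^2/t))\le j\eps$ for every $j\ge 2$ once $c$ is a sufficiently small constant, so the direct induction closes at every step. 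Note also that the constraint is tightest at \emph{large} $j$, not small $j$: the requirement is roughly $\eps\le 3(j-1)/j^3$, which is most demanding at $j=t$, and the hypothesis $\eps=O(t^{-2})$ is exactly calibrated to handle that case. (At small $j$ there is no tension at all; e.g.\ for $j=2$ the single cross term is at most $\eps^2$ against a target of $2\eps$.) So both invariants work; yours buys a cleaner bookkeeping and a tighter conclusion, but it is not the rescue of an otherwise failing argument.
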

\begin{proof}
We follow the Gram-Schmidt process to construct the orthonormal basis $e_1, \cdots, e_t$ of the space spanned by $u_1, \cdots, u_t$, by first setting $e_1 = u_1$ and then processing $u_2, \cdots, u_t$, one-by-one. 

The proof is by induction. We show that once the first $j$ vectors $u_1, \cdots, u_j$ are processed, the statement of the lemma holds for these vectors. Note that the base case of the induction trivially holds as $u_1 = e_1$. Next, suppose that the induction hypothesis holds for the first $\ell$ vectors $u_1, \cdots, u_\ell$. 
\begin{claim}
For each $j \leq \ell$, $a_{\ell+1, j}^2 \leq j^2 \eps^2$.
\end{claim}
\begin{proof}
The proof of the claim is itself by induction. Note that, for $j=1$ and using the fact that $|\langle u_1, u_{\ell+1}\rangle| \leq \eps$, the statement holds and $a_{\ell+1, 1}^2 \leq \eps^2$. Next, suppose that the statement holds for all $j\leq i<\ell$. Then using that $|\langle u_{i+1}, u_{\ell+1} \rangle| \leq \eps$,
\begin{align*}
    |a_{\ell+1, i+1}| 
    &\leq (|\langle u_{\ell+1}, u_{i+1}| + \sum_{j=1}^{i} |a_{\ell+1, j}| \cdot |a_{i+1, j}|)/|a_{i+1, i+1}| \\
    &\leq (\eps + \sum_{j=1}^{i} j^2\eps^2)/|a_{i+1, i+1}| 
    \quad\rhd\text{by the induction hypothesis on $a_{\ell+1,j}$ for $j\leq i$} \\
    &\leq (\eps + \sum_{j=1}^{i} j^2\eps^2) / ({1- \sum_{j=1}^{i}j^2 \cdot \eps^2})^{1/2} 
    \quad\rhd\text{by the induction hypothesis on $a_{i+1,i+1}$}\\
    &\leq (\eps + \sum_{j=1}^{i} j^2\eps^2) \cdot ({1- \sum_{j=1}^{i}j^2 \cdot \eps^2})^{1/2} \cdot (1+ 2 \cdot \sum_{j=1}^{i} j^2\eps^2) \\
    &\leq (\eps + \sum_{j=1}^{i} j^2\eps^2) \cdot (1+ 2 \cdot \sum_{j=1}^{i} j^2\eps^2) \\
    &\leq \eps ((\sum_{j=1}^{i} j^2\eps)\cdot (1+ 4\eps \cdot \sum_{j=1}^{i} j^2\eps)+1) \\
    &\leq \eps (i+1) \quad\rhd\text{by $\eps = O(t^{-2})$}
\end{align*}
\end{proof}
Finally, since $\left\|u_{\ell+1}\right\|^2_2 =1$, $a_{\ell+1, \ell+1}^2 \geq 1- \sum_{j=1}^{\ell} j^2 \eps^2$.
\end{proof}

\begin{corollary}\label{cor:orth-dec}
Suppose that $\bar{\eps} = O(t^{-2})$. There exists an orthonormal basis $e_1,\cdots, e_t$ for the space spanned by the randomly picked vectors $v_1, \cdots, v_t$, of unit norm, so that for each $i$,
$v_i = \sum_{j=1}^i a_{i,j} e_{j}$ where $a_{i,i}^2 \geq 1- \sum_{j=1}^{i-1}j^2 \cdot \bar{\eps}^2$ and for each $j<i$, $a_{i,j}^2 \leq j^2 \cdot \bar{\eps}^2$. 
\end{corollary}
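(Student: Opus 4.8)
The plan is to derive Corollary~\ref{cor:orth-dec} as an immediate consequence of Lemma~\ref{lem:almost-orth}, using Observation~\ref{obsr:dot-product} to supply the almost-orthogonality hypothesis for the random unit vectors. The only content beyond the already-proved lemma is the verification that a set of random unit vectors does satisfy the required pairwise inner-product bound with the specific value $\bar{\eps} = 2\sqrt{(\log n)/d}$.

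First I would condition on the high-probability event furnished by Observation~\ref{obsr:dot-product}: with probability $1 - 1/\poly(n)$, every pair of the random unit vectors satisfies $|\langle v_i, v_j\rangle| \le 2\sqrt{(\log n)/d} = \bar{\eps}$; in particular this holds for the $t$ vectors $v_1,\dots,v_t$ under consideration. On this event the vectors $v_1,\dots,v_t$ are unit vectors whose pairwise inner products are bounded in magnitude by $\bar{\eps}$, which is exactly the hypothesis $|\langle u_i,u_j\rangle|\le\eps$ of Lemma~\ref{lem:almost-orth} with $\eps=\bar{\eps}$.

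Next I would check the side condition: the corollary assumes $\bar{\eps}=O(t^{-2})$, which matches the requirement $\eps=O(t^{-2})$ in Lemma~\ref{lem:almost-orth}. Hence all hypotheses of the lemma are met, and applying it to $u_i := v_i$ directly produces an orthonormal basis $e_1,\dots,e_t$ of the span of $v_1,\dots,v_t$ together with the expansion $v_i = \sum_{j=1}^i a_{i,j} e_j$ and the stated coefficient bounds $a_{i,i}^2 \ge 1 - \sum_{j=1}^{i-1} j^2\bar{\eps}^2$ and $a_{i,j}^2 \le j^2\bar{\eps}^2$ for $j<i$, which is precisely the claim.

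There is essentially no technical obstacle here, since the entire algebraic work---the inductive Gram--Schmidt estimate controlling the off-diagonal coefficients---was already carried out in the proof of Lemma~\ref{lem:almost-orth}. The one point worth stating explicitly is that the corollary is a high-probability statement inherited from Observation~\ref{obsr:dot-product}: the ``there exists an orthonormal basis'' assertion holds on the event that the $v_i$ are $\bar{\eps}$-almost orthogonal, which occurs with probability $1-1/\poly(n)$.
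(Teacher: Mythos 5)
Your proposal is correct and follows exactly the paper's own argument: the paper likewise proves Corollary~\ref{cor:orth-dec} by invoking Observation~\ref{obsr:dot-product} to certify that $v_1,\dots,v_t$ are $\bar{\eps}$-almost orthogonal and then applying Lemma~\ref{lem:almost-orth} with $\eps=\bar{\eps}$. Your write-up simply makes explicit the high-probability conditioning and the side condition $\bar{\eps}=O(t^{-2})$, which the paper leaves implicit.
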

\begin{proof}
The proof follows from Lemma~\ref{lem:almost-orth} and the fact that the set of vectors $v_1, \cdots, v_t$ is $\bar{\eps}$-almost orthogonal (by Observation~\ref{obsr:dot-product}).
\end{proof}

The first main step is to show that the greedy algorithm (with non-increasing row norm ordering) will isolate rows into their own bins until all bins are filled. In particular, this means that the heavy rows (the first to be processed) will all be isolated. 

We note that because we set $\rank(SA) = k$, the $k$-rank approximation cost is the simplified expression $\norm{AVV^{\top} - A}_F^2$, where $U\Sigma V^{\top} = SA$, rather than $\norm{[AV]_{k} V^{\top} - A}_F^2$. This is just the projection cost onto $\row(SA)$. Also, we observe that minimizing this projection cost is the same as maximizing the sum of squared projection coefficients: 
\begin{align*}
\argmin_{S} \norm{A - AVV^{\top}}_F^2 
&= \argmin_{S} \sum_{i \in [n]} \norm{A_i - (\langle A_i, v_1 \rangle v_1 + \ldots + \langle A_i, v_k \rangle v_k)}_2^2
\\ 
&= \argmin_{S} \sum_{i \in [n]} (\norm{A_i}_2^2 -  \sum_{j \in [k]} \langle A_i, v_j \rangle^2) \\
&= \argmax_{S}\sum_{i \in [n]} \sum_{j \in [k]} \langle A_i, v_j \rangle^2
\end{align*}

In the following sections, we will prove that our greedy algorithm makes certain choices by showing that these choices maximize the sum of squared projection coefficients. 

\begin{lemma}\label{lem:heavy-rows}
For any matrix $A$ or batch of matrices $\sA$, at the end of iteration $k$, the learned CountSketch matrix $S$ maps each row to an isolated bin. In particular, heavy rows are mapped to isolated bins.
\end{lemma}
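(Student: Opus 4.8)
The plan is to argue by induction on the iteration index $i$ that, as long as an empty bin remains, the greedy rule of Algorithm~\ref{alg:greedy_alg} \emph{strictly} prefers to place the incoming row into a fresh bin, so that after $k$ iterations all $k$ bins hold exactly one of the first $k$ processed rows. Throughout the first $k$ iterations at most $k$ bins are occupied, so $\rank(S_L A)\le k$ and the proxy objective $\norm{[AV]_k V^\top - A}_F^2$ coincides with the projection cost $\norm{AVV^\top - A}_F^2$ onto $\row(S_L A)$; as recorded above, minimizing this cost is the same as maximizing the projected energy $\sum_{\ell}\norm{\proj_{\row(S_L A)} A_\ell}_2^2$. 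I would carry out the whole comparison in terms of this projected-energy objective, and note that the base case $i=1$ is immediate since any bin isolates the first row.

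For the inductive step, assume that after iteration $i-1$ (with $i-1\le k-1$) the rows $A_1,\dots,A_{i-1}$, sorted in non-increasing norm order, each occupy their own bin, so $\row(S_L A)=\Span(A_1,\dots,A_{i-1})$ and at least one bin is still empty. When $A_i$ is processed there are two kinds of moves. Placing $A_i$ in an empty bin yields row space $W_{\mathrm{fresh}}=\Span(A_1,\dots,A_{i-1},A_i)$ of dimension $i$ (the same span for every empty bin and either sign), whereas placing it in an occupied bin $j$ replaces the content $\pm A_j$ by $\pm A_j\pm A_i$ and yields $W_j=\Span(\{A_\ell:\ell\le i-1,\ell\ne j\}\cup\{\pm A_j\pm A_i\})$, of dimension $i-1$. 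The two elementary facts I need are that (i) every generator of $W_j$ lies in $W_{\mathrm{fresh}}$, so $W_j\subseteq W_{\mathrm{fresh}}$, and (ii) $A_i\notin W_j$: otherwise $W_j$ would contain both $A_i$ and $\pm A_j\pm A_i$, hence $A_j$, hence all of $\Span(A_1,\dots,A_i)$, contradicting $\dim W_j=i-1$. Here I invoke the $\bar\eps$-almost-orthogonality of the rows (Observation~\ref{obsr:dot-product}, Corollary~\ref{cor:orth-dec}) to guarantee that $A_1,\dots,A_i$ are linearly independent once $d$ is large enough, making all these dimension counts exact.

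Given (i) and (ii), monotonicity of orthogonal projection under subspace inclusion gives $\norm{\proj_{W_{\mathrm{fresh}}}A_\ell}_2\ge\norm{\proj_{W_j}A_\ell}_2$ for every $\ell$, and for $\ell=i$ this is strict because $A_i\in W_{\mathrm{fresh}}$ but $A_i\notin W_j$. Summing over $\ell$ shows the projected energy is strictly larger (equivalently, the projection cost strictly smaller) for the fresh-bin move than for any occupied-bin move, so the $\argmin$ in line~3 of Algorithm~\ref{alg:greedy_alg} is attained at a fresh bin; this closes the induction. Because the $s<k$ heavy rows have the largest norms they are processed first and hence lie among the first $k$ isolated rows, giving the ``in particular'' claim. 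For the batch version I would run the identical comparison for each $A\in\sA$ and add: the almost-orthogonality event holds with high probability for every member simultaneously, the per-matrix strict gains are all in the same direction, so their sum makes the shared greedy choice a fresh bin as well.

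The main obstacle I anticipate is the strictness bookkeeping rather than any deep inequality: one must verify carefully that an occupied-bin move genuinely loses a dimension (that $\pm A_j\pm A_i$ together with the remaining rows spans only an $(i-1)$-dimensional subspace, and that $A_i$ escapes it), which is exactly where the quantitative bound $\bar\eps=2\sqrt{\log n/d}$ and the largeness of $d$ are consumed. I would also double-check the bookkeeping ensuring $\rank(S_L A)\le k$ at every one of these iterations, so that the proxy objective really reduces to the plain projection cost and the $\argmax$ reformulation applies verbatim.
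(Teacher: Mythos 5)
Your proof is correct, and it takes a genuinely different route from the paper's. The paper proves this lemma computationally: it invokes the almost-orthogonal Gram--Schmidt expansion of Lemma~\ref{lem:almost-orth} to write $A_r=\sum_{j\le r}a_{r,j}e_j$, explicitly constructs the merged direction $\bar e=c_0e_{i-1}+c_1e_i$, bounds the projections of the not-yet-processed rows via Cauchy--Schwarz ($\langle A_j,\bar e\rangle^2\le\langle A_j,e_{i-1}\rangle^2+\langle A_j,e_i\rangle^2$), and then computes the exact gain for the two rows in question, obtaining the strictly positive quantity $2a_{i,i}^2a_{i-1,i-1}^2/((a_{i-1,i-1}+a_{i,i-1})^2+a_{i,i}^2)$. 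You instead argue structurally: placing $A_i$ in an occupied bin $j$ yields a row space $W_j\subseteq W_{\mathrm{fresh}}=\Span(A_1,\dots,A_i)$ with $A_i\notin W_j$ (by the dimension count, which needs only linear independence of $A_1,\dots,A_i$), so monotonicity of orthogonal projection under subspace inclusion gives a weak inequality for every row of $A$ and a strict one for the row $A_i$ itself. Your approach is shorter and consumes strictly weaker hypotheses --- it needs the rows to be linearly independent, not quantitatively $\bar\eps$-almost orthogonal, so it applies verbatim to both the spiked-covariance and Zipfian models (where the paper's exactly-orthogonal case is even easier) and indeed to any input in general position; it also sidesteps the Cauchy--Schwarz case analysis entirely, since all rows (processed, current, and unprocessed) are handled uniformly by monotonicity. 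What the paper's heavier computation buys is the quantitative machinery: the explicit basis coefficients and projection bounds set up in this proof are exactly what is reused in the subsequent claims and in Lemma~\ref{lem:light-rows}, where one must compare a strict gain against small but nonzero cross-terms and a qualitative ``strictly better'' no longer suffices. So your argument is a clean drop-in replacement for this lemma, but you would still need the paper's quantitative estimates downstream.
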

\begin{proof}
For any iteration $i\leq k$, we consider the choice of assigning $A_{i}$ to an empty bin versus an occupied bin. Without loss of generality, let this occupied bin be $b_{i-1}$, which already contains $A_{i-1}$. 

We consider the difference in cost for empty versus occupied. We will do this cost comparison for $A_j$ with $j \leq i - 2$, $j \geq i + 1$, and finally, $j \in \{i-1, i\}$.

First, we let $\{e_1, \ldots, e_i\}$ be an orthonormal basis for $\{A_1, \ldots, A_i\}$ such that for each $r \leq i$, $A_r = \sum_{j=1}^r a_{r,j} e_j$ where $a_{r,r} > 0$. This exists by Lemma~\ref{lem:almost-orth}.
Let $\{e_1, \ldots, e_{i-2}, \bar{e}\}$ be an orthonormal basis for $\{A_1, \ldots, A_{i+2}, A_{i-1} \pm A_i\}$. Now, $\bar{e} = c_0 e_{i-1} + c_1 e_i$ for some $c_0, c_1$ because $(A_{i-1} \pm A_i) - \proj_{\{e_1, \ldots, e_{i-2}\}}(A_{i-1} \pm A_i) \in \Span(e_{i-1}, e_i)$. We note that $c_0^2 + c_1^2 = 1$ because we let $\bar{e}$ be a unit vector. We can find $c_0, c_1$ to be:
\begin{align*}
c_0 = {a_{i-1, i-1} + a_{i, i-1} \over \sqrt{(a_{i-1, i-1} + a_{i, i-1})^2 + a_{i,i}^2}},\quad
c_1 = {a_{i,i} \over \sqrt{(a_{i-1, i-1} + a_{i, i-1})^2 + a_{i,i}^2}}
\end{align*}

\begin{enumerate}[leftmargin=*]
    \item $j \leq i - 2$: The cost is zero for both cases because $A_j \in \Span(\{e_1,\ldots,e_{i-2}\})$.  
    \item $j \geq i + 1$: We compare the rewards (sum of squared projection coefficients) and find that $\{e_1, \ldots, e_{i-2}, \bar{e}\}$ is no better than $\{e_1, \ldots, e_i\}$.
    \begin{align*}
    \langle A_j, \bar{e} \rangle^2 
    &= (c_0 \langle A_j, e_{i-1} \rangle + c_1 \langle A_j, e_{i} \rangle)^2\\ 
    &\leq (c_1^2 + c_0^2) (\langle A_j, e_{i-1} \rangle^2 + \langle A_j, e_{i} \rangle^2) &&\rhd\text{Cauchy-Schwarz inequality}\\ 
    &= \langle A_j, e_{i-1} \rangle^2 + \langle A_j, e_{i} \rangle^2
    \end{align*}
    \item $j \in \{i-1, i\}$: 
    We compute the sum of squared projection coefficients of $A_{i-1}$ and $A_i$ onto $\bar{e}$: 
    \begin{align}
    &({1 \over (a_{i-1, i-1} + a_{i, i-1})^2 + a_{i,i}^2})\cdot (a_{i-1, i-1}^2 (a_{i-1, i-1} + a_{i, i-1})^2 + (a_{i,i-1}(a_{i-1, i-1} + a_{i, i-1}) + a_{i,i} a_{i,i})^2) \nonumber \\
    &=({1 \over (a_{i-1, i-1} + a_{i, i-1})^2 + a_{i,i}^2})\cdot ((a_{i-1, i-1} + a_{i, i-1})^2 (a_{i-1,i-1}^2 + a_{i, i-1}^2) \nonumber\\
    &\quad + a_{i,i}^4 + 2a_{i,i-1} a_{i,i}^2 (a_{i-1, i-1} + a_{i, i-1}))\label{eq:sum}
    \end{align}
    On the other hand, the sum of squared projection coefficients of $A_{i-1}$ and $A_i$ onto $e_{i-1} \cup e_{i}$ is:
    \begin{align}
    &({(a_{i-1, i-1} + a_{i, i-1})^2 + a_{i,i}^2 \over (a_{i-1, i-1} + a_{i, i-1})^2 + a_{i,i}^2})\cdot (a_{i-1, i-1}^2 + a_{i, i-1}^2 + a_{i,i}^2)\label{eq:both} 
    \end{align}
    Hence, the difference between the sum of squared projections of $A_{i-1}$ and $A_i$ onto $\bar{e}$ and $e_{i-1}\cup e_i$ is (\eqref{eq:both} - \eqref{eq:sum})
    \begin{align*}
    &\quad {a_{i,i}^2 ((a_{i-1, i-1} + a_{i, i-1})^2 + a_{i-1, i-1}^2 + a_{i, i-1}^2 - 2a_{i,i-1} (a_{i-1, i-1} + a_{i, i-1})) \over (a_{i-1, i-1} + a_{i, i-1})^2 + a_{i,i}^2} \\
    &= {2 a_{i,i}^2 a_{i-1,i-1}^2 \over (a_{i-1, i-1} + a_{i, i-1})^2 + a_{i,i}^2} > 0
    \end{align*}
\end{enumerate}
Thus, we find that $\{e_1, \ldots, e_i\}$ is a strictly better basis than $\{e_1, \ldots, e_{i-2}, \bar{e}\}$. This means the greedy algorithm will choose to place $A_i$ in an empty bin. 
\end{proof}

Next, we show that none of the rows left to be processed (all light rows) will be assigned to the same bin as a heavy row. The main proof idea is to compare the cost of ``colliding'' with a heavy row to the cost of ``avoiding'' the heavy rows. Specifically, we compare the \textit{decrease} (before and after bin assignment of a light row) in sum of squared projection coefficients, lower-bounding it in the former case and upper-bounding it in the latter.   

We introduce some results that will be used in Lemma~\ref{lem:light-rows}. 

\begin{claim}\label{clm:not-proc}
Let $A_{k+r}, r \in [1, \ldots, n-k]$ be a light row not yet processed by the greedy algorithm.
Let $\{e_1,\ldots,e_k\}$ be the Gram-Schmidt basis for the current $\{w_1,\ldots,w_k\}$. Let $\beta = \O(n^{-1}k^{-3})$ upper bound the inner products of the normalized $A_{k+r}, w_1, \ldots, w_k$. Then, for any bin $i$, $\langle e_i, A_{k+r} \rangle^2 \leq \beta^2 \cdot k^2$.
\end{claim}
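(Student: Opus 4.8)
The plan is to reduce the estimate of $\langle e_i, A_{k+r}\rangle$ to solving a lower-triangular linear system whose coefficients are controlled by Lemma~\ref{lem:almost-orth}, and then to bound the solution by induction. Write $\hat w_j = w_j/\|w_j\|_2$ for the normalized bin contents and recall that $A_{k+r}$ is a light row, so $\|A_{k+r}\|_2 = 1$. Since Gram--Schmidt applied to $\{w_1,\dots,w_k\}$ and to $\{\hat w_1,\dots,\hat w_k\}$ produces the same orthonormal basis (normalization changes neither the span nor the resulting directions), the hypothesis that the normalized pairwise inner products of $A_{k+r}, w_1,\dots,w_k$ are at most $\beta$ lets me apply Lemma~\ref{lem:almost-orth} to the unit, $\beta$-almost-orthogonal vectors $\hat w_1,\dots,\hat w_k$; note that $\beta = O(n^{-1}k^{-3})$ comfortably meets the requirement $\beta = O(k^{-2})$ of that lemma. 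This yields $\hat w_r = \sum_{j=1}^r a_{r,j} e_j$ with $a_{r,r}^2 \ge 1 - \sum_{j<r} j^2\beta^2$ and $a_{r,j}^2 \le j^2\beta^2$ for $j<r$.

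Next I set $x_j := \langle e_j, A_{k+r}\rangle$ and take the inner product of the identity $\hat w_r = \sum_{j\le r} a_{r,j} e_j$ with $A_{k+r}$, obtaining the lower-triangular system $\langle \hat w_r, A_{k+r}\rangle = \sum_{j=1}^r a_{r,j} x_j$ for $r=1,\dots,k$. Each left-hand side has magnitude at most $\beta$ by hypothesis, so forward substitution gives $x_r = (a_{r,r})^{-1}\big(\langle \hat w_r, A_{k+r}\rangle - \sum_{j<r} a_{r,j} x_j\big)$.

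I would then prove by induction that $|x_r| \le 2\beta$ for every $r$. The base case $r=1$ is immediate: since $\hat w_1$ is a unit vector we have $a_{1,1}=1$ and $e_1 = \hat w_1$, so $|x_1| = |\langle \hat w_1, A_{k+r}\rangle| \le \beta$. For the inductive step, the diagonal lower bound together with $\sum_{j<r} j^2 \le k^3$ and $\beta = O(n^{-1}k^{-3})$ gives $|a_{r,r}| \ge \sqrt{1-k^3\beta^2} \ge 0.99$, while $|a_{r,j}|\le j\beta$ and the hypothesis $|x_j|\le 2\beta$ yield $\sum_{j<r} |a_{r,j}||x_j| \le 2\beta^2\sum_{j<r} j \le \beta^2 k^2$. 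Since $\beta k^2 = O(n^{-1}k^{-1})$ is negligible, this closes the induction with $|x_r| \le (\beta + \beta^2 k^2)/0.99 \le 2\beta$. Taking $r=i$, I conclude $\langle e_i, A_{k+r}\rangle^2 = x_i^2 \le 4\beta^2 \le \beta^2 k^2$ for $k\ge 2$, which is the claim.

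The only delicate points are bookkeeping: verifying that the diagonal entries $a_{r,r}$ stay bounded away from $0$ (which is exactly where the smallness of $\beta$ guaranteed by $\beta=O(n^{-1}k^{-3})$ enters) and confirming that the Gram--Schmidt basis is unaffected by normalization so that Lemma~\ref{lem:almost-orth} applies verbatim. The induction itself is routine once these are in place; in fact it produces the stronger bound $|x_i| = O(\beta)$, of which the stated $\beta^2 k^2$ is a loose consequence, so there is ample slack in the argument.
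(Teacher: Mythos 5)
Your proof is correct, and it rests on the same key lemma (Lemma~\ref{lem:almost-orth}) and the same Gram--Schmidt structure as the paper's, but you deploy the lemma differently. The paper's proof is a one-line invocation: it treats $A_{k+r}$ as the $(k+1)$-st vector of the $\beta$-almost-orthogonal family $\{w_1/\norm{w_1}_2,\dots,w_k/\norm{w_k}_2,\,A_{k+r}\}$ and reads off the lemma's coefficient bound $a_{k+1,i}^2\le i^2\beta^2\le k^2\beta^2$ directly, using that the lemma's basis is built by Gram--Schmidt and hence its first $k$ vectors coincide with $\{e_1,\dots,e_k\}$. You instead apply the lemma only to the normalized bin vectors and then bound the coefficients $x_j=\langle e_j,A_{k+r}\rangle$ by forward substitution on the resulting triangular system; this amounts to re-proving, for the single extra vector, the inductive step that the lemma already encapsulates, but with a sharper induction hypothesis ($|x_j|\le 2\beta$ rather than the lemma's growing $j\beta$), which is available because $\beta=\O(n^{-1}k^{-3})$ sits far below the $\O(k^{-2})$ threshold the lemma requires. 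The payoff is a strictly stronger conclusion, $\langle e_i,A_{k+r}\rangle^2\le 4\beta^2$ versus the claimed $k^2\beta^2$; the cost is an extra induction that the paper's direct citation avoids. Both arguments rest on the same identifications---Gram--Schmidt is unaffected by positive rescaling of its inputs, and $\norm{A_{k+r}}_2=1$ for light rows---which you verify explicitly and the paper leaves implicit.
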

\begin{proof}
This is a straightforward application of Lemma~\ref{lem:almost-orth}.
From that, we have $\langle A_{k + r}, e_i \rangle^2 \leq i^2 \beta^2$, for $i \in [1,\ldots,k]$, which means $\langle A_{k + r}, e_i \rangle^2 \leq k^2 \beta^2$. 
\end{proof}

\begin{claim}\label{clm:proc-not-cont}
Let $A_{k+r}$ be a light row that has been processed by the greedy algorithm. 
Let $\{e_1,\ldots,e_k\}$ be the Gram-Schmidt basis for the current $\{w_1,\ldots,w_k\}$. If $A_{k+r}$ is assigned to bin $b_{k-1}$ (w.l.o.g.), the squared projection coefficient of $A_{k + r}$ onto $e_i, i \neq k - 1$ is at most $4\beta^2 \cdot k^2$, where $\beta = \O(n^{-1}k^{-3})$ upper bounds the inner products of normalized $A_{k+r}, w_1, \cdots, w_k$.
\end{claim}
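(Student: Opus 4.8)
The plan is to bound $\abs{\langle A_{k+r}, e_i\rangle}$ directly for each $i\neq k-1$ and show it is at most $2k\beta$, so that squaring yields the claimed $4\beta^2 k^2$. The key difference from Claim~\ref{clm:not-proc} is that $A_{k+r}$ now lives inside bin $b_{k-1}$: among the normalized bin contents $\hat w_j := w_j/\norm{w_j}$, the single inner product $\langle A_{k+r},\hat w_{k-1}\rangle$ is \emph{not} small (its magnitude can be as large as $1$), whereas $\abs{\langle A_{k+r},\hat w_j\rangle}\le \beta$ for every $j\neq k-1$ since $A_{k+r}$ appears in no other bin, and the bins $b_j$ ($j\neq k-1$) consist of disjoint collections of light rows (or singleton heavy rows by Lemma~\ref{lem:heavy-rows}) that are almost orthogonal to $A_{k+r}$. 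The idea is to track exactly how this one large inner product leaks into the projection of $A_{k+r}$ onto $e_i$ for $i\neq k-1$, and to show the leakage is damped by the small off-diagonal Gram--Schmidt coefficients.

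First I would record that, for $d$ large enough, the vectors $\hat w_1,\dots,\hat w_k$ are $\beta$-almost orthogonal: each bin is a signed sum of distinct, pairwise almost-orthogonal rows (Observation~\ref{obsr:dot-product}), so Observation~\ref{obser:norm} controls their norms and pairwise cross terms after normalization. Applying Lemma~\ref{lem:almost-orth} to $\hat w_1,\dots,\hat w_k$, processed in the Gram--Schmidt order that produced $e_1,\dots,e_k$ (so that $\hat w_{k-1}$ generates $e_{k-1}$), yields $\hat w_j = \sum_{\ell\le j} a_{j\ell} e_\ell$ with $a_{jj}\ge \sqrt{1-O(k^3\beta^2)}$ and $\abs{a_{j\ell}}\le \ell\beta\le k\beta$ for $\ell<j$. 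Inverting this near-identity upper-triangular system gives $e_i=\sum_{j\le i} b_{ij}\hat w_j$ with $\abs{b_{ii}}\le 1+O(k^2\beta^2)$ and $\abs{b_{ij}}\le O(k\beta)$ for $j<i$. Then, with the convention $b_{i,k-1}=0$ when $i<k-1$,
\[
\langle A_{k+r}, e_i\rangle = \sum_{\substack{j\le i\\ j\neq k-1}} b_{ij}\,\langle A_{k+r},\hat w_j\rangle \;+\; b_{i,k-1}\,\langle A_{k+r},\hat w_{k-1}\rangle .
\]
In the first sum every inner product has magnitude at most $\beta$ and $\sum_j \abs{b_{ij}}=O(1)$, so this part is $O(\beta)$. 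The second term survives only for $i=k$ (as $i\neq k-1$), where $\abs{b_{k,k-1}}\le O(k\beta)$ while $\abs{\langle A_{k+r},\hat w_{k-1}\rangle}\le 1$, contributing $O(k\beta)$. Hence $\abs{\langle A_{k+r}, e_i\rangle}\le 2k\beta$ for all $i\neq k-1$, and squaring proves the claim.

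The hard part is the bookkeeping around the single large inner product $\langle A_{k+r},\hat w_{k-1}\rangle$: one must show it enters $e_i$ ($i\neq k-1$) only through an off-diagonal coefficient $b_{i,k-1}$ of size $O(k\beta)$, so that its $O(1)$ magnitude is suppressed down to $O(k\beta)$. This suppression is precisely what produces the extra factor of $2$ (and the $4$ after squaring) relative to Claim~\ref{clm:not-proc}. Two supporting estimates need care: (i) establishing $\beta$-almost orthogonality of the normalized bin contents, which requires $d$ sufficiently large relative to $n$ and $k$ so that the per-bin accumulation of pairwise errors in Observation~\ref{obser:norm} stays below $\beta=O(n^{-1}k^{-3})$; and (ii) controlling the entries of the inverse change-of-basis matrix $(b_{ij})$, which follows from a standard triangular-inversion (Neumann-series) estimate since $(a_{j\ell})$ is a near-identity triangular matrix with off-diagonals of size $O(k\beta)$.
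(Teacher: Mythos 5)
Your proposal is correct and follows essentially the same route as the paper's proof: both arguments rest on Lemma~\ref{lem:almost-orth} (Gram--Schmidt triangularity plus $\beta$-almost-orthogonality) and on the same key point that the single large inner product between $A_{k+r}$ and the normalized $w_{k-1}$ can leak into the coefficient on $e_i$, $i\neq k-1$, only through an off-diagonal triangular coefficient of magnitude $O(k\beta)$, giving $|\langle A_{k+r},e_i\rangle|\le 2k\beta$ and hence $4\beta^2k^2$ after squaring. The only difference is bookkeeping: the paper expands the normalized $A_{k+r}$ as $\sum_j b_j e_j$ and solves the single equation $\langle w_k/\norm{w_k}, v_{k+r}\rangle=\sum_j a_{kj}b_j$ for $b_k$ (using $|b_{k-1}|\le 1$ and $|a_{k,k-1}|\le (k-1)\beta$), whereas you invert the entire triangular change of basis to write $e_i$ in terms of the normalized bins --- a dual formulation that costs you an extra (valid, given $\beta=O(n^{-1}k^{-3})$) triangular-inversion estimate but arrives at the same bound with the same constants.
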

\begin{proof} 
Without loss of generality, it suffices to bound the squared projection of $A_{k+r}$ onto the direction of $w_k$ that is orthogonal to the subspace spanned by $w_1, \cdots, w_{k-1}$. Let $e_1, \cdots, e_k$ be an orthonormal basis of $w_1, \cdots, w_k$ guaranteed by Lemma~\ref{lem:almost-orth}. Next, we expand the orthonormal basis to include $e_{k+1}$ so that we can write the normalized vector of $A_{k+r}$ as $v_{k+r}=\sum_{j=1}^{k+1} b_j e_j$. By a similar approach to the proof of Lemma~\ref{lem:almost-orth}, for each $j\leq k-2$, $b_j\leq \beta^2 j^2$. Next, since $|\langle w_k , v_{k+r}\rangle| \leq \beta$,
\begin{align*}
|b_k| &\leq {1 \over |\langle w_k, e_k\rangle|} \cdot (|\langle w_k , v_{k+r}\rangle| + \sum_{j=1}^{k-1} |b_j \cdot \langle w_k, e_j\rangle|)\\
&\leq {1 \over \sqrt{1 - \sum_{j=1}^{k-1} \beta^2 \cdot j^2}} \cdot (\beta + \sum_{j=1}^{k-2} \beta^2 \cdot j^2 + (k-1)\cdot \beta) &&\rhd |b_{k-1}|\leq 1\\
&= {\beta + \sum_{j=1}^{k-2} \beta^2 \cdot j^2 \over \sqrt{1 - \sum_{j=1}^{k-1} \beta^2 \cdot j^2}} + (k-1) \beta \\
&\leq 2(k-1) \beta - {\beta^2 (k-1)^2 \over \sqrt{1 - \sum_{j=1}^{k-1} \beta^2 \cdot j^2}} &&\rhd \text{similar to the proof of Lemma~\ref{lem:almost-orth}}\\
&<2\beta \cdot k    
\end{align*}
Hence, the squared projection of $A_{k+r}$ onto $e_k$ is at most $4\beta^2 \cdot k^2 \cdot \left\| A_{k+r} \right\|^2_2$. We assumed $\norm{A_{k+r}} =1$;  hence, the squared projection of $A_{k+r}$ onto $e_k$ is at most $4\beta^2 \cdot k^2$. 
\end{proof}

\begin{claim}\label{clm:proc-cont}
We assume that the absolute values of the inner products of vectors in $v_1, \cdots, v_n$ are at most $\bar{\eps} < 1/ (n^2 \sum_{A_i\in b} \left\| A_i \right\|_2)$ and the absolute values of the inner products of the normalized vectors of $w_1, \cdots, w_k$ are at most $\beta = \O(n^{-3} k^{-{3 \over 2}})$.
Suppose that bin $b$ contains the row $A_{k+r}$. Then, the squared projection of $A_{k + r}$ onto the direction of $w$ orthogonal to $\Span(\{w_1,\cdots, w_k\} \setminus \{w\})$ is at most ${\left\| A_{k+r} \right\|^4_2 \over \left\| w\right\|^2_2} + \O(n^{-2})$ and is at least ${\left\| A_{k+r} \right\|^4_2 \over \left\| w\right\|^2_2} - \O(n^{-2})$.
\end{claim}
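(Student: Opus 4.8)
The plan is to reduce the claim to estimating a single inner product. Write $w = \sum_{A_i \in b} g_i A_i$ with signs $g_i \in \{\pm 1\}$, and let $\hat{w}^\perp$ be the unit vector along the component of $w$ orthogonal to $\Span(\{w_1,\dots,w_k\}\setminus\{w\})$; the quantity to bound is exactly $\langle A_{k+r}, \hat{w}^\perp\rangle^2$. Normalizing the bin contents to $\tilde{w}_j = w_j/\norm{w_j}_2$, the hypothesis makes $\tilde{w}_1,\dots,\tilde{w}_k$ a $\beta$-almost orthogonal family, so Lemma~\ref{lem:almost-orth} supplies an orthonormal basis $e_1,\dots,e_k$ of $\Span(\tilde{w}_1,\dots,\tilde{w}_k)$ with $\tilde{w}_i = \sum_{j\le i} a_{i,j}e_j$, where $a_{i,i}^2 \ge 1-\sum_{j<i}j^2\beta^2$ and $a_{i,j}^2 \le j^2\beta^2$. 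Relabeling so that $w = w_k$, the orthogonal direction is $\hat{w}^\perp = e_k$, and from $e_k = a_{k,k}^{-1}(\tilde{w}_k - \sum_{j<k}a_{k,j}e_j)$ I obtain
\[
\langle A_{k+r}, e_k\rangle = \frac{1}{a_{k,k}}\Big(\frac{\langle A_{k+r}, w\rangle}{\norm{w}_2} - \sum_{j<k}a_{k,j}\langle A_{k+r},e_j\rangle\Big).
\]

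First I would estimate the leading inner product. Since $A_{k+r}\in b$, separating its own contribution gives $\langle A_{k+r}, w\rangle = g_{k+r}\norm{A_{k+r}}_2^2 + \sum_{A_i\in b,\, i\ne k+r} g_i\langle A_{k+r},A_i\rangle$, and the $\bar{\eps}$-almost orthogonality of the $v_i$ bounds the cross terms by $\bar{\eps}\norm{A_{k+r}}_2\sum_{A_i\in b}\norm{A_i}_2 \le \norm{A_{k+r}}_2/n^2$, using the stated bound $\bar{\eps} < 1/(n^2\sum_{A_i\in b}\norm{A_i}_2)$. Hence $\langle A_{k+r},w\rangle = \pm\norm{A_{k+r}}_2^2 \pm \O(\norm{A_{k+r}}_2/n^2)$. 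For the correction term I would use $\abs{a_{k,j}}\le j\beta$ from Lemma~\ref{lem:almost-orth} together with the trivial bound $\abs{\langle A_{k+r},e_j\rangle}\le \norm{A_{k+r}}_2$, giving $\sum_{j<k}\abs{a_{k,j}}\,\abs{\langle A_{k+r},e_j\rangle} = \O(k^2\beta) = \O(n^{-3}k^{1/2})$ by the choice $\beta = \O(n^{-3}k^{-3/2})$; the same near-orthogonality yields $a_{k,k} = 1\pm \O(k^3\beta^2) = 1\pm\O(n^{-6})$. Finally, Observation~\ref{obser:norm} shows $\norm{w}_2^2 \ge \norm{A_{k+r}}_2^2 - 2\bar{\eps}\sum_{i<j}\norm{A_i}_2\norm{A_j}_2 = \Omega(1)$ for a light row, so dividing by $\norm{w}_2$ and by $a_{k,k}$ does not inflate the additive error beyond a constant factor.

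Putting these together, $\langle A_{k+r}, e_k\rangle = \norm{A_{k+r}}_2^2/\norm{w}_2 \pm \O(1/n^2)$, and squaring gives $\langle A_{k+r},e_k\rangle^2 = \norm{A_{k+r}}_2^4/\norm{w}_2^2 \pm \O(n^{-2})$, after absorbing the lower-order terms $\O(n^{-6})$ and $\O(n^{-3}k^{1/2})$ into $\O(n^{-2})$, which is valid whenever $k = o(n^2)$. The main obstacle is purely the error bookkeeping: I must verify that each normalization step and the squaring only multiply the additive errors by $\O(1)$ factors, which relies on the a priori bound $\norm{w}_2 = \Omega(1)$ and on both smallness hypotheses — the one on $\bar{\eps}$ controlling the in-bin cross terms and the one on $\beta$ making the Gram--Schmidt distortion ($a_{k,k}\approx 1$, $a_{k,j}\approx 0$) negligible. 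No single step is deep; the care lies in confirming that the two assumed bounds on $\bar{\eps}$ and $\beta$ are precisely what force every correction below the target $\O(n^{-2})$.
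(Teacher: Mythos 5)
Your proposal is correct and follows essentially the same route as the paper's proof: both apply Lemma~\ref{lem:almost-orth} to the normalized bin vectors to identify the orthogonal direction as $e_k$, isolate the self-term $\pm\norm{A_{k+r}}_2^2$ in $\langle A_{k+r}, w\rangle$ using the $\bar{\eps}$ hypothesis, and show the Gram--Schmidt corrections and the $a_{k,k}$ normalization contribute only $\O(n^{-2})$. The one minor difference is that you bound the cross terms via the trivial estimate $\abs{\langle A_{k+r},e_j\rangle}\le\norm{A_{k+r}}_2$ combined with $\abs{a_{k,j}}\le j\beta$ (giving $\O(n^{-3}k^{1/2})$, sufficient since $k\le n$), whereas the paper re-derives coefficient bounds $\abs{b_j}\le j\beta$ for the expansion of the normalized row itself; your shortcut is, if anything, slightly cleaner and avoids that extra step.
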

\begin{proof}
Without loss of generality, we assume that $A_{k+r}$ is mapped to $b_k$; $w = w_k$. First, we provide an upper and a lower bound for $|\langle v_{k+r}, \bar{w}_k\rangle|$ where for each $i\leq k$, we let $\bar{w}_i = {w_i \over \left\| w_i \right\|_2}$ denote the normalized vector of $w_i$. Recall that by definition $v_{k+r} = {A_{k+r} \over \left\| A_{k+r} \right\|_2}$.  
\begin{align}
|\langle \bar{w}_k, v_{k+r}\rangle| 
&\leq {\left\| A_{k+r} \right\|_2 + \sum_{A_i\in b_k} \bar{\eps} \left\| A_i \right\|_2 \over \left\| w_k\right\|_2} \nonumber \\ 
&\leq  {\left\| A_{k+r} \right\|_2 + n^{-2} \over \left\| w_k\right\|_2} &&\rhd \text{by $\bar{\eps} < {n^{-2} \over \sum_{A_i\in b_k} \left\| A_i \right\|_2}$} \nonumber \\  
&\leq {\left\| A_{k+r} \right\|_2 \over \left\| w_k\right\|_2} + n^{-2} &&\rhd \text{$\left\| w_k \right\|_2 \geq 1$} \label{eq:up} \\
\nonumber \\ 
|\langle \bar{w}_k, v_{k+r}\rangle| 
&\geq {\left\| A_{k+r} \right\|_2 - \sum_{A_i\in b_k} \left\| A_i \right\|_2\cdot \bar{\eps} \over \left\| w_k\right\|_2} \nonumber \\
&\geq {\left\| A_{k+r} \right\|_2 \over \left\| w_k\right\|_2} - n^{-2} \label{eq:low}
\end{align}
Now, let $\{e_1, \cdots, e_{k}\}$ be an orthonormal basis for the subspace spanned by $\{w_1, \cdots, w_k\}$ guaranteed by Lemma~\ref{lem:almost-orth}. 
Next, we expand the orthonormal basis to include $e_{k+1}$ so that we can write $v_{k+r}=\sum_{j=1}^{k+1} b_j e_j$. 
By a similar approach to the proof of Lemma~\ref{lem:almost-orth}, we can show that for each $j\leq k-1$, $b_j^2\leq \beta^2 j^2$. Moreover,
\begin{align*}
|b_k| &\leq {1 \over |\langle \bar{w}_k, e_k\rangle|} \cdot (|\langle \bar{w}_k , v_{k+r}\rangle| + \sum_{j=1}^{k-1} |b_j \cdot \langle \bar{w}_k, e_j\rangle|)\\
&\leq {1 \over \sqrt{1 - \sum_{j=1}^{k-1} \beta^2 \cdot j^2}} \cdot (|\langle \bar{w}_k , v_{k+r}\rangle| + \sum_{j=1}^{k-1} \beta^2 \cdot j^2) &&\rhd\text{by Lemma~\ref{lem:almost-orth}}\\
&\leq {1 \over \sqrt{1 - \sum_{j=1}^{k-1} \beta^2 \cdot j^2}} \cdot (n^{-2} + {\left\| A_{k+r} \right\|_2 \over \left\| w_k\right\|_2} + \sum_{j=1}^{k-1} \beta^2 \cdot j^2) &&\rhd\text{by~\eqref{eq:up}} \\
&<\beta \cdot k + {1 \over \sqrt{1 - \beta^2 k^3}} \cdot (n^{-2} + {\left\| A_{k+r} \right\|_2 \over \left\| w_k\right\|_2}) &&\rhd\text{similar to the proof of Lemma~\ref{lem:almost-orth}}\\  
&\leq \O(n^{-2}) + (1 + \O(n^{-2})){\left\| A_{k+r} \right\|_2 \over \left\| w_k\right\|_2} &&\rhd\text{by $\beta = \O(n^{-3}k^{-{3\over 2}})$}\\
&\leq {\left\| A_{k+r} \right\|_2 \over \left\| w_k\right\|_2} + \O(n^{-2}) &&\rhd {\left\| A_{k+r} \right\|_2 \over \left\| w_k\right\|_2} \leq 1
\end{align*}
and,
\begin{align*}
|b_k| &\geq {1 \over |\langle \bar{w}_k, e_k\rangle|} \cdot (|\langle \bar{w}_k , v_{k+r}\rangle| - \sum_{j=1}^{k-1} |b_j \cdot \langle \bar{w}_k, e_j\rangle|)\\ 
&\geq |\langle \bar{w}_k , v_{k+r}\rangle| - \sum_{j=1}^{k-1} \beta^2 \cdot j^2 &&\rhd\text{since $|\langle \bar{w}_k, e_k\rangle|\leq 1$}\\
&\geq {\left\| A_{k+r} \right\|_2 \over \left\| w_k\right\|_2} - n^{-2} - \sum_{j=1}^{k-1} \beta^2 \cdot j^2 &&\rhd\text{by~\eqref{eq:low}} \\
&\geq {\left\| A_{k+r} \right\|_2 \over \left\| w_k\right\|_2} - \O(n^{-2})  &&\rhd\text{by $\beta = \O(n^{-3} k^{-{3\over 2}})$}  
\end{align*}
Hence, the squared projection of $A_{k+r}$ onto $e_k$ is at most ${\left\| A_{k+r} \right\|^4_2 \over \left\| w_k\right\|^2_2} + \O(n^{-2})$ and is at least ${\left\| A_{k+r} \right\|^4_2 \over \left\| w_k\right\|^2_2} - \O(n^{-2})$.
\end{proof}

Now, we show that at the end of the algorithm no light row will be assigned to the bins that contain heavy rows.

\begin{lemma}\label{lem:light-rows}
We assume that the absolute values of the inner products of vectors in $v_1, \cdots, v_n$ are at most $\bar{\eps} < \min\{n^{-2} k^{-{5\over 3}}, (n\sum_{A_i\in w} \left\| A_i \right\|_2)^{-1}\}$. At iteration $k+r$, the greedy algorithm will assign the light row $A_{k+r}$ to a bin that does not contain a heavy row.
\end{lemma}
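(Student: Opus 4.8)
The plan is to prove the lemma by induction on $r$, maintaining the invariant that at the start of iteration $k+r$ every bin holding a heavy row holds \emph{only} that heavy row. The base case is exactly Lemma~\ref{lem:heavy-rows}: after iteration $k$ each of the first $k$ rows—including all $s$ heavy rows—sits in its own isolated bin, so every heavy bin is clean. For the inductive step I would analyze the greedy decision for $A_{k+r}$ through the objective it optimizes, namely the sum of squared projection coefficients $\sum_{i\in[n]}\norm{\proj_W A_i}_2^2$ onto the current $k$-dimensional row space $W=\Span(w_1,\dots,w_k)$, and compare the change in this reward when $A_{k+r}$ is placed in a clean heavy bin versus a light bin.

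First I would split the reward change $\Delta$ into three groups of rows. For $A_{k+r}$ itself, Claim~\ref{clm:not-proc} shows its squared projection onto $W$ before insertion is at most $k\cdot\beta^2 k^2$, i.e.\ negligible, while after insertion into a bin with resulting sum $w'$, Claims~\ref{clm:proc-cont} and~\ref{clm:proc-not-cont} give that essentially all of its captured energy lies along the bin direction and equals $\norm{A_{k+r}}_2^4/\norm{w'}_2^2\pm\O(n^{-2})=1/\norm{w'}_2^2\pm\O(n^{-2})$. For the rows already occupying the modified bin, the same two claims control how their projections shift as the bin direction rotates. For all rows in the other $k-1$ bins and for the still-unprocessed light rows, $\bar{\eps}$-almost orthogonality (Observation~\ref{obsr:dot-product}) forces every cross term to be $\O(\bar{\eps})$-small and, crucially, these contributions are the \emph{same} in both candidate placements, so they cancel out of the comparison.

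With this bookkeeping the two cases reduce to a short computation. Writing $f(t)=(\ell^4+t)/(\ell^2+t)$ for the total squared-projection mass of a bin containing one heavy row and $t$ light rows, the invariant gives $t=0$ for every heavy bin, so inserting $A_{k+r}$ there changes its mass from $f(0)=\ell^2$ to $f(1)=(\ell^4+1)/(\ell^2+1)$, a \emph{decrease} of $\Delta_{\mathrm{heavy}}=-(\ell^2-1)/(\ell^2+1)=-\Omega(1)$ for fixed $\ell>1$. Inserting $A_{k+r}$ into a light bin with $t$ rows instead leaves its total captured mass at $(t+1)\cdot\tfrac{1}{t+1}=1$, matching the mass it carried before (mass $1$ from the $t$ rows plus the negligible contribution of the not-yet-inserted $A_{k+r}$), so $\Delta_{\mathrm{light}}=\O(\bar{\eps}\cdot n)+\O(n^{-1})=o(1)$. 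Since $\Delta_{\mathrm{light}}>\Delta_{\mathrm{heavy}}$, the greedy strictly prefers a light bin, which both proves the claim for iteration $k+r$ and preserves the invariant.

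The main obstacle will be the error control in the second paragraph: I must verify that the $\O(n^{-2})$ slacks in Claims~\ref{clm:proc-cont}–\ref{clm:proc-not-cont}, accumulated over up to $n$ rows per bin and over the $O(n)$ remaining rows, together with the passage from the row-level bound $\bar{\eps}$ to the bin-sum bound $\beta=\O(n^{-3}k^{-3/2})$ used by the claims, sum to $o(1)$ and hence stay well below the constant gap $(\ell^2-1)/(\ell^2+1)$. The hypotheses $\bar{\eps}<\min\{n^{-2}k^{-5/3},(n\sum_{A_i\in w}\norm{A_i}_2)^{-1}\}$ are precisely what make this accounting close, and checking that the cross-bin and unprocessed-row terms genuinely coincide across the two placements—rather than merely being small—is what keeps the comparison clean and lets the $\Omega(1)$ separation survive.
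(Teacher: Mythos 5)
Your proposal is correct and follows essentially the same route as the paper: induction with the isolation invariant from Lemma~\ref{lem:heavy-rows}, then a comparison of the change in total squared projection mass using Claims~\ref{clm:not-proc}, \ref{clm:proc-not-cont}, and \ref{clm:proc-cont}, arriving at exactly the paper's separation of an $\Omega(1)$ loss $\frac{\ell^2-1}{\ell^2+1}$ for colliding with a (clean) heavy bin versus an $\O(n^{-1})$ loss for joining a light bin. The only slight imprecision is your claim that the cross-bin and unprocessed-row contributions are \emph{identical} across the two placements and hence cancel---the modified bin direction differs between the two cases, so they need not coincide---but since both are $o(1)$ (which is what the paper actually bounds, in items 1 and 2 of its case analysis), this does not affect the argument.
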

\begin{proof}
The proof is by induction. Lemma~\ref{lem:heavy-rows} implies that no light row has been mapped to a bin that contains a heavy row for the first $k$ iterations. Next, we assume that this holds for the first $k+r-1$ iterations and show that is also must hold for the $(k+r)$-th iteration. 

To this end, we compare the sum of squared projection coefficients when $A_{k+r}$ avoids and collides with a heavy row.
 
First, we upper bound $\beta = \max_{i\neq j\leq k} |\langle w_i, w_j \rangle| / (\left\| w_i \right\|_2 \left\| w_j \right\|_2)$. Let $c_i$ and $c_j$ respectively denote the number of rows assigned to $b_i$ and $b_j$.
\begin{align*}
\beta = \max_{i\neq j\leq k} {|\langle w_i, w_j \rangle| \over \left\| w_i \right\|_2 \left\| w_j \right\|_2} 
&\leq {c_i \cdot c_j \cdot \bar{\eps} \over \sqrt{c_i - 2 \bar{\eps}c_i^2} \cdot  \sqrt{c_j - 2 \bar{\eps}c_j^2}} &\rhd~\text{Observation \ref{obser:norm}}\\
&\leq {16 \bar{\eps}\sqrt{c_i c_j}} &\rhd \bar{\eps} \leq n^{-2} k^{-5/3} \\
&\leq n^{-1} k^{-{5 \over 3}} &\rhd \bar{\eps} \leq n^{-2} k^{-5/3}
\end{align*}

\paragraph{1. If $A_{k+r}$ is assigned to a bin that contains $c$ light rows and no heavy rows.}
In this case, the projection loss of the heavy rows $A_1, \cdots, A_s$ onto $\row(SA)$ remains zero. Thus, we only need to bound the change in the sum of squared projection coefficients of the light rows before and after iteration $k+r$.
Without loss of generality, let $w_k$ denote the bin that contains $A_{k+r}$. Since $\sS_{k-1} = \Span(\{w_1, \cdots, w_{k-1}\})$ has not changed, we only need to bound the difference in cost between projecting onto the component of $w_k - A_{k+r}$ orthogonal to $\sS_{k-1}$ and the component of $w_k$ orthogonal to $\sS_{k-1}$, respectively denoted as $e_k$ and $\bar{e}_k$.  
\begin{enumerate}[leftmargin=*]
\item\label{case:not-proc} By Claim~\ref{clm:not-proc}, for the light rows that are not yet processed (i.e., $A_j$ for $j > k+r$), the squared projection of each onto $e_k$ is at most $\beta^2 k^2$. Hence, the total decrease in the squared projection is at most $(n-k-r)\cdot \beta^2 k^2$.
\item\label{case:proc-not-cont} By Claim~\ref{clm:proc-not-cont}, for the processed light rows that are not mapped to the last bin, the squared projection of each onto $e_k$ is at most $4\beta^2 k^2$. Hence, the total decrease in the squared projection cost is at most $(r-1)\cdot 4 \beta^2 k^2$.
\item\label{case:proc-cont} For each row $A_{i} \neq A_{k+r}$ that is mapped to the last bin, by Claim~\ref{clm:proc-cont} and the fact $\norm{A_i}^4_2 = \norm{A_i}^2_2 = 1$, the squared projection of $A_{i}$ onto $e_k$ is at most ${\left\| A_{i} \right\|^2_2 \over \left\| w_k - A_{k+r} \right\|^2_2} + \O(n^{-2})$ and the squared projection of $A_{i}$ onto $\bar{e}_k$ is at least ${\left\| A_{i} \right\|^2_2 \over \left\| w_k \right\|^2_2} - \O(n^{-2})$.

Moreover, the squared projection of $A_{k+r}$ onto $e_k$ compared to $\bar{e}_k$ increases by at least $({\left\| A_{k+r} \right\|^2_2 \over \left\| w_k\right\|^2_2} - \O(n^{-2})) - \O(n^{-2}) = {\left\| A_{k+r} \right\|^2_2 \over \left\| w_k\right\|^2_2} - \O(n^{-2})$.

Hence, the total squared projection of the rows in the bin $b_k$ decreases by at least:
\begin{align*}
& (\sum_{A_i \in w_k/\{A_{r+k}\}} {\left\| A_i \right\|^2_2 \over \left\| w_k - A_{r+k} \right\|^2_2} + \O(n^{-2})) - (\sum_{A_i \in w_k} {\left\| A_i \right\|^2_2 \over \left\| w_k \right\|^2_2} - \O(n^{-2}))\\
\leq & {\left\| w_k - A_{r+k} \right\|^2_2 + \O(n^{-1}) \over \left\| w_k - A_{r+k} \right\|^2_2} - {\left\| w_k \right\|^2_2 - \O(n^{-1}) \over \left\| w_k \right\|^2_2}  + \O(n^{-1}) &\rhd\text{ by Observation \ref{obser:norm}}\\
\leq &  \O(n^{-1})
\end{align*}  
\end{enumerate}
Hence, summing up the bounds in items~\ref{case:not-proc} to~\ref{case:proc-cont} above, the total decrease in the sum of squared projection coefficients is at most $\O(n^{-1})$.

\paragraph{2. If $A_{k+r}$ is assigned to a bin that contains a heavy row.}  
Without loss of generality, we can assume that $A_{k+r}$ is mapped to $b_k$ that contains the heavy row $A_s$. In this case, the distance of heavy rows $A_1, \cdots, A_{s-1}$ onto the space spanned by the rows of $SA$ is zero. Next, we bound the amount of change in the squared distance of $A_{s}$ and light rows onto the space spanned by the rows of $SA$.

Note that the $(k-1)$-dimensional space corresponding to $w_1, \cdots, w_{k-1}$ has not changed. Hence, we only need to bound the decrease in the projection distance of $A_{k+r}$ onto $\bar{e}_k$ compared to $e_k$ (where $\bar{e}_k, e_k$ are defined similarly as in the last part).  

\begin{enumerate}[leftmargin=*]
\item For the light rows other than $A_{k+r}$, the squared projection of each onto $e_k$ is at most $\beta^2 k^2$. Hence, the total increase in the squared projection of light rows onto $e_k$ is at most $(n-k)\cdot \beta^2 k^2 = \O(n^{-1})$.
\item By Claim~\ref{clm:proc-cont}, the sum of squared projections of $A_s$ and $A_{k+r}$ onto $e_k$ decreases by at least 
\begin{align*}
&\left\|A_s\right\|^2_2 - ({\left\|A_s\right\|^4_2 + \left\|A_{k+r}\right\|^4_2 \over \left\|A_s + A_{r+k}\right\|^2_2} + \O(n^{-1})) \\
\geq &\left\|A_s\right\|^2_2 - ({\left\|A_s\right\|^4_2 + \left\|A_{k+r}\right\|^4_2 \over \left\|A_s\right\|_2^2 + \left\|A_{r+k}\right\|^2_2 -n^{-\O(1)}} + \O(n^{-1})) &&\rhd\text{by Observation~\ref{obser:norm}}\\
\geq &{\left\|A_{r+k}\right\|^2_2 (\left\|A_s\right\|^2_2 - \left\|A_{k+r}\right\|^2_2) - \left\| A_s \right\|^2_2 \cdot \O(n^{-1}) \over \left\|A_s\right\|_2^2 + \left\|A_{r+k}\right\|^2_2 - \O(n^{-1})} - \O(n^{-1})\\
\geq &{\left\|A_{r+k}\right\|^2_2 (\left\|A_s\right\|^2_2 - \left\|A_{k+r}\right\|^2_2) - \left\| A_s \right\|^2_2 \cdot \O(n^{-1}) \over \left\|A_s\right\|_2^2 + \left\|A_{r+k}\right\|^2_2} - \O(n^{-1}) \\
\geq &{\left\|A_{r+k}\right\|^2_2 (\left\|A_s\right\|^2_2 - \left\|A_{k+r}\right\|^2_2) \over \left\|A_s\right\|_2^2 + \left\|A_{r+k}\right\|^2_2} - \O(n^{-1}) \\
\geq &{\left\|A_{r+k}\right\|^2_2 (1 - (\left\|A_{k+r}\right\|^2_2/\left\|A_s\right\|^2_2)) \over 1 + (\left\|A_{r+k}\right\|^2_2/\left\|A_s\right\|^2_2)} - \O(n^{-1}) \\
\geq & \left\|A_{r+k}\right\|^2_2 (1 - {\left\| A_{k+r} \right\|_2 \over \left\|A_s\right\|_2}) - \O(n^{-1}) &&\rhd {1 - \eps^2 \over 1+\eps^2} \geq 1-\eps
\end{align*} 
\end{enumerate}
Hence, in this case, the total decrease in the squared projection is at least 
\begin{align*}
\left\|A_{r+k}\right\|^2_2 (1 - {\left\| A_{k+r} \right\|_2 \over \left\|A_s\right\|_2}) - \O(n^{-1})  
&= 1 - {\left\| A_{k+r} \right\|_2 \over \left\|A_s\right\|_2}) - \O(n^{-1}) &&\rhd \left\|A_{r+k}\right\|_2 = 1 \\
&= 1 -(1/\sqrt{\ell}) - \O(n^{-1}) &&\rhd \left\|A_{s}\right\|_2 = \sqrt{\ell} 
\end{align*} 
Thus, for a sufficiently large value of $\ell$, the greedy algorithm will assign $A_{k+r}$ to a bin that only contains light rows. This completes the inductive proof and in particular implies that at the end of the algorithm, heavy rows are assigned to isolated bins.
\end{proof}

\begin{corollary}\label{cor:greedy-cost}
The approximation loss of the best $\rankk$ approximate solution in the rowspace $S_gA$ for $A\sim \sA_{sp}(s,\ell)$, where $A\in \mathbb{R}^{n\times d}$ for $d = \Omega(n^4 k^4 \log n)$ and $S_g$ is the CountSketch constructed by the greedy algorithm with non-increasing order, is at most $n-s$.
\end{corollary}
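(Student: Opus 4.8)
The plan is to reduce the corollary to the two structural lemmas just proved, Lemma~\ref{lem:heavy-rows} and Lemma~\ref{lem:light-rows}, together with the projection-cost characterization recorded above. Recall that whenever $\rank(S_gA)\le k$, one has
\[
\min_{\rankk X\in\rowsp(S_gA)} \norm{X-A}_F^2 = \norm{A-AP}_F^2 = \sum_{i\in[n]}\bigl(\norm{A_i}_2^2 - \norm{A_iP}_2^2\bigr),
\]
where $P$ is the orthogonal projector onto $\rowsp(S_gA)$, since the optimal feasible $X$ is simply the projection of $A$ onto $\rowsp(S_gA)$, whose rank is at most $k$. So I would first verify that the hypotheses of Lemma~\ref{lem:heavy-rows} and Lemma~\ref{lem:light-rows} are met. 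The assumption $d=\Omega(n^4k^4\log n)$ guarantees, via Observation~\ref{obsr:dot-product}, that the random unit vectors $v_1,\dots,v_n$ are $\bar\eps$-almost orthogonal with $\bar\eps = 2\sqrt{(\log n)/d}$ satisfying $\bar\eps \le n^{-2}k^{-5/3}$ (and the analogous norm-dependent threshold), which is precisely the smallness those two lemmas require, and this holds with probability $1-1/\poly(n)$.

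Given this, Lemma~\ref{lem:heavy-rows} and Lemma~\ref{lem:light-rows} together imply that, processing rows in non-increasing norm order, each of the $s$ heavy rows $A_{r_1},\dots,A_{r_s}$ occupies an isolated bin and no light row is ever assigned to one of these bins. The key consequence I would extract is that each isolated bin contributes a row of $S_gA$ equal to $\pm A_{r_i}$, because the corresponding column of $S_g$ has its single $\pm 1$ entry in position $r_i$. Hence every heavy row lies \emph{exactly} in $\rowsp(S_gA)$, so $A_{r_i}P = A_{r_i}$ and $\norm{A_{r_i}}_2^2 - \norm{A_{r_i}P}_2^2 = 0$; the heavy rows incur zero projection cost.

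It then remains only to bound the contribution of the $n-s$ light rows. Each light row has unit norm and $\norm{A_iP}_2^2\ge 0$, so each contributes at most $\norm{A_i}_2^2 = 1$ to the cost. Summing the zero heavy-row contributions and the at-most-one light-row contributions yields
\[
\min_{\rankk X\in\rowsp(S_gA)} \norm{X-A}_F^2 \le \sum_{\text{light }i} 1 = n-s,
\]
as claimed. The only genuinely delicate point is the first step: confirming that the almost-orthogonality parameter $\bar\eps$ delivered by $d=\Omega(n^4k^4\log n)$ is small enough to simultaneously clear every threshold invoked inside Lemma~\ref{lem:heavy-rows} and Lemma~\ref{lem:light-rows} (e.g.\ $\bar\eps = \O(t^{-2})$ from Lemma~\ref{lem:almost-orth} and the bounds of Claim~\ref{clm:proc-cont}). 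Once those lemmas apply, the \emph{exact} membership of the heavy rows in the rowspace makes the rest of the accounting immediate; in particular, the decomposition requires no analysis whatsoever of how the light rows are distributed among the non-heavy bins, since we only ever use that each light row's cost is bounded by its squared norm.
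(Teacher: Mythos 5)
Your proposal is correct and follows essentially the same route as the paper's proof: verify via Observation~\ref{obsr:dot-product} that $d=\Omega(n^4k^4\log n)$ makes the rows sufficiently almost-orthogonal to invoke Lemma~\ref{lem:heavy-rows} and Lemma~\ref{lem:light-rows}, conclude that every heavy row ends in a singleton bin (hence lies in $\rowsp(S_gA)$ and incurs zero loss), and bound each of the $n-s$ unit-norm light rows by its squared norm. Your write-up is in fact slightly more careful than the paper's, both in spelling out why isolated heavy bins give exact rowspace membership and in quoting the threshold $n^{-2}k^{-5/3}$ as stated in Lemma~\ref{lem:light-rows} (the paper's proof writes $n^{-2}k^{-2}$ there), but the argument is the same.
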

\begin{proof}
First, we need to show that the absolute values of the inner products of vectors in $v_1, \cdots, v_n$ are at most $\bar{\eps} < \min\{n^{-2} k^{-2}, (n\sum_{A_i\in w} \left\| A_i \right\|_2)^{-1}\}$ so that we can apply Lemma~\ref{lem:light-rows}.
To show this, note that by Observation~\ref{obsr:dot-product}, $\bar{\eps} \leq 2\sqrt{\log n \over d} \leq n^{-2} k^{-2}$ since $d = \Omega(n^4 k^4 \log n)$.
The proof follows from Lemma~\ref{lem:heavy-rows} and Lemma~\ref{lem:light-rows}. Since all heavy rows are mapped to isolated bins, the projection loss of the light rows is at most $n-s$. 
\end{proof}

Next, we bound the Frobenius norm error of the best $\rankk$-approximation solution constructed by the standard CountSketch with a randomly chosen sparsity pattern.
\begin{lemma}\label{lem:random-pattern}
Let $s=\alpha k$ where $0.7<\alpha < 1$. The expected squared loss of the best $\rankk$ approximate solution in the rowspace $S_r A$ for $A\in \mathbb{R}^{n\times d}$ $\sim \sA_{sp}(s,\ell)$, where $d=\Omega(n^6 \ell^2)$ and $S_r$ is the sparsity pattern of CountSketch is chosen uniformly at random, is at least $n + {\ell k \over 4e} - (1+\alpha) k - n^{-\O(1)}$.
\end{lemma}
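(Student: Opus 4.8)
The plan is to estimate the expected projection loss of the random sketch directly, combining a balls-and-bins analysis of how the $s$ heavy rows spread over the $k$ bins with the $\bar\eps$-almost-orthogonality of the rows (Observation~\ref{obsr:dot-product} and Lemma~\ref{lem:almost-orth}). Since $S_r$ hashes the $n$ rows into $k$ bins, $V:=\rowsp(S_rA)$ has dimension at most $k$ and the best $\rankk$ solution in $\rowsp(S_rA)$ is exactly the orthogonal projection of $A$ onto $V$; hence the quantity to be bounded equals $\norm{A}_F^2-\norm{AP_V}_F^2$, where $P_V$ is the projection onto $V$. First I would record the total mass $\norm{A}_F^2 = s\ell + (n-s) = n+\alpha k(\ell-1)$ (heavy rows have squared norm $\ell$ and light rows unit norm, as used in Lemma~\ref{lem:light-rows}), so that a lower bound on the loss follows from an \emph{upper} bound on the captured mass $\norm{AP_V}_F^2$.

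The heart of the argument is a per-bin bound on the captured mass. Because $d=\Omega(n^6\ell^2)$, Observation~\ref{obsr:dot-product} gives $\bar\eps=O(\sqrt{(\log n)/d})$ so small that any row $A_i$ placed in bin $j$ is almost orthogonal to the bin-sums $w_{j'}$ of all other bins; consequently $\norm{\proj_V(A_i)}_2^2$ equals $\langle A_i,w_j\rangle^2/\norm{w_j}_2^2$ up to an additive $n^{-\Theta(1)}$ term (this is exactly the single-direction computation of Claim~\ref{clm:proc-cont}, together with Lemma~\ref{lem:almost-orth} to discard the other directions). Summing over the rows of a bin $j$ that holds $t_j$ heavy and $m_j$ light rows, and using Observation~\ref{obser:norm} to write $\norm{w_j}_2^2 = (t_j\ell+m_j)(1\pm n^{-\Theta(1)})$, the mass captured inside bin $j$ is at most $(t_j\ell^2+m_j)/(t_j\ell+m_j)+n^{-\Theta(1)}$. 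Since $\ell\ge 1$, this is at most $\ell$ whenever $t_j\ge 1$ and equals $1$ when $t_j=0$ (and $0$ when the bin is empty). Writing $H$ for the number of bins containing at least one heavy row, this yields
\[
\norm{AP_V}_F^2 \;\le\; \ell H + (k-H) + n^{-\Theta(1)} \;=\; k + H(\ell-1) + n^{-\Theta(1)}.
\]

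It remains to take expectations. A balls-and-bins calculation gives $\E[H]=k\big(1-(1-1/k)^s\big)\le k\big(1-e^{-s/(k-1)}\big)$, and plugging this in together with $\norm{A}_F^2=n+\alpha k(\ell-1)$ gives
\[
\E\Big[\min_{\rankk X\in\rowsp(S_rA)}\norm{X-A}_F^2\Big] \;\ge\; n-k+(\ell-1)k\,h - n^{-\Theta(1)},
\qquad h:=\alpha-1+e^{-s/(k-1)}.
\]
To finish, I would split $(\ell-1)kh=\ell kh-kh$ and use that $h\le\alpha$ (because $e^{-s/(k-1)}\le 1$), so that $-k(1+h)\ge-(1+\alpha)k$, while for $\alpha\in(0.7,1)$ and $k$ large enough one checks $h\ge\frac{1}{4e}$ (the binding case is $\alpha=0.7$, where $h\to 0.196>\frac{1}{4e}$), giving $\ell kh\ge \ell k/(4e)$. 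Combining these two estimates produces the claimed lower bound $n+\frac{\ell k}{4e}-(1+\alpha)k-n^{-\Theta(1)}$.

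The step I expect to be the main obstacle is making the per-bin captured-mass estimate rigorous: one must control the aggregate effect of the up-to-$n$ small cross-bin inner products that feed into each $\norm{Aw_j}_2^2$, as well as the deviation of $\norm{w_j}_2^2$ from $t_j\ell+m_j$, and show that after summing over all $k$ bins the accumulated error is still only $n^{-\Theta(1)}$; this is precisely where the strong assumption $d=\Omega(n^6\ell^2)$ is consumed. The remaining pieces—the scalar inequality $h\ge\frac1{4e}$ on $\alpha\in(0.7,1)$ and the standard bound $(1-1/k)^s\ge e^{-s/(k-1)}$—are routine.
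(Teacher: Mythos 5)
Your proof is correct, and it rests on the same two pillars as the paper's own argument: the balls-and-bins estimate $\E[H]\le k\bigl(1-e^{-s/(k-1)}\bigr)$ for the number $H$ of bins receiving a heavy row, and the almost-orthogonality projection machinery (Observation~\ref{obser:norm}, Lemma~\ref{lem:almost-orth}, Claims~\ref{clm:proc-not-cont} and~\ref{clm:proc-cont}) whose error terms are controlled by $d=\Omega(n^6\ell^2)$. What differs is the decomposition. The paper lower-bounds the loss directly and splits it by row type: a claim that $c$ heavy rows sharing a bin lose at least $(c-1)\ell-O(n^{-1})$, plus the observation that each of the at least $n-s-k$ colliding light rows loses $1-O(n^{-1})$; summing gives $\ell(s-H)+(n-s-k)$ up to small error. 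You instead upper-bound the captured mass $\norm{AP_V}_F^2$ bin by bin through the single inequality $(t_j\ell^2+m_j)/(t_j\ell+m_j)\le\ell$ (and $\le 1$ when $t_j=0$) and subtract from $\norm{A}_F^2=n+\alpha k(\ell-1)$, obtaining $\ell(s-H)+(n-s-k+H)$ --- the same bound (in fact marginally tighter by $+H$), with the heavy/light case analysis folded into one line and the randomness entering only through $\E[H]$. The closing numerics are also equivalent but not identical: the paper uses $\alpha-1+e^{-\alpha}\ge \tfrac{1}{2e}$ and needs $k>4e$ to absorb a leftover $-\ell\alpha$, while you verify $h\ge\tfrac{1}{4e}$ directly (binding at $\alpha=0.7$, needing only $k\ge 4$); note that both proofs silently assume a lower bound on $k$ that the lemma statement omits. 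The caveat you flag yourself --- that the deviations of $\norm{w_j}_2^2$ and the cross-bin projections must aggregate to $n^{-\Theta(1)}$ over all $k$ bins and $n$ rows --- is real, but it is present to exactly the same degree in the paper's proof, and the bound $\bar{\eps} = O(\sqrt{\log n}/(n^3\ell))$ coming from $d=\Omega(n^6\ell^2)$ covers it in both arguments.
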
 
\begin{proof}
We can interpret the randomized construction of the CountSketch as a ``balls and bins'' experiment. In particular, considering the heavy rows, we compute the expected number of bins (i.e., rows in $S_r A$) that contain a heavy row. Note that the expected number of rows in $S_rA$ that do not contain any heavy row is $k\cdot (1 - {1\over k})^s \geq k\cdot e^{-{s \over k-1}}$. Hence, the number of rows in $S_r A$ that contain a heavy row of $A$ is at most $k(1 - e^{-{s \over k-1}})$. Thus, at least $s - k(1 - e^{-{s \over k-1}})$ heavy rows are not mapped to an isolated bin (i.e., they collide with some other heavy rows). Then, it is straightforward to show that the squared loss of each such row is at least $\ell-n^{-\O(1)}$. 
\begin{claim}
Suppose that heavy rows $A_{r_1} ,\cdots, A_{r_c}$ are mapped to the same bin via a CountSketch $S$. Then, the total squared distances of these rows from the subspace spanned by $SA$ is at least $(c-1)\ell - \O(n^{-1})$.
\end{claim}
\begin{proof}
Let $b$ denote the bin that contains the rows $A_{r_1}, \cdots, A_{r_c}$ and suppose that it has $c'$ light rows as well. Note that by Claim~\ref{clm:proc-not-cont} and Claim~\ref{clm:proc-cont}, the squared projection of each row $A_{r_i}$ onto the subspace spanned by the $k$ bins is at most 
\begin{align*}
&{\left\| A_{h_i} \right\|^4_2 \over \left\| w \right\|^2_2} + \O(n^{-1}) \\ 
\leq &{\ell^2 \over c \ell + c' - 2\bar{\eps} (c^2\ell + cc'\sqrt{\ell} + c'^2)} + \O(n^{-1}) \\
\leq &{\ell^2 \over c\ell -n^{-\O(1)}} + n^{-\O(1)} &&\rhd\text{by $\bar{\eps} \leq n^{-3}\ell^{-1}$} \\
\leq &{\ell^2 \over c^2 \ell^2} \cdot (c\ell + \O(n^{-1}) + \O(n^{-1}) \\
\leq &{\ell \over c} + \O(n^{-1})
\end{align*} 
Hence, the total squared loss of these $c$ heavy rows is at least $c\ell - c \cdot ({\ell \over c} + \O(n^{-1})) \geq (c-1)\ell - \O(n^{-1})$.
\end{proof}
Thus, the expected total squared loss of the heavy rows is at least:
\begin{align*}
&\ell \cdot (s - k (1 - e^{- {s\over k-1}})) - s \cdot n^{-\O(1)} \\
\geq &\ell \cdot k(\alpha - 1 + e^{-\alpha}) - \ell \alpha - n^{-\O(1)}  &&\rhd s = \alpha\cdot (k-1) \text{ where $0.7<\alpha<1$} \\
\geq &{\ell k \over 2e} - \ell  - n^{-\O(1)} &&\rhd \alpha\geq 0.7 \\
\geq &{\ell k\over 4e} - \O(n^{-1}) &&\rhd \text{assuming $k>4e$} 
\end{align*} 

Next, we compute a lower bound on the expected squared loss of the light rows. Note that Claim~\ref{clm:proc-not-cont} and Claim~\ref{clm:proc-cont} imply that when a light row collides with other rows, its contribution to the total squared loss (note that the loss accounts for the amount it decreases from the squared projection of the other rows in the bin as well) is at least $1 - \O(n^{-1})$. Hence, the expected total squared loss of the light rows is at least: 
\begin{align*}
(n-s-k) (1 - \O(n^{-1})) \geq (n - (1+\alpha) \cdot k) - \O(n^{-1})
\end{align*} 
Hence, the expected squared loss of a CountSketch whose sparsity is picked at random is at least
\begin{align*}
{\ell k \over 4e} - \O(n^{-1}) + n - (1+\alpha)k - \O(n^{-1}) \geq n + {\ell k \over 4e} - (1+\alpha) k - \O(n^{-1}) 
\end{align*}
\end{proof}

\begin{corollary}\label{cor:comparison}
Let $s = \alpha (k-1)$ where $0.7<\alpha < 1$ and let $\ell \geq {(4e+1)n\over \alpha k}$. 
Let $S_g$ be the CountSketch whose sparsity pattern is learned over a training set drawn from $\sA_{sp}$ via the greedy approach. Let $S_r$ be a CountSketch whose sparsity pattern is picked uniformly at random. Then, for an $n\times d$ matrix $A\sim \sA_{sp}$ where $d = \Omega(n^6 \ell^2)$, the expected loss of the best $\rankk$ approximation of $A$ returned by $S_r$ is worse than the approximation loss of the best $\rankk$ approximation of $A$ returned by $S_g$ by at least a constant factor. 
\end{corollary}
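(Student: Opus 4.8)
The plan is to read off the desired separation directly from the two bounds already proved for the two candidate sparsity patterns. By Corollary~\ref{cor:greedy-cost}, on the high-probability event that $v_1,\dots,v_n$ are $\bar\eps$-almost orthogonal (Observation~\ref{obsr:dot-product}), the greedy sketch isolates all heavy rows and hence
\[
\min_{\rankk X\in\rowsp(S_gA)}\norm{X-A}_F^2 \le n-s = n-\alpha(k-1),
\]
whereas Lemma~\ref{lem:random-pattern} gives, in expectation over the random sparsity pattern,
\[
\E\Big[\min_{\rankk X\in\rowsp(S_rA)}\norm{X-A}_F^2\Big] \ge n+\frac{\ell k}{4e}-(1+\alpha)k-n^{-\O(1)}.
\]
I would first note that the dimension hypothesis $d=\Omega(n^6\ell^2)$ (taken large enough to also cover the $d=\Omega(n^4k^4\log n)$ requirement of Corollary~\ref{cor:greedy-cost}) makes both statements available simultaneously, so it remains a purely arithmetic task to compare the two quantities.

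Next I would bound the additive gap $\Delta$ between the random lower bound and the greedy upper bound. Substituting $s=\alpha(k-1)$ and canceling, the terms linear in $k$ collapse:
\[
\Delta \ge \Big(n+\tfrac{\ell k}{4e}-(1+\alpha)k-n^{-\O(1)}\Big)-\big(n-\alpha(k-1)\big) = \frac{\ell k}{4e}-k-\alpha-n^{-\O(1)}.
\]
Now I would invoke the hypothesis $\ell\ge (4e+1)n/(\alpha k)$, which gives $\tfrac{\ell k}{4e}\ge \tfrac{(4e+1)n}{4e\alpha}$; since $\alpha<1$ this exceeds $\big(1+\tfrac{1}{4e}\big)n$. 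Using the trivial bounds $k\le n$ and $\alpha<1$, this yields
\[
\Delta \ge \Big(1+\tfrac{1}{4e}\Big)n-n-1-n^{-\O(1)} = \frac{n}{4e}-\O(1) = \Omega(n).
\]
The whole point of the $\ell$-scaling in the hypothesis is precisely to force this: the quadratic ``collision penalty'' $\tfrac{\ell k}{4e}=\Omega(n)$ paid by the random sketch strictly dominates the $\Theta(n)$ baseline loss that \emph{both} sketches incur on the light rows.

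Finally I would turn the additive gap into a multiplicative one. Since the greedy cost is at most $n-s\le n$,
\[
\frac{\E\big[\min_{\rankk X\in\rowsp(S_rA)}\norm{X-A}_F^2\big]}{\min_{\rankk X\in\rowsp(S_gA)}\norm{X-A}_F^2}\ge 1+\frac{\Delta}{n-s}\ge 1+\frac{\Omega(n)}{n}=1+\Omega(1),
\]
so the random sketch is worse by a factor bounded below by an absolute constant (one can extract $\eta\approx 1/(8e)$ for large $n$), as claimed. I do not anticipate a conceptual obstacle here --- the creative work was done in Corollary~\ref{cor:greedy-cost} and Lemma~\ref{lem:random-pattern}. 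The only care needed is bookkeeping: verifying that $d=\Omega(n^6\ell^2)$ together with $\ell\gtrsim n/k$ indeed dominates $n^4k^4\log n$ (which holds whenever $k=\O(n^{2/3}/\polylog n)$, the meaningful regime for a heavy/light split with $s<k\ll n$), and confirming that the lower-order $n^{-\O(1)}$ slack is negligible against the $\Omega(n)$ main term.
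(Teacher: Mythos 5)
Your proposal is correct and follows essentially the same route as the paper: both combine the lower bound of Lemma~\ref{lem:random-pattern} for $S_r$ with the upper bound $n-s$ of Corollary~\ref{cor:greedy-cost} for $S_g$, then use the hypothesis $\ell \geq \frac{(4e+1)n}{\alpha k}$ arithmetically to separate them (the paper chains the inequalities directly to obtain the explicit factor $1+1/\alpha$, whereas you pass through an additive gap $\Delta = \Omega(n)$ and divide, yielding a factor of roughly $1+\frac{1}{4e}$ --- a cosmetic difference only). Your extra bookkeeping on reconciling the two dimension hypotheses ($d=\Omega(n^6\ell^2)$ versus $d=\Omega(n^4k^4\log n)$) is a point the paper glosses over, and is a welcome clarification rather than a deviation.
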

\begin{proof}
\begin{align*}
\E_{S_r}[\min_{\rankk X\in \rowsp(S_r A)} \left\|X - A\right\|_F^2]  
& \geq n + {\ell k \over 4e} - (1+\alpha) k - n^{-\O(1)} &&\rhd \text{Lemma~\ref{lem:random-pattern}}\\ 
& \geq (1+1/\alpha) (n-s) &&\rhd \ell \geq {(4e+1)n\over \alpha k} \\
& = (1+1/\alpha) \min_{\rankk X\in \rowsp(S_g A)} \left\|X - A\right\|_F^2 &&\rhd \text{Corollary~\ref{cor:greedy-cost}}\\
\end{align*}
\end{proof}

\subsection{Zipfian on squared row norms.}\label{sec:greedy_zipfian}
Each matrix $A \in \mathbb{R}^{n \times d} \sim \sA_{zipf}$ has rows which are uniformly random and orthogonal. Each $A$ has $2^{i+1}$ rows of squared norm $n^2/2^{2i}$ for $i \in [1, \ldots, \O(\log(n))]$.
We also assume that each row has the same squared norm for all members of $\sA_{zipf}$. 

In this section, the $s$ rows with largest norm are called the {\em heavy} rows and the remaining are the {\em light} rows. 
For convenience, we number the heavy rows $1, \ldots, s$; however, the heavy rows can appear at any indices, as long as any row of a given index has the same norm for all members of $\sA_{zipf}$. 
Also, we assume that $s \leq k/2$ and, for simplicity, $s = \sum_{i=1}^{h_s} 2^{i+1}$ for some $h_s \in \mathbb{Z}^{+}$. That means the minimum squared norm of a heavy row is $n^2/2^{2h_s}$ and the maximum squared norm of a light row is $n^2/{2^{2h_s+2}}$. 

The analysis of the greedy algorithm ordered by non-increasing row norms on this family of matrices is similar to our analysis for the spiked covariance model. Here we analyze the case in which rows are orthogonal. By continuity, if the rows are close enough to being orthogonal, all decisions made by the greedy algorithm will be the same. 

As a first step, by Lemma~\ref{lem:heavy-rows}, at the end of iteration $k$ the first $k$ rows are assigned to different bins. Then, via a similar inductive proof, we show that none of the light rows are mapped to a bin that contains one of the top $s$ heavy rows.

\begin{lemma}\label{lem:zipf-no-collision-light}
At each iteration $k+r$, the greedy algorithm picks the position of the non-zero value in the $(k+r)$-th column of the CountSketch matrix $S$ so that the light row $A_{k+r}$ is mapped to a bin that does not contain any of top $s$ heavy rows.
\end{lemma}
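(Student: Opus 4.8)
The plan is to mirror the inductive argument of Lemma~\ref{lem:light-rows}, but to exploit the \emph{exact} orthogonality of the rows of $A\sim\sA_{zipf}$ (the reduction from the near-orthogonal random case being handled by the continuity remark already made) to get a closed form for the greedy objective. First I would record the following consequence of orthogonality. Since each row $A_i$ is hashed into a single bin and distinct rows are orthogonal, the bin vectors $w_1,\dots,w_k$ (the rows of $SA$) are mutually orthogonal, and by Lemma~\ref{lem:heavy-rows} every bin is nonempty after iteration $k$, so $\rank(SA)=k$. Writing $m_i=\norm{A_i}_2^2$, and $P_b=\sum_{A_i\in b}\norm{A_i}_2^2$, $Q_b=\sum_{A_i\in b}\norm{A_i}_2^4$ for a bin $b$, the projection of $A_i\in b$ onto $\rowsp(SA)$ equals its projection onto $w_b/\norm{w_b}_2$, of squared length $\norm{A_i}_2^4/P_b$; hence the quantity the greedy algorithm maximizes, $\sum_{i}\sum_{j\in[k]}\langle A_i,v_j\rangle^2$, equals $\sum_{b} Q_b/P_b$.

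Next I would compute the marginal effect of one assignment. Processing the light row $A_{k+r}$ of squared norm $a=\norm{A_{k+r}}_2^2$, adding it to a bin $b$ changes that bin's contribution from $Q_b/P_b$ to $(Q_b+a^2)/(P_b+a)$ and leaves all other bins unchanged (again by orthogonality), so the objective \emph{decreases} by
\[
\abs{\Delta_b}=\frac{Q_b}{P_b}-\frac{Q_b+a^2}{P_b+a}=\frac{a(Q_b-aP_b)}{P_b(P_b+a)}=\frac{a(\mu_b-a)}{P_b+a},
\]
where $\mu_b=Q_b/P_b$ is the size-biased mean of the squared row norms in $b$. Because the greedy order is non-increasing in row norm, every row placed before $A_{k+r}$ has squared norm at least $a$, so $\mu_b\ge a$ and each $\abs{\Delta_b}\ge0$; the greedy step therefore selects the bin \emph{minimizing} $\abs{\Delta_b}$.

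The induction hypothesis is that through iteration $k+r-1$ no light row has been placed in a bin containing one of the top $s$ heavy rows, the base case being Lemma~\ref{lem:heavy-rows}. Under this hypothesis each heavy bin contains exactly its heavy row, of squared norm $H\ge H_{\min}=n^2/2^{2h_s}$, giving cost $\abs{\Delta_{b_H}}=a(H-a)/(H+a)$. For an arbitrary light-only bin $b_L$, all its rows have squared norm at most the largest light value $H_{\min}/4=n^2/2^{2h_s+2}$, so $\mu_{b_L}\le H_{\min}/4<H_{\min}\le H$; moreover $P_{b_L}=\sum m_i\ge (\sum m_i^2)/(\sum m_i)=\mu_{b_L}$ since $(\sum m_i)^2\ge\sum m_i^2$. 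Substituting $P_{b_L}\ge\mu_{b_L}$ into the cost formula yields $\abs{\Delta_{b_L}}\le a(\mu_{b_L}-a)/(\mu_{b_L}+a)$, and since $x\mapsto a(x-a)/(x+a)$ is increasing and $\mu_{b_L}<H$, we get $\abs{\Delta_{b_L}}<\abs{\Delta_{b_H}}$ for \emph{every} heavy bin $b_H$. Because at least $k-s\ge k/2\ge1$ light-only bins exist, the global minimizer of $\abs{\Delta_b}$ is attained at a light bin, so greedy assigns $A_{k+r}$ away from every heavy bin, closing the induction.

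The step I expect to be the crux is the second inequality of the last paragraph: a priori a light-only bin that has already absorbed many light rows might be a worse target than a heavy singleton. The bound $P_{b_L}\ge\mu_{b_L}$ is exactly what rules this out — enlarging a bin can only shrink the marginal cost $a(\mu_b-a)/(P_b+a)$ through its denominator — so the gap between the largest light squared norm and the smallest heavy squared norm (a factor $4$ here) suffices to make every light bin strictly cheaper than every heavy bin, \emph{uniformly} over the greedy history. The remaining ingredients (the orthogonal closed form, the non-increasing ordering, and the base case from Lemma~\ref{lem:heavy-rows}) are then routine.
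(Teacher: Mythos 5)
Your proof is correct, and while it shares the paper's overall skeleton (induction on iterations, base case Lemma~\ref{lem:heavy-rows}, heavy bins remaining singletons by the inductive hypothesis, and a comparison of marginal costs), it closes the crucial light-versus-heavy comparison by a genuinely different and in fact stronger argument. Both you and the paper arrive at the same exact marginal increase $a(\mu_b-a)/(P_b+a)$ for adding $A_{k+r}$ to bin $b$ (in your notation $a=\norm{A_{k+r}}_2^2$, $P_b=\sum_{A_i\in b}\norm{A_i}_2^2$, $\mu_b=Q_b/P_b$), which for a heavy singleton equals $g(H)$ with $g(x)=a(x-a)/(x+a)$. The paper then relaxes $\mu_b\le P_b$, bounding the light-bin increase by $g(P_b)$, so a light bin is only guaranteed cheaper than a heavy bin when its \emph{total load} satisfies $P_b\le H$; to produce such a bin the paper runs an averaging argument over the $k-s$ light-only bins that leans on the Zipfian norm structure, $s\le k/2$, $k>2^{h_k+1}$ and $h_k\ge h_s+1$, concluding merely that \emph{some} light bin beats every heavy bin. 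You instead keep $\mu_b$ in the numerator and use $P_b\ge\mu_b$ only in the denominator, obtaining the bound $g(\mu_b)$ with $\mu_b$ at most the largest light squared norm $n^2/2^{2h_s+2}<H_{\min}$; monotonicity of $g$ then makes \emph{every} light bin strictly cheaper than \emph{every} heavy bin, uniformly over the greedy history, so no existence/averaging step and none of those quantitative assumptions (beyond $s<k$ and the light/heavy norm gap) are needed. Your size-biased-mean refinement is the cleaner route: it proves a stronger invariant, would simplify the paper's own proof, and extends verbatim to any family of orthogonal rows with a norm gap between heavy and light rows, not just the Zipfian one.
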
 
\begin{proof}
We prove the statement by induction. The base case $r=0$ trivially holds as the first $k$ rows are assigned to distinct bins. Next we assume that in none of the first $k+r-1$ iterations a light row is assigned to a bin that contains a heavy row. Now, we consider the following cases:

\paragraph{1. If $A_{k+r}$ is assigned to a bin that only contains light rows.}
Without loss of generality we can assume that $A_{k+r}$ is assigned to $b_k$. Since the vectors are orthogonal, we only need to bound the difference in the projection of $A_{k+r}$ and the light rows that are assigned to $b_k$ onto the direction of $w_k$ before and after adding $A_{k+r}$ to $b_k$. 
In this case, the total squared loss corresponding to rows in $b_k$ and $A_{k+r}$ before and after adding $A_{k+1}$ are respectively 
\begin{align*}
\textit{before adding $A_{k+r}$ to $b_k$: } 
&\left\| A_{k+r} \right\|^2_2 + \sum_{A_j \in b_k} \left\| A_{j} \right\|^2_2 - ({ \sum_{A_j \in b_k} \left\| A_{j} \right\|^4_2\over \sum_{A_j \in b_k} \left\| A_{j} \right\|^2_2 }) \\
\textit{after adding $A_{k+r}$ to $b_k$: } 
&\left\| A_{k+r} \right\|^2_2 + \sum_{A_j \in b_k} \left\| A_{j} \right\|^2_2 - ({\left\| A_{k+r} \right\|^4_2 +  \sum_{A_j \in b_k} \left\| A_{j} \right\|^4_2\over \left\| A_{k+r} \right\|^2_2 + \sum_{A_j \in b_k} \left\| A_{j} \right\|^2_2 })
\end{align*}
Thus, the amount of increase in the squared loss is 
\begin{align}
({ \sum_{A_j \in b_k} \left\| A_{j} \right\|^4_2\over \sum_{A_j \in b_k} \left\| A_{j} \right\|^2_2 }) - ({\left\| A_{k+r} \right\|^4_2 + \sum_{A_j \in b_k} \left\| A_{j} \right\|^4_2\over \left\| A_{k+r} \right\|^2_2 + \sum_{A_j \in b_k} \left\| A_{j} \right\|^2_2 }) 
&= {\left\| A_{k+r} \right\|^2_2 \cdot \sum_{A_j \in b_k} \left\| A_{j} \right\|^4_2 - \left\| A_{k+r} \right\|^4_2 \cdot \sum_{A_j \in b_k} \left\| A_{j} \right\|^2_2 \over (\sum_{A_j \in b_k} \left\| A_{j} \right\|^2_2) (\left\| A_{k+r} \right\|^2_2 + \sum_{A_j \in b_k} \left\| A_{j} \right\|^2_2)} \nonumber \\
&= \left\| A_{k+r} \right\|^2_2 \cdot { {\sum_{A_j \in b_k} \left\| A_{j} \right\|^4_2 \over \sum_{A_j \in b_k} \left\| A_{j} \right\|^2_2} - \left\| A_{k+r} \right\|^2_2  \over \sum_{A_j \in b_k} \left\| A_{j} \right\|^2_2 + \left\| A_{k+r} \right\|^2_2} \nonumber \\
&\leq \left\| A_{k+r} \right\|^2_2 \cdot { \sum_{A_j \in b_k} \left\| A_{j} \right\|^2_2 - \left\| A_{k+r} \right\|^2_2  \over \sum_{A_j \in b_k} \left\| A_{j} \right\|^2_2 + \left\| A_{k+r} \right\|^2_2} \label{eq:light} 
\end{align}

\paragraph{2. If $A_{k+r}$ is assigned to a bin that contains a heavy row.}  
Without loss of generality and by the induction hypothesis, we assume that $A_{k+r}$ is assigned to a bin $b$ that only contains a heavy row $A_j$. 
Since the rows are orthogonal, we only need to bound the difference in the projection of $A_{k+r}$ and $A_j$ 
In this case, the total squared loss corresponding to $A_j$ and $A_{k+r}$ before and after adding $A_{k+1}$ to $b$ are respectively 
\begin{align*}
\textit{before adding $A_{k+r}$ to $b_k$: } 
&\left\| A_{k+r} \right\|^2_2  \\
\textit{after adding $A_{k+r}$ to $b_k$: } 
&\left\| A_{k+r} \right\|^2_2 + \left\| A_{j} \right\|^2_2 - ({\left\| A_{k+r} \right\|^4_2 +  \left\| A_{j} \right\|^4_2\over \left\| A_{k+r} \right\|^2_2 + \left\| A_{j} \right\|^2_2 })
\end{align*}
Thus, the amount of increase in the squared loss is 
\begin{align}
\left\| A_{j} \right\|^2_2 - ({\left\| A_{k+r} \right\|^4_2 +  \left\| A_{j} \right\|^4_2\over \left\| A_{k+r} \right\|^2_2 + \left\| A_{j} \right\|^2_2 }) = \left\| A_{k+r} \right\|^2_2 \cdot {\left\| A_{j} \right\|^2_2 - \left\| A_{k+r} \right\|^2_2 \over  \left\| A_{j} \right\|^2_2 + \left\| A_{k+r} \right\|^2_2} \label{eq:heavy}
\end{align}
Then~\eqref{eq:heavy} is larger than~\eqref{eq:light} if $\left\| A_{j} \right\|^2_2 \geq \sum_{A_i \in b_k} \left\| A_i \right\|^2_2$. Next, we show that at every inductive iteration, there exists a bin $b$ which only contains light rows and whose squared norm is smaller than the squared norm of any heavy row. For each value $m$, define $h_m$ so that 
$m = \sum_{i=1}^{h_m} 2^{i+1} = 2^{h_m+2} - 2 $. 

Recall that all heavy rows have squared norm at least ${n^2 \over 2^{2h_s}}$. There must be a bin $b$ that only contains light rows and has squared norm at most 
\begin{align*}
\left\| w \right\|^2_2 = \sum_{A_i \in b} \left\| A_{i} \right\|^2_2
&\leq {n^2 \over 2^{2(h_s+1)}} + {\sum_{i=h_k+1}^{h_n} {2^{i+1} n^2 \over 2^{2i}}\over k-s} \\
&\leq {n^2 \over 2^{2(h_s+1)}} + {2 n^2 \over 2^{h_k} (k-s)} \\
&\leq {n^2 \over 2^{2(h_s+1)}} + {n^2 \over 2^{2h_k}} &&\rhd s \leq k/2 \text{ and } k> 2^{h_k +1}\\
&\leq {n^2 \over 2^{2h_s+1}} &&\rhd h_k \geq h_s+1\\
&< \left\| A_s \right\|^2_2
\end{align*} 
Hence, the greedy algorithm will map $A_{k+r}$ to a bin that only contains light rows.
\end{proof}

\begin{corollary}\label{cor:zipf-greedy-cost}
The squared loss of the best $\rankk$ approximate solution in the rowspace of  $S_gA$ for $A \in \mathbb{R}^{n \times d} \sim \sA_{zipf}$ where $A\in \mathbb{R}^{n\times d}$ and $S_g$ is the CountSketch constructed by the greedy algorithm with non-increasing order, is $<{n^2 \over 2^{h_k -2}}$.
\end{corollary}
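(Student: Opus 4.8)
The plan is to reduce the rank-$k$ projection error to a sum of independent per-bin contributions and then bound each contribution by the squared norm of the light rows it absorbs. First I would pin down the final configuration of bins produced by the greedy algorithm. By Lemma~\ref{lem:heavy-rows}, after the first $k$ iterations the $k$ largest-norm rows (groups $1,\dots,h_k$) each sit in their own bin, and by Lemma~\ref{lem:zipf-no-collision-light} every subsequently processed light row (rows $k+1,\dots,n$, i.e.\ groups $i>h_k$) is routed to a bin whose seed is one of these top-$k$ rows and never into one of the $s$ heavy singleton bins. Since the greedy order is non-increasing in norm, the seed row is the largest row in its bin. Thus each of the $k$ bins is anchored by a distinct top-$k$ row, the $s$ heavy bins remain singletons, and the non-anchor rows across all bins are exactly rows $k+1,\dots,n$.

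Next I would compute the loss using orthogonality. Since $\rank(S_gA)=k$, the best $\rankk$ solution in $\rowsp(S_gA)=\Span(w_1,\dots,w_k)$ is the orthogonal projection onto this space, with error $\norm{AVV^\top-A}_F^2$. Rows assigned to distinct bins are mutually orthogonal, so each $w_i$ lies in a subspace orthogonal to every $w_{i'}$, $i'\neq i$; hence the projection of any row onto $\rowsp(S_gA)$ equals its projection onto $\Span(w_i)$ for its own bin $i$, and the total error splits as a sum over bins. For a bin containing rows of squared norms $r_1\geq\cdots\geq r_t$, orthogonality gives $\langle A_\ell,w_i\rangle^2=r_\ell^2$ and $\norm{w_i}_2^2=\sum_\ell r_\ell$, so the captured mass is $\sum_\ell r_\ell^2/\sum_\ell r_\ell$ and the per-bin loss is $\sum_\ell r_\ell-\frac{\sum_\ell r_\ell^2}{\sum_\ell r_\ell}=\frac{2\sum_{\ell<\ell'}r_\ell r_{\ell'}}{\sum_\ell r_\ell}$.

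Then I would bound each per-bin loss cleanly. Grouping the cross terms by the larger index and using $\sum_{\ell<\ell'}r_\ell\leq\sum_\ell r_\ell$ yields the per-bin bound $2\sum_{\ell\geq 2}r_\ell$, i.e.\ twice the total squared norm of the non-anchor rows in that bin (so the heavy singleton bins contribute $0$). Summing over the $k$ bins and using that the non-anchor rows are precisely rows $k+1,\dots,n$, the total loss is at most $2\sum_{j=k+1}^n\norm{A_j}_2^2=4n^2\sum_{i>h_k}2^{-i}<4n^2\cdot 2^{-h_k}=n^2/2^{h_k-2}$, where the strict inequality comes from truncating the geometric series at the last nonempty group. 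This is exactly the claimed bound, and since the optimal $\rankk$ error is $n^2/2^{h_k-1}$, it also certifies that greedy is within a factor $2$ of optimal.

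The one step needing care rather than routine calculation is the per-bin loss estimate: one must verify that the anchor is genuinely the largest row (so the $r_1$ term is legitimately excluded) and that the factor $2$ in $2\sum_{\ell\geq 2}r_\ell$ is unavoidable — replacing it by $\sum_{\ell\geq 2}r_\ell$ is \emph{false}, and it is precisely this factor that produces $2^{h_k-2}$ rather than $2^{h_k-1}$. Everything else — the orthogonal decomposition and the geometric summation — is mechanical once the final bin structure from Lemmas~\ref{lem:heavy-rows} and~\ref{lem:zipf-no-collision-light} is in hand.
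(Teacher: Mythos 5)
Your proof is correct and lands on the same two facts as the paper's proof: the total loss is at most twice the light-row mass, $2\sum_{j>k}\norm{A_j}_2^2$, and this geometric sum is strictly below $n^2/2^{h_k-2}$. The bookkeeping, however, is genuinely different. The paper's argument is dynamic: after iteration $k$ the loss equals the light mass (Lemma~\ref{lem:heavy-rows}), and each later insertion raises the loss by at most the inserted row's squared norm, via inequality~\eqref{eq:light} established inside the proof of Lemma~\ref{lem:zipf-no-collision-light}; summing the increments gives the factor $2$. Yours is static: you compute from the final partition the exact per-bin loss $\sum_\ell r_\ell-(\sum_\ell r_\ell^2)/(\sum_\ell r_\ell)$ and bound it by twice the bin's non-anchor mass. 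The two accountings coincide (your per-bin loss is the telescoped sum of the paper's per-iteration increments for that bin), but yours has two small advantages: it is self-contained rather than reaching into another lemma's proof for~\eqref{eq:light}, and it makes visible that the upper bound needs only Lemma~\ref{lem:heavy-rows} together with the non-increasing processing order --- your bound $2\sum_{\ell\ge 2}r_\ell$ excludes each bin's largest row, and those largest rows are exactly the top-$k$ rows no matter which bins the light rows later join, so Lemma~\ref{lem:zipf-no-collision-light} is not actually required for this direction. Your tightness remark is also right: with unequal norms in a bin the loss strictly exceeds the non-anchor mass, so the factor $2$ (and hence the exponent $h_k-2$ rather than $h_k-1$) is intrinsic. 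One harmless imprecision in your closing aside: the optimal $\rankk$ error is $2n^2(2^{-h_k}-2^{-h_n})$, strictly less than $n^2/2^{h_k-1}$, so greedy is within a factor of roughly, not exactly, $2$ of optimal.
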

\begin{proof}
At the end of iteration $k$, the total squared loss is $\sum_{i=h_k +1}^{h_n} 2^{i+1} \cdot {n^2 \over 2^{2i}}$. After that, in each iteration $k+r$, by~\eqref{eq:light}, the squared loss increases by at most $\left\| A_{k+r} \right\|^2_2$. Hence, the total squared loss in the solution returned by $S_g$ is at most  
\begin{align*}
2 (\sum_{i=h_k +1}^{h_n} {2^{i+1} n^2 \over 2^{2i}}) 
= {4n^2} \cdot \sum_{i=h_k +1}^{h_n} {1 \over 2^i} < {4n^2 \over 2^{h_k}} = {n^2 \over 2^{h_k -2}}
\end{align*}  
\end{proof}

Next, we bound the squared loss of the best $\rankk$-approximate solution constructed by the standard CountSketch with a randomly chosen sparsity pattern.
\begin{observation}\label{lem:lower-bound-loss}
Let us assume that the orthogonal rows $A_{r_1}, \cdots, A_{r_c}$ are mapped to the same bin and for each $i\leq c$, $\left\|A_{r_1}\right\|_2^2 \geq \left\|A_{r_i}\right\|_2^2$. Then, the total squared loss of $A_{r_1}, \cdots, A_{r_c}$ after projecting onto $A_{r_1} \pm \cdots \pm A_{r_c}$ is at least $\left\| A_{r_2} \right\|^2_2 + \cdots + \left\| A_{r_c} \right\|^2_2$. 
\end{observation}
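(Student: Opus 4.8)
The plan is to evaluate the total squared projection of $A_{r_1},\dots,A_{r_c}$ onto the one-dimensional span of the combined vector $w := A_{r_1}\pm\cdots\pm A_{r_c} = \sum_{j=1}^c g_j A_{r_j}$, where $g_j\in\{\pm 1\}$, and then subtract it from $\sum_{i=1}^c \left\|A_{r_i}\right\|_2^2$ to read off the loss. First I would exploit the pairwise orthogonality of the rows together with $g_j^2 = 1$ to obtain $\left\|w\right\|_2^2 = \sum_{j=1}^c \left\|A_{r_j}\right\|_2^2$ and $\langle A_{r_i}, w\rangle = g_i \left\|A_{r_i}\right\|_2^2$. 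Writing $\hat w = w/\left\|w\right\|_2$, the squared projection coefficient of $A_{r_i}$ is therefore $\langle A_{r_i}, \hat w\rangle^2 = \left\|A_{r_i}\right\|_2^4/\left\|w\right\|_2^2$, and summing over $i$ gives the total captured energy $\sum_{i=1}^c \left\|A_{r_i}\right\|_2^4 \big/ \sum_{j=1}^c \left\|A_{r_j}\right\|_2^2$.

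Then I would set $a_i := \left\|A_{r_i}\right\|_2^2$, so that the whole statement reduces to the elementary inequality $\sum_{i=1}^c a_i^2 \le a_1 \sum_{i=1}^c a_i$, which holds term-by-term because the hypothesis $\left\|A_{r_1}\right\|_2^2\ge\left\|A_{r_i}\right\|_2^2$ gives $a_i\le a_1$ for every $i$. This shows the captured energy is at most $a_1 = \left\|A_{r_1}\right\|_2^2$. Consequently the total loss satisfies
\[
\sum_{i=1}^c \left\|A_{r_i}\right\|_2^2 - \frac{\sum_{i=1}^c \left\|A_{r_i}\right\|_2^4}{\sum_{j=1}^c \left\|A_{r_j}\right\|_2^2} \ge \sum_{i=1}^c \left\|A_{r_i}\right\|_2^2 - \left\|A_{r_1}\right\|_2^2 = \sum_{i=2}^c \left\|A_{r_i}\right\|_2^2,
\]
which is exactly the claimed bound.

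The proof has no real obstacle: the single point requiring care is to use orthogonality correctly when evaluating $\langle A_{r_i}, w\rangle$ and $\left\|w\right\|_2$, after which the identification of the captured energy as $\sum_i a_i^2/\sum_j a_j$ makes the bound transparent, since the crux is simply that this quantity never exceeds $\max_i a_i$. One minor subtlety worth flagging is that the sign pattern $g_j$ (i.e.\ which $\pm$ is chosen) is irrelevant: because $g_j^2=1$, both $\left\|w\right\|_2^2$ and each $\langle A_{r_i},\hat w\rangle^2$ are independent of the signs, so the same bound holds uniformly over all $2^c$ possible combined vectors.
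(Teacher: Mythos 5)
Your proof is correct and follows essentially the same route as the paper's: compute each squared projection coefficient $\left\|A_{r_i}\right\|_2^4\big/\sum_j \left\|A_{r_j}\right\|_2^2$ via orthogonality, bound the total captured energy by $\left\|A_{r_1}\right\|_2^2$ using $a_i \le a_1$, and subtract. You merely spell out the intermediate steps (the inner-product computation, the sign-independence, and the elementary inequality $\sum_i a_i^2 \le a_1\sum_i a_i$) that the paper leaves implicit.
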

\begin{proof}
Note that since $A_{r_1}, \cdots, A_{r_c}$ are orthogonal, for each $i\leq c$, the squared projection of $A_{r_i}$ onto $A_{r_1} \pm \cdots \pm A_{r_c}$ is ${\left\| A_{r_i} \right\|^4_2 / \sum_{j=1}^c \left\| A_{r_j} \right\|^2_2}$. Hence, the sum of squared projection coefficients of $A_{r_1}, \cdots, A_{r_c}$ onto $A_{r_1} \pm \cdots \pm A_{r_c}$ is 
\begin{align*}
{\sum_{j=1}^c \left\| A_{r_j} \right\|^4_2 \over \sum_{j=1}^c \left\| A_{r_j} \right\|^2_2} \leq \left\| A_{r_1} \right\|^2_2 
\end{align*}
Hence, the total projection loss of $A_{r_1}, \cdots, A_{r_c}$ onto $A_{r_1} \pm \cdots \pm A_{r_c}$ is at least 
\begin{align*}
\sum_{j=1}^c \left\| A_{r_j} \right\|^2_2 - \left\| A_{r_1} \right\|^2_2 = \left\| A_{r_2} \right\|^2_2 + \cdots + \left\| A_{r_c} \right\|^2_2.
\end{align*}
\end{proof}
In particular, Observation~\ref{lem:lower-bound-loss} implies that whenever two rows are mapped into the same bin, the squared norm of the row with smaller norm {\em fully} contributes to the total squared loss of the solution.	

\begin{lemma}\label{lem:zipf-random-pattern}
For $k> 2^{10} -2$, the expected squared loss of the best $\rankk$ approximate solution in the rowspace of $S_r A$ for $A_{n\times d}\sim \sA_{zipf}$, where $S_r$ is the sparsity pattern of a CountSketch chosen uniformly at random, is at least ${1.095 n^2 \over 2^{h_k -2}}$.
\end{lemma}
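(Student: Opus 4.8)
The plan is to reduce the expected error to a clean sum over rows weighted by their collision probabilities, and then to lower-bound that sum by retaining only the light rows together with the few levels lying just below the heavy/light threshold.

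First I would set up the loss decomposition. Since the rows of $A$ are orthogonal, the bins produced by $S_r$ partition the rows into mutually orthogonal groups, so the combined vectors $w_b=\sum_{A_i\in b}\pm A_i$ are mutually orthogonal with $\|w_b\|_2^2=\sum_{A_i\in b}\|A_i\|_2^2$. Hence the projection of $A$ onto $\rowsp(S_rA)$ captures exactly $\sum_b\sum_{A_i\in b}\|A_i\|_2^4/\|w_b\|_2^2$, and the $\rankk$ restriction is vacuous because $\dim\rowsp(S_rA)\le k$. Using the elementary inequality $\sum_{A_i\in b}\|A_i\|_2^4/\|w_b\|_2^2\le \max_{A_i\in b}\|A_i\|_2^2$ (the computation underlying Observation~\ref{lem:lower-bound-loss}) yields the deterministic bound
\[
\min_{\rankk X\in\rowsp(S_rA)}\|X-A\|_F^2\ \ge\ W-\sum_b\max_{A_i\in b}\|A_i\|_2^2,\qquad W:=\sum_i\|A_i\|_2^2.
\]
Next I would take expectations via a balls-and-bins argument. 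Ordering the rows by non-increasing norm as $A_{(1)},A_{(2)},\dots$, a row is the maximum of its bin exactly when it is the first of the heavier-or-equal rows to be hashed there, which happens with probability $(1-1/k)^{j-1}$ for the $j$-th row (ties broken by order-index). Thus $\E[\sum_b\max_{A_i\in b}\|A_i\|_2^2]=\sum_j\|A_{(j)}\|_2^2(1-1/k)^{j-1}$, and combining the two displays gives
\[
\E\Big[\min_{\rankk X\in\rowsp(S_rA)}\|X-A\|_F^2\Big]\ \ge\ \sum_j\|A_{(j)}\|_2^2\big(1-(1-1/k)^{j-1}\big).
\]

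Third, I would lower-bound this sum for $\sA_{zipf}$. Grouping by level, the rows of level $i$ (squared norm $n^2/2^{2i}$, total squared norm $2n^2/2^i$) occupy ranks $j-1\ge a_i:=2^{i+1}-4$, so using $(1-1/k)^{j-1}\le e^{-(j-1)/k}\le e^{-a_i/k}$, level $i$ contributes at least $(2n^2/2^i)(1-e^{-a_i/k})$ with $a_i/k\approx 2^{\,i-h_k-1}$. The light rows ($i>h_k$) all satisfy $a_i/k\ge 1$ and contribute at least $(1-1/e)\cdot(2n^2/2^{h_k})$, while the four sub-threshold levels $i=h_k,h_k-1,h_k-2,h_k-3$ each contribute at least $(2n^2/2^{h_k})\cdot 2^{\,h_k-i}\,(1-e^{-2^{\,i-h_k-1}})$; evaluating these at $a_i/k\approx 1/2,1/4,1/8,1/16$ and adding the light-row term produces a total of about $4.85\,n^2/2^{h_k}$, which exceeds the target $1.095\cdot(4n^2/2^{h_k})=1.095\,n^2/2^{h_k-2}$. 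The hypothesis $k>2^{10}-2$ guarantees that $h_k$ is large enough for these four levels to exist and for the approximations of $a_i/k$ and of $(1-1/k)^{j-1}$ by $e^{-(j-1)/k}$ to be accurate.

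The main obstacle is precisely this third step: making the constant $1.095$ come out rigorously. The slack between the clean target $4.38\,n^2/2^{h_k}$ and the value $\approx 4.85\,n^2/2^{h_k}$ from light rows plus four levels is only about ten percent, so I must control every lower-order effect — the ``$-4$'' offsets in $a_i$, the deviation of $a_{h_k}/k$ from exactly $1/2$, the deficit $1-2^{-(h_n-h_k)}$ in the light-row mass, and the gap between $(1-1/k)^{j-1}$ and $e^{-(j-1)/k}$ — and show each is absorbed by the slack once $k>2^{10}-2$. Everything else (the orthogonality bookkeeping, the balls-and-bins probability, and the geometric sums) is routine.
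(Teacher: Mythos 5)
Your proposal follows essentially the same route as the paper's proof. Both arguments rest on the observation that, by orthogonality, the loss within each bin is at least the total squared norm minus the largest squared norm in the bin (your inequality is exactly Observation~\ref{lem:lower-bound-loss}), and both then use a balls-and-bins computation of the probability that a row fails to be the heaviest in its bin, summing the resulting lost mass level by level. Your identity $\E[\sum_b \max_{A_i\in b}\|A_i\|_2^2]=\sum_j \|A_{(j)}\|_2^2(1-1/k)^{j-1}$ is a cleaner and per-row sharper packaging of what the paper does: the paper instead bounds, for each row at level $h_k-j$, the probability of colliding with a row from a strictly heavier level by $1-e^{-2^{-j-2}}$, sums $2^{j-1}(1-e^{-2^{-j-2}})$ over all heavy levels, and gets $\geq 1.095$ once at least nine levels are present (this is exactly where $k>2^{10}-2$ enters); light rows are ignored entirely.

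The one genuine problem is in your third step, and it is precisely the effect you dismiss as lower-order: the light-row mass deficit $1-2^{-(h_n-h_k)}$. The distribution permits $h_n=h_k+1$ (i.e., $n\approx 2k$), in which case the light rows form a single level $h_k+1$ of total squared norm $n^2/2^{h_k}$ rather than $2n^2/2^{h_k}$; your light-row term then drops from $\approx 1.26\,n^2/2^{h_k}$ to $\approx 0.63\,n^2/2^{h_k}$, and your total becomes roughly $3.53+0.63\approx 4.16 < 4.38$ in units of $n^2/2^{h_k}$. So this deficit is not absorbed by your $\approx 0.47$ slack: it is a factor-two loss on a term whose size exceeds the slack. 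The repair is immediate, and it is what the paper's proof effectively does: under $k>2^{10}-2$ there are at least nine heavy levels available, and each further level $i=h_k-j$ contributes roughly $2^{j+1}(1-e^{-2^{-j-1}})\cdot n^2/2^{h_k}$ (increasing toward $n^2/2^{h_k}$), so adding even the fifth level (contribution $\approx 0.92\,n^2/2^{h_k}$) restores the bound, and summing over all heavy levels renders the light rows unnecessary altogether. With that adjustment—budgeting over enough heavy levels instead of relying on the full light-row mass—your argument is correct.
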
 
\begin{proof}
In light of Observation~\ref{lem:lower-bound-loss}, we need to compute the expected number of collisions between rows with ``large'' norm. We can interpret the randomized construction of the CountSketch as a ``balls and bins'' experiment. 

For each $0\leq j\leq h_k$, let $\sA_j$ denote the set of rows with squared norm ${n^2 \over 2^{2(h_k -j)}}$ and let $\sA_{>j} = \bigcup_{j < i \leq h_k} \sA_i$. Note that for each $j$, $|\sA_{j}| = 2^{h_k -j +1}$ and $|\sA_{>j}| = \sum_{i=j+1}^{h_k} 2^{h_k - i +1} = \sum_{i=1}^{h_k-j} 2^{i} =2 (2^{h_k -j} - 1)$. Moreover, note that $k = 2 (2^{h_k +1} -1)$.
Next, for a row $A_r$ in $\sA_j$ ($0\leq j<h_k$), we compute the probability that at least one row in $\sA_{>j}$ collides with $A_r$.
\begin{align*}
\text{\bf Pr}[\text{at least one row in $\sA_{>j}$ collides with $A_r$}] 
&= (1 - (1-{1\over k})^{|\sA_{>j}|}) \\
&\geq (1 - e^{-{|\sA_{>j}|\over k}}) \\
&= (1 - e^{-{2^{h_k -j} -1 \over 2^{h_k+1} -1}}) \\
&\geq (1 - e^{-2^{-j-2}}) &&\rhd\text{since } {2^{h_k -j} -1 \over 2^{h_k+1} -1} > 2^{-j-2} 
\end{align*}   
Hence, by Observation~\ref{lem:lower-bound-loss}, the contribution of rows in $\sA_j$ to the total squared loss is at least
\begin{align*}
(1 - e^{-2^{-j-2}}) \cdot |\sA_j| \cdot {n^2 \over 2^{2(h_k -j)}} 
=& (1 - e^{-2^{-j-2}}) \cdot {n^2 \over 2^{h_k -j -1}}
= (1 - e^{-2^{-j-2}}) \cdot {n^2 \over 2^{h_k - 2}} \cdot 2^{j-1}
\end{align*}
Thus, the contribution of rows with ``large'' squared norm, i.e., $\sA_{>0}$, to the total squared loss is at least\footnote{The numerical calculation is computed using WolframAlpha.}
\begin{align*}
{n^2 \over 2^{h_k -2}} \cdot \sum_{j=0}^{h_k} 2^{j-1} \cdot (1- e^{-2^{-j-2}}) \geq 1.095 \cdot {n^2 \over 2^{h_k -2}} &&\rhd\text{for $h_k> 8$}
\end{align*}
\end{proof}

\begin{corollary}\label{cor:zipf-comparison}
Let $S_g$ be a CountSketch whose sparsity pattern is learned over a training set drawn from $\sA_{sp}$ via the greedy approach. Let $S_r$ be a CountSketch whose sparsity pattern is picked uniformly at random. Then, for an $n\times d$ matrix $A\sim \sA_{zipf}$, for a sufficiently large value of $k$, the expected loss of the best $\rankk$ approximation of $A$ returned by $S_r$ is worse than the approximation loss of the best $\rankk$ approximation of $A$ returned by $S_g$ by at least a constant factor. 
\end{corollary}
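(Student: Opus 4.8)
The plan is to mirror the argument used for the spiked covariance model in Corollary~\ref{cor:comparison}: I would combine a deterministic upper bound on the approximation cost achieved by the greedily learned sketch $S_g$ with a lower bound on the \emph{expected} cost of the randomly constructed sketch $S_r$, and then observe that both quantities are normalized by the same scale $n^2/2^{h_k-2}$, so their ratio is bounded below by an absolute constant strictly exceeding $1$. No new machinery should be needed; the entire content has already been localized into the two preceding results.

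First I would invoke Corollary~\ref{cor:zipf-greedy-cost}, which follows from the heavy-row isolation result (Lemma~\ref{lem:zipf-no-collision-light}, guaranteeing that the greedy algorithm with non-increasing norm order never places a light row into a bin containing one of the top-$s$ heavy rows). This yields
\[
\min_{\rankk X \in \rowsp(S_g A)} \left\| X - A \right\|_F^2 < \frac{n^2}{2^{h_k-2}},
\]
which is deterministic for a fixed $A \sim \sA_{zipf}$, since the row norms and their indices are fixed across the distribution. Next I would invoke Lemma~\ref{lem:zipf-random-pattern}, whose balls-and-bins analysis counts the expected number of collisions among the large-norm rows and applies Observation~\ref{lem:lower-bound-loss} (so that whenever a row collides with a larger-norm row, its full squared norm is charged to the loss), giving for $k > 2^{10}-2$ the matching lower bound
\[
\E_{S_r}\!\left[ \min_{\rankk X \in \rowsp(S_r A)} \left\| X - A \right\|_F^2 \right] \geq \frac{1.095\, n^2}{2^{h_k-2}}.
\]

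Finally I would chain the two displays to obtain
\[
\E_{S_r}\!\left[ \min_{\rankk X \in \rowsp(S_r A)} \left\| X - A \right\|_F^2 \right] \geq \frac{1.095\, n^2}{2^{h_k-2}} > 1.095 \cdot \min_{\rankk X \in \rowsp(S_g A)} \left\| X - A \right\|_F^2,
\]
so the expected loss of $S_r$ exceeds that of $S_g$ by at least the constant factor $1.095$, valid once $k$ is large enough (specifically $h_k>8$, i.e.\ $k>2^{10}-2$) for Lemma~\ref{lem:zipf-random-pattern} to apply. Given the two cited results, the corollary itself is immediate, so the real difficulty lies entirely upstream: the main obstacle is establishing the lower bound of Lemma~\ref{lem:zipf-random-pattern}, which requires both the correct collision-counting from the geometric decay $|\sA_{>j}|/k \gtrsim 2^{-j-2}$ and the numerical verification that $\sum_{j} 2^{j-1}\bigl(1-e^{-2^{-j-2}}\bigr) > 1.095$. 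The only point to double-check in assembling the corollary is that the greedy upper bound and the random lower bound share the identical normalization $n^2/2^{h_k-2}$, which is exactly what makes the constant-factor separation clean.
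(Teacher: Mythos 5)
Your proposal is correct and matches the paper's proof exactly: the paper also derives the corollary by chaining the greedy upper bound of Corollary~\ref{cor:zipf-greedy-cost} with the lower bound of Lemma~\ref{lem:zipf-random-pattern} on the expected loss of the random sketch, yielding the same constant-factor ($1.095$) separation. You are also right that the corollary itself is immediate and that all the substantive work lives in those two upstream results.
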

\begin{proof}
The proof follows from Lemma~\ref{lem:zipf-random-pattern} and Corollary~\ref{cor:zipf-greedy-cost}.
\end{proof}

\begin{remark}
We have provided evidence that the greedy algorithm that examines the rows of $A$ according to a non-increasing order of their norms (i.e., {\em greedy with non-increasing order}) results in a better $\rankk$ solution compared to the CountSketch whose sparsity pattern is chosen at random. However, still other implementations of the greedy algorithm may result in a better solution compared to the greedy algorithm with non-increasing order. To give an example, in the following simple instance the greedy algorithm that checks the rows of $A$ in a random order (i.e., {\em greedy with random order}) achieves a $\rankk$ solution whose cost is a constant factor better than the solution returned by the greedy with non-increasing order.

Let $A$ be a matrix with four orthogonal rows $u,u, v, w$ where $\left\| u \right\|_2 =1$ and $\left\| v \right\|_2 = \left\| w \right\|_2 =1+\eps$ and suppose that the goal is to compute a $\mathrm{rank}$-$2$ approximation of $A$. Note that in the greedy algorithm with non-decreasing order, $v$ and $w$ will be assigned to different bins and by a simple calculation we can show that the copies of $u$ also will be assigned to different bins. 
Hence, the squared loss in the computed $\mathrm{rank}$-$2$ solution is $1 + {(1+\eps)^2 \over 2 + (1+\eps)^2}$. 
However, the optimal solution will assign $v$ and $w$ to one bin and the two copies of $u$ to the other bin which results in a squared loss of $(1+\eps)^2$ which is a constant factor smaller than the solution returned by the greedy algorithm with non-increasing order for sufficiently small values of $\eps$. 

On the other hand, in the greedy algorithm with a random order, with a constant probability of (${1\over 3} + {1\over 8}$), the computed solution is the same as the optimal solution. Otherwise, the greedy algorithm with random order returns the same solution as the greedy algorithm with a  non-increasing order. Hence, in expectation, the solution returned by the greedy with random order is better than the solution returned by the greedy algorithm with non-increasing order by a constant factor. 
\end{remark}

\section{Experiment Details}
\label{sec:exp_appendix_detail}

\subsection{Low-Rank Approximation}

In this section, we describe the experimental parameters in our experiments. We first introduce some parameters in Stage 2 of our approach proposed in Section~\ref{sec:description}.
\begin{itemize}
    \item $bs$: batch size, the number of training samples used in one iteration.
    \item $lr$: learning rate of gradient descent.
    \item $iter$: the number of iterations of gradient descent.
\end{itemize}

\textbf{Table~\ref{tab:LRA_2}: Test errors for LRA (using Algorithm~\ref{alg:lowrank-sketch} with four sketches)}

For a given $m$, the dimensions of the four sketches were: $S \in \mathbb{R}^{m \times n}, R \in \mathbb{R}^{m \times d}, S_2 \in \mathbb{R}^{5m \times n}, R_2 \in \mathbb{R}^{5m \times d}$.

\noindent
Parameters of the algorithm: $bs = 1, lr=1.0, 10.0$ for hyper and video respectively, $num\_it = 1000$. For our algorithm~\ref{alg:new_alg}, we use the average of all training matrix as the input to the algorithm.

\textbf{Table~\ref{tab:LRA_2}: Test errors for LRA (using Algorithm~\ref{alg:one_side} with one sketch)}

\noindent
Parameters of the algorithm: $bs = 1, lr=1.0, 10.0$ for hyper and video respectively, $num\_it = 1000$. For our algorithm~\ref{alg:new_alg}, we use the sum of all training matrix as the input to the algorithm.

\subsection{Second-Order Optimization}

As we state in Section~\ref{sec:sketch_learning_sec}, when we fix the positions of the non-zero entries (uniformly chosen in each column or sampled according to the heavy leverage score distribution), we aim to optimize the values by gradient descent, as mentioned in Section~\ref{sec:description}. Here the loss function is given in Section~\ref{sec:sketch_learning_sec}. In our implementation, we use PyTorch~(\cite{pytorch}), which can compute the gradient automatically (here we can use torch.qr() and torch.svd() to define our loss function). For a more nuanced loss function, which may be beneficial, one can use the package released in~\cite{A+19}, where the authors studied the problem of computing the gradient of functions which involve the solution to certain convex optimization problem. 

As mentioned in Section~\ref{sec:prelim}, each column of the sketch matrix $S$ has exactly one non-zero entry. Hence, the $i$-th coordinate of $p$ can be seen as the non-zero position of the $i$-th column of $S$. In the implementation, to sample $p$ randomly, we can sample a random integer in $\{1, \dots, m\}$ for each coordinate of $p$. For the heavy rows mentioned in Section~\ref{sec:sketch_learning_sec}, we can allocate positions $1,\dots,k$ to the $k$ heavy rows, and for the other rows, we randomly sample an integer in $\{k + 1, \dots, m\}$. We note that once the vector $p$, which contains the information of the non-zero positions in each column of $S$, is chosen, it will not be changed during the optimization process in Section~\ref{sec:description}.

Next, we introduce the parameters for our experiments:
\begin{itemize}
    \item $bs$: batch size, the number of training samples used in one iteration.
    \item $lr$: learning rate of  gradient descent. 
    \item $iter$: the number of iterations of gradient descent
\end{itemize}

In our experiments, we set $bs = 20, iter = 1000$ for all datasets. We set $lr=0.1$ for the Electric dataset.

\end{document}